\newcommand{\prn}[2][*]{\brk#1{#2}}
\newcommand{\brc}[2][*]{\brk[c]#1{#2}}
\newcommand{\underbracetxt}[2]{\underset{#2}{\underbrace{#1}}}
\newcommand{\ceil}[2][*]{\delim\lceil\rceil#1{#2}}
\newcommand{\floor}[2][*]{\delim\lfloor\rfloor#1{#2}}
\newcommand{\indEvent}[2][*]{\mathds{1}{\brc[#1]{#2}}}
\newcommand{\tr}[2][*]{\mathrm{Tr}\prn[*]{#2}}
\newcommand{\EE}[2][]{\mathbb{E}_{#1}{#2}}
\newcommand{\EEBrk}[2][*]{\mathbb{E}\delim{[}{]}#1{#2}}
\newcommand{\RR}[1][]{\mathds{R}^{#1}}
\newcommand{\PP}[2][*]{\mathbb{P}\prn[#1]{#2}}
\newcommand{\gaussDist}[2]{\mathcal{N}\prn[0]{#1, #2}}
\DeclareMathOperator*{\argmin}{arg\,min}
\DeclarePairedDelimiterX\setDef[1]\lbrace\rbrace{#1}
\newcommand{\seqDef}[3]{\brc{#1}_{#2}^{#3}}
\declaretheoremstyle[
	    spaceabove=\topsep, 
	    spacebelow=\topsep, 
	    bodyfont=\normalfont\itshape,
    ]{theorem}
\declaretheorem[style=theorem,name=Theorem]{theorem}
\declaretheoremstyle[
	    spaceabove=\topsep, 
	    spacebelow=\topsep, 
	    bodyfont=\normalfont,
    ]{definition}
\declaretheoremstyle[
        spaceabove=\topsep, 
        spacebelow=\topsep, 
        bodyfont=\normalfont,
        notefont=\normalfont\bfseries,
        notebraces={}{},
        qed=$\blacksquare$, 
    ]{proofstyle}
\declaretheorem[style=proofstyle,numbered=no,name=Proof]{proof}
\declaretheorem[style=theorem,sibling=theorem,name=Lemma]{lemma}
\declaretheorem[style=theorem,numbered=no,name=Theorem]{theorem*}
\declaretheorem[style=theorem,numbered=no,name=Lemma]{lemma*}
\declaretheorem[style=theorem,numbered=no,name=Corollary]{corollary*}
\declaretheorem[style=theorem,numbered=no,name=Proposition]{proposition*}
\declaretheorem[style=theorem,numbered=no,name=Claim]{claim*}
\declaretheorem[style=theorem,numbered=no,name=Fact]{fact*}
\declaretheorem[style=theorem,numbered=no,name=Observation]{observation*}
\declaretheorem[style=theorem,numbered=no,name=Conjecture]{conjecture*}
\declaretheorem[style=definition,sibling=theorem,name=Definition]{definition}
\declaretheorem[style=definition,numbered=no,name=Definition]{definition*}
\declaretheorem[style=definition,numbered=no,name=Remark]{remark*}
\declaretheorem[style=definition,numbered=no,name=Example]{example*}
\declaretheorem[style=definition,numbered=no,name=Question]{question*}
\declaretheorem[style=definition,numbered=no,name=Assumption]{assumption*}
\title{Logarithmic Regret for Learning \\ Linear Quadratic Regulators Efficiently}
\author{%
Asaf Cassel%
\thanks{School of Computer Science, Tel Aviv University; \texttt{acassel@mail.tau.ac.il}.}
\and
Alon Cohen%
\thanks{Google Research, Tel Aviv; \texttt{aloncohen@google.com}.}
\and
Tomer Koren%
\thanks{School of Computer Science, Tel Aviv University; \texttt{tkoren@tauex.tau.ac.il}.}
}
\newcommand{\poly}[1]{\mathrm{poly}\prn{#1}}
\newcommand{\costMatLower}{\alpha_0}
\newcommand{\costMatUpper}{\alpha_1}
\newcommand{\noiseStd}{\sigma}
\newcommand{\systemBound}{\vartheta}
\newcommand{\optCostBound}{\nu}
\newcommand{\startCostBound}{\nu_0}
\newcommand{\KstarLowerBound}{{\mu}_{\star}}
\newcommand{\KstarLowerBoundGuess}{{\mu}_{0}}
\newcommand{\AlgKstarLowerBound}[1][i]{\mu_{#1}}
\newcommand{\Astar}{A_{\star}}
\newcommand{\Bstar}{B_{\star}}
\newcommand{\Kstar}{K_{\star}}
\newcommand{\Jstar}{J_{\star}}
\newcommand{\Jof}[1]{J\prn{#1}}
\newcommand{\regret}[1][T]{R_{#1}}
\newcommand{\goodEventA}[1][]{\mathcal{E}_{A_{#1}}}
\newcommand{\goodEventB}[1][]{\mathcal{E}_{B_{#1}}}
\newcommand{\olsEventB}{\goodEventB[ols]}
\newcommand{\xNoiseExploreEventB}{\goodEventB[x]}
\newcommand{\xNoiseBoundEventB}{\goodEventB[w]}
\newcommand{\uNoiseExploreEventB}{\goodEventB[u]}
\newcommand{\uNoiseBoundEventB}{\goodEventB[\eta]}
\newcommand{\olsEventA}{\goodEventA[ols]}
\newcommand{\xNoiseExploreEventA}{\goodEventA[x]}
\newcommand{\xNoiseBoundEventA}{\goodEventA[w]}
\newcommand{\rechtEps}{\varepsilon_0}
\newcommand{\rechtConst}{C_0}
\newcommand{\tAbort}{\tau_{\text{abort}}}
\newcommand{\tWarmup}{\ti[\nStart]}
\newcommand{\xBreak}{x_b}
\newcommand{\tBase}{\tau_0}
\newcommand{\ti}[1][i]{\tau_{#1}}
\newcommand{\Vu}[1]{V^{u}_{#1}}
\newcommand{\Vx}[1]{V^{x}_{#1}}
\newcommand{\Kopt}{\mathcal{K}}
\newcommand{\KoptOf}[3][Q, R]{\Kopt\prn{#2, #3, #1}}
\newcommand{\uNoise}[1][t]{\eta_{#1}}
\newcommand{\uVirt}[1][t]{\tilde{u}_{#1}}
\newcommand{\dEstB}[1][t]{\Delta_{B_{#1}}}
\newcommand{\dEstA}[1][t]{\Delta_{A_{#1}}}
\newcommand{\nT}{n_T}
\newcommand{\nStart}{n_s}
\renewcommand{\Pr}{\mathbb{P}}
\newcommand{\TV}[2]{\text{TV}\delim(){#1, #2}}
\newcommand{\KL}[2]{\text{KL}\delim(){#1 \;\|\; #2}}
\begin{document}
\maketitle

\begin{abstract}
We consider the problem of learning in Linear Quadratic Control systems whose transition parameters are initially unknown.
Recent results in this setting have demonstrated efficient learning algorithms with regret growing with the square root of the number of decision steps. 
We present new efficient algorithms that achieve, perhaps surprisingly, regret that scales only (poly)logarithmically with the number of steps in two scenarios: when only the state transition matrix $A$ is unknown, and when only the state-action transition matrix $B$ is unknown and the optimal policy satisfies a certain non-degeneracy condition.
On the other hand, we give a lower bound that shows that when the latter condition is violated, square root regret is unavoidable.
\end{abstract}

\section{Introduction}

The linear-quadratic regulator model (LQR) is a classic model in optimal control theory.
In this model, the dynamics of the environment are given as 
\[
    x_{t+1} = \Astar x_t + \Bstar u_t + w_t,
\]
where $x_t$ and $u_t$ are the state and the action vectors at time $t$, $\Astar$ and $\Bstar$ are transition matrices, and $w_t$ is a zero-mean i.i.d.\ Gaussian noise.
The cost function is quadratic in both the state and the action.
An interesting property of LQR systems is that a linear control policy minimizes the cost while keeping the system at a steady-state (stable) position.

In this work, we study the problem of designing an adaptive controller that regulates the system while learning its parameters. 
This problem has recently been approached through the lens of regret minimization, beginning in the work of \citet{abbasi2011regret} that established an $O(\sqrt{T}$) regret bound for this setting albeit with a computationally inefficient algorithm. The problem of designing an efficient algorithm that enjoys $O(\sqrt{T})$ was later resolved by \citet{cohen2019learning} and \citet{mania2019certainty}.
The former work relied on the ``optimism in the face of uncertainty'' principle and a reduction to an online semi-definite problem, and the latter work used a simpler greedy strategy. 

Following this line of work, it has been believed that an $O(\sqrt{T})$ regret is tight for the problem. This appears natural as it is the typical rate for many imperfect information (bandit) optimization~problems (e.g., \citealp{shamir2013complexity}).%
\footnote{More precisely, this is very often the regret rate in bandit problems with no ``gap'' assumptions regarding the difference between the best and second-best actions/policies.} 
On the other hand, one could suspect that better, polylogarithmic regret bounds, are possible in the LQR setting thanks to the strongly convex structure of the cost functions. 
Often in optimization, this structure gives rise to faster convergence/regret rates, and indeed, in a recent work, \citet{agarwal2019logarithmic} have demonstrated that such fast rates are attainable in the related, yet full-information online LQR problem endowed with any strongly convex loss functions.

In this paper, we show two interesting scenarios of learning unknown LQR systems in which an expected regret of~$O(\log^2 T)$ is, in fact, achievable.
In the first, we assume that the matrix $\Bstar$ is known and show that polylogarithmic regret can be attained by harnessing the intrinsic noise in the system dynamics for exploration.
In the second, we assume that $\Astar$ is known and that the optimal control policy $\Kstar$ is given by a full-rank matrix. 
Both results are attained using simple and efficient algorithms whose runtime per time step is polynomial in the natural parameters of the problem.

We complement our results with a lower bound showing that our assumptions are indeed necessary for obtaining improved regret guarantees. 
Specifically, we show that when $\Bstar$ is unknown and the optimal policy $\Kstar$ is near-degenerate (i.e., with very small singular values), any online algorithm, whether efficient or not, must suffer at least $\Omega(\sqrt{T})$ regret.
To the best of our knowledge, this is the first $\Omega(\sqrt{T})$ lower bound for learning linear quadratic regulators (that particularly holds even when the learner knows the entire set of system parameters but the matrix~$\Bstar$, and even in a single-input single-output scenario).
Concurrent to this work, \citet{simchowitz2020naive} suggest a different lower bound construction that relies on uncertainty in both $\Astar$ and $\Bstar$, and thus does not contradict our positive findings.

\subsection{Setup: Learning in LQR}
We consider the problem of regret minimization in the LQR model.
At each time step $t$, a state $x_t \in \RR[d]$ is observed and action $u_t \in \RR[k]$ is chosen.
The system evolves according to
\begin{equation*}
	x_{t+1} = \Astar x_t + \Bstar u_t + w_t,
\end{equation*}
where the state-state $\Astar \in \RR[d \times d]$ and state-action $\Bstar \in \RR[d \times k]$ matrices form the transition model and the $w_t$ are i.i.d.~noise terms, each is a zero mean Gaussian with covariance matrix $\noiseStd^2 I$. 
At time $t$, the instantaneous cost is 
$$
    c_t = x_t^T Q x_t + u_t^T R u_t,
$$ 
where $Q,R \succ 0$ are positive definite. 

A policy of the learner is a mapping from a state $x \in \RR[d]$ to an action $u \in \RR[k]$ to be taken at that state.
Classic results in linear control establish that, given the system parameters $\Astar,\Bstar,Q$ and $R$, the optimal policy is a linear mapping from the state space $\RR[d]$ to the action space $\RR[k]$ in an infinite-horizon setup.
We thus consider policies of the form $u_t = K x_t$ and define the infinite horizon expected cost,
\begin{equation*}
	\Jof{K} = \lim_{T \to \infty} \frac{1}{T} \EEBrk{\sum_{t=1}^{T} x_t^T \prn{Q + K^T R K} x_t},
\end{equation*}
where the expectation is taken with respect to the random noise variables $w_t$. Let $\Kstar = \argmin_K\Jof{K}$ be an (unique) optimal policy and $\Jstar = \Jof{\Kstar}$ denote the optimal infinite horizon expected cost, which are both well defined under mild assumptions.%
\footnote{These hold under standard, very mild controllablity assumptions (see \citealp{bertsekas1995dynamic}) that we implicitly assume throughout.}
We are interested in minimizing the \emph{regret} over~$T$ decision rounds, defined as
\begin{equation*}
    \regret
    =
    \sum_{t=1}^{T} \prn[!]{x_t^T Q x_t + u_t^T R u_t - \Jstar}
    .
\end{equation*}
We focus on the setting where the learner does not have a full a-priori description of the transition parameters $\Astar$ and $\Bstar$, and has to learn them while controlling the system and minimizing the regret.

Throughout, we assume that the learner has knowledge of the cost matrices $Q$ and $R$, and that there are constants $\costMatLower,\costMatUpper>0$ such that 
$$
    \norm{Q}, \norm{R} \leq \costMatUpper,
    \text{ and }
    \norm{Q^{-1}}, \norm{R^{-1}} \leq \costMatLower^{-1}
    .
$$
We further assume that the learner has bounds on the transition matrices, as well as on the optimal cost; that is, there are known constants $\systemBound, \optCostBound>0$ such that
$$
    \norm{\Astar},\norm{\Bstar} \le \systemBound,
    \text{ and }
    \Jstar \le \optCostBound
    .
$$
Finally, we assume that there is a known stable (not necessarily optimal) policy $K_0$ and $\startCostBound>0$ such that $\Jof{K_0} \le \startCostBound$.%
\footnote{Regarding the necessity of this assumption, see the discussion in \citet{mania2019certainty,cohen2019learning}.}

\subsection{Main results}

Our first result focuses on the case where the state-action transition matrix $\Bstar$ is known (but the matrix $\Astar$ is unknown).

\begin{theorem} \label{thm:unknownAregret}
There exists an efficient online algorithm (see~\cref{alg:A} in~\cref{sec:algA}) that, given the matrix $\Bstar$ as input, has expected regret
\begin{equation*}
    \EEBrk{\regret}
    =
    \poly{\costMatLower^{-1}, \costMatUpper, \systemBound, \optCostBound, \startCostBound, d, k} \, O(\log^2 T)
    .
\end{equation*}
\end{theorem}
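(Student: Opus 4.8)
The plan is to analyze a certainty-equivalent (greedy) controller that leverages two features of the known-$\Bstar$ setting: only $\Astar$ must be identified, and the process noise $w_t$, having full-rank covariance $\noiseStd^2 I$, already excites the state in every direction of $\RR[d]$ — precisely the directions needed to recover $\Astar$ by regressing the relation $x_{t+1}-\Bstar u_t = \Astar x_t + w_t$ onto $x_t$. Concretely, after a short warm-up running the given stable policy $K_0$ to gather initial data, at each step the algorithm forms the least-squares estimate
\begin{equation*}
    \hat{A}_t = \argmin_{A}\sum_{s<t}\norm{x_{s+1}-\Bstar u_s - A x_s}^2,
\end{equation*}
computes the induced optimal gain $\hat{K}_t = \KoptOf{\hat{A}_t}{\Bstar}$, and plays $u_t=\hat{K}_t x_t$ with \emph{no} injected exploration. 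Choosing the warm-up length in terms of $\startCostBound$ and the problem constants guarantees that, from the first greedy step on, $\hat{A}_t$ is accurate enough that $\hat{K}_t$ stabilizes the true system, so all quantities stay controlled.

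Two ingredients then bound the regret. First, a \emph{self-bounding} perturbation bound for the LQR cost: combining the smoothness of $K\mapsto\Jof{K}$ with the Lipschitz dependence of the Riccati solution on the dynamics, one shows that whenever $\hat{A}_t$ is close to $\Astar$,
\begin{equation*}
    \Jof{\hat{K}_t}-\Jstar \le C\,\norm{\hat{A}_t-\Astar}^2,
    \qquad C=\poly{\costMatLower^{-1},\costMatUpper,\systemBound,\optCostBound}.
\end{equation*}
The \emph{quadratic} dependence is essential — it reflects the vanishing of the gradient of $\Jof{\cdot}$ at $\Kstar$ — and is what permits fast rates. Second, a value-function (Bellman) telescoping argument converts accumulated cost into $\sum_t(\Jof{\hat{K}_t}-\Jstar)$ plus lower-order terms: writing $P_{\hat{K}_t}$ for the value matrix of the closed loop under $\hat{K}_t$, each per-step cost equals $\Jof{\hat{K}_t}+x_t^T P_{\hat{K}_t}x_t-\EEBrk{x_{t+1}^T P_{\hat{K}_t}x_{t+1}\mid\mathcal{F}_t}$; summing, the bracketed increments form a mean-zero martingale plus a telescoping residual $\sum_t x_t^T(P_{\hat{K}_t}-P_{\hat{K}_{t-1}})x_t$, which is controlled because $\hat{K}_t$ (hence $P_{\hat{K}_t}$) changes by $O(1/t)$ per step once the states are shown to stay $\poly\log$-bounded. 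Thus $\EEBrk{\regret}\lesssim\sum_t\EEBrk{\norm{\hat{A}_t-\Astar}^2}+(\text{lower order})$.

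It remains to bound the estimation error, and this is where the intrinsic noise pays off. The self-normalized least-squares bound gives $\norm{\hat{A}_t-\Astar}^2=O\big(\log(t/\delta)/\lambda_{\min}(\Vx{t})\big)$ with $\Vx{t}=\sum_{s<t}x_s x_s^T$, so the crux is a \emph{persistent-excitation} lower bound $\Vx{t}\succeq c\,\noiseStd^2 t\,I$ holding with probability at least $1-\delta$. This follows because $x_{s+1}=(\mathcal{F}_s\text{-measurable})+w_s$ with $w_s\sim\gaussDist{0}{\noiseStd^2 I}$ independent of the past, whence $\EEBrk{x_{s+1}x_{s+1}^T\mid\mathcal{F}_s}\succeq\noiseStd^2 I$ for every $s$ regardless of the gain in force; a matrix Freedman inequality upgrades this to the high-probability lower bound on $\Vx{t}$. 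Consequently, on the good event $\norm{\hat{A}_t-\Astar}^2=O\big(\log(t/\delta)/(\noiseStd^2 t)\big)$, and therefore
\begin{equation*}
    \sum_{t=1}^{T}\norm{\hat{A}_t-\Astar}^2
    = O\Big(\tfrac{1}{\noiseStd^2}\sum_{t=1}^{T}\tfrac{\log(t/\delta)}{t}\Big)
    = O\Big(\tfrac{\log(T/\delta)\,\log T}{\noiseStd^2}\Big).
\end{equation*}
Taking $\delta=1/T$ (the complementary event contributes only $\poly{\cdot}$ after bounding the worst-case per-step cost) yields the claimed $O(\log^2 T)$ scaling, with the stated polynomial prefactor.

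The main obstacle I anticipate is the persistent-excitation step: establishing $\Vx{t}\succeq\Omega(\noiseStd^2 t)\,I$ \emph{uniformly} over the data-dependent, time-varying sequence of controllers $\hat{K}_t$. One must simultaneously argue that the closed-loop matrices remain uniformly bounded — so that the states, and hence the martingale variance proxy in the matrix concentration bound, stay $\poly\log$-controlled — while the freshly injected noise keeps every direction of $\RR[d]$ excited. A secondary difficulty is self-containing the warm-up and switching analysis so that accuracy (hence stability and the $\poly\log$ state bound) holds on the same high-probability event used for both the least-squares and excitation bounds.
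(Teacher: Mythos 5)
Your overall architecture (warm-up with $K_0$, certainty-equivalent control, the quadratic cost-perturbation bound $\Jof{\hat K_t}-\Jstar \le C\norm{\hat A_t - \Astar}^2$, and noise-driven persistent excitation giving $\norm{\hat A_t-\Astar}^2 = O(\log(T/\delta)/(\noiseStd^2 t))$, hence $O(\log^2 T)$ on the good event) matches the paper's, but there is a genuine gap in how you dispose of the bad event, and it is exactly the point on which the theorem's \emph{expected}-regret statement turns. You claim that taking $\delta = 1/T$ suffices because the complementary event ``contributes only poly after bounding the worst-case per-step cost.'' No such worst-case bound exists: on the failure event the estimate $\hat A_t$ can be arbitrarily bad, the greedy gain $\hat K_t$ can destabilize the closed loop, and the state then grows geometrically, so the regret on the bad event can be $e^{\Omega(T)}$; multiplying by $\delta = T^{-p}$ for any fixed $p$ does not tame it. The paper addresses precisely this with its abort mechanism: the algorithm monitors $\norm{x_t}^2 > \xBreak$ and $\norm{K_{\ti}} > \kappa$, and upon triggering reverts to the known stable $K_0$ forever; \cref{lemma:R2AlgA}, together with \cref{lemma:abortStateBoundA} bounding $\EEBrk{\norm{x_{\tAbort}}^2 \indEvent{\tAbort \le T}}$, then shows the failure event contributes only $O(1)$ to the expected regret. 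The paper explicitly flags this abort procedure as the ingredient that upgrades earlier high-probability analyses (where failure could incur exponentially large cost) to a bound on $\EEBrk{\regret}$. Without some such fail-safe, your argument proves at best a high-probability regret bound, not the stated expectation bound.

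A secondary divergence, which you correctly flag as an obstacle but leave unresolved: you recompute $\hat K_t$ at \emph{every} step, so the closed loop is time-varying and strong stability of each individual $\hat K_t$ does not by itself keep the states bounded; your fix via the $O(1/t)$ slow-variation of $\hat K_t$ presupposes the linear-in-$t$ excitation bound you are simultaneously trying to establish, so the circularity must be broken by a careful induction. The paper sidesteps this entirely by updating the controller only at the start of exponentially growing (quadrupling) phases, so that each gain is held long enough for ordinary strong stability plus \cref{lemma:multiControlBound} to bound the states, and the induction in \cref{lemma:goodOperationA} (estimation accuracy at $\ti$ implies stability in phase $i$, which implies bounded states, which implies a bounded $V_{\ti[i+1]}$, which implies accuracy at $\ti[i+1]$) closes cleanly; this phase structure is also why the paper never needs your Bellman-telescoping residual $\sum_t x_t^T(P_{\hat K_t}-P_{\hat K_{t-1}})x_t$, using instead the per-phase steady-state comparison of \cref{lemma:steadyStateCostConvergence,lemma:JiBoundA}. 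Your persistent-excitation step ($\EEBrk{x_s x_s^T \mid \mathcal{F}_{s-1}} \succeq \noiseStd^2 I$ regardless of the gain in force, upgraded by matrix concentration) is the same idea as the paper's use of \cref{lemma:VtLowerBound}.
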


Next, we consider the dual setup in which only the state-state matrix $\Astar$ is known. Here we require an additional non-degeneracy assumption for obtaining polylogarithmic regret.

\begin{theorem} \label{thm:unknownBregret}
Suppose that the optimal policy of the system satisfies $\Kstar \Kstar^T \succeq \KstarLowerBound I$ for some constant $\KstarLowerBound>0$ that is unknown to the learner.
Then there exists an efficient online algorithm (see~\cref{alg:B} in~\cref{sec:algB}) that, given the matrix $\Astar$ as input, has expected regret
\begin{equation*}
    \EEBrk{\regret} 
    =
    \poly{\KstarLowerBound^{-1}, \costMatLower^{-1}, \costMatUpper, \systemBound, \optCostBound, \startCostBound, d, k} \, O(\log^2 T)
    .
\end{equation*}
\end{theorem}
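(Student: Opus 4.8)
The plan is to combine a certainty-equivalence controller with the observation that a full-rank optimal gain turns the system's intrinsic noise into \emph{persistent excitation} of the action sequence, so that $\Bstar$ can be estimated at the fast $\tilde{O}(1/N)$ rate \emph{without} paying for explicit exploration. Since $\Astar$ is known, subtracting it off gives the linear model $y_t := x_{t+1} - \Astar x_t = \Bstar u_t + w_t$, and the natural estimator for $\Bstar$ is ordinary least squares on the pairs $(u_t, y_t)$. I would run the algorithm in doubling epochs: after a short warmup, in each epoch $i$ form the OLS estimate $\hat B_i$ from all data so far, solve the Riccati equation for the certainty-equivalent gain $K_i = \KoptOf{\Astar}{\hat B_i}$, and play $u_t = K_i x_t$ throughout the epoch.

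The regret analysis rests on three quantitative links. First, a Riccati perturbation bound gives $\norm{K_i - \Kstar} \lesssim \norm{\hat B_i - \Bstar}$, so that $K_i$ inherits both stability and, crucially, near--full-rankness from $\Kstar$: once $\norm{\hat B_i - \Bstar}$ is small relative to $\KstarLowerBound$, one has $K_i K_i^T \succeq \tfrac12 \KstarLowerBound I$. Second, the smoothness of the LQR cost together with the vanishing of its gradient at the optimum yields the quadratic gap $\Jof{K_i} - \Jstar \lesssim \norm{K_i - \Kstar}^2 \lesssim \norm{\hat B_i - \Bstar}^2$. Third, a self-normalized martingale bound controls the OLS error as $\norm{\hat B_i - \Bstar}^2 \lesssim \frac{\poly{\cdot}\log T}{\lambda_{\min}\prn{\sum_t u_t u_t^T}}$. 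A Lyapunov/telescoping identity for the cost-to-go $x^T P_{K_i} x$ then converts per-epoch costs into $N_i\prn{\Jof{K_i} - \Jstar}$ plus boundary terms that are $\poly{\cdot}$ per epoch on the event that states stay bounded; summing $N_i \cdot \tfrac{1}{N_{<i}}$ over the $O(\log T)$ doubling epochs produces the advertised $O(\log^2 T)$ bound, with one $\log T$ from the confidence radius and one from the number of epochs.

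The heart of the argument --- and the step I expect to be the main obstacle --- is the \emph{lower bound on the empirical action covariance} $\sum_t u_t u_t^T \succeq c\,\noiseStd^2 \KstarLowerBound\, N_i\, I$ with high probability. The difficulty is threefold: the actions $u_t = K_i x_t$ form a dependent (Markov) sequence rather than i.i.d.\ data, the controller generating them is itself data-dependent, and $\KstarLowerBound$ is unknown to the learner. For the first two I would use the stationary state covariance bound $\Sigma_x \succeq \noiseStd^2 I$ --- which holds because each $w_t$ injects covariance $\noiseStd^2 I$ --- together with a block-martingale small-ball (anti-concentration) argument in the spirit of learning-without-mixing analyses, applied to $u_t u_t^T = K_i x_t x_t^T K_i^T$ and using $K_i K_i^T \succeq \tfrac12 \KstarLowerBound I$ to push the excitation into every action direction. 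Handling the unknown $\KstarLowerBound$ forces an adaptive scheme (the guesses $\KstarLowerBoundGuess, \AlgKstarLowerBound$ in the algorithm): the learner maintains a running lower estimate of the excitation level and contracts it whenever the observed action covariance is too small, and I must show this self-tuning both terminates at a level $\gtrsim \KstarLowerBound$ and contributes only $\poly{\cdot}\log T$ additional regret.

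Two loose ends complete the proof. There is a bootstrapping issue: establishing excitation needs $K_i$ close to $\Kstar$, which needs an accurate $\hat B_i$, which needs excitation. I would break this circularity with an initial warmup that injects fresh isotropic action noise $\uNoise$ of constant variance for $\poly{\cdot}$ steps --- enough to drive the first certainty-equivalent gain into the regime $K_1 K_1^T \succeq \tfrac12 \KstarLowerBound I$ --- at a cost that is only $\poly{\cdot}$ (hence $T$-independent) regret, after which natural excitation takes over. Finally, all high-probability statements live on a good event $\goodEvent$ whose complement has probability $1/\poly{T}$; off this event I would invoke an abort/stability mechanism (the time $\tAbort$) to bound the worst-case regret by $\poly{T}$, so that the bad-event contribution to the \emph{expected} regret is negligible, yielding the stated bound in expectation.
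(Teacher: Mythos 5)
Your proposal follows the paper's proof essentially step for step: a noisy warm-up ($u_t \sim \gaussDist{K_0 x_t}{\noiseStd^2 I}$) to bootstrap estimation, certainty-equivalence with geometrically growing epochs, the quadratic cost gap via \cref{lemma:recht}, natural excitation of the action covariance through $K_{\ti} K_{\ti}^T \succeq \tfrac12 \KstarLowerBound I$ (\cref{lemma:warmupParamEstB,lemma:goodOperationB}), an adaptively contracted guess $\AlgKstarLowerBound$ for the unknown $\KstarLowerBound$ with the same termination guarantee (\cref{lemma:warmupLengthB}), and the abort mechanism to make the bad-event contribution to the \emph{expected} regret negligible. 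The one cosmetic difference is your stopping rule for the adaptive warm-up: you propose monitoring the \emph{observed} action covariance, which is uninformative during warm-up because the injected noise keeps it well-conditioned regardless of $\KstarLowerBound$, whereas the paper tests the computed controller directly via $K_{\ti}^T K_{\ti} \succeq \tfrac32 \AlgKstarLowerBound I$ --- but this is an implementation detail, not a difference in the argument.
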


Finally, we show that our assumption regarding the non-degeneracy of the optimal policy $\Kstar$ is necessary.
Our next result shows that without it, the expected regret of any algorithm is unavoidably at least $\Omega(\sqrt{T})$, even in simple one-dimensional (single input, single output) systems.

\begin{theorem} \label{thm:lb-main}
For any learning algorithm and any $\sigma>0$, there exists an LQR system (in dimensions $d=k=1$) which is stabilized by the policy $K_0=0$ and for which $\costMatUpper=\costMatLower=1$, $\systemBound=1$ and $\optCostBound=2\sigma^2$, such that the expected regret of the algorithm is at least
$
    \Omega(\sigma^2 \sqrt{T})
    .
$
This is true even if the algorithm receives the matrix $\Astar$ as input.
\end{theorem}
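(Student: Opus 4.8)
The plan is a two-point (Le Cam) argument in dimension $d=k=1$, exploiting that when $\Kstar$ is nearly degenerate the optimal action carries almost no information about $\Bstar$. First I would fix the cost matrices $Q=R=1$ (so $\costMatLower=\costMatUpper=1$) and fix $\Astar=a$ to be a constant with $|a|<1$ and $a^2\le 1/2$ (e.g.\ $a=1/2$); this guarantees that $K_0=0$ stabilizes the system and that $\Jstar\le \sigma^2/(1-a^2)\le 2\sigma^2=\optCostBound$. I would then pit against the learner two systems differing only in $\Bstar\in\{-\Delta/2,+\Delta/2\}$, with a separation $\Delta\asymp T^{-1/4}$ to be tuned, so that $|\Bstar|=\Delta/2\le 1=\systemBound$. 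Solving the scalar Riccati equation gives $\Kstar=-aB\Pstar/(1+B^2\Pstar)$ with $\Pstar\to 1/(1-a^2)$ as $B\to0$, hence $\Kstar\approx -\tfrac{a}{1-a^2}B$: both optimal gains are $O(\Delta)$ (the near-degenerate regime, $\mu_\star\to0$), yet they are separated by $\kappa:=|K_1-K_2|\asymp \tfrac{a}{1-a^2}\Delta$.

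The second ingredient converts regret into a weighted estimation error for the unknown gain. Completing the square against the Riccati solution yields the exact per-step advantage identity $c_t-\Jstar=(R+B^2\Pstar)(u_t-\Kstar x_t)^2+\Pstar x_t^2-\mathbb{E}[\Pstar x_{t+1}^2\mid\mathcal{F}_t]$; summing, telescoping, and taking expectations (starting from $x_1=0$) gives $\mathbb{E}[\regret]\ge \sum_{t}\mathbb{E}[(u_t-\Kstar x_t)^2]-O(\sigma^2)$, the boundary term being negligible and the destabilizing case only increasing the regret. Thus on each system the regret dominates $\sum_t\mathbb{E}[(u_t-K_i x_t)^2]$, the cost of not knowing $K_i$.

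The heart of the argument is an information/regret tradeoff. Because the algorithm's action is the same measurable function of the past under both systems, the trajectory laws satisfy $\mathrm{KL}(P_1\,\|\,P_2)=\tfrac{\Delta^2}{2\sigma^2}\,\mathbb{E}_1[\sum_t u_t^2]$. I would branch on the expected exploration energy $N:=\mathbb{E}_1[\sum_t u_t^2]$. If $N\ge\sigma^2/\Delta^2$, then already on system $1$ the bound $(u_t-K_1x_t)^2\ge\tfrac12 u_t^2-K_1^2x_t^2$, together with $K_1^2=O(\Delta^2)$ and a convolution bound $\mathbb{E}_1[\sum_t x_t^2]\lesssim (B^2 N+T\sigma^2)/(1-|a|)^2$, forces $\mathbb{E}_1[\regret]\gtrsim N\gtrsim\sigma^2\sqrt{T}$. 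If instead $N<\sigma^2/\Delta^2$, then $\mathrm{KL}(P_1\,\|\,P_2)\le\tfrac12$ and by Pinsker $\mathrm{TV}(P_1,P_2)\le\tfrac12$; combining the pointwise inequality $(u_t-K_1x_t)^2+(u_t-K_2x_t)^2\ge\tfrac{\kappa^2}{2}x_t^2$ with Le Cam gives $\mathbb{E}_1[\regret]+\mathbb{E}_2[\regret]\ge\tfrac{\kappa^2}{2}\int\big(\sum_t x_t^2\big)\min(p_1,p_2)-O(\sigma^2)$. Choosing $\Delta\asymp T^{-1/4}$ balances the two branches and yields $\Omega(\sigma^2\sqrt{T})$ in both.

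The main obstacle is the last display: lower bounding $\sum_t x_t^2$ on the event where the two systems are indistinguishable, without assuming the learner keeps the state bounded. I would resolve this by decomposing $x_t=x_t^w+x_t^u$ into a noise-driven part $x_t^w=\sum_{s<t}a^{t-1-s}w_s$ and a control-driven part $x_t^u$. Under a shared-noise coupling $x_t^w$ is identical across the two systems and independent of the algorithm, so $\sum_t(x_t^w)^2$ concentrates around $T\sigma^2/(1-a^2)$; in the small-$N$ branch the convolution bound gives $\sum_t(x_t^u)^2=O(\sigma^2)$, whence $x_t^2\ge\tfrac12(x_t^w)^2-(x_t^u)^2$ yields $\sum_t x_t^2\gtrsim T\sigma^2$ with probability at least, say, $3/4$ under both $P_1$ and $P_2$. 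Since $\int\min(p_1,p_2)\ge\tfrac12$, the good event retains $\min$-mass $\ge\tfrac14$, giving $\int\big(\sum_t x_t^2\big)\min(p_1,p_2)\gtrsim T\sigma^2$ and hence $\Omega(\kappa^2 T\sigma^2)=\Omega(\sigma^2\sqrt{T})$. The remaining care points — the KL chain rule for an adaptive policy, the boundary and stability terms, and checking that all constraints ($\systemBound,\optCostBound,\costMatLower,\costMatUpper$) hold for every sufficiently small $\Delta$ — are routine.
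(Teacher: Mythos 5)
Your proposal is correct in substance and shares the paper's skeleton---a two-point construction in $d=k=1$ with $b=\pm\Theta(T^{-1/4})$ (the paper takes $b=\chi\sqrt{\epsilon}$ with $\epsilon=T^{-1/2}/4$ and invokes Yao's principle), the exact Riccati/Q-function identity $\mathbb{E}[R_T]=\mathbb{E}\bigl[\sum_{t=1}^T(1+b^2p_\star)(u_t-k_\star x_t)^2\bigr]-p_\star\mathbb{E}\bigl[x_{T+1}^2\bigr]$ (\cref{lem:regret-representation}), the chain-rule/Pinsker bound $\mathrm{TV}(P_1,P_2)\le\frac{\sqrt{\epsilon}}{\sigma}\sqrt{\mathbb{E}\bigl[\sum_t u_t^2\bigr]}$ (\cref{lem:tv-distance1}), and the same branching on exploration energy---but it takes a genuinely different route in the indistinguishable branch. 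There the paper counts sign mistakes: it considers the number of steps with $u_tk_\star x_t\le 0$, shows via the TV bound that this exceeds $T/2$ with probability at least $3/8$, and pairs this with a per-step anti-concentration lemma (\cref{lem:many-large-x}: at least $2T/3$ states satisfy $|x_t|\ge 2\sigma/5$, proved from conditional Gaussianity plus Azuma), each mismatched step contributing $(u_t-k_\star x_t)^2\ge k_\star^2x_t^2$. You instead sum the parallelogram inequality $(u_t-K_1x_t)^2+(u_t-K_2x_t)^2\ge\frac12(K_1-K_2)^2x_t^2$ against $\min(p_1,p_2)$ and lower bound $\int\bigl(\sum_t x_t^2\bigr)\min(p_1,p_2)$ by splitting $x_t$ into a noise-driven part, which under a shared-noise coupling is independent of the learner and whose energy $\sum_t(x_t^w)^2$ concentrates at $\Theta(T\sigma^2)$, and a control-driven part whose energy is $O(\sigma^2)$ in the small-exploration branch. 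Your route is more symmetric, never mentions signs, and applies verbatim to randomized learners; the paper's route uses only elementary per-step tail bounds plus Azuma and avoids quadratic-form concentration for $\sum_t(x_t^w)^2$. Two repairs to make when writing yours up: (i) ``the boundary term is negligible'' is not automatic, since the learner may destabilize the system and $x_{T+1}$ is unbounded---you need a bound in the spirit of \cref{lem:x-norm-bound}, $\mathbb{E}[x_{T+1}^2]\le\frac52\bigl(b^2\sum_t\mathbb{E}[(u_t-k_\star x_t)^2]+\sigma^2\bigr)$, which your convolution estimate already supplies and which is absorbed because $b^2=O(T^{-1/2})$; and (ii) you cannot get the good event with probability $3/4$ under \emph{both} laws by transferring through $\mathrm{TV}\le 1/2$ (that transfer loses $1/2$); instead observe $\int_{E^c}\min(p_1,p_2)\le \mathbb{P}_1[E^c]$, so controlling the bad event under $P_1$ alone gives $\int_E\min(p_1,p_2)\ge\bigl(1-\mathrm{TV}(P_1,P_2)\bigr)-\mathbb{P}_1[E^c]\ge 1/4$, which is exactly the arithmetic your sketch quotes.
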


\subsection{Discussion}

Our results could be interpreted as a proof-of-concept that faster, polylogarithmic rates for learning in LQRs are possible under more limited uncertainty assumptions.
This is perhaps surprising in light of the aforementioned work of \citet{shamir2013complexity}, that established $\Omega(\sqrt{T})$ regret lower bounds for online (bandit) optimization, even with quadratic and strongly convex objectives (as is the case in our LQR setup).
The questions of whether polylogarithmic regret guarantees
are possible under more general
uncertainty of the system parameters,
as well as whether the squared dependence on $\log T$ is indeed tight, remain open. 
Our lower bound, however, shows that more assumptions are required for obtaining stronger positive results.

Our results focused on the \emph{expected} regret compared to the \emph{infinite-horizon} performance of the optimal policy $\Kstar$.
As far as we know, this is the first analysis that bounds the regret in expectation rather than in high-probability (and without additional assumptions, e.g., as in \citealp{ouyang2017control}).
Indeed, in previous analyses we are aware of, there was always a small probability where the algorithm fails and incurs very large (possibly exponentially large) regret.
Here, we address this low-probability event by employing a novel ``abort procedure'' when our algorithms suspect the system has been destabilized; this ensures that the expected regret remains controlled.
The question of whether our regret bounds hold with high probability remains for future investigation. 
We remark that in the analogous multi-armed bandit setting, it is well-known that the logarithmic expected regret bounds of UCB-type algorithms can be converted into high probability ones, and so it is a natural question whether the same holds for LQRs.

We also remark that the infinite-horizon cost of the optimal policy can be easily replaced in the definition of the regret with the finite-time cost of $\Kstar$ (up to additional additive low order terms). This is since the expected costs of any (strongly) stable policy converge exponentially fast to its expected steady-state cost.
One could also consider a different definition of the regret, akin to that of ``pseudo-regret'' in multi-armed bandits, where the learner has to commit at each time step to a linear policy $K_t$ and incurs its mean infinite-horizon cost, $\Jof{K_t}$. (This is the type of notion considered in several recent papers, e.g., \citealp{fazel2018global,malik2019derivative}.)
We note, however, that in the unbounded LQR setting there are subtleties that make this definition potentially weaker than the actual expected regret that we focus on; for example, the learner could choose $K_t$ so as to deliberately blow up the magnitude of the states and thereby boost the estimation rates of the unknown system parameters, but at the same time, $\Jof{K_t}$ would remain controlled and no significant penalty in the regret will be incurred.

\subsection{Related work}

The subject of linear quadratic optimal control has been studied extensively in control theory. Importantly, \citet{lai1982least} establish asymptotic convergence rates of system identification, while \citet{polderman1986necessity} show the necessity of said identification for optimal control. More generally, it is known that greedy control strategies only converge to a potentially large subset of the parameter space (see e.g., \citealp{kumar1983optimal,kumar1985survey,polderman1986note}), and that in the context of our assumptions this subset is a singleton containing only the true system parameters. While this may allude to our positive findings, the asymptotic nature of existing results makes them insufficient for establishing finite-time (polylogarithmic) regret guarantees, which are the focus of this work.

The topic of learning in linear control has been attracting considerable attention in recent years.
Since the early work of \citet{abbasi2011regret}, a long line of research has focused on obtaining improved regret bounds for learning in LQRs with a variety of algorithms~\citep{ibrahimi2012efficient,faradonbeh2017optimism,abeille2018improved,dean2018regret,faradonbeh2018input,cohen2019learning,abbasi2019politex,abbasi2019model}.
To the best of our knowledge, our results are the first to exhibit logarithmic regret rates for LQRs albeit in a more restrictive setting.

A closely related line of work considered a non-stochastic variant of online control in which the cost functions can change arbitrarily from round to round~\citep{cohen2018online,agarwal2019online,agarwal2019logarithmic}.
Other notable works have studied the sample complexity of estimating the unknown parameters of linear dynamical systems~\citep{dean2017sample,simchowitz2018learning,sarkar2019near}, improper prediction of linear systems~\citep{hazan2017learning,hazan2018spectral}, as well as model-free learning of LQRs via policy gradient methods~\cite{fazel2018global,malik2019derivative}.

\section{Preliminaries}

\subsection{Linear Quadratic Control}

We give a brief background on several basic properties and results in linear quadratic control that we require in the paper.
For a given LQR system $(A, B)$ with cost matrices $Q,R \succ 0$, the optimal (infinite horizon) feedback controller is given by

\vspace{-1em}
\begin{equation} \label{eq:Kopt}
	\KoptOf{A}{B}
	=
	- \prn{R + B^T P B}^{-1} B^T P A,
\end{equation}

where $P$ is the positive definite solution to the discrete Riccati equation
\begin{equation} \label{eq:riccati}
	P 
	= 
	Q + A^T P A - A^T P B \prn{R + B^T P B}^{-1} B^T P A
	.
\end{equation}
In particular, for the system $(\Astar,\Bstar)$ we have $\Kstar = \KoptOf{\Astar}{\Bstar}$.
For more background on linear control and derivation of the relations above, see \citet{bertsekas1995dynamic}.

The following lemma, proved in \citet{mania2019certainty}, relates the error in estimating a system's parameters to the deviation of the corresponding estimated controller from the optimal one. This relation is given in terms of cost as well as in terms of distance in operator norm.

\begin{lemma} \label{lemma:recht}
    There are explicit constants 
    $
    \rechtConst,\rechtEps = \mathrm{poly}(\costMatLower^{-1},\allowbreak \costMatUpper, \systemBound, \optCostBound, \startCostBound, d, k)
    $
    such that, for any $0 \leq \varepsilon \leq \rechtEps$ and matrices $A,B$ such that
    $\norm{A - \Astar} \leq \varepsilon$ and $\norm{B - \Bstar} \leq \varepsilon$,
    the policy $K = \KoptOf{A}{B}$ satisfies
    $$
    \Jof{K} - \Jstar \leq \rechtConst \varepsilon^2 ,
    \;\;\;\text{and}\;\;\;
    \norm{K - \Kstar} \leq \rechtConst \varepsilon.
    $$
\end{lemma}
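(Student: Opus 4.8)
The plan is to split the claim into its two assertions and prove them in sequence: first the operator-norm bound $\norm{K - \Kstar} \le \rechtConst \varepsilon$, and then, using it as a black box, the cost bound $\Jof{K} - \Jstar \le \rechtConst \varepsilon^2$. The quadratic scaling of the latter is no accident: it reflects that $\Kstar$ is the minimizer of the smooth map $J$, so first-order deviations of $K$ from $\Kstar$ contribute only at second order to the cost. The cleanest way to harness this is an exact cost-difference identity, which reduces the cost bound to the squared controller bound together with a uniform bound on the steady-state covariance.

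\emph{Controller bound.} I would view $P = P(A,B)$ as implicitly defined by the discrete Riccati equation~\eqref{eq:riccati} and show it is Lipschitz in $(A,B)$ near $(\Astar,\Bstar)$. Writing that equation as $F(A,B,P)=0$, the crucial fact is that the partial derivative $\partial_P F$ at $(\Astar,\Bstar,\Pstar)$ is the Stein operator $X \mapsto X - L_\star^T X L_\star$ built from the closed-loop matrix $L_\star = \Astar + \Bstar\Kstar$. Since $\Kstar$ is optimal and hence stabilizing, $L_\star$ has spectral radius strictly below one, so this operator is invertible with inverse norm controlled by the stability margin of $L_\star$. The implicit function theorem then yields $\norm{P - \Pstar} \le \ell\,\varepsilon$, with $\ell$ governed by $\norm{\prn{\partial_P F}^{-1}}$ and the (bounded) derivatives $\partial_A F,\partial_B F$. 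Finally, since $K = -\prn{R + B^T P B}^{-1} B^T P A$ (recall~\eqref{eq:Kopt}) depends smoothly on $(A,B,P)$ with bounded derivatives — the factor $\prn{R + B^T P B}^{-1}$ is controlled because $R \succeq \costMatLower I$ and $B^T P B \succeq 0$ — chaining the two Lipschitz estimates gives $\norm{K - \Kstar} \le \rechtConst \varepsilon$.

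\emph{Cost bound.} Here I would invoke the exact LQR cost-difference identity: since $\Kstar$ minimizes $J$ over the true system, for any stabilizing $K$,
\[
    \Jof{K} - \Jstar = \tr{\Sigma_K (K - \Kstar)^T \prn{R + \Bstar^T \Pstar \Bstar} (K - \Kstar)},
\]
where $\Sigma_K$ is the steady-state covariance of the closed loop $\Astar + \Bstar K$ and the usual first-order cross term vanishes at the optimum. It then suffices to bound the two remaining factors uniformly. The middle matrix satisfies $\norm{R + \Bstar^T \Pstar \Bstar} \le \costMatUpper + \systemBound^2 \norm{\Pstar}$, and $\norm{\Pstar}$ is bounded in the stated parameters; moreover $\Pstar \succeq Q \succeq \costMatLower I$ forces $\tr{\Pstar} \ge d\costMatLower$, which together with $\Jstar = \noiseStd^2 \tr{\Pstar} \le \optCostBound$ also shows $\noiseStd^2 \le \optCostBound/(d\costMatLower)$, explaining why $\noiseStd$ never surfaces in the final constants. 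For $\varepsilon \le \rechtEps$ small enough, the controller bound keeps $K$ inside a fixed neighborhood of the strongly stable $\Kstar$, so $\Sigma_K$ is well defined with $\norm{\Sigma_K}$ uniformly bounded. Substituting $\norm{K-\Kstar} \le \rechtConst\varepsilon$ turns the identity into $\Jof{K} - \Jstar = O(\varepsilon^2)$.

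\emph{Main obstacle.} The delicate part is making the Riccati perturbation quantitative. One must (i) certify that every $(A,B)$ with $\norm{A-\Astar},\norm{B-\Bstar}\le\rechtEps$ still admits a stabilizing positive-definite Riccati solution of bounded norm, and (ii) control the Lipschitz constant $\ell$ — equivalently the norm of the inverse Stein operator — purely in terms of $\costMatLower^{-1},\costMatUpper,\systemBound,\optCostBound,\startCostBound,d,k$. Both reduce to extracting a quantitative stability margin for $L_\star$ from the bounds on $\Jstar$ and $\Jof{K_0}$, and the argument is mildly self-referential, since the admissible radius $\rechtEps$ depends on the very norms one is trying to bound; closing it requires a short continuity/bootstrapping step. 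This is the technically heaviest part and matches the perturbation analysis of \citet{mania2019certainty}.
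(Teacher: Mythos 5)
The paper does not actually prove this lemma---it imports it wholesale from \citet{mania2019certainty}---and your sketch follows essentially the same route as that cited proof: a quantitative perturbation bound for the Riccati solution (your implicit-function/Stein-operator linearization is the IFT formulation of their fixed-point argument), followed by the exact cost-difference identity of \citet{fazel2018global} to convert the $O(\varepsilon)$ controller bound into an $O(\varepsilon^2)$ cost bound. Your plan and its flagged obstacle (making the stability margin of $\Astar + \Bstar\Kstar$ quantitative so that $\rechtEps$ and the Lipschitz constant close self-consistently) match the cited analysis, so the proposal is correct in approach; the only caveat is your side remark that $\noiseStd$ ``never surfaces'': the bound $\noiseStd^2 \le \optCostBound/(d\costMatLower)$ controls $\noiseStd$ from above only, and the quantitative stability parameters one extracts from $\Jstar \le \optCostBound$ (cf.\ \cref{lemma:costToStability}) in fact scale with $\optCostBound/\noiseStd^2$, so the constants implicitly carry that ratio---an imprecision already present in the paper's own statement rather than a flaw in your argument.
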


Importantly, the lemma shows that the performance scales \emph{quadratically} in the estimation error. 
This served \citet{mania2019certainty} as a key feature in showing that an $\varepsilon$-greedy algorithm obtains $O(\sqrt{T})$ regret. 
Here, we use this lemma to show that considerably improved regret bounds are  achievable in certain scenarios.

Next, we recall the notion of strong stability \citep{cohen2018online}.
This is essentially a quantitative version of classic stability notions in linear control.
\begin{definition}[strong stability]
		A matrix $M$ is $(\kappa, \gamma)-$strongly stable (for $\kappa \ge 1$ and $0 < \gamma \le 1$) if there exists matrices $H \succ 0$ and $L$ such that $M = H L H^{-1}$ with $\norm{L} \le 1 - \gamma$ and $\norm{H}\norm{H^{-1}} \le \kappa$. A controller $K$ for the system $(A, B)$ is $(\kappa, \gamma)-$strongly stable if $\norm{K} \le \kappa$ and the matrix $A +BK$ is $(\kappa, \gamma)-$strongly stable.
\end{definition}

We remark that \citet{cohen2018online} also introduced the notion of sequential strong stability that is an analogous definition for an adaptive strategy that changes its linear policy over time. 
Here, we avoid this notion by ensuring that each linear policy is played in our algorithms for a sufficiently long duration. 

\subsection{Confidence bounds for least-squares estimation}

Our algorithms use regularized least squares methods in order to estimate the system parameters.
An analysis of this method for a general, possibly-correlated sample, was introduced in the context of linear bandit optimization~\citep{abbasi2011improved}, and was first used in the context of LQRs by~\citet{abbasi2011regret}.
We state the results in terms of a general sequence, since the estimation procedures differ between our two algorithms.

Let $\Theta_\star \in \RR[d \times m]$, $\seqDef{y_{t+1}}{t=1}{\infty} \in \RR[d]$, $\seqDef{z_t}{t=1}{\infty} \in \RR[m]$, $\seqDef{w_t}{t=1}{\infty} \in \RR[d]$ such that
$y_{t+1} = \Theta_\star z_t + w_t$, and $\seqDef{w_t}{t=1}{\infty}$ are i.i.d.~with distribution $\mathcal{N}(0, \noiseStd^2 I)$. 
Denote by
\begin{equation}
\label{eq:olsDef}
    \hat{\Theta}_t
    \in
    \argmin_{\Theta \in \RR[d \times m]} \left\{ \sum_{s=1}^{t-1} \norm{y_{t+1} - \Theta z_t}^2 + \lambda \norm{\Theta}_F^2 \right\}
    ,
\end{equation}
the regularized least squares estimate of $\Theta_\star$ with regularization parameter $\lambda$.
\begin{lemma}[\citealp{abbasi2011regret}] \label{lemma:parameterEst}
Let $V_t = \lambda I + \sum_{s=1}^{t-1}z_t z_t^T$ and $\Delta_t = \Theta_\star - \hat{\Theta}_t$. 
With probability at least $1 - \delta$, we have for all $t \ge 1$
\begin{equation*}
\tr{\Delta_t^T V_t \Delta_t}
\le
4 \noiseStd^2 d \log \prn{\frac{d}{\delta}\frac{\det \prn{V_t}}{\det \prn{V_1}}} + 2\lambda \norm{\Theta_\star}_F^2.
\end{equation*}
\end{lemma}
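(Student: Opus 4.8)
The plan is to reduce the claimed bound to a self-normalized martingale tail inequality for the noise. First I would write the ridge estimator in closed form: the normal equations for \eqref{eq:olsDef} give $\hat\Theta_t V_t = \sum_{s=1}^{t-1} y_{s+1} z_s^\top$, and substituting $y_{s+1} = \Theta_\star z_s + w_s$ together with $\sum_{s=1}^{t-1} z_s z_s^\top = V_t - \lambda I$ yields
\[
  \hat\Theta_t V_t = \Theta_\star (V_t - \lambda I) + S_t, \qquad S_t := \sum_{s=1}^{t-1} w_s z_s^\top .
\]
Hence $\Delta_t = \Theta_\star - \hat\Theta_t = (\lambda \Theta_\star - S_t) V_t^{-1}$, so the target quantity becomes
\[
  \mathrm{Tr}\!\left(\Delta_t V_t \Delta_t^\top\right) = \mathrm{Tr}\!\left((\lambda \Theta_\star - S_t) V_t^{-1} (\lambda \Theta_\star - S_t)^\top\right).
\]
This is a $V_t^{-1}$-weighted squared norm of the matrix $\lambda\Theta_\star - S_t$, which I would split into a deterministic regularization (bias) contribution and a random noise contribution.

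Next I would use $\mathrm{Tr}((a-b)M(a-b)^\top) \le 2\,\mathrm{Tr}(aMa^\top) + 2\,\mathrm{Tr}(bMb^\top)$ for $M \succeq 0$. The bias part is controlled deterministically: since $V_t \succeq \lambda I$ we have $V_t^{-1} \preceq \lambda^{-1} I$, so $2\lambda^2\,\mathrm{Tr}(\Theta_\star V_t^{-1}\Theta_\star^\top) \le 2\lambda\|\Theta_\star\|_F^2$, which already accounts for the second term of the claim. It remains to bound the noise part $2\,\mathrm{Tr}(S_t V_t^{-1} S_t^\top) = 2\sum_{i=1}^d s_{t,i}^\top V_t^{-1} s_{t,i}$, where $s_{t,i} = \sum_{s=1}^{t-1} w_{s,i} z_s \in \mathbb{R}^m$ collects the contribution of the $i$-th output coordinate and $w_{s,i}$ is the $i$-th coordinate of $w_s$.

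The crux—and the one genuine obstacle—is a \emph{time-uniform}, self-normalized tail bound on each $\|s_{t,i}\|_{V_t^{-1}}^2 := s_{t,i}^\top V_t^{-1} s_{t,i}$. Here $\{z_s\}$ is predictable and $w_{s,i}$ is conditionally $\sigma$-subgaussian (indeed $\mathcal N(0,\sigma^2)$), so I would invoke the method of mixtures: for a fixed direction $u$, the process $M_s(u) = \exp\!\big(\sigma^{-2} u^\top s_{s,i} - \tfrac12 \sigma^{-2} u^\top (V_s - \lambda I) u\big)$ is a nonnegative supermartingale with $\mathbb{E}[M_1(u)] \le 1$; integrating $M_s(u)$ against a Gaussian prior $u \sim \mathcal N(0,\lambda^{-1} I)$ produces a new nonnegative supermartingale whose value, after completing the square, is proportional to $\det(V_s)^{-1/2}\exp\!\big(\tfrac{1}{2\sigma^2}\|s_{s,i}\|_{V_s^{-1}}^2\big)$ times $\det(V_1)^{1/2}$, with $V_1=\lambda I$. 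Applying Ville's maximal inequality then gives, simultaneously for all $t$ with probability at least $1-\delta/d$,
\[
  \|s_{t,i}\|_{V_t^{-1}}^2 \le 2\sigma^2 \log\!\left(\frac{d}{\delta}\,\frac{\det(V_t)^{1/2}}{\det(V_1)^{1/2}}\right).
\]
This is exactly the self-normalized bound of \citet{abbasi2011improved,abbasi2011regret} applied per coordinate, with the determinant ratio arising as the mixture normalization; the remainder is algebra.

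Finally I would take a union bound over the $d$ output coordinates (which is why the per-coordinate failure probability is set to $\delta/d$), sum the $d$ resulting inequalities, and combine with the deterministic bias bound through the factor-of-two split. Collecting terms gives, with probability at least $1-\delta$ and uniformly over $t\ge 1$,
\[
  \mathrm{Tr}(\Delta_t^\top V_t \Delta_t) \le 4\sigma^2 d \log\!\left(\frac{d}{\delta}\,\frac{\det(V_t)}{\det(V_1)}\right) + 2\lambda\|\Theta_\star\|_F^2 ,
\]
where I used $\det(V_t) \ge \det(V_1)$ to replace the square-root determinant ratio in the logarithm by the full ratio, matching the statement. The only subtle points beyond the martingale step are checking that the predictability and subgaussianity hypotheses genuinely hold for the particular $(z_s,w_s)$ sequences fed to the two algorithms, and tracking the constants through the union bound.
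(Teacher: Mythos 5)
Your proof is correct and follows essentially the same route as the paper, which states this lemma without proof by citing the self-normalized martingale bound of \citet{abbasi2011regret,abbasi2011improved}: closed-form ridge estimator, a factor-of-two split into the $2\lambda\norm{\Theta_\star}_F^2$ bias term and the noise term $\mathrm{Tr}(S_t V_t^{-1} S_t^\top)$, the method-of-mixtures bound applied per output coordinate, and a union bound over the $d$ coordinates with $\delta/d$ each, using $\det(V_t)\ge\det(V_1)$ to absorb the square root of the determinant ratio. The only nit is that for Gaussian noise of variance $\sigma^2$ the mixing prior should be $\mathcal{N}(0,\sigma^2\lambda^{-1}I)$ rather than $\mathcal{N}(0,\lambda^{-1}I)$ so that completing the square produces $\sigma^{-2}V_s$ exactly; this is a constant recalibration that does not affect the argument.
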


\section{Proofs and Algorithms}

In this section we present our algorithms and illustrate the main ideas of our upper and lower bounds.
The complete versions of the proofs are deferred to \cref{sec:proofsA,sec:proofsB,sec:lb-proofs}.

\subsection{Upper Bound for Unknown $\bm\Astar$}
\label{sec:algA}

We start with the setting where $\Astar$ is unknown, and show an efficient algorithm that achieves regret at most $O\prn{\log^2 T}$. 
To that end, we propose \cref{alg:A}. The algorithm begins by playing the stable controller $K_0$ for a $\tBase$-long warm-up period. It subsequently operates in phases whose length grows exponentially (quadrupling). Each phase begins by estimating the system parameters using \cref{eq:olsDef} and computing the greedy controller with respect to said parameters using \cref{eq:Kopt}. It then proceeds to play greedily as long as a fail condition is not reached.

\begin{algorithm}
	\caption{} \label{alg:A}
	\begin{algorithmic}[1]
		\State \textbf{input:} parameters $\tBase, \xBreak, \kappa, \lambda$, a strongly stable controller $K_0$, and the action-state transition matrix $\Bstar$.
		\State \textbf{initialize:} $\nT = \floor{\log_4(T/\tBase)}$, $\ti[\nT+1] = T+1$
		\State \textbf{set:} $\ti \gets \tBase 4^i$ for all $0 \le i \le \nT$.
		
        \For{$t = 1, \ldots, \tBase - 1$}
            \Comment{warm-up}
            \State \textbf{play} $u_t = K_0 x_t$.
        \EndFor
        
        \For{\textbf{phase} $i = 0, \ldots, n_T$}
            \Comment{main loop}
        
            \State $A_{\tau_i} = \argmin_A \sum_{s=1}^{\tau_i - 1} \norm{\prn{x_{s+1} - \Bstar u_s} - A x_s}^2 \!+\! \lambda \norm{A}_F^2$
			\State $K_{\tau_i} = \KoptOf{A_{\tau_i}}{\Bstar}$.
			
		    \For{$t = \tau_i, \ldots, \tau_{i+1} - 1$}
			    \If{$\norm{x_t}^2 > \xBreak$ \textbf{or} $\norm{K_{\tau_i}} > \kappa$}
				    \Comment{fail, abort}
				    \State \textbf{abort} and play $K_0$ forever.
			    \EndIf
			    
			    \State \textbf{play} $u_t = K_{\tau_i} x_t$.
			\EndFor
		\EndFor
	\end{algorithmic}
\end{algorithm}

We now give a quantified restatement of  \cref{thm:unknownAregret}.

\begin{theorem*}[\cref{thm:unknownAregret} restated] %
Suppose \cref{alg:A} is run with parameters
\begin{gather*}
    \kappa_0
    =
    \sqrt{\frac{\startCostBound}{\costMatLower \noiseStd^2}},
    \;\;
    \kappa
    =
    \sqrt{\frac{\optCostBound + \rechtEps^2 \rechtConst}{\costMatLower \noiseStd^2}},
    \;\;
    \tBase
    =
    \ceil{
    	\frac{80 d \lambda \prn{1 + \systemBound^2}}{\noiseStd^2 \rechtEps^2}
    	}
    ,
    \\
    \lambda
    =
    \xBreak
    =
    135 d \kappa^2 \noiseStd^2 \max\brc{\kappa_0^6, 4 \kappa^6} \log\prn{3T}
    .
\end{gather*}
Then for
$
    T
    \ge
    \poly{\costMatLower^{-1}, \costMatUpper, \systemBound, \optCostBound, \startCostBound, d, k}
$
we have
$
    \EEBrk{\regret}
    \le
    \poly{\costMatLower^{-1}, \costMatUpper, \systemBound, \optCostBound, \startCostBound, d, k}
    \log^2 T
    .
$
\end{theorem*}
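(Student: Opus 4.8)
The plan is to exploit that, with $\Bstar$ known, estimating $\Astar$ needs no deliberate exploration: the intrinsic process noise $w_t$ already excites the state and drives the least-squares error down at the parametric rate, so each of the $\nT = O(\log T)$ geometrically growing phases incurs only $O(\mathrm{poly}\cdot\log T)$ regret, for a total of $O(\mathrm{poly}\cdot\log^2 T)$. Concretely, I would (i) define a single high-probability good event $\goodEvent$ on which all estimates are accurate and the algorithm never aborts, (ii) bound $\EEBrk{\regret\indEvent{\goodEvent}}$ by a per-phase telescoping argument, and (iii) control $\EEBrk{\regret\indEvent{\goodEvent^c}}$ in expectation via the abort mechanism.

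I would set $\goodEvent = \goodEvent[ols]\cap\goodEvent[w]\cap\goodEvent[x]$, where $\goodEvent[ols]$ is the event that the confidence bound of \cref{lemma:parameterEst} holds for all $t$ (with $\Theta_\star=\Astar$, regressor $z_t=x_t$ and target $x_{t+1}-\Bstar u_t$), $\goodEvent[w]$ that $\norm{w_t}^2 \le 5\noiseStd^2 d\log(3dT/\delta)$ for all $t\le T$, and $\goodEvent[x]$ the excitation event $V_t = \lambda I + \sum_{s<t}x_s x_s^T \succeq (\lambda + c\noiseStd^2(t-\tBase))I$ for all $t\ge\tBase$; taking $\delta = T^{-3}$ gives $\PP{\goodEvent^c}\le\delta$. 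Combining $\goodEvent[ols]$ and $\goodEvent[x]$ with $\tr{\Delta_t^T V_t\Delta_t}\ge\lambda_{\min}(V_t)\norm{\Delta_t}_F^2$ bounds the estimation error $\varepsilon_i := \norm{A_{\ti}-\Astar}$ by $\varepsilon_i^2 \le \frac{4\noiseStd^2 d\log(T/\delta) + 2\lambda\norm{\Astar}_F^2}{\lambda + c\noiseStd^2(\ti-\tBase)}$; the choice of $\tBase$ forces $\varepsilon_0\le\rechtEps$, and the bound decays like $\mathrm{poly}\cdot\log T/\ti$ thereafter. The main obstacle is establishing $\goodEvent[x]$: since the states are a priori unbounded and the controllers adaptive, the linear growth of $\lambda_{\min}(V_t)$ must be proved by a matrix (self-normalized) martingale argument that isolates the fresh-noise contribution in $x_s x_s^T$---this is precisely what buys logarithmic instead of $\sqrt{T}$ regret.

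Next, on $\goodEvent$ I would show by induction over phases that the algorithm never aborts. Given $\varepsilon_i\le\rechtEps$, \cref{lemma:recht} yields $\norm{K_{\ti}-\Kstar}\le\rechtConst\varepsilon_i$ and $\Jof{K_{\ti}}\le\Jstar+\rechtConst\varepsilon_i^2\le\optCostBound+\rechtConst\rechtEps^2$; the latter makes $K_{\ti}$ strongly stabilizing for $(\Astar,\Bstar)$ and, via the standard relation $\norm{K}^2\le\Jof{K}/(\costMatLower\noiseStd^2)$, gives $\norm{K_{\ti}}\le\kappa$, so neither abort condition fires---under $\goodEvent[w]$ a strong-stability estimate then keeps $\norm{x_t}^2\le\xBreak$ throughout the phase. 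To bound the regret I would use the Bellman identity $c_t - \Jof{K_{\ti}} = x_t^T P_{K_{\ti}} x_t - \EEBrk{x_{t+1}^T P_{K_{\ti}} x_{t+1}\mid\mathcal{F}_t}$, whose sum over phase $i$ telescopes into boundary terms bounded by $\norm{P_{K_{\ti}}}\xBreak$ plus a mean-zero martingale. Writing $\regret$ as the warm-up cost plus $\sum_i[(\ti[i+1]-\ti)(\Jof{K_{\ti}}-\Jstar) + \text{boundary}_i + \text{mart}_i]$ and taking expectations, phase $i$ contributes $(\ti[i+1]-\ti)\rechtConst\varepsilon_i^2 = 3\ti\cdot O(\mathrm{poly}\cdot\log T/\ti) = O(\mathrm{poly}\cdot\log T)$ plus an $O(\mathrm{poly}\cdot\log T)$ boundary term; summing over the $\nT = O(\log T)$ phases gives $\EEBrk{\regret\indEvent{\goodEvent}}\le\mathrm{poly}\cdot\log^2 T$.

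Finally, the abort procedure is what lets me control $\goodEvent^c$ in expectation. After an abort the algorithm plays the stable $K_0$ forever, so the closed loop never blows up exponentially and $\sum_t c_t$ is a sum of quadratic forms in a stable linear system driven by Gaussian noise; hence $\EEBrk{\prn{\sum_t c_t}^2}\le\mathrm{poly}(T)$ (polynomial, not exponential, in $T$). By Cauchy--Schwarz, $\EEBrk{\regret\indEvent{\goodEvent^c}}\le\sqrt{\EEBrk{\prn{\sum_t c_t}^2}\,\PP{\goodEvent^c}}\le\sqrt{\mathrm{poly}(T)\cdot T^{-3}}=o(1)$, so the failure event is negligible. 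Adding the two contributions yields $\EEBrk{\regret}\le\mathrm{poly}\cdot\log^2 T$ once $T$ exceeds the $\mathrm{poly}$ threshold needed for $T\ge\tBase$ and for the concentration bounds to hold. The two genuinely delicate points are thus the excitation lemma $\goodEvent[x]$ (responsible for the logarithmic rate) and the second-moment control of the post-abort cost (responsible for turning a high-probability guarantee into an expected one).
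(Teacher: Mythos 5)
Your proposal is correct and follows essentially the same route as the paper's proof: the same three-part good event (the self-normalized OLS confidence bound of \cref{lemma:parameterEst} with $z_t = x_t$ and target $x_{t+1}-\Bstar u_t$, a max-noise bound, and noise-driven excitation of $V_t$ --- which the paper obtains from \cref{lemma:VtLowerBound}, exactly the conditionally-Gaussian lower-bound argument you anticipate, valid without any a priori state bound); the same phase induction combining \cref{lemma:recht} with the cost-to-stability conversion (\cref{lemma:costToStability,lemma:goodController}) to establish $(\kappa,\gamma)$-strong stability, $\norm{K_{\ti}}\le\kappa$, and hence no abort on the good event (\cref{lemma:goodOperationA}); and the same per-phase steady-state comparison, since the paper's \cref{lemma:steadyStateCostConvergence} is proved by precisely the Bellman-identity telescoping you describe, yielding $(\ti[i+1]-\ti)\rechtConst\rechtEps^2 4^{-i} = O(\rechtConst\rechtEps^2\tBase)$ plus an $O(\costMatUpper\kappa^6\xBreak)$ boundary term per phase, summed over $\nT = O(\log T)$ phases.

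The one genuine divergence is the failure term. You bound $\EEBrk{\indEvent{\goodEventA^c}\sum_t c_t}$ by Cauchy--Schwarz against an unconditional second-moment bound $\EEBrk{\prn{\sum_t c_t}^2} \le \poly{T}$, paying for it with the smaller failure probability $T^{-3}$. This works: the second-moment bound holds precisely because abort fires at most one step after the threshold is crossed, so $x_{\tAbort}$ has polynomially (indeed polylogarithmically) bounded moments, and $K_0$ is strongly stable thereafter --- and retuning the good event to probability $1-T^{-3}$ only changes logarithmic factors. The paper instead works at failure probability $T^{-2}$ and computes the conditional expectation directly (\cref{lemma:R2AlgA}): before $\tAbort$ the per-step cost is deterministically at most $2\costMatUpper\kappa^2\xBreak$, so that term is $O(\xBreak T^{-1})$, and after $\tAbort$ it combines \cref{lemma:steadyStateCostConvergence} with a dedicated bound on $\EEBrk{\norm{x_{\tAbort}}^2\indEvent{\tAbort\le T}}$ restricted to the low-probability event (\cref{lemma:abortStateBoundA}, via the second part of \cref{lemma:expectedMaxNoise}). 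The direct computation buys sharper, explicit $O(T^{-1})$ remainders; your Cauchy--Schwarz route is cruder but more robust and avoids the conditional moment lemma.

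One wrinkle to patch when writing this up: in the telescoping step, the martingale increments are mean-zero conditioned on the past, but after multiplying by $\indEvent{\goodEventA}$ --- an event depending on the \emph{entire} trajectory --- they no longer are, so "boundary terms plus a mean-zero martingale" is not literally valid as stated. The paper's fix (proof of \cref{lemma:JiBoundA}) is to enlarge $\indEvent{\goodEventA}$ to $\indEvent{E_i\cap S_i}$, where $E_i$ (accuracy of $A_{\ti}$) and $S_i$ (boundedness of $x_{\ti}$) are determined by $\prn{x_{\ti}, A_{\ti}}$ alone, and to compare against a virtual trajectory $\tilde{x}_t$ that follows $K_{\ti}$ for the whole phase regardless of aborts; taking total expectation conditioned on $\prn{x_{\ti}, A_{\ti}}$ then legitimizes the steady-state comparison. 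This is a local repair, not a change of strategy, and with it your argument matches the paper's in full.
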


    We start by quantifying a high probability event on which the regret of the algorithm is small. The event holds when the error of the algorithm's estimate of $\Astar$ scales as $t^{-1/2}$, the states are bounded, and all control policies generated by the algorithm are strongly-stable. 
    This is formally given by the following lemma.
    
    \begin{lemma}
    	\label{lemma:goodOperationA}
    	Let $\gamma = 1 \big/ 2\kappa^2$. With probability at least $1 - T^{-2}$,
    	\begin{enumerate}[nosep,label=(\roman*)]
    		\item $K_{\ti}$ is $(\kappa,\gamma)-$strongly stable, for all $0 \le i \le \nT$;
    		\item $\norm{x_t}^2 \le \xBreak$, for all $1 \le t \le T$;
    		\item $\norm{\dEstA[\ti]} \leq \rechtEps 2^{-i}, \text{ for all } 0 \le i \le \nT$.
    	\end{enumerate}
    \end{lemma}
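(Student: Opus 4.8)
The plan is to separate the randomness from the deterministic bookkeeping: I would first introduce three high-probability events and then establish (i)--(iii) pointwise on their intersection. The events are an estimation event $\olsEventA$ on which \cref{lemma:parameterEst} holds for the regression of $y_{s+1} = x_{s+1} - \Bstar u_s$ on $z_s = x_s$ (so that $\Theta_\star = \Astar$, $\dEstA[\ti] = \Astar - A_{\ti}$, and $V_t = \lambda I + \sum_{s=1}^{t-1} x_s x_s^T$); an exploration event $\xNoiseExploreEventA$ on which the intrinsic noise excites the covariance, $\lambda_{\min}\prn{\sum_{s=1}^{t-1} x_s x_s^T} \ge c \noiseStd^2 (t-1)$ for all sufficiently large $t$ and an absolute constant $c$; and a noise-bound event $\xNoiseBoundEventA$ on which $\norm{w_t}^2 \le O(\noiseStd^2 d \log T)$ for every $t \le T$. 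Setting the confidence level to $\Theta(T^{-2})$ in each and taking a union bound, the intersection holds with probability at least $1 - T^{-2}$, and it suffices to prove the three claims there.

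The three claims are mutually entangled, and the crux is to break the cycle by induction on the phase index $i$: good estimation (iii) needs the log-determinant term in \cref{lemma:parameterEst} to be under control, which follows from boundedness (ii); strong stability (i) is derived from good estimation (iii) through \cref{lemma:recht}; and boundedness (ii) is derived from strong stability (i). I would therefore prove, for each $i$ in turn, first (iii) at $\ti$, then (i) for $K_{\ti}$, then (ii) on the block $[\ti, \ti[i+1])$ --- at each step invoking the boundedness already established on $[1, \ti)$ (from the warm-up and blocks $0, \dots, i-1$) to bound $\det(V_{\ti})/\det(V_1) \le \prn{1 + T \xBreak/\lambda}^d$, and hence the logarithmic term, by $O(d \log T)$.

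For the implications themselves, claim (iii) combines the trace bound of \cref{lemma:parameterEst} with $\tr{\dEstA[\ti]^T V_{\ti} \dEstA[\ti]} \ge \lambda_{\min}(V_{\ti}) \norm{\dEstA[\ti]}^2$ and the excitation $\lambda_{\min}(V_{\ti}) \ge c \noiseStd^2 \ti = c \noiseStd^2 \tBase 4^i$ from $\xNoiseExploreEventA$; since the numerator is dominated by $2\lambda \norm{\Astar}_F^2 = O(\lambda d \systemBound^2)$ up to the logarithmic term, the choice of $\tBase$ makes $\norm{\dEstA[\ti]} \le \rechtEps 2^{-i}$, the $4^i$ growth of $\ti$ producing the geometric $2^{-i}$ decay. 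Given (iii), \cref{lemma:recht} with $B = \Bstar$ yields $\Jof{K_{\ti}} \le \Jstar + \rechtConst \rechtEps^2 \le \optCostBound + \rechtConst \rechtEps^2$ and $\norm{K_{\ti} - \Kstar} \le \rechtConst \rechtEps$; writing the cost as $\noiseStd^2 \tr{P_{K_{\ti}}}$ and using $P_{K_{\ti}} \succeq Q \succeq \costMatLower I$ bounds both $\norm{K_{\ti}}$ and the conditioning of $P_{K_{\ti}}^{1/2}$ by $\kappa = \sqrt{(\optCostBound + \rechtConst \rechtEps^2)/(\costMatLower \noiseStd^2)}$, giving (i) with $\gamma = 1/2\kappa^2$. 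Finally, for (ii) I would unroll $x_{t+1} = (\Astar + \Bstar K_{\ti}) x_t + w_t$: strong stability gives $\norm{x_t} \le \kappa (1 - \gamma)^{t - \ti} \norm{x_{\ti}} + (\kappa/\gamma) \max_s \norm{w_s}$, so with $\xNoiseBoundEventA$ and the identity $(\kappa/\gamma)^2 = 4\kappa^6$ the choice of $\xBreak$ forces $\norm{x_t}^2 \le \xBreak$ over the whole block. The base cases --- (iii) at $\tBase$ and boundedness on $[1, \tBase)$ --- come from the warm-up under $K_0$, whose strong-stability scale $\kappa_0 = \sqrt{\startCostBound/(\costMatLower \noiseStd^2)}$ accounts for the $\kappa_0^6$ term in $\xBreak$. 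As a by-product, since $\norm{x_t}^2 \le \xBreak$ and $\norm{K_{\ti}} \le \kappa$ hold throughout, neither abort condition ever fires on this event.

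The step I expect to be the main obstacle is the exploration event $\xNoiseExploreEventA$, which must hold uniformly over all $t$ and be robust to the adaptively chosen --- and a priori possibly destabilizing --- policies, since the lower bound on $\lambda_{\min}$ is precisely what drives the geometric improvement of the estimate. The natural approach is an anti-concentration / small-ball argument: for a fixed direction $v$, the increment $v^T x_{s+1} = v^T(\Astar + \Bstar K) x_s + v^T w_s$ has a conditionally Gaussian part of variance $\noiseStd^2$ that is independent of the past, so a small-ball estimate controls the number of small coordinates, and a net over $v$ lifts this to a bound on $\lambda_{\min}$. The delicate points are making this uniform in $t$ and discretizing the minimum (rather than the maximum) eigenvalue without a priori control of $\lambda_{\max}$; a block-martingale small-ball analysis in the spirit of the system-identification literature is the cleanest route. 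A secondary care point is the within-block bootstrapping for (ii): the bound $\norm{x_t}^2 \le \xBreak$ must be propagated step by step so that the abort is not triggered mid-block, which I would handle by a short sub-induction on $t$ inside each phase.
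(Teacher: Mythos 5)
Your architecture is exactly the paper's: the same three events (least-squares confidence, covariance excitation, max-noise bound) with a union bound at level $T^{-2}$, and the same induction breaking the (i)--(ii)--(iii) cycle, with (iii) from the excitation lower bound plus the log-determinant control, (i) from \cref{lemma:recht} followed by the cost-to-stability conversion ($J(K) \le \optCostBound + \rechtConst\rechtEps^2$ implies $(\kappa,\gamma)$-strong stability with $H = P^{-1/2}$), and the base case from the warm-up under $K_0$. One remark on what you flag as the main obstacle: the exploration event requires no new small-ball machinery. The paper invokes Theorem 20 of \citet{cohen2019learning} (restated as \cref{lemma:VtLowerBound}), which applies to any adapted, conditionally Gaussian sequence with $\EEBrk{x_t x_t^T \mid \mathcal{F}_{t-1}} \succeq \noiseStd^2 I$ --- a condition that holds here regardless of how badly the adaptive policy behaves, precisely because the conditional covariance of $x_t$ is $\noiseStd^2 I$. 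Your proposed net-plus-small-ball argument would essentially re-derive that lemma.

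The one step that fails as written is your claim (ii). You unroll within phase $i$ anchored at $x_{\ti}$, for which your invariant supplies only $\norm{x_{\ti}}^2 \le \xBreak$, giving
\begin{equation*}
\norm{x_t} \le \kappa(1-\gamma)^{t-\ti}\norm{x_{\ti}} + \frac{\kappa}{\gamma}\max_{1\le s\le T}\norm{w_s} \le \kappa\sqrt{\xBreak} + \frac{\kappa}{\gamma}\max_{1\le s\le T}\norm{w_s},
\end{equation*}
which, since $\kappa \ge 1$, is strictly larger than $\sqrt{\xBreak}$: the induction does not close, and your sub-induction on $t$ inside the phase cannot repair a multiplicative $\kappa$ loss incurred afresh at every phase boundary. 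The paper's fix is \cref{lemma:multiControlBound}: because every phase has length at least $\tBase \ge \gamma^{-1}\log(2\kappa)$, the term carrying the phase-initial state contracts by a factor $1/2$ across each phase, so the bound for all $t \ge \tBase$ is anchored \emph{once} at the warm-up end state, $\norm{x_t} \le 3\kappa\max\brc{\norm{x_{\tBase}}/2,\, (\kappa/\gamma)\max_s\norm{w_s}}$, and it is the warm-up bound $\norm{x_{\tBase}} \le \noiseStd\kappa_0^3\sqrt{60 d \log 3T}$ (your base case) --- not $\sqrt{\xBreak}$ itself --- that makes the right-hand side at most $\sqrt{\xBreak}$ under the stated choice of $\xBreak$. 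To repair your proof, strengthen the inductive invariant from $\norm{x_t}^2 \le \xBreak$ to the chained bound $\norm{x_{\ti}} \le \max\brc{\norm{x_{\tBase}}, 2(\kappa/\gamma)\max_s\norm{w_s}}$ carried across phases; with that change your argument coincides with the paper's.
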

    
    Here we give a sketch of the proof of \cref{lemma:goodOperationA}, deferring technical details to \cref{sec:proofsA}.
    
    \begin{proof}[(sketch)]
	    Consider \cref{lemma:parameterEst} with $z_t = x_t, y_{t+1} = x_{t+1} - \Bstar u_t$, $V_t = \lambda I + \sum_{s=1}^{t-1} x_s x_s^T$ and $\dEstA  = A_t - \Astar$, then we have with probability at least $1 - \tfrac13 T^{-2}$
	    \begin{equation} \label{eq:paramEstAV}
	        \tr{\dEstA^T V_t \dEstA}
	        \le
	        4 \noiseStd d \log \prn{3 d T^2 \frac{\det \prn{V_t}}{\det \prn{V_1}}} + 2\lambda d \systemBound^2,
	    \end{equation}
	    for all $t \ge 1$. Transforming \cref{eq:paramEstAV} into the desired bound requires that we bound $V_t$ from above and below.
	    In what follows we show $\norm{V_t} \le \lambda t$ on one hand, and $V_t \succeq \frac{\noiseStd^2 t}{40} I $ on the other hand.
	    Using the upper bound and choice of parameters, one can show that simplifying the right hand side of \cref{eq:paramEstAV} yields $\tr{\dEstA^T V_t \dEstA} \le \noiseStd^2 \rechtEps^2 \tBase / 40$. Complementing this with the lower bound gets us
	    \begin{align*}
		    \norm{\dEstA}^2
		    \le
		    \tr{\dEstA^T \dEstA}
		    \le
		    \frac{40}{\noiseStd^2 t}\tr{\dEstA^T V_t \dEstA}
		    \le
		    \frac{\rechtEps^2 \tBase}{t}
		    ,
	    \end{align*}
	    and taking the square root, we obtain the desired estimation error bound that indeed scales as $t^{-1/2}$ (up to logarithmic factors).
	    
	    For a lower bound on $V_t$, notice that the system noise $w_t$ ensures that we have a sufficient exploration of the state space. Formally, we have
	    $$
	    \EEBrk{V_t}
	    \succeq
	    \lambda I
	    +
	    \sum_{s=1}^{t-1} \EEBrk{x_s x_s^T}
	    \succeq
	    t \noiseStd^2 I
	    ,
	    $$
	    where we used $\EEBrk{x_s x_s^T} \succeq \EEBrk{w_s w_s^T} \succeq \sigma^2 I$ and $\lambda \ge \sigma^2$.
	    Applying a measure concentration argument yields the sought-after high-probability lower bound on $V_t$.
	    
	    Now, for an upper bound on $V_t$, notice that 
	    $$
	    \norm{V_t}
	    \le
	    \lambda
	    +
	    \sum_{s=1}^{t-1} \norm{x_s}^2
	    $$
	    thus it suffices to show that $\norm{x_t}^2 \le \xBreak = \lambda$. The proof of the lemma now follows inductively by the following argument. If the parameter estimation at time $\ti$ holds then $K_{\ti}$ is strongly-stable. This implies that the states throughout phase $i$ satisfy $\norm{x_t}^2 \le \xBreak$ which in turn implies the upper bound on $V_{\ti[i+1]}$. Thus we can bound the parameter estimation error at time $\ti[i+1]$. We note that the initial parameter estimation, i.e., at time $\tBase$, follows from the strong-stability of $K_0$ and by taking the warm-up duration $\tBase$ to be sufficiently long. 
	\end{proof}
    
	\begin{proof}[of \cref{thm:unknownAregret}]
	    Let $\goodEventA$ be the event where \cref{lemma:goodOperationA} hold, and notice that the algorithm does not abort on $\goodEventA$. Defining $J_i = \sum_{t = \ti}^{\ti[i+1] - 1} x_t^T \prn{Q + K_{\ti}^T R K_{\ti}} x_t$,
		we have the following decomposition of the regret:
		\begin{align*}
		    \EEBrk{\regret}
			=
			R_1 + R_2 + R_3 - T \cdot \Jstar,
		\end{align*}
		where 
		\begin{align*}
		    R_1 = \EEBrk{\indEvent{\goodEventA} \sum_{i=0}^{\nT} J_i};
		    \qquad
		    R_2 = \EEBrk{\indEvent{\goodEventA^c}\sum_{t=\tBase}^{T} c_t};
		    \qquad 
		    R_3 = \EEBrk{\sum_{t=1}^{\tBase - 1} c_t},
		\end{align*}
		are the costs due to success, failure, and warm-up respectively. 
		We now bound each of $R_1,R_2,R_3$ to conclude the proof.
	
		Starting with $R_1$, the following lemma uses the strong-stability of $K_{\ti}$ (whenever $\goodEventA$ holds) to show that $J_i$ is closely related to the steady-state cost of $K_{\tau_i}$. 
		
		\begin{lemma} \label{lemma:JiBoundA}
		    Fix some $i$ such that $0 \le i \le \nT$, and define the event
		    $
		    	E_i
		    	=
		    	\brc{
		    	    \norm{\dEstA[\ti]}
		    	    \le
		    	    \rechtEps 2^{-i}
		    	}
		    	.
			$
			We have 
			\[
			    \EEBrk{\indEvent{\goodEventA} J_i}
			    \le
			    \prn{\ti[i+1] - \ti} \EEBrk{\indEvent{E_i} \Jof{K_{\ti}}} 
			    + 
			    4 \costMatUpper \kappa^6 \xBreak.
			\]
		\end{lemma}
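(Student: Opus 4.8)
The plan is to compare the realized phase cost $J_i$ against the steady-state cost $\Jof{K_{\ti}}$ of the controller played in phase $i$, and to control the gap using the strong stability of $K_{\ti}$. The first obstacle is that $\indEvent{\goodEventA}$ is not measurable with respect to the history $\mathcal{F}_{\ti}$ up to time $\ti$: the event $\goodEventA$ constrains the whole trajectory up to $T$, so I cannot simply condition on $x_{\ti}$. To sidestep this, I would introduce a \emph{virtual} trajectory $\bar x_{\ti}, \dots, \bar x_{\ti[i+1]-1}$ that applies $K_{\ti}$ throughout the phase (ignoring the abort rule), started at $\bar x_{\ti} = x_{\ti}$ and driven by the same noise, with cost $\bar J_i$. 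On $\goodEventA$ the algorithm never aborts (condition (ii) of \cref{lemma:goodOperationA}), so $\bar x_t = x_t$ and $\bar J_i = J_i$ there. Since $\goodEventA \subseteq \tilde E_i := E_i \cap \brc{\norm{x_{\ti}}^2 \le \xBreak}$ and $\bar J_i \ge 0$, this yields the pointwise bound $\indEvent{\goodEventA} J_i \le \indEvent{\tilde E_i} \bar J_i$, where crucially $\tilde E_i$ is $\mathcal{F}_{\ti}$-measurable (both the estimate $A_{\ti}$ and the state $x_{\ti}$ are known at time $\ti$).

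Next I would apply the tower property, $\EEBrk{\indEvent{\tilde E_i} \bar J_i} = \EEBrk{\indEvent{\tilde E_i}\, \EEBrk{\bar J_i \mid \mathcal{F}_{\ti}}}$, and evaluate the inner conditional expectation explicitly. Writing $L = \Astar + \Bstar K_{\ti}$ and $W = Q + K_{\ti}^T R K_{\ti}$, the virtual state satisfies $\bar x_t = L^{t-\ti} x_{\ti} + \sum_{s=\ti}^{t-1} L^{t-1-s} w_s$, so $\EEBrk{\bar x_t \bar x_t^T \mid \mathcal{F}_{\ti}} = L^{t-\ti} x_{\ti} x_{\ti}^T (L^T)^{t-\ti} + \noiseStd^2 \sum_{j=0}^{t-\ti-1} L^j (L^T)^j$. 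Taking the trace against $W$, the noise contribution at each $t$ is a partial sum of the convergent series defining $\Jof{K_{\ti}} = \noiseStd^2 \tr{W \sum_{j \ge 0} L^j (L^T)^j}$ and is therefore at most $\Jof{K_{\ti}}$, while the first term sums over the phase to a transient. This gives $\EEBrk{\bar J_i \mid \mathcal{F}_{\ti}} \le (\ti[i+1] - \ti) \Jof{K_{\ti}} + \sum_{j \ge 0} \tr{W L^j x_{\ti} x_{\ti}^T (L^T)^j}$.

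It remains to bound the transient sum. On $E_i$ we have $\norm{\dEstA[\ti]} \le \rechtEps 2^{-i} \le \rechtEps$, so \cref{lemma:recht} gives $\Jof{K_{\ti}} \le \Jstar + \rechtConst \rechtEps^2 \le \optCostBound + \rechtConst \rechtEps^2$; together with the choice of $\kappa$ this makes $K_{\ti}$ be $(\kappa, \gamma)$-strongly stable exactly as in the proof of \cref{lemma:goodOperationA}, so that $\norm{L^j} \le \kappa (1-\gamma)^j$ and $\norm{K_{\ti}} \le \kappa$. Hence $\norm{W} \le \costMatUpper(1 + \kappa^2) \le 2 \costMatUpper \kappa^2$ and $\sum_{j \ge 0} \norm{L^j x_{\ti}}^2 \le \kappa^2 \norm{x_{\ti}}^2 \big/ (1 - (1-\gamma)^2) \le 2 \kappa^4 \norm{x_{\ti}}^2$ using $\gamma = 1/2\kappa^2$, bounding the transient by $4 \costMatUpper \kappa^6 \norm{x_{\ti}}^2$. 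On $\tilde E_i$ we have $\norm{x_{\ti}}^2 \le \xBreak$, so substituting and using $\indEvent{\tilde E_i} \le \indEvent{E_i}$ for the leading term gives $\EEBrk{\indEvent{\goodEventA} J_i} \le (\ti[i+1] - \ti) \EEBrk{\indEvent{E_i} \Jof{K_{\ti}}} + 4 \costMatUpper \kappa^6 \xBreak$, as claimed. The main difficulty is the measurability mismatch between the global event $\goodEventA$ and a per-phase analysis, resolved by coupling to the virtual trajectory and to the $\mathcal{F}_{\ti}$-measurable event $\tilde E_i$; the transient computation is then routine given strong stability.
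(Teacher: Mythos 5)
Your proposal is correct and follows essentially the same route as the paper: the paper likewise couples to a virtual trajectory $\tilde{x}_t$ started at $x_{\ti}$ under $K_{\ti}$, bounds $\indEvent{\goodEventA} J_i$ by $\indEvent{E_i \cap S_i}$ times the virtual cost with $S_i = \brc{\norm{x_{\ti}}^2 \le \xBreak}$, conditions on $(x_{\ti}, A_{\ti})$, and invokes strong stability of $K_{\ti}$ on $E_i$ (via \cref{lemma:goodController}) to apply \cref{lemma:steadyStateCostConvergence} with transient $\tfrac{2\costMatUpper\kappa^4}{\gamma} = 4\costMatUpper\kappa^6$. The only deviation is cosmetic: where the paper cites \cref{lemma:steadyStateCostConvergence} (proved by telescoping $x_s^T P x_s$ through the Lyapunov equation), you re-derive the same bound inline by unrolling the recursion and comparing partial sums of $\tr{W\sum_j L^j (L^T)^j}$ to $\Jof{K_{\ti}}$, arriving at the identical constant.
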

		
		We further relate the lemma's bound to the cost of the optimal policy using \cref{lemma:recht}. This gets us
	    \begin{align*}
		   \prn{\ti[i+1] - \ti}\EEBrk{\indEvent{E_i} \Jof{K_{\ti}}} 
		   &\le
		   \prn{\ti[i+1] - \ti}\prn{\Jstar
		   +
		   \rechtConst \rechtEps^2 4^{-i}} \\
		   &\le
		   \prn{\ti[i+1] - \ti} \Jstar
		   +
		   3 \rechtConst \rechtEps^2 \tBase.
		\end{align*}
		Next, summing over $i$, noticing that $\sum_{i=0}^{\nT} \ti[i+1] - \ti \le T$, and simplifying the arguments yields
		\begin{equation*}
		    R_1
		    \le
		    T \cdot \Jstar
		    +
		    \nT \prn{6 \rechtConst \rechtEps^2 \tBase + 8 \costMatUpper \kappa^6 \xBreak}.
		\end{equation*}
	
		Moving to $R_2$, let $\tAbort$ be the time when the algorithm decides to abort, formally,
		$$
		\tAbort = \min\brc[1]{t \ge \tBase \bigm| \norm{x_t}^2 > \xBreak \text{ or } \norm{K_t} > \kappa},
		$$
		where we treat $\min{\emptyset} = T+1$.
		Then we have the following bound on $R_2$.
		\begin{equation*}
		    R_2
		    \le
		    {\EEBrk{\indEvent{\goodEventA^c}\sum_{t=\tBase}^{\tAbort - 1} c_t}}
		    +
		    {\EEBrk{\sum_{t=\tAbort}^{T} c_t}}.
		\end{equation*}
		Now, the state and control policy before $\tAbort$ are bounded by $\xBreak$ and $\kappa$ respectively hence $c_t \le 2 \costMatUpper \kappa^2 \xBreak$. Further recalling that $\PP{\goodEventA^c} \le T^{-2}$ bounds the first term. After $\tAbort$ the stable controller $K_0$ is played for the remaining period. This ensures that the state will not keep growing however some care is required as the state at $\tAbort$, $x_{\tAbort}$, is not bounded. The above is made formal in the following lemma.
		\begin{lemma} \label{lemma:R2AlgA}
		    $
		    R_2
		    \le
		    	\Jof{K_0}
		    	+
		    	2 \costMatUpper \kappa^2 \xBreak
		    + o(1)
		   .
		   $
		\end{lemma}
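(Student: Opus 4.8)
The plan is to bound the two terms in the given split of $R_2$ separately. The first term, $\EEBrk{\indEvent{\goodEventA^c}\sum_{t=\tBase}^{\tAbort - 1} c_t}$, collects the cost incurred \emph{before} the abort triggers. By definition of $\tAbort$, for every $t < \tAbort$ we have $\norm{x_t}^2 \le \xBreak$ and the active controller satisfies $\norm{K_t} \le \kappa$, so the instantaneous cost obeys $c_t = x_t^T Q x_t + u_t^T R u_t \le \costMatUpper\norm{x_t}^2 + \costMatUpper\kappa^2\norm{x_t}^2 \le 2\costMatUpper\kappa^2\xBreak$ (using $\kappa \ge 1$). Summing over at most $T$ steps gives a deterministic bound of $2\costMatUpper\kappa^2\xBreak\, T$ on the inner sum, and multiplying by $\PP{\goodEventA^c}\le T^{-2}$ yields a contribution of order $2\costMatUpper\kappa^2\xBreak / T$, which is $o(1)$. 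Hence the first term is absorbed into the $o(1)$ slack.

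The substantive work is the second term, $\EEBrk{\sum_{t=\tAbort}^{T} c_t}$, the cost accrued \emph{after} aborting. From $\tAbort$ onward the algorithm plays the fixed stable controller $K_0$, so the closed-loop dynamics are $x_{t+1} = (\Astar + \Bstar K_0) x_t + w_t$ with $\Astar + \Bstar K_0$ strongly stable. The goal is to show this term is essentially $\Jof{K_0}$ up to lower-order terms. I would argue as follows. Because $K_0$ is $(\kappa_0,\gamma_0)$-strongly stable, the state contracts geometrically: writing $M = \Astar+\Bstar K_0 = HLH^{-1}$ with $\norm{L}\le 1-\gamma_0$, we have $\norm{x_{\tAbort+j}} \le \kappa_0(1-\gamma_0)^j\norm{x_{\tAbort}} + (\text{accumulated noise})$. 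The accumulated-noise part, when squared and summed, contributes exactly the steady-state cost of $K_0$, which is at most $\Jof{K_0}$ per step in expectation. The transient part, driven by the (unbounded) initial state $x_{\tAbort}$, contributes a geometric sum $\sum_{j\ge 0}\kappa_0^2(1-\gamma_0)^{2j}\,\EEBrk{\norm{x_{\tAbort}}^2}\cdot(\text{cost constants})$, which is finite and proportional to $\EEBrk{\norm{x_{\tAbort}}^2}$.

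The key obstacle, flagged in the lemma's preamble, is that $x_{\tAbort}$ is \emph{not} bounded by $\xBreak$: the abort fires precisely \emph{because} $\norm{x_{\tAbort}}^2 > \xBreak$, and a priori $x_{\tAbort}$ could be arbitrarily large. The way I would control $\EEBrk{\norm{x_{\tAbort}}^2}$ is to note that $x_{\tAbort}$ arises from $x_{\tAbort - 1}$ (which \emph{does} satisfy $\norm{x_{\tAbort-1}}^2 \le \xBreak$) via one dynamics step, $x_{\tAbort} = (A+BK)x_{\tAbort-1} + w_{\tAbort-1}$ for some controller with $\norm{K}\le\kappa$; hence $\EEBrk{\norm{x_{\tAbort}}^2} \le 2(\systemBound + \systemBound\kappa)^2\xBreak + 2\noiseStd^2 d$ by the bound on the previous state and the Gaussian moment of $w_{\tAbort-1}$. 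This single-step argument is the crux — it converts the unboundedness into a clean $O(\xBreak)$ bound. Combining, the post-abort term is at most $\Jof{K_0}$ (steady-state part, $T$ steps) plus a constant multiple of $\xBreak$ from the transient, and the transient constant is independent of $T$; after accounting for the finite geometric sum, everything beyond $\Jof{K_0} + 2\costMatUpper\kappa^2\xBreak$ is $o(1)$. I would present the geometric-contraction estimate and the single-step moment bound as the two technical ingredients, and conclude by assembling them into the stated inequality.
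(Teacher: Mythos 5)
Your decomposition and one-step idea for $x_{\tAbort}$ mirror the paper's proof, but there is a genuine gap in how you handle the post-abort term: you never use that aborting is a rare event. The sum $\EEBrk{\sum_{t=\tAbort}^{T} c_t}$ runs over up to $T$ steps, each contributing roughly $\Jof{K_0}$ in steady state, so your ``steady-state part'' is really $T\,\Jof{K_0}$ --- a linear-in-$T$ term --- and your claim that it is ``at most $\Jof{K_0}$'' over $T$ steps is unjustified as written. The paper's proof conditions on $(\tAbort, x_{\tAbort})$, applies \cref{lemma:steadyStateCostConvergence} to get a bound of the form $T\Jof{K_0}\,\indEvent{\tAbort \le T} + 4\costMatUpper\kappa_0^6\norm{x_{\tAbort}}^2\indEvent{\tAbort \le T}$ inside the expectation, and then crucially observes $\brc{\tAbort \le T} \subseteq \goodEventA^c$, so $\PP{\tAbort \le T} \le T^{-2}$; this is what turns $T\Jof{K_0}$ into $\Jof{K_0}T^{-1} = o(1)$. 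The same weighting is needed for your transient term: a ``constant multiple of $\xBreak$'' is \emph{not} $o(1)$ (recall $\xBreak = \Theta(\log T)$), and its constant, of order $\costMatUpper\kappa_0^6\systemBound^2\kappa^2$, need not fit under the stated $2\costMatUpper\kappa^2\xBreak$ (which in the paper comes from the \emph{pre}-abort term); it only becomes negligible after multiplication by the $T^{-2}$ probability factor.

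A second, related flaw is your moment bound $\EEBrk{\norm{x_{\tAbort}}^2} \le 2(\systemBound + \systemBound\kappa)^2\xBreak + 2\noiseStd^2 d$: it treats $w_{\tAbort - 1}$ as having its unconditional Gaussian second moment, but $\tAbort$ is a stopping time determined by $x_{\tAbort}$ itself --- the abort fires precisely when $x_{\tAbort}$, and hence typically $w_{\tAbort-1}$, is large --- so this selection bias invalidates $\EEBrk{\norm{w_{\tAbort-1}}^2} = \noiseStd^2 d$. Moreover, what the regret bound actually requires is $\EEBrk{\norm{x_{\tAbort}}^2 \indEvent{\tAbort \le T}}$, a product with a rare-event indicator that you cannot factor without independence; the paper resolves both issues at once (\cref{lemma:abortStateBoundA}) by bounding $\norm{w_{\tAbort-1}} \le \max_{1 \le t \le T}\norm{w_t}$ and invoking \cref{lemma:expectedMaxNoise} to get $\EEBrk{\indEvent{\tAbort \le T}\max_t \norm{w_t}^2} \le 5 d \noiseStd^2 T^{-2}\log 3T$. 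Finally, your premise that $\norm{x_{\tAbort-1}}^2 \le \xBreak$ fails in the boundary case $\tAbort = \tBase$: the abort condition is only checked during the main loop, so the last warm-up state carries no pathwise bound off the good event, and the paper handles this case separately via the strong stability of $K_0$ (\cref{lemma:singleControlBound}) combined with the same max-noise device.
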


		Last, for $R_3$, the strongly stable controller $K_0$ is played throughout warm-up. Unlike $R_2$, here the initial state $x_1 = 0$ is clearly bounded and thus it is not difficult to show that $R_3$ scales linearly with the warm-up duration $\tBase$. 
		Since the latter behaves as $O\prn{\log T}$, the desired result is obtained. This is made formal in the following lemma.
		\begin{lemma} \label{lemma:R3AlgA}
		$
			R_3
			\le
			\tBase \Jof{K_0}
		$
		.
		\end{lemma}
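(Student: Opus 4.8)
The plan is to express $R_3$ as a sum of per-step expected costs and bound each one by the steady-state cost $\Jof{K_0}$. Since $K_0$ is played throughout warm-up, we have $u_t = K_0 x_t$ and hence $c_t = x_t^T \prn{Q + K_0^T R K_0} x_t$, so by linearity of expectation $R_3 = \sum_{t=1}^{\tBase - 1} \EEBrk{x_t^T \prn{Q + K_0^T R K_0} x_t}$. Writing $\Sigma_t = \EEBrk{x_t x_t^T}$ for the state covariance, each summand equals $\tr{\prn{Q + K_0^T R K_0} \Sigma_t}$, so it suffices to show that $\tr{\prn{Q + K_0^T R K_0} \Sigma_t} \le \Jof{K_0}$ for every $t$ in the warm-up range.

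First I would set up the covariance recursion. Letting $M = \Astar + \Bstar K_0$ be the closed-loop matrix and using that $x_{t+1} = M x_t + w_t$ with $w_t$ independent of $x_t$ and $\EEBrk{w_t w_t^T} = \noiseStd^2 I$, the covariance evolves as $\Sigma_{t+1} = M \Sigma_t M^T + \noiseStd^2 I$, starting from $\Sigma_1 = 0$ since $x_1 = 0$. The steady-state covariance $\Sigma_\infty$ is the unique positive semidefinite solution of the Lyapunov equation $\Sigma_\infty = M \Sigma_\infty M^T + \noiseStd^2 I$, which exists and is finite because $K_0$ is strongly stable (so $M$ has spectral radius strictly below one). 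Moreover $\Jof{K_0} = \tr{\prn{Q + K_0^T R K_0} \Sigma_\infty}$, since the exponential convergence $\Sigma_t \to \Sigma_\infty$ makes the Cesàro limit in the definition of $\Jof{\cdot}$ coincide with the per-step steady-state cost.

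The key step is a monotonicity argument showing $\Sigma_t \preceq \Sigma_\infty$ for all $t \ge 1$. This follows by induction: the base case is $\Sigma_1 = 0 \preceq \Sigma_\infty$, and if $\Sigma_t \preceq \Sigma_\infty$ then, using that the map $X \mapsto M X M^T$ preserves the positive semidefinite order, $\Sigma_{t+1} = M \Sigma_t M^T + \noiseStd^2 I \preceq M \Sigma_\infty M^T + \noiseStd^2 I = \Sigma_\infty$. Because $Q + K_0^T R K_0 \succeq 0$, the ordering $\Sigma_t \preceq \Sigma_\infty$ yields $\tr{\prn{Q + K_0^T R K_0} \Sigma_t} \le \tr{\prn{Q + K_0^T R K_0} \Sigma_\infty} = \Jof{K_0}$.

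Summing this bound over the warm-up steps gives $R_3 \le \prn{\tBase - 1}\Jof{K_0} \le \tBase \Jof{K_0}$, as claimed. I do not expect a serious obstacle here: the argument is clean once the covariance recursion is in place, and the only point requiring mild care is justifying that $\Jof{K_0}$ equals the steady-state trace $\tr{\prn{Q + K_0^T R K_0}\Sigma_\infty}$, which relies on the exponential convergence $\Sigma_t \to \Sigma_\infty$ guaranteed by the strong stability of $K_0$.
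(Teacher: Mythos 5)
Your proof is correct, but it takes a genuinely different route from the paper. The paper simply invokes its general-purpose \cref{lemma:steadyStateCostConvergence}, which is proved by a value-function telescoping argument: with $P$ solving $P = Q + K_0^T R K_0 + (\Astar+\Bstar K_0)^T P (\Astar+\Bstar K_0)$, the cumulative expected cost over $t$ steps equals $t\,\Jof{K_0} + \EEBrk{x_1^T P x_1 - x_{t+1}^T P x_{t+1}}$, and the additive term $\frac{2\costMatUpper\kappa_0^4}{\gamma_0}\norm{x_1}^2$ vanishes because $x_1=0$. You instead work on the dual (covariance) side: the Lyapunov recursion $\Sigma_{t+1} = M\Sigma_t M^T + \noiseStd^2 I$ with $\Sigma_1 = 0$, monotonicity of $X \mapsto MXM^T$ on the positive semidefinite order, and the induction $\Sigma_t \preceq \Sigma_\infty$, yielding the \emph{per-step} bound $\EEBrk{c_t} \le \Jof{K_0}$ for every warm-up step. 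Your argument is self-contained (it needs only $\rho(M)<1$, not the quantitative bound $\norm{P} \le 2\costMatUpper\kappa_0^4/\gamma_0$) and proves a pointwise-in-$t$ statement that is strictly stronger than the telescoped cumulative bound; on the other hand, it relies essentially on $\Sigma_1 \preceq \Sigma_\infty$, i.e., on $x_1 = 0$, whereas the paper's lemma handles an arbitrary initial state at the cost of the additive $\norm{x_1}^2$ term --- which is exactly why the paper factors the argument that way, so the same lemma can be reused in \cref{lemma:JiBoundA,lemma:R2AlgA} where the phase-initial state is nonzero. Your identification $\Jof{K_0} = \tr{\prn{Q + K_0^T R K_0}\Sigma_\infty}$, justified via exponential convergence $\Sigma_t \to \Sigma_\infty$ under strong stability, is the standard dual form of the paper's $\Jof{K_0} = \noiseStd^2\tr{P}$ and is sound.
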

		The final bound now follows by combining the bounds of $R_1,R_2$, and $R_3$ and from $\nT$, $\xBreak$, $\tBase$ being $O(\log T)$.
	\end{proof}
	
	For a full proof of \cref{lemma:JiBoundA,lemma:R2AlgA,lemma:R3AlgA}, see \cref{sec:proofsA}.

\subsection{Upper Bound for Unknown $\bm\Bstar$}
\label{sec:algB}

    We move to a setting where $\Astar$ is known, $\Bstar$ is unknown, but $\Kstar \Kstar^T \succeq \KstarLowerBound I$ for some unknown constant $\KstarLowerBound > 0$. 
    We show an efficient algorithm that achieves regret at most $O\prn{\KstarLowerBound^{-2}\log^2 T}$. 
    We propose \cref{alg:B} to that end. The algorithm operates in a similar fashion to \cref{alg:A} with warm-up with $K_0$ and then greedy with fail-safe, but with two main differences:
    \begin{enumerate}[nosep]
        \item It adds artificial noise to the action during warm-up;
        \item The warm-up length is not predetermined and implicitly depends on $\KstarLowerBound$.
    \end{enumerate}
	The first change ensures that the action space is explored uniformly during warm-up, and the second ensures that exploration continues at the same rate during the main loop where noise is not added.
    The specifics of these are made clear in what follows.
    
    \begin{algorithm}
    	\caption{} \label{alg:B}
    	\begin{algorithmic}[1]
    		\State \textbf{input:} parameters $\tBase, \xBreak, \kappa, \lambda, \KstarLowerBoundGuess$, a strongly stable controller $K_0$, and the state transition matrix $\Astar$.
    		
    		\State \textbf{initialize:} $n_T = \floor{\log_4(T/\tBase)}$, $n_s = n_T + 1$, $\ti[\nT+1] = T+1$.
    		\State \textbf{set:} $\ti \gets \tBase 4^i$, $\AlgKstarLowerBound \gets \KstarLowerBoundGuess 2^{-i}$  for all $0 \le i \le n_T$
            
            \For{$t = 1, \ldots, \tBase - 1$}
                \Comment{initial warm-up}
                \State \textbf{play} $u_t \sim \gaussDist{K_0 x_t}{\noiseStd^2 I}$
            \EndFor
            
            \For{\textbf{phase} $i = 0, \ldots, n_T$}
                \Comment{adaptive warm-up}
            
                \State $B_{\tau_i} = \argmin_B \sum_{s=1}^{\tau_i - 1} \norm{\prn{x_{s+1} - \Astar x_s} - B u_s}^2 + \lambda \norm{B}_F^2$
    			\State $K_{\tau_i} = \KoptOf{\Astar}{B_{\tau_i}}$.
    			
    			\If{$K_{\tau_i}^T K_{\tau_i} \succeq 3\AlgKstarLowerBound/2$}
    			    \State \textbf{save} $n_s = i$ and \textbf{break}.
    			\EndIf
    			
    		    \For{$t = \tau_i, \ldots, \tau_{i+1} - 1$}
    		        \State \textbf{play} $u_t \sim \gaussDist{K_0 x_t}{\noiseStd^2 I}$
    			\EndFor
    		\EndFor
    		
            \For{\textbf{phase} $i = n_s, \ldots, n_T$}
                \Comment{main loop}
                
                \State $B_{\tau_i} = \argmin_B \sum_{s=1}^{\tau_i - 1} \norm{\prn{x_{s+1} - \Astar x_s} - B u_s}^2 + \lambda \norm{B}_F^2$
    			\State $K_{\tau_i} = \KoptOf{\Astar}{B_{\tau_i}}$.
    			
    		    \For{$t = \tau_i, \ldots, \tau_{i+1} - 1$}
    			    \If{$\norm{x_t}^2 > \xBreak$ \textbf{or} $\norm{K_{\tau_i}} > \kappa$}
    				    \Comment{fail, abort}
    				    \State \textbf{abort} and play $K_0$ forever.
    			    \EndIf
    			    
    			    \State \textbf{play} $u_t = K_{\tau_i} x_t$.
    			\EndFor
    		\EndFor
    	\end{algorithmic}
    \end{algorithm}

    We now give a quantified restatement of  \cref{thm:unknownBregret}.
    
    \begin{theorem*}[\cref{thm:unknownBregret} restated] %
    	Suppose \cref{alg:B} is run with parameters
    	\begin{gather*}
    	    \kappa_0
            =
            \sqrt{\frac{\startCostBound}{\costMatLower \noiseStd^2}},
            \;\;
            \kappa
            =
            \sqrt{\frac{\optCostBound + \rechtEps^2 \rechtConst}{\costMatLower \noiseStd^2}},
            \;\;
            \tBase
            =
            \ceil{
            	\frac{80 k \lambda \prn{1 + \systemBound^2}}{\noiseStd^2 \rechtEps^2}
            	}
            ,
        	\\
            \xBreak
            =
            135 d \kappa^2 \noiseStd^2 \max\brc{\prn{1 + \systemBound}^2\kappa_0^6, 4 \kappa^6} \log\prn{4 T},
            \;\;
            \lambda
            =
            \kappa^2 \xBreak,
            \;\;
            \KstarLowerBoundGuess
        	=
        	4 \kappa \rechtConst \rechtEps
            .
    	\end{gather*}
    	Then for
    	$
    	T
    	\ge
    	\poly{\costMatLower^{-1}, \costMatUpper, \systemBound, \optCostBound, \startCostBound, d, k, \KstarLowerBound^{-1}}
    	$
    	we have
    	$
    	\EEBrk{\regret}
    	\le
    	\poly{\costMatLower^{-1}, \costMatUpper, \systemBound, \optCostBound, \startCostBound, d, k, \KstarLowerBound^{-1}}
    	\log^2 T
    	$.
    \end{theorem*}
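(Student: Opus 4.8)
The plan is to follow the template of \cref{thm:unknownAregret}, adapting it to the estimation of $\Bstar$ and to the two new features of \cref{alg:B}: the injected action noise during warm-up and the data-dependent warm-up length set by the decaying thresholds $\AlgKstarLowerBound = \KstarLowerBoundGuess 2^{-i}$. First I would isolate a high-probability event $\goodEventB$ (the analogue of \cref{lemma:goodOperationA}) on which every controller $K_{\ti}$ is $(\kappa,\gamma)$-strongly stable, the states satisfy $\norm{x_t}^2 \le \xBreak$, the estimates obey $\norm{\dEstB[\ti]} \lesssim 2^{-i}$ up to $\poly{\KstarLowerBound^{-1}}$ factors, and the warm-up terminates at a phase $\nStart$ with $\AlgKstarLowerBound[\nStart] \asymp \KstarLowerBound$. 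To this end I would instantiate \cref{lemma:parameterEst} with $z_t = u_t$ and $y_{t+1} = x_{t+1} - \Astar x_t = \Bstar u_t + w_t$, so that the relevant information matrix $V_t = \lambda I + \sum_{s<t} u_s u_s^T$ now acts on the \emph{action} space $\RR[k]$. Its upper bound $\norm{V_t} \le \lambda t$ follows exactly as in \cref{alg:A} from $\norm{u_s}^2 \le \kappa^2 \norm{x_s}^2 \le \kappa^2 \xBreak$ together with the choice $\lambda = \kappa^2 \xBreak$ (this $\kappa^2$ factor is precisely why $\lambda$ and $\tBase$ differ from their \cref{alg:A} counterparts).

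The crux is the lower bound on $V_t$, which I would argue separately in the two regimes. During warm-up $u_t \sim \gaussDist{K_0 x_t}{\noiseStd^2 I}$, so the injected noise gives $\EEBrk{u_t u_t^T} \succeq \noiseStd^2 I$ and hence $V_t \succeq c \noiseStd^2 t I$ by the same measure-concentration step as in \cref{alg:A}; this yields $\norm{\dEstB[\ti]} \lesssim 2^{-i}$ throughout warm-up, irrespective of any degeneracy of $\Kstar$. In the main loop there is no injected noise, so excitation must be extracted from the feedback law itself: writing $u_t = K_{\ti} x_t$ and using that the process noise forces $\EEBrk{x_t x_t^T} \succeq \noiseStd^2 I$, we get $\EEBrk{u_t u_t^T} = K_{\ti} \EEBrk{x_t x_t^T} K_{\ti}^T \succeq \noiseStd^2 K_{\ti} K_{\ti}^T$. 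Thus persistent excitation of the action space reduces to a uniform lower bound $K_{\ti} K_{\ti}^T \succeq c \KstarLowerBound I$ over all main-loop phases---exactly what the non-degeneracy assumption $\Kstar \Kstar^T \succeq \KstarLowerBound I$ and the break rule are engineered to deliver.

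The main technical step is therefore the analysis of the adaptive break. On $\goodEventB$, \cref{lemma:recht} gives $\norm{K_{\ti} - \Kstar} \le \rechtConst \rechtEps 2^{-i}$, so the smallest singular value of $K_{\ti}$ is within $\rechtConst \rechtEps 2^{-i}$ of that of $\Kstar$, which equals $\sqrt{\KstarLowerBound}$ when $\KstarLowerBound$ is taken as $\lambda_{\min}(\Kstar \Kstar^T)$. I expect to show that the first phase $\nStart$ at which the break rule of \cref{alg:B} fires---certifying $\sigma_{\min}(K_{\ti})^2 \ge \tfrac{3}{2}\AlgKstarLowerBound$, equivalently $K_{\ti} K_{\ti}^T \succeq \tfrac{3}{2}\AlgKstarLowerBound I$---satisfies $\AlgKstarLowerBound[\nStart] \asymp \KstarLowerBound$: since the threshold halves each phase while the estimate sharpens, the rule cannot fire while $\AlgKstarLowerBound \gg \KstarLowerBound$ (the certified value would exceed the true degeneracy of $\Kstar$ once the estimate is accurate), yet must fire once $\AlgKstarLowerBound \lesssim \KstarLowerBound$ and $2^{-i} \lesssim \sqrt{\KstarLowerBound}/(\rechtConst \rechtEps)$, which happens by phase $O(\log \KstarLowerBound^{-1})$. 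This pins down $4^{\nStart} \lesssim \KstarLowerBound^{-2}$, and the same singular-value comparison propagates $K_{\ti} K_{\ti}^T \succeq c\, \AlgKstarLowerBound[\nStart] I \succeq c' \KstarLowerBound I$ to every subsequent main-loop phase. This closes the induction---good estimation at $\ti$ yields strong stability of $K_{\ti}$, bounded states through phase $i$, and thus, via the action-space excitation now scaled by $\KstarLowerBound$, good estimation at $\ti[i+1]$---with the warm-up phases handled instead by the injected noise.

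With $\goodEventB$ established I would reuse the decomposition $\EEBrk{\regret} = R_1 + R_2 + R_3 - T \Jstar$ into success, failure, and warm-up costs. The bound on $R_1$ transfers from \cref{lemma:JiBoundA} and \cref{lemma:recht} almost verbatim, the only change being that the main-loop estimation error $\norm{\dEstB[\ti]}^2 \lesssim \poly{\KstarLowerBound^{-1}}\,\log T / \tau_i$ now contributes a $\poly{\KstarLowerBound^{-1}} \log^2 T$ term; the bound on $R_2$ carries over from the abort analysis using $\PP{\goodEventB^c} \le T^{-2}$. The genuinely different term is $R_3$: warm-up now lasts $\ti[\nStart] = \tBase 4^{\nStart} \lesssim \KstarLowerBound^{-2} \log T$ steps, but since the safe controller $K_0$ is played throughout (plus bounded injected noise, costing an extra $\noiseStd^2 \tr{R}$ per step) its expected per-step cost is $O(1)$, making $R_3$ of order $\KstarLowerBound^{-2} \log T$ (up to polynomial factors in the remaining parameters) the dominant $\KstarLowerBound$-dependent contribution; here I would use that $K_0$ is strongly stable so that even off $\goodEventB$ the warm-up cost stays controlled in expectation without any abort. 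Summing the three terms gives $\EEBrk{\regret} = \poly{\KstarLowerBound^{-1}, \costMatLower^{-1}, \costMatUpper, \systemBound, \optCostBound, \startCostBound, d, k} \log^2 T$. I expect the coupled break-timing-and-excitation argument of the previous paragraph to be the main obstacle: everything else mirrors \cref{alg:A}, but showing that the break fires exactly at the scale $\AlgKstarLowerBound[\nStart] \asymp \KstarLowerBound$, and that the resulting controllers stay non-degenerate enough for the system's own noise to keep the action space excited at rate $\KstarLowerBound$, is specific to this setting.
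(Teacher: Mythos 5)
Your proposal is correct and follows essentially the same route as the paper: the same least-squares instantiation with $z_t = u_t$, the same two-regime excitation argument (injected noise during warm-up, $\EEBrk{u_t u_t^T \mid K_{\ti}} \succeq \noiseStd^2 K_{\ti} K_{\ti}^T$ in the main loop), the same sandwich $\AlgKstarLowerBound[\nStart] \asymp \KstarLowerBound$ for the break time via \cref{lemma:recht} and a perturbation bound on $K_{\ti}K_{\ti}^T$, and the same $R_1 + R_2 + R_3 - T\Jstar$ decomposition with the warm-up term carrying the dominant $\KstarLowerBound^{-2}$ dependence. The one imprecision, which only changes constants, is in $R_3$: the injected noise does not merely add $\noiseStd^2 \tr{R}$ of control cost per step but also feeds into the state through $\Bstar \uNoise$, inflating the effective process-noise covariance to $\noiseStd^2 \prn{I + \Bstar \Bstar^T}$, which the paper absorbs as a $(1+\systemBound^2)$ factor on $\Jof{K_0}$.
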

    We provide the main ideas required to prove \cref{thm:unknownBregret}. 
    As in \cref{alg:A}, we first quantify the high probability event under which the regret of the algorithm is small.
    Let us first consider the parameter estimation error during warm-up, which is bounded by the following lemma.
    \begin{lemma}
    	\label{lemma:warmupParamEstB}
    	With probability at least $1 - T^{-2}$, it holds that $\norm{\dEstB[\ti]} \leq \rechtEps 2^{-i} \text{ for all } 0 \le i \le \nStart$.
    \end{lemma}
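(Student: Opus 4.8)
The plan is to mirror the proof of \cref{lemma:goodOperationA}(iii), replacing the regressor $x_t$ by the action $u_t$ and letting the exploratory role of the system noise $w_t$ be played by the injected action noise $\uNoise$. Concretely, I instantiate \cref{lemma:parameterEst} with $z_t = u_t$, $y_{t+1} = x_{t+1} - \Astar x_t$ and $\Theta_\star = \Bstar$; since $x_{t+1} - \Astar x_t = \Bstar u_t + w_t$ with $w_t$ i.i.d.\ $\gaussDist{0}{\noiseStd^2 I}$, the hypotheses hold. Writing $V_t = \lambda I + \sum_{s=1}^{t-1} u_s u_s^T$ and $\dEstB = B_t - \Bstar$, and using $\norm{\Bstar}_F^2 \le k\systemBound^2$, this gives, with probability at least $1 - \tfrac13 T^{-2}$ and for all $t \ge 1$,
\[
    \tr{\dEstB^T V_t \dEstB}
    \le
    4\noiseStd^2 d \log\prn{3 d T^2 \tfrac{\det\prn{V_t}}{\det\prn{V_1}}} + 2\lambda k \systemBound^2 .
\]
As in \cref{lemma:goodOperationA}, converting this to the claimed $t^{-1/2}$ decay requires an upper and a lower bound on $V_t$.

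For the \emph{upper} bound I would first show that the states stay bounded throughout warm-up. The key simplification relative to \cref{alg:A} is that the played controller is \emph{always} the fixed strongly stable $K_0$ perturbed by independent noise, so the closed loop is $x_{t+1} = \prn{\Astar + \Bstar K_0} x_t + \prn{\Bstar \uNoise + w_t}$ -- a single fixed strongly stable system driven by zero-mean Gaussian noise of covariance $\noiseStd^2\prn{I + \Bstar \Bstar^T}$. Hence no sequential-stability argument is needed, and standard Gaussian tail bounds on $w_t$ and $\uNoise$ give $\norm{x_t}^2 \le \xBreak$ for all warm-up steps with high probability (the inflated noise covariance is what produces the extra $\prn{1 + \systemBound}^2$ factor in $\xBreak$). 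From $u_t = K_0 x_t + \uNoise$ and a tail bound on $\norm{\uNoise}^2$ this yields $\norm{u_t}^2 = O\prn{\kappa_0^2 \xBreak}$, whence $\norm{V_t} \le \lambda t$ (using $\lambda = \kappa^2 \xBreak$), so that $\det\prn{V_t}/\det\prn{V_1} \le t^k$ and the logarithmic term above is $O\prn{k\log T}$; combined with the choice of $\tBase$, this bounds the trace by $\noiseStd^2 \rechtEps^2 \tBase / 40$ for every $t \le T$.

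For the \emph{lower} bound I would exploit that the injected noise is isotropic and independent of the state: $\EEBrk{u_s u_s^T} = K_0 \EEBrk{x_s x_s^T} K_0^T + \noiseStd^2 I \succeq \noiseStd^2 I$, so $\EEBrk{V_t} \succeq t\noiseStd^2 I$ (using $\lambda \ge \noiseStd^2$), and the same measure-concentration argument used for \cref{lemma:goodOperationA} upgrades this to $V_t \succeq \tfrac{\noiseStd^2 t}{40} I$ with high probability. Intersecting the three events (total failure probability $\le T^{-2}$) and evaluating at $t = \ti = \tBase 4^i$ yields
\[
    \norm{\dEstB[\ti]}^2
    \le
    \frac{40}{\noiseStd^2 \ti}\tr{\dEstB[\ti]^T V_{\ti} \dEstB[\ti]}
    \le
    \frac{40}{\noiseStd^2 \ti}\cdot\frac{\noiseStd^2 \rechtEps^2 \tBase}{40}
    =
    \frac{\rechtEps^2 \tBase}{\ti}
    =
    \rechtEps^2 4^{-i} ,
\]
and taking square roots gives the claim.

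I expect the main obstacle to be the high-probability lower bound $V_t \succeq \tfrac{\noiseStd^2 t}{40}I$. Although $\EEBrk{u_s u_s^T} \succeq \noiseStd^2 I$, the summands $u_s u_s^T$ are correlated across $s$ (because $x_s$ depends on the past), and the cross term $K_0 x_s \uNoise[s]^T$ is not sign-definite, so the bound cannot come from naive summation -- it needs the martingale matrix-concentration machinery, with the independence of $\uNoise[s]$ from the past used to control those cross terms. A secondary subtlety is that $\nStart$ is a random stopping time; I must therefore rely on the fact that \cref{lemma:parameterEst} holds uniformly over all $t$, so the estimation bound may be evaluated at the random phase boundaries $\ti$ for every $i \le \nStart$.
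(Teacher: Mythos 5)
Your proposal is correct and follows essentially the same route as the paper's own proof (\cref{lemma:goodEventB,lemma:goodWarmupB}): the same instantiation of \cref{lemma:parameterEst} with $z_t = u_t$ and $y_{t+1} = x_{t+1} - \Astar x_t$, the same upper bound on $\Vu{t}$ via the fixed strongly stable closed loop driven by the inflated noise $w_t + \Bstar\uNoise$ (yielding $\norm{u_t}^2 \le \lambda$ and the $\log\det$ bound), and the same lower bound via the matrix-concentration result for conditionally Gaussian adapted sequences (\cref{lemma:VtLowerBound}), which is precisely the machinery you anticipated needing for the correlated cross terms. The stopping-time subtlety you flag is resolved in the paper by pre-generating the noises $\uNoise$ and working with the virtual actions $\uVirt = K_0 x_t + \uNoise$, which coincide with $u_t$ throughout warm-up, so the exploration event is defined for every phase $0 \le i \le \nT$ independently of the random $\nStart$.
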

    
    Here we only give a sketch of the proof; for the full technical details, see \cref{sec:proofsB}.
    
    \begin{proof}[(sketch)]
        Consider \cref{lemma:parameterEst} with $z_t = u_t$, $y_{t+1} = x_{t+1} - \Astar x_t$, $V_t = \lambda I + \sum_{s=1}^{t-1} u_t u_t^T$ and $\dEstB = B_t - \Bstar$, then with probability at least $1 - \frac14 T^{-2}$
        \begin{equation*} \label{eq:paramEstBV}
            \tr{\dEstB^T V_t \dEstB}
            \le
            4 \noiseStd d \log \prn{4 d T^2 \frac{\det \prn{V_t}}{\det \prn{V_1}}} + 2\lambda k \systemBound^2,
        \end{equation*}
        for all $t \ge 1$. Hence, bounding $V_t$ from above and below as in \cref{lemma:goodOperationA} yields the desired parameter estimation error bound.
        
        Now, during warm-up $u_t \sim \gaussDist{K_0 x_t}{\noiseStd^2 I}$ which is equivalent to having $u_t = K_0 x_t + \uNoise$ where $\uNoise \sim \gaussDist{0}{\noiseStd^2 I}$ are i.i.d.~random variables. Note that just as $w_t$ provided exploration for $x_t$, here $\uNoise$ provides exploration for $u_t$. Indeed, for the lower bound, we have
        $$
        \EEBrk{V_t}
        \succeq
        \lambda I + \sum_{s=1}^{t-1} \EEBrk{u_s u_s^T}
        \succeq 
        \lambda I + \sum_{s=1}^{t-1} \EEBrk{\uNoise[s] \uNoise[s]^T}
        \succeq
        t \noiseStd^2 I
        ,
        $$
        and thus a measure concentration argument yields the desired high probability lower bound.
        For the upper bound, notice that
        $$
        \norm{V_t}
        \le
        \lambda + \sum_{s=1}^{t-1} \norm{u_s}^2
        \le
        \lambda + 2 \sum_{s=1}^{t-1} \prn[!]{\norm{K_0}^2\norm{x_s}^2 + \norm{\uNoise[s]}^2}
        ,
        $$
        and so the strong-stability of $K_0$ together with a high probability bound on the system and artificial noises yields the desired upper bound on $V_t$. Combining both upper and lower bounds concludes the proof.
    \end{proof}
	While the estimation rate during warm-up is desirable, adding constant magnitude noise to the action incurs regret that is linear in the warm-up length, even if $K_0 = \Kstar$, and as such we avoid this strategy during the main loop. 
	Nonetheless, the following lemma shows that the estimation rate continues into the main loop albeit with slightly different constants.
	
	\begin{lemma}
		\label{lemma:goodOperationB}
		Let $\gamma = 1 \big/ 2\kappa^2$. With probability at least $1 - T^{-2}$,
		\begin{enumerate}[nosep,label=(\roman*)]
			\item $K_{\ti}$ is $(\kappa,\gamma)-$strongly stable, $~\forall~ \nStart \le i \le \nT$;
			\item $\norm{x_t}^2 \le \xBreak$, $~\forall~ 1 \le t \le T$;
			\item $\norm{\dEstB[\ti]} \le \rechtEps \min\brc{2^{-\nStart}, 2 \KstarLowerBound^{-1 / 2} 2^{-i}}$, $~\forall~ \nStart < i \le \nT$.
		\end{enumerate}
	\end{lemma}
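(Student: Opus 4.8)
The plan is to reproduce the inductive scheme behind \cref{lemma:goodOperationA}, now establishing (i)--(iii) jointly by induction over the main-loop phases $i=\nStart,\ldots,\nT$, with the genuinely new difficulty concentrated in the lower bound on the design matrix $V_t=\lambda I+\sum_{s}u_s u_s^T$. The seed of the induction is the warm-up estimate \cref{lemma:warmupParamEstB}, which already supplies $\norm{\dEstB[\tWarmup]}\le\rechtEps 2^{-\nStart}$, i.e.\ the $2^{-\nStart}$ branch of the min in (iii). Before that I would verify that the adaptive warm-up actually terminates with $\nStart\le\nT$ for $T$ large: on the warm-up event \cref{lemma:recht} gives $\norm{K_{\ti}-\Kstar}\le\rechtConst\rechtEps 2^{-i}$, so by the non-degeneracy assumption $\Kstar\Kstar^T\succeq\KstarLowerBound I$ the smallest singular value of $K_{\ti}$ exceeds $\tfrac12\sqrt{\KstarLowerBound}$ once $2^{-i}$ is small enough, while the test threshold $\AlgKstarLowerBound=\KstarLowerBoundGuess 2^{-i}$ decays geometrically; comparing the two shows the break fires at some $\nStart=O(\log(\KstarLowerBoundGuess/\KstarLowerBound))$, a constant independent of $T$, and moreover pins $\KstarLowerBound$ to within constant factors of $\AlgKstarLowerBound[\nStart]=\KstarLowerBoundGuess 2^{-\nStart}$.

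The inductive step follows the same chain as in \cref{lemma:goodOperationA}: the estimate bound (iii) at $\ti$ feeds \cref{lemma:recht} to yield strong stability of $K_{\ti}$ (i), which confines the states throughout phase $i$ to $\norm{x_t}^2\le\xBreak$ (ii) and gives the cheap upper bound $\norm{V_t}\le\lambda t$ via the choice $\lambda=\kappa^2\xBreak$; the lower bound on $V_{\ti[i+1]}$ then closes (iii) at $\ti[i+1]$ through \cref{lemma:parameterEst} applied with $z_t=u_t$ and $y_{t+1}=x_{t+1}-\Astar x_t$. The one new ingredient --- and what I expect to be the main obstacle --- is the lower bound on $V_t$ inside the main loop, where, unlike during warm-up, no artificial action noise $\uNoise$ is injected. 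Exploration of the action space must instead be inherited from the system noise through the controller: since $u_t=K_{\ti}x_t$ and $\EEBrk{x_t x_t^T}\succeq\noiseStd^2 I$, one has $\EEBrk{u_t u_t^T}=K_{\ti}\EEBrk{x_t x_t^T}K_{\ti}^T\succeq\noiseStd^2 K_{\ti}K_{\ti}^T$. The crux is to lower bound $K_{\ti}K_{\ti}^T$: combining the accuracy $\norm{K_{\ti}-\Kstar}\le\rechtConst\rechtEps 2^{-\nStart}$ (the $2^{-\nStart}$ branch of (iii)) with $\Kstar\Kstar^T\succeq\KstarLowerBound I$ and the calibration $\KstarLowerBoundGuess=4\kappa\rechtConst\rechtEps$ gives $K_{\ti}K_{\ti}^T\succeq c\,\KstarLowerBound I$ for an absolute constant $c$, hence $\EEBrk{u_t u_t^T}\succeq c\,\KstarLowerBound\noiseStd^2 I$.

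It then remains to upgrade this expectation bound to the high-probability operator bound $V_t\succeq c'\KstarLowerBound\noiseStd^2 t\,I$ by the same measure-concentration argument used for \cref{lemma:goodOperationA}, carried out phase-by-phase so that within each phase the controller $K_{\ti}$ is fixed and strongly stable and the summands $u_s u_s^T$ are well-controlled. Substituting this lower bound together with $\norm{V_t}\le\lambda t$ into \cref{lemma:parameterEst} yields an estimation error a factor $\KstarLowerBound^{-1/2}$ larger than the warm-up rate, which is precisely the origin of the $2\KstarLowerBound^{-1/2}2^{-i}$ term; the competing $2^{-\nStart}$ term records that adjoining main-loop data to the warm-up data can only improve on the accuracy already attained at $\ti[\nStart]$, and taking the minimum keeps every $K_{\ti}$ inside the radius-$\rechtEps$ regime on which \cref{lemma:recht} is valid. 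Finally, a union bound over the failure probabilities of the warm-up estimate (\cref{lemma:warmupParamEstB}), the per-phase concentration lower bounds, the least-squares bound (\cref{lemma:parameterEst}), and the state-boundedness event delivers the overall $1-T^{-2}$ guarantee.
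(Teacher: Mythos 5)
Your overall architecture coincides with the paper's: the seed is \cref{lemma:warmupParamEstB}; the two-sided control of $\nStart$ is exactly the paper's \cref{lemma:warmupLengthB} (proved via the two-sided perturbation bounds of \cref{lemma:goodController}, just as you sketch); and the induction runs estimation accuracy $\to$ strong stability via \cref{lemma:recht} $\to$ bounded states $\to$ log-det upper bound $\to$ next estimate, with the two branches of the min arising precisely as you say ($V^u_{\ti} \succeq V^u_{\tWarmup}$ by monotonicity for the $2^{-\nStart}$ branch, transferred exploration for the $\KstarLowerBound^{-1/2}2^{-i}$ branch, cf.\ \cref{lemma:parmEstB}).

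The one step where your plan, as stated, would not go through is the concentration lower bound on $V^u_t$ in the main loop. You propose to apply the martingale concentration result (\cref{lemma:VtLowerBound}) to the summands $u_s u_s^T$ directly. That lemma requires $\EEBrk{u_s u_s^T \mid \mathcal{F}_{s-1}} \succeq \sigma_z^2 I$ \emph{almost surely} for a fixed constant $\sigma_z^2$, and this fails here: the conditional covariance of $u_s$ is $\noiseStd^2 K_s K_s^T$, and the lower bound $K_s K_s^T \succeq \tfrac12\KstarLowerBound I$ is itself one of the random, high-probability conclusions being proved --- and it is simply false after an abort, where the algorithm plays $K_0$, which may be degenerate (the paper's own lower-bound construction takes $K_0=0$). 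Since the concentration events must be set up on the underlying probability space before the inductive no-abort and non-degeneracy claims are available, conditioning ``phase-by-phase'' does not by itself dissolve this circularity. The paper sidesteps it by placing the concentration on the \emph{states}: the event in \cref{eq:xNoiseExploreEventB} lower-bounds $\sum_{t=\ti[j]}^{\ti[j+1]-1} x_t x_t^T$ per phase, which is legitimate unconditionally because $\EEBrk{x_t x_t^T \mid \mathcal{F}_{t-1}} \succeq \noiseStd^2 I$ holds no matter what actions are played; then, on the good event (no abort, and $K_{\ti[j]} K_{\ti[j]}^T \succeq \tfrac12 \KstarLowerBound I$ by the induction), the transfer to actions is deterministic algebra,
\begin{equation*}
	\sum_{t=\ti[j]}^{\ti[j+1]-1} u_t u_t^T
	=
	K_{\ti[j]} \Bigl(\sum_{t=\ti[j]}^{\ti[j+1]-1} x_t x_t^T\Bigr) K_{\ti[j]}^T
	\succeq
	\frac{\prn{\ti[j+1]-\ti[j]}\noiseStd^2}{40}\cdot\frac{\KstarLowerBound}{2}\, I
	.
\end{equation*}
Your argument is repaired by exactly this reordering (or, alternatively, by running the concentration on the virtual sequence $K_{\ti}x_t$ conditionally on $\mathcal{F}_{\ti-1}$, on whose non-degeneracy event the conditional covariance bound is measurable); but as written, the direct concentration on the actions is the missing piece.
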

	
	We proceed with a proof sketch and defer details to \cref{sec:proofsB}.

	\begin{proof}[(sketch)]	
		The proof follows inductively by similar arguments to those of \cref{lemma:goodOperationA}, yet with the caveat that the lower bound on $V_t$ may not hold when the controller is rank deficient.
		
		To see this, recall that the algorithm plays $u_t = K_{\ti} x_t$ during the main loop as long as the abort state is not triggered, so we have
	    $$
	    \EEBrk{u_t u_t^T ~\big|~ K_{\ti}}
	    =
	    K_{\ti} \EEBrk{x_t x_t^T ~\big|~ K_t} K_{\ti}^T
	    \succeq
	    \noiseStd^2 K_{\ti} K_{\ti}^T
	    .
	    $$
	    This means that transforming the exploration of states $x_t$, provided for by the system noise $w_t$, into exploration of actions $u_t$ depends on the controller $K_{\ti}$ being strictly non-degenerate. We show that with high probability, $K_{\ti} K_{\ti}^T \succeq (\KstarLowerBound/2) I$ thus ensuring the exploration and the parameter estimation rate.
	    
	    First, suppose that the learner had knowledge of $\KstarLowerBound$ and recall that $\KstarLowerBoundGuess = 4 \kappa \rechtConst \rechtEps$. Taking
	    $
	    \nStart
	    \ge
	    \max\brc[1]{0, \log_2 (\KstarLowerBoundGuess/\KstarLowerBound)},
	    $
	    \cref{lemma:warmupParamEstB} implies that
	    $
	    \norm{\dEstB[{\ti[\nStart]}]} \le \min\brc[1]{\rechtEps, \tfrac{\KstarLowerBound}{4 \kappa \rechtConst}\!}
	    $
	    and applying \cref{lemma:recht} we get that
	    $
	    	\norm{K_{\ti[\nStart]} - \Kstar}
	    	\le
	    	\KstarLowerBound \big/ 4\kappa
	    	.
    	$
    	Further assuming that $\norm{K_{\ti[\nStart]}} \le \kappa$, which is ensured by strong-stability, simple algebra yields that $K_{\ti[\nStart]} K_{\ti[\nStart]}^T \succeq (\KstarLowerBound/2) I$.
    	
    	Now, when $\KstarLowerBound$ is unknown, we show that the break condition of the warm-up loop ensures that with high probability
    	\begin{equation}
    	\label{eq:warmupLengthBoundB}
		   	\max\brc{0, \log_2 \frac{\KstarLowerBoundGuess}{\KstarLowerBound}}
		   	\le
		   	\nStart
		   	\le
		   	2
		   	+
		   	\max\brc{0, \log_2 \frac{\KstarLowerBoundGuess}{\KstarLowerBound}}
		   	,
    	\end{equation}
    	a proof of which may be found in \cref{sec:proofsB}.
		The lower bound on $\nStart$ ensures the desired non-degeneracy of $K_{\ti[\nStart]}$, and proceeding by induction, the same follows for subsequent controllers. We note that the purpose of the upper bound on $\nStart$ is to ensure that the warm-up is not so long as to incur more than $O\prn{\KstarLowerBound^{-2}\log^2 T}$ regret.
    \end{proof}

    Proceeding from \cref{lemma:goodOperationB}, we obtain a regret decomposition similar to that of \cref{alg:A} with an added dependence on the random number of warm-up phases $\nStart$. While this randomness introduces some additional technical challenges, the proof ideas remain largely the same. For the full proof of \cref{thm:unknownBregret}, see \cref{sec:proofsB}.

\subsection{Lower Bound for Degenerate $\bm{\Kstar}$}

In this section we prove an $\Omega(\sqrt{T})$ lower bound for systems with a (nearly) degenerate optimal policy, stated in \cref{thm:lb-main}.
By Yao's principle, to establish the theorem it is enough to demonstrate a randomized construction of an LQR system such that the expected regret of any deterministic learning algorithm is large.

Fix $d=k=1$ and consider the system
\begin{equation} \label{eq:lb-system}
\begin{aligned}
    x_{t+1} &= a x_t + b u_t + w_t ~;
    \\
    c_{t\phantom{+1}} &= x_t^2 + u_t^2
    .
\end{aligned}
\end{equation}
Here, $w_t \sim \mathcal{N}(0,\sigma^2)$ are i.i.d.~Gaussian random variables, $a = 1/\sqrt{5}$ and $b = \chi \sqrt{\epsilon}$ where $\chi$ is a Rademacher random variable (drawn initially) and $\epsilon>0$ is a parameter whose value will be chosen later. For simplicity, we assume that $x_1 = 0$.
Notice that for this system, we have the bounds $\costMatUpper=\costMatLower=1$, $\systemBound=1$ and, as we will see below, the optimal cost of the system is bounded by $\optCostBound=2\sigma^2$.
Further, note that the system is controllable and $k_0=0$ is a stabilizing policy.
Our goal is to lower bound the regret, given by
\begin{align*}
    R_T 
    = 
    \sum_{t=1}^T \prn[!]{x_t^2 + u_t^2 - J(k_\star)}
    .
\end{align*}

\cref{thm:lb-main} follows directly from the following:

\begin{theorem} \label{thm:lowerb}
Assume that $T \ge 12000$ and set $\epsilon = T^{-1/2} \big/ 4$. 
Then the expected regret of any deterministic learning algorithm on on the system in \cref{eq:lb-system} satisfies
\[
    \EE{[R_T]}
    \ge 
    \frac{1}{3100} \sigma^2 \sqrt{T} - 4 \sigma^2.
\]
Here, the expectation is taken with respect to both the stochastic noise terms as well as the random variable $\chi$.
\end{theorem}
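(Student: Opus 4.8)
The plan is to prove \cref{thm:lowerb} via the two-point (Le Cam) reduction already set up: with $\chi\in\{\pm1\}$ drawn uniformly, it suffices to lower bound $\tfrac12\mathbb{E}_{+}[R_T]+\tfrac12\mathbb{E}_{-}[R_T]$, where $\mathbb{E}_{\pm}$ is the expectation under the system with $b=\pm\sqrt\epsilon$ against a \emph{fixed} deterministic learner. First I would pin down the instance: solving the scalar Riccati equation \cref{eq:riccati} with $Q=R=1$, $a=1/\sqrt5$ and $b^2=\epsilon$ gives a solution $p$ with $p\to\tfrac{1}{1-a^2}=\tfrac54$ as $\epsilon\to0$ (so $\Jstar=\noiseStd^2 p\le 2\noiseStd^2$, confirming $\optCostBound=2\noiseStd^2$), and an optimal gain $k_\star=-bpa/(1+b^2p)=-\chi\kappa_\star$ with $\kappa_\star=\Theta(\sqrt\epsilon)$ whose \emph{sign is determined by $\chi$}. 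This is the source of hardness: the optimal feedback is tiny ($\propto\sqrt\epsilon$) and the learner must discover its sign.

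The second step is a regret-to-advantage identity. Using the relative value function $h(x)=p x^2$ and the average-cost Bellman equation, the per-step excess cost is $x_t^2+u_t^2-\Jstar=h(x_t)-\mathbb{E}[h(x_{t+1})\mid\mathcal F_t]+(1+\epsilon p)(u_t-k_\star x_t)^2$. Summing, using $x_1=0$, and taking expectations (the noise terms form a martingale-difference sequence) yields, in each world,
\[
\mathbb{E}_\chi[R_T]=(1+\epsilon p)\,\mathbb{E}_\chi\Big[\sum_{t=1}^{T}(u_t-k_\star x_t)^2\Big]-p\,\mathbb{E}_\chi[x_{T+1}^2].
\]
I would then average over $\chi$: writing $k_\star=-\chi\kappa_\star$ and expanding $(u_t+\chi\kappa_\star x_t)^2$, the averaged advantage splits into a \emph{non-negative action term} $\tfrac12(\mathbb{E}_++\mathbb{E}_-)[\sum_t u_t^2]$, a \emph{state term} $\tfrac12\kappa_\star^2(\mathbb{E}_++\mathbb{E}_-)[\sum_t x_t^2]$, and a \emph{cross term} $\tfrac{\kappa_\star}{2}\big(\mathbb{E}_+[\sum_t x_tu_t]-\mathbb{E}_-[\sum_t x_tu_t]\big)$. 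For the state term I would use the unconditional bound $\mathbb{E}_\chi[x_t^2]\ge\noiseStd^2$ for $t\ge2$ (the freshest noise $w_{t-1}\perp\mathcal F_{t-1}$ cannot be cancelled by any past action), so that $\tfrac12\kappa_\star^2(\mathbb{E}_++\mathbb{E}_-)[\sum x_t^2]\ge\kappa_\star^2\noiseStd^2(T-1)=\Omega(\epsilon\noiseStd^2 T)=\Omega(\noiseStd^2\sqrt T)$ — the target rate, \emph{independently of the learner}.

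The heart of the proof is controlling the cross term $D:=\mathbb{E}_+[\sum_t x_tu_t]-\mathbb{E}_-[\sum_t x_tu_t]$, the only quantity that could cancel the state term (a learner that correctly guesses $\mathrm{sign}(k_\star)$ would make it $\approx-2\kappa_\star\,\mathbb{E}[\sum x_t^2]$). Here I would \emph{not} use a crude Pinsker bound, since $\sum x_tu_t$ is unbounded (order $\noiseStd^2\sqrt T$) and total-variation bounds are hopelessly lossy; instead I would exploit that $D$ is a \emph{first-order} response. Passing to the reference measure $P_0$ with $b=0$ — under which $u_t=\pi_t(x_1,\dots,x_t)$ is the \emph{same} deterministic function in both worlds and the states are pure noise — the likelihood ratio is $dP_\chi/dP_0=\exp(\tfrac{\chi\sqrt\epsilon}{\noiseStd^2}\sum_s u_s w_s-\tfrac{\epsilon}{2\noiseStd^2}\sum_s u_s^2)$, so the zeroth order (in $\sqrt\epsilon$) cancels between the two worlds and $D=\tfrac{2\sqrt\epsilon}{\noiseStd^2}\mathbb{E}_0[(\sum_t x_tu_t)(\sum_s u_sw_s)]+O(\epsilon^{3/2})$. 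Evaluating the leading term using $w_s\perp\mathcal F_s$ and the geometric decay $x_t=\sum_{r<t}a^{t-1-r}w_r$ gives $|D|\le O(\sqrt\epsilon)\cdot\mathbb{E}[\sum_t u_t^2]$. Hence $\kappa_\star|D|=O(\epsilon)\cdot\mathbb{E}[\sum u_t^2]$, which for $\epsilon=T^{-1/2}/4$ is dominated by the retained action term $\tfrac12(\mathbb{E}_++\mathbb{E}_-)[\sum u_t^2]$: the cross term is \emph{absorbed} rather than balanced, so no case analysis on the exploration magnitude is required.

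Finally I would collect the pieces. The state term delivers $\Omega(\noiseStd^2\sqrt T)$; the action and cross terms together are non-negative once $T$ is large; and the boundary term $p\,\mathbb{E}_\chi[x_{T+1}^2]$ is controlled by the stable recursion $\mathbb{E}[x_{t+1}^2]\le 2a^2\mathbb{E}[x_t^2]+2\epsilon\mathbb{E}[u_t^2]+\noiseStd^2$ (here $2a^2=\tfrac25<1$, which is presumably why $a=1/\sqrt5$ was chosen): its $O(\noiseStd^2)$ part is absorbed into the additive $-4\noiseStd^2$, and its $O(\epsilon\cdot\text{action energy})$ part is again lower order. Plugging $\epsilon=T^{-1/2}/4$ and $\kappa_\star^2\to\tfrac{5}{16}\epsilon$ tracks the final constant $1/3100$. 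I expect the main obstacle to be exactly the cross-term estimate: making the first-order expansion rigorous requires bounding the higher-order remainder (involving higher moments of $\sum u_s w_s$ and $\sum u_s^2$ under $P_0$) and carefully relating reference-measure and true-measure action energies, uniformly over \emph{all} deterministic learners, including those that explore with arbitrarily large actions.
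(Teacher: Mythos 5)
Your setup is sound and closely parallels the paper: your Bellman/advantage identity is exactly \cref{lem:regret-representation}, and your learner-independent state-term bound $\mathbb{E}_\chi[x_t^2]\ge\sigma^2$ (the fresh noise $w_{t-1}$ cannot be cancelled) is a clean expectation-level substitute for the paper's high-probability \cref{lem:many-large-x}. The proof breaks, however, at the cross-term estimate, which is the entire information-theoretic content of the theorem: the claimed bound $|D|\le O(\sqrt{\epsilon})\cdot\mathbb{E}[\sum_t u_t^2]$ is false, and with it the assertion that the cross term is ``absorbed rather than balanced, so no case analysis on the exploration magnitude is required.'' Information about the sign of $\chi$ accrues like the \emph{square root} of the exploration energy, not linearly in it: after spending energy $E=\mathbb{E}[\sum_s u_s^2]$, the two worlds are distinguishable with advantage of order $\min\{1,\sqrt{\epsilon E}/\sigma\}$ (this is precisely the paper's \cref{lem:tv-distance1}), whereas your bound caps $|D|$ linearly in $E$. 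Concretely, consider explore-then-commit: play $u_t=c$ with $c^2=\sigma^2 T^{-1/2}$ for $T/2$ rounds, so $E=\tfrac12\sigma^2\sqrt{T}$ and the KL divergence between the two worlds is $\epsilon E/\sigma^2=\Theta(1)$, whence $\hat\chi=\chi$ with constant advantage; then play $u_t=-\hat\chi\,\kappa_\star x_t$. For this learner $|D|=\Theta(\kappa_\star\sigma^2 T)$, so $\kappa_\star|D|=\Theta(\kappa_\star^2\sigma^2 T)=\Theta(\epsilon\sigma^2 T)=\Theta(\sigma^2\sqrt{T})$ --- fully comparable to your state term --- while your bound would cap it at $O(\sqrt\epsilon)\kappa_\star\,\mathbb{E}[\sum_t u_t^2]=O(\epsilon)\cdot\Theta(\sigma^2\sqrt{T})=O(\sigma^2)$, an undercount by a factor $\Theta(\sqrt{T})$. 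The same example shows your remainder control fails: at this critical energy the exponent $\xi=\frac{\sqrt\epsilon}{\sigma^2}\sum_s u_s w_s$ in the likelihood ratio has constant variance $\epsilon E/\sigma^2$, so replacing $\sinh(\xi)$ by $\xi$ incurs constant-order, not $O(\epsilon^{3/2})$, corrections; the expansion is not uniform over learners.

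The consequence is that the theorem genuinely requires a \emph{balance}, not an absorption: at the critical scale the cross term can nearly cancel the state term, and what rescues the lower bound is that generating enough information to achieve this forces $\mathbb{E}[\sum_t u_t^2]=\Omega(\sigma^2\sqrt{T})$, so the action term itself pays the regret. This is exactly the structure of the paper's proof, which splits on whether $\mathbb{E}[\sum_t u_t^2]$ exceeds $\tfrac14\sigma^2\sqrt{T}$: in the large-energy case the second inequality of \cref{lem:lb1} charges the exploration directly, and in the small-energy case \cref{lem:tv-distance1} gives total variation at most $1/4$, so with probability at least $3/8$ the learner mis-signs $u_t k_\star x_t$ on half the rounds, which combined with \cref{lem:many-large-x} yields $\sum_t(u_t-k_\star x_t)^2\gtrsim T k_\star^2\sigma^2$. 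To repair your argument you would need to replace the linear-in-$E$ cross-term bound by one of the form $|D|\lesssim\kappa_\star\sigma T\sqrt{\epsilon E}$ (or pass through total variation as the paper does) and then optimize or case-split over $E$ --- at which point you have reconstructed the paper's proof.
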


For the proof, we use the following notation.
We use $k_\star$ to denote the optimal policy for the system, which (recalling \cref{eq:Kopt,eq:riccati}) is given by 
\begin{align*}
    k_\star 
    = 
    -\frac{ab p_\star}{1 + b^2 p_\star}
    ,
\end{align*}
where $p_\star > 0$ is a positive solution to the Riccati equation
\begin{equation*}
	p_\star 
	= 
	1 + a^2 p_\star - \frac{a^2 b^2 p_\star^2}{1 + b^2 p_\star}
	=
	1 + \frac{a^2 p_\star}{1 + b^2 p_\star}
	.
\end{equation*}
Observe that for our choice of $\epsilon \le 1 / 400$ we have that $\abs{b} \leq 1/20$, and so
\begin{equation} \label{eq:lb-kp-bounds}
\begin{aligned}
	1 \leq p_\star \leq 1/(1-a^2) = 5 / 4, \\
	0.99 \sqrt{\epsilon / 5} \le \abs{k_{\star}} \le \sqrt{\epsilon / 3}.
\end{aligned}
\end{equation}
In particular, this means that the cost of the optimal policy is at most $\sigma^2 p_\star \leq 2\sigma^2$.
Further, the sign of $k_\star$ is solely determined by the sign of $\chi$.

Now, fix any deterministic learning algorithm. 
Let $x^{(t)} = (x_1,\ldots,x_t)$ denote the trajectory generated by the learning algorithm up to and including time step $t$. 
Denote by $\Pr_{+}$ and $\Pr_{-}$ the probability laws with respect to the trajectory generated conditioned on $\chi = 1$ and $\chi = -1$ respectively.

First, we lower bound the expected regret in terms of the cumulative magnitude of the algorithm's actions $u_t$.
The proof first relates the regret to the overall deviation of $u_t$ from the actions of the optimal policy $k_\star$ by using the fact that the action played by $k_\star$ at any state minimizes the Q-function of the system. Since the actions of $k_\star$ are small in expectation, the latter quantity can be in turn related to the total magnitude of the $u_t$. 

\begin{lemma} \label{lem:lb1}
Suppose $\epsilon \leq 1/400$.
The expected regret is lower bounded as
\begin{align*}
    \EEBrk[0]{R_T}
    \geq
    0.99 \, \EEBrk[3]{ \sum_{t=1}^T (u_t - k_\star x_t)^2 } - 4 \sigma^2
    ,
\end{align*}
and consequently,
\begin{align*}
    \EE{[ R_T ]} 
    \geq
    \frac13 \EEBrk[3]{\sum_{t=1}^T u_t^2} - \frac{5}{6}\sigma^2 k_\star^2 T - 4\sigma^2
    .
\end{align*}
\end{lemma}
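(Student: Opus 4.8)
The plan is to derive an exact performance-difference identity for $\EEBrk{R_T}$ and then control its single boundary term. Write $\phi(x)=p_\star x^2$ for the bias function. Completing the square in $u$ and invoking the scalar Riccati equation for $p_\star$ together with the formula for $k_\star$ gives the algebraic identity, valid for all $x,u$,
\[
  x^2 + u^2 + p_\star\,\mathbb{E}_w[(ax+bu+w)^2]
  =
  \phi(x) + \sigma^2 p_\star + (1 + p_\star b^2)(u - k_\star x)^2 ,
\]
whose right-hand side is minimised at $u=k_\star x$; this is the average-cost Bellman equation, with optimal cost $J(k_\star)=\sigma^2 p_\star$. Evaluating at $(x_t,u_t)$, noting $\mathbb{E}_w[(ax_t+bu_t+w)^2]=\mathbb{E}[\,x_{t+1}^2\mid\mathcal{F}_t\,]$ for $\mathcal{F}_t$ the history through the choice of $u_t$, then rearranging, summing over $t$, and taking expectations (the conditional terms telescope by the tower rule and $\phi(x_1)=0$ since $x_1=0$) yields the exact identity
\[
  \EEBrk{R_T}
  =
  (1 + p_\star b^2)\,\EEBrk[3]{\sum_{t=1}^T (u_t - k_\star x_t)^2}
  -
  p_\star\,\EEBrk{x_{T+1}^2} .
\]

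For the first inequality, since $1+p_\star b^2\ge 1$ it suffices to show the boundary term obeys $p_\star\,\EEBrk{x_{T+1}^2}\le 0.01\,\EEBrk[3]{\sum_{t=1}^T(u_t-k_\star x_t)^2}+4\sigma^2$. Setting $v_t=u_t-k_\star x_t$, the closed loop is $x_{t+1}=(a+bk_\star)x_t+bv_t+w_t$ with $a+bk_\star=a/(1+b^2p_\star)$, so $|a+bk_\star|\le|a|=1/\sqrt5$. Expanding the square and applying the inequality $(\alpha+\beta)^2\le(1+\theta)\alpha^2+(1+\theta^{-1})\beta^2$ gives the contraction recursion $\EEBrk{x_{t+1}^2}\le(1+\theta)\tfrac15\EEBrk{x_t^2}+(1+\theta^{-1})b^2\EEBrk{v_t^2}+\sigma^2$; unrolling it (say with $\theta=1$) and using $b^2=\epsilon=T^{-1/2}/4$ and $p_\star\le\tfrac54$ bounds the boundary term by $\tfrac52\epsilon\,\EEBrk[3]{\sum_{t=1}^T v_t^2}+\tfrac{25}{12}\sigma^2$, which is at most $0.01\,\EEBrk[3]{\sum_{t=1}^T v_t^2}+4\sigma^2$ once $T\ge 12000$. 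This is the first inequality.

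For the second inequality, I would trade $v_t$ back for $u_t$ by the same elementary inequality, $(u_t-k_\star x_t)^2\ge c_1u_t^2-c_2k_\star^2x_t^2$, and bound the aggregate state energy from the open-loop dynamics, which are already contractive since $a^2=1/5$. Summing $\EEBrk{x_{t+1}^2}\le(1+\theta)\tfrac15\EEBrk{x_t^2}+(1+\theta^{-1})b^2\EEBrk{u_t^2}+\sigma^2$ and using $x_1=0$ gives $\EEBrk[3]{\sum_{t=1}^T x_t^2}\le\tfrac54 T\sigma^2+O(\epsilon)\,\EEBrk[3]{\sum_{t=1}^T u_t^2}$ (the constant $\tfrac54=\tfrac1{1-1/5}$ arising as $\theta\to0$). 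Since $k_\star^2\le\epsilon/3$ is tiny, $c_2k_\star^2\,\EEBrk{\sum_t x_t^2}$ collapses to at most $\tfrac56\sigma^2k_\star^2T$ plus an $O(\epsilon^2)\,\EEBrk{\sum_t u_t^2}$ correction; choosing the constants so that $0.99c_1\ge\tfrac13$ (compatible with $0.99c_2\cdot\tfrac54\le\tfrac56$, e.g.\ $\theta\in[0.60,0.66]$) and absorbing the negligible correction into the $u_t^2$ term yields $\EEBrk{R_T}\ge\tfrac13\EEBrk[3]{\sum_{t=1}^T u_t^2}-\tfrac56\sigma^2k_\star^2T-4\sigma^2$. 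Throughout, the sign $\chi$ may be fixed, since $k_\star^2$, $p_\star$ and $|a+bk_\star|$ are the same for $\chi=\pm1$, so averaging over $\chi$ preserves every bound.

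The hard part is the boundary term $-p_\star\EEBrk{x_{T+1}^2}$: the identity carries it with the ``wrong'' (negative) sign for a lower bound, and a priori $\EEBrk{x_{T+1}^2}$ is uncontrolled, since an algorithm could deliberately inflate the terminal state. The resolution rests on two features of this construction: the input gain $b$ is of order $T^{-1/4}$, so actions barely move the state, and the loop contracts at rate $\le 1/\sqrt5$. Together they force $\EEBrk{x_{T+1}^2}$ and $\EEBrk{\sum_t x_t^2}$ to be a vanishing fraction of $\EEBrk{\sum_t v_t^2}$, respectively $\EEBrk{\sum_t u_t^2}$, plus an $O(\sigma^2)$, respectively $O(\sigma^2T)$, constant --- exactly the slack the two inequalities permit. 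A minor technical point is justifying the telescoping when the iterates are not a priori square-integrable, handled by noting the target bound is vacuous whenever the expectations diverge.
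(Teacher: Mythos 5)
Your proof is correct and follows the paper's overall skeleton --- an exact Bellman/telescoping identity $\EEBrk{R_T}=(1+b^2p_\star)\EEBrk{\sum_{t=1}^T(u_t-k_\star x_t)^2}-p_\star\EEBrk{x_{T+1}^2}$ (the paper's \cref{lem:regret-representation}) followed by control of the boundary term --- but it diverges from the paper in two places, the second genuinely. For the boundary term, the paper (\cref{lem:x-norm-bound}) unrolls the closed-loop recursion to time $T+1$ and applies Cauchy--Schwarz to $\sum_t m^{T-t}v_t$, getting $\EEBrk{x_{T+1}^2}\le\frac52\bigl(b^2\EEBrk{\sum_t v_t^2}+\sigma^2\bigr)$; your per-step Young-inequality contraction gives essentially the same bound with the same constants, a cosmetic difference. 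For the second inequality the routes really differ: after $u_t^2\le 2v_t^2+2k_\star^2x_t^2$, the paper bounds the state energy \emph{self-referentially} via the cost itself, $\EEBrk{\sum_t x_t^2}\le\EEBrk{R_T}+\frac54\sigma^2T$, then substitutes the first inequality to eliminate $\EEBrk{\sum_t v_t^2}$ --- a two-line step with no further dynamics analysis. You instead bound $\EEBrk{\sum_t x_t^2}$ directly from the contractive open-loop dynamics; this is self-contained and avoids having $\EEBrk{R_T}$ on both sides, but it costs you a constant-bookkeeping headache that your write-up glosses over: with a fixed Young parameter $\theta$ the prefactor on $T\sigma^2$ is $\bigl(1-(1+\theta)/5\bigr)^{-1}>\frac54$, not $\frac54$, while taking $\theta\to0$ (your parenthetical) blows up the coefficient on $\EEBrk{\sum_t u_t^2}$. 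The argument does close --- e.g.\ $\theta=\frac14$ gives prefactor $\frac43$ and $u$-coefficient $5\epsilon$, and then $\alpha=1.6$ in the split $v_t^2\ge\frac{1}{1+\alpha}u_t^2-\frac{1}{\alpha}k_\star^2x_t^2$ yields $0.99\cdot\frac{1}{2.6}\ge\frac13$ and $0.99\cdot\frac{1}{1.6}\cdot\frac43\le\frac56$, with the $O(\epsilon^2)\EEBrk{\sum_t u_t^2}$ correction absorbed into the slack --- but you should fix such a choice explicitly rather than assert a feasible window. Two further small points: your first inequality needs only the stated hypothesis $\epsilon\le1/400$ (then $\frac52\epsilon\le\frac{1}{160}<0.01$), so invoking $\epsilon=T^{-1/2}/4$ and $T\ge12000$ imports assumptions the lemma does not make; and when $\EEBrk{\sum_t v_t^2}=\infty$ the claimed bound is not vacuous but asserts $\EEBrk{R_T}=\infty$, which does hold since $v_t^2\le2u_t^2+2k_\star^2x_t^2$ forces $\EEBrk{\sum_t(x_t^2+u_t^2)}=\infty$ --- worth stating rather than waving at.
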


Note that for the last bound to be meaningful, $k_\star$ indeed has to be very small so that the additive term that scales with $k_\star^2 T$ does not dominate the right hand side.
The proofs of this as well as subsequent lemmas are deferred to \cref{sec:lb-proofs}.

Next, by standard information theoretic arguments, we obtain an upper bound on the statistical distance between the probability laws of $x^{(T)}$ under $\Pr_{+}$ and $\Pr_{-}$, that scales with the total magnitude of the actions $u_t$. 

\begin{lemma} \label{lem:tv-distance1}
For the trajectory $x^{(T)}$, it holds that
\begin{align*}
    \TV{\Pr_{+}[x^{(T)}]}{\Pr_{-}[x^{(T)}]}
    \le
    \frac{\sqrt{\epsilon}}{\sigma} \sqrt{\EE{\Bigg[\sum_{t=1}^T u_{t}^2 \Bigg]}}
    ~.
\end{align*}
\end{lemma}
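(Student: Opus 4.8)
The plan is to invoke Pinsker's inequality and thereby reduce the total-variation bound to a Kullback--Leibler computation, which factorizes cleanly via the chain rule precisely because the algorithm is deterministic. The main structural observation is that a deterministic learner makes $u_t$ a fixed (measurable) function of the observed history $x^{(t)}=(x_1,\dots,x_t)$, so the per-step conditional laws of the next state are simple Gaussians differing only in their means.

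Concretely, I would first note that conditioned on $x^{(t)}$, we have $x_{t+1}\mid x^{(t)} \sim \gaussDist{a x_t + \sqrt{\epsilon}\,u_t}{\sigma^2}$ under $\Pr_{+}$ and $x_{t+1}\mid x^{(t)} \sim \gaussDist{a x_t - \sqrt{\epsilon}\,u_t}{\sigma^2}$ under $\Pr_{-}$, since $b=\pm\sqrt{\epsilon}$ and $u_t$ is determined by $x^{(t)}$. These two conditionals share the variance $\sigma^2$ and their means differ by $2\sqrt{\epsilon}\,u_t$. Applying the chain rule for KL divergence along the trajectory (using that $x_1=0$ is deterministic) and the closed form $\KL{\gaussDist{\mu_+}{\sigma^2}}{\gaussDist{\mu_-}{\sigma^2}} = (\mu_+-\mu_-)^2/2\sigma^2$, each term becomes $2\epsilon u_t^2/\sigma^2$, whence $\KL{\Pr_{+}[x^{(T)}]}{\Pr_{-}[x^{(T)}]} = (2\epsilon/\sigma^2)\,\EE[+]{\sum_{t=1}^{T-1} u_t^2}$; the action $u_T$ influences no observed state and is dropped. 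Pinsker's inequality $\TV{P}{Q}\le\sqrt{\tfrac12\KL{P}{Q}}$ then gives $\TV{\Pr_{+}}{\Pr_{-}} \le (\sqrt{\epsilon}/\sigma)\sqrt{\EE[+]{\sum_{t=1}^{T} u_t^2}}$, and the symmetric computation using $\KL{\Pr_{-}}{\Pr_{+}}$ gives the same bound with $\EE[-]$.

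Finally, to reach the stated bound with the unconditional expectation $\EE$ (the average over the Rademacher draw of $\chi$, i.e.\ $\tfrac12(\EE[+]+\EE[-])$), I would combine the two directional estimates: since $\TV{\Pr_{+}}{\Pr_{-}}$ is bounded by each of $(\sqrt{\epsilon}/\sigma)\sqrt{\EE[\pm]{\sum_t u_t^2}}$, it is bounded by their minimum, and the elementary inequality $\min\{\sqrt{A},\sqrt{B}\}\le\sqrt{(A+B)/2}$ turns this into exactly $(\sqrt{\epsilon}/\sigma)\sqrt{\EE{\sum_t u_t^2}}$. I expect the only delicate point to be the chain-rule bookkeeping, namely carefully justifying that determinism makes $u_t$ measurable with respect to $x^{(t)}$ so that the per-step conditional distributions are genuinely the asserted Gaussians; once that is in place, everything reduces to the standard Gaussian-KL identity and Pinsker's inequality.
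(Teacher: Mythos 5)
Your proposal is correct and follows essentially the same route as the paper's proof: Pinsker's inequality, the KL chain rule using determinism of the learner to identify the per-step conditionals as Gaussians with means $a x_{t-1} \pm \sqrt{\epsilon}\, u_{t-1}$ and shared variance $\sigma^2$, and then combining the $\Pr_{+}$ and $\Pr_{-}$ bounds to reach the unconditional expectation. The only cosmetic difference is the final step --- you take the minimum of the two directional bounds and use $\min\{\sqrt{A},\sqrt{B}\}\le\sqrt{(A+B)/2}$, whereas the paper averages the two inequalities and invokes concavity of the square root --- which yields the identical conclusion.
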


Our final lemma shows that most of the states visited by the algorithm have a non-trivial (constant) magnitude.
This is a straightforward consequence of the added Gaussian noise at each time step.

\begin{lemma} \label{lem:many-large-x}
Assume that $T \ge 12000$.
With probability $\geq \frac78$, at least $\frac23 T$ of the states $x_1,\ldots,x_T$ satisfy $|x_t| \ge 2\sigma/5$. 
\end{lemma}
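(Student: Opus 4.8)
The plan is to prove \cref{lem:many-large-x}, which asserts that with probability at least $7/8$, at least $\frac23 T$ of the visited states satisfy $|x_t| \ge 2\sigma/5$. The key observation is that, regardless of what the (deterministic) algorithm does, each state increment contains a fresh Gaussian noise term $w_{t-1} \sim \mathcal{N}(0,\sigma^2)$ that is independent of the past. Writing $x_{t} = (a x_{t-1} + b u_{t-1}) + w_{t-1}$, I would condition on the history $\mathcal{F}_{t-1} = \sigma(x^{(t-1)})$, which fixes the quantity $m_{t-1} := a x_{t-1} + b u_{t-1}$ (here $u_{t-1}$ is a deterministic function of $x^{(t-1)}$). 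Then $x_t \mid \mathcal{F}_{t-1} \sim \mathcal{N}(m_{t-1}, \sigma^2)$, so the conditional probability that $|x_t|$ is \emph{small} is maximized when the mean $m_{t-1}=0$ and is never larger than the anticoncentration bound for a centered Gaussian.

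First I would bound, uniformly over the conditioning, the probability that a single state is small. For $Z \sim \mathcal{N}(m,\sigma^2)$ one has $\PP{|Z| < c} \le \PP{|\mathcal{N}(0,\sigma^2)| < c}$ by unimodality of the Gaussian density about its mean, and the latter equals $\PP{|\mathcal{N}(0,1)| < c/\sigma}$. Taking $c = 2\sigma/5$ gives a numerical constant $q := \PP{|\mathcal{N}(0,1)| < 2/5}$, which is roughly $0.31$; the crucial point is only that $q$ is bounded away from $1$ with a comfortable margin (say $q \le 1/3$). Hence, defining the indicators $Y_t = \indEvent{|x_t| < 2\sigma/5}$, I have the pointwise bound $\EEBrk{Y_t \mid \mathcal{F}_{t-1}} \le q$ for every $t$, so the $Y_t$ are dominated by a sequence whose conditional means are uniformly at most $q$.

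Next I would control the sum $S_T = \sum_{t=1}^T Y_t$ of small-state indicators. Since each $Y_t$ is conditionally (on $\mathcal{F}_{t-1}$) a Bernoulli variable with success probability at most $q$, the process $M_t = \sum_{s=1}^t (Y_s - q)$ is a supermartingale with increments bounded in $[-1,1]$. I would then apply a martingale concentration inequality (Azuma--Hoeffding, or a Bernstein-type bound for the conditionally Bernoulli case) to show that $S_T \le qT + O(\sqrt{T \log(1)})$ with probability at least $7/8$; concretely $\PP{S_T \ge qT + \lambda} \le \exp(-2\lambda^2/T)$, and choosing $\lambda$ a small constant multiple of $\sqrt{T}$ makes this at most $1/8$. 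Because $q$ is strictly below $1/3$, for $T$ large enough (the hypothesis $T \ge 12000$ provides the needed slack so that the $\sqrt{T}$ fluctuation term is absorbed) one gets $S_T \le \frac13 T$, i.e.\ the number of small states is at most $\frac13 T$, equivalently at least $\frac23 T$ states satisfy $|x_t| \ge 2\sigma/5$.

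The main obstacle, and the only subtlety beyond bookkeeping, is that the $Y_t$ are \emph{not} independent: the mean $m_{t-1}$ of $x_t$ depends on the entire past trajectory through the algorithm's chosen action, so I cannot simply invoke a Chernoff bound for i.i.d.\ Bernoullis. The fix is precisely the conditioning argument above, which reduces the dependence to a supermartingale statement; the uniform-over-mean anticoncentration bound $\PP{|\mathcal{N}(m,\sigma^2)|<c}\le \PP{|\mathcal{N}(0,\sigma^2)|<c}$ is what decouples the algorithm's adaptivity from the noise and lets the martingale inequality go through. I would double-check the numerical constants so that $q$ plus the $\sqrt{T}$ deviation term stays safely below $1/3$ at $T = 12000$, which pins down the exact threshold in the hypothesis.
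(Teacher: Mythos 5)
Your proposal is correct and follows essentially the same route as the paper's proof: the paper likewise conditions on the trajectory (using that the algorithm is deterministic), observes that $\Pr[\,|x_t| > 2\sigma/5 \mid x_1,\ldots,x_{t-1}]$ is minimized when the conditional mean $a x_{t-1} + b u_{t-1}$ vanishes—so it is at least $\Pr[\,|w_{t-1}| > 2\sigma/5] \ge 17/25$—and then applies Azuma's inequality to the indicators, with the hypothesis $T \ge 12000$ absorbing the $O(\sqrt{T})$ fluctuation term exactly as you describe. Your phrasing in terms of the complementary small-state indicators and a supermartingale is only a cosmetic reformulation of the same argument.
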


We are now ready to prove the main result of this section.

\begin{proof}[of \cref{thm:lowerb}]
Notice that if $\EE{[ \sum_{t=1}^T u_t^2 ]} > \frac14 \sigma^2 \sqrt{T}$, then the desired lower bound is directly implied by the second inequality in \cref{lem:lb1}, as $k_\star^2 \leq \epsilon / 3 =  T^{-1/2} / 12$, so
$
    \EE{[ R_T ]} 
    \geq
    \tfrac{1}{100} \sigma^2 \sqrt{T} - 4\sigma^2
    .
$
We henceforth assume that $\EE{[ \sum_{t=1}^T u_t^2 ]} \leq \frac14 \sigma^2 \sqrt{T}$.
Plugging this into the bound of \cref{lem:tv-distance1} for the total variation distance between $\Pr_{+}$ and $\Pr_{-}$, and using our choice $\epsilon = T^{-1/2} / 4$, we obtain that
\[
    \TV{\Pr_{+}[x^{(T)}]}{\Pr_{-}[x^{(T)}]} 
    \leq
    \sqrt{\frac{\epsilon}{\sigma^2} \cdot \frac{\sigma^2}{4} \sqrt{T}}
    =
    \frac{1}{4}
    .
\]
Now, let $N_T$ denote the number of time steps in which $u_t k_\star x_t \le 0$, i.e., the number of times in which the learner has guessed the sign of $\chi$ incorrectly. 
We claim that
$ 
    \Pr[N_T \geq T/2]
    \geq 
    3/8
    .
$
To see this, denote by $N'_T$ the number of time steps $t$ in which $u_t x_t \le 0$.
Using the fact that $N'_T$ is a deterministic function of the trajectory $x^{(T)}$ together with the bound on the total variation gives
\begin{align*}
    \abs{ \Pr_{+}[N'_T \ge T/2] - \Pr_{-}[N'_T \ge T/2] }
    \leq
    \TV{\Pr_{+}[x^{(T)}]}{\Pr_{-}[x^{(T)}]}
    \leq
    \frac{1}{4}
    .
\end{align*}
Now, recall that the sign of $k_\star$ is determined by that of $\chi$. 
Thus, $\Pr_{-}[N_T \ge T/2] = \Pr_{-}[N'_T < T/2]$ and $\Pr_{+}[N_T \ge T/2] = \Pr_{+}[N'_T \ge T/2]$ from which
\begin{align} \label{eq:lb-NT}
    \Pr[N_T \geq T/2]
    &=
    \tfrac12 \Pr_{+}[N_T \ge T/2] + \tfrac12 \Pr_{-}[N_T \ge T/2]
    \nonumber\\
    &=
    \tfrac12 (1 + \Pr_{+}[N'_T \ge T/2] - \Pr_{-}[N'_T \ge T/2])
    \nonumber\\
    &\geq
    3/8
    .
\end{align}
On the other hand, \cref{lem:many-large-x} implies that with probability at least $7/8$, no less than $2T/3$ of the states $x_1,\ldots,x_T$ satisfy $|x_t| > 2 \sigma / 5$. 
Then by a union bound, with probability at least $1/4$, at least $T/6$ instances of $x_1,\ldots,x_T$ satisfy $|x_t| \ge 2 \sigma / 5$ and $u_t k_\star x_t \le 0$. 
For these instances, we have
$$
    (u_t - k_\star x_t)^2
    \geq
    k_\star^2 x_t^2
    \geq
    0.99^2\frac{4}{125} \epsilon \sigma^2,
$$
where we have bounded $k_{\star}$ as in \cref{eq:lb-kp-bounds}.
Hence, we can lower bound the regret using the first inequality in \cref{lem:lb1} as follows:
\begin{align*}
    \EE{[R_T]}
    &\geq
    0.99 \cdot \EE{\Bigg[\sum_{t=1}^T (u_t - k_\star x_t)^2\Bigg]} - 4 \sigma^2
    \\
    &\geq
    0.99^3 \cdot \frac{1}{4} \cdot \frac{T}{6} \cdot \frac{4}{125}\epsilon \sigma^2 - 4 \sigma^2
    \\
    &\geq
    \frac{1}{3100} \sigma^2 \sqrt{T} - 4 \sigma^2,
\end{align*}
where the last transition used our choice of $\epsilon$.
\end{proof}

\subsection*{Acknowledgements}

We thank Yishay Mansour for numerous helpful discussions.
This work was partially supported by the Israeli Science Foundation (ISF) grant 2549/19 and by the Yandex Initiative in Machine
Learning.

\bibliographystyle{plainnat}
\bibliography{bibliography}

\begin{thebibliography}{33}
\providecommand{\natexlab}[1]{#1}
\providecommand{\url}[1]{\texttt{#1}}
\expandafter\ifx\csname urlstyle\endcsname\relax
  \providecommand{\doi}[1]{doi: #1}\else
  \providecommand{\doi}{doi: \begingroup \urlstyle{rm}\Url}\fi

\bibitem[Abbasi-Yadkori and Szepesv{\'a}ri(2011)]{abbasi2011regret}
Yasin Abbasi-Yadkori and Csaba Szepesv{\'a}ri.
\newblock Regret bounds for the adaptive control of linear quadratic systems.
\newblock In \emph{Proceedings of the 24th Annual Conference on Learning
  Theory}, pages 1--26, 2011.

\bibitem[Abbasi-Yadkori et~al.(2011)Abbasi-Yadkori, P{\'a}l, and
  Szepesv{\'a}ri]{abbasi2011improved}
Yasin Abbasi-Yadkori, D{\'a}vid P{\'a}l, and Csaba Szepesv{\'a}ri.
\newblock Improved algorithms for linear stochastic bandits.
\newblock In \emph{Advances in Neural Information Processing Systems}, pages
  2312--2320, 2011.

\bibitem[Abbasi-Yadkori et~al.(2019{\natexlab{a}})Abbasi-Yadkori, Bartlett,
  Bhatia, Lazic, Szepesvari, and Weisz]{abbasi2019politex}
Yasin Abbasi-Yadkori, Peter Bartlett, Kush Bhatia, Nevena Lazic, Csaba
  Szepesvari, and Gell{\'e}rt Weisz.
\newblock Politex: Regret bounds for policy iteration using expert prediction.
\newblock In \emph{International Conference on Machine Learning}, pages
  3692--3702, 2019{\natexlab{a}}.

\bibitem[Abbasi-Yadkori et~al.(2019{\natexlab{b}})Abbasi-Yadkori, Lazic, and
  Szepesvari]{abbasi2019model}
Yasin Abbasi-Yadkori, Nevena Lazic, and Csaba Szepesvari.
\newblock Model-free linear quadratic control via reduction to expert
  prediction.
\newblock In \emph{The 22nd International Conference on Artificial Intelligence
  and Statistics}, pages 3108--3117, 2019{\natexlab{b}}.

\bibitem[Abeille and Lazaric(2018)]{abeille2018improved}
Marc Abeille and Alessandro Lazaric.
\newblock Improved regret bounds for thompson sampling in linear quadratic
  control problems.
\newblock In \emph{International Conference on Machine Learning}, pages 1--9,
  2018.

\bibitem[Agarwal et~al.(2019{\natexlab{a}})Agarwal, Bullins, Hazan, Kakade, and
  Singh]{agarwal2019online}
Naman Agarwal, Brian Bullins, Elad Hazan, Sham Kakade, and Karan Singh.
\newblock Online control with adversarial disturbances.
\newblock In \emph{International Conference on Machine Learning}, pages
  111--119, 2019{\natexlab{a}}.

\bibitem[Agarwal et~al.(2019{\natexlab{b}})Agarwal, Hazan, and
  Singh]{agarwal2019logarithmic}
Naman Agarwal, Elad Hazan, and Karan Singh.
\newblock Logarithmic regret for online control.
\newblock In \emph{Advances in Neural Information Processing Systems}, pages
  10175--10184, 2019{\natexlab{b}}.

\bibitem[Bertsekas(1995)]{bertsekas1995dynamic}
Dimitri~P Bertsekas.
\newblock \emph{Dynamic programming and optimal control}, volume~1.
\newblock Athena scientific Belmont, MA, 1995.

\bibitem[Cohen et~al.(2018)Cohen, Hasidim, Koren, Lazic, Mansour, and
  Talwar]{cohen2018online}
Alon Cohen, Avinatan Hasidim, Tomer Koren, Nevena Lazic, Yishay Mansour, and
  Kunal Talwar.
\newblock Online linear quadratic control.
\newblock In \emph{International Conference on Machine Learning}, pages
  1029--1038, 2018.

\bibitem[Cohen et~al.(2019)Cohen, Koren, and Mansour]{cohen2019learning}
Alon Cohen, Tomer Koren, and Yishay Mansour.
\newblock Learning linear-quadratic regulators efficiently with only $\sqrt{T}$
  regret.
\newblock In \emph{International Conference on Machine Learning}, pages
  1300--1309, 2019.

\bibitem[Dean et~al.(2017)Dean, Mania, Matni, Recht, and Tu]{dean2017sample}
Sarah Dean, Horia Mania, Nikolai Matni, Benjamin Recht, and Stephen Tu.
\newblock On the sample complexity of the linear quadratic regulator.
\newblock \emph{Foundations of Computational Mathematics}, pages 1--47, 2017.

\bibitem[Dean et~al.(2018)Dean, Mania, Matni, Recht, and Tu]{dean2018regret}
Sarah Dean, Horia Mania, Nikolai Matni, Benjamin Recht, and Stephen Tu.
\newblock Regret bounds for robust adaptive control of the linear quadratic
  regulator.
\newblock In \emph{Advances in Neural Information Processing Systems}, pages
  4188--4197, 2018.

\bibitem[Faradonbeh et~al.(2017)Faradonbeh, Tewari, and
  Michailidis]{faradonbeh2017optimism}
Mohamad Kazem~Shirani Faradonbeh, Ambuj Tewari, and George Michailidis.
\newblock Optimism-based adaptive regulation of linear-quadratic systems.
\newblock \emph{arXiv preprint arXiv:1711.07230}, 2017.

\bibitem[Faradonbeh et~al.(2018)Faradonbeh, Tewari, and
  Michailidis]{faradonbeh2018input}
Mohamad Kazem~Shirani Faradonbeh, Ambuj Tewari, and George Michailidis.
\newblock Input perturbations for adaptive regulation and learning.
\newblock \emph{arXiv preprint arXiv:1811.04258}, 2018.

\bibitem[Fazel et~al.(2018)Fazel, Ge, Kakade, and Mesbahi]{fazel2018global}
Maryam Fazel, Rong Ge, Sham Kakade, and Mehran Mesbahi.
\newblock Global convergence of policy gradient methods for the linear
  quadratic regulator.
\newblock In \emph{Proceedings of the 35th International Conference on Machine
  Learning}, volume~80, 2018.

\bibitem[Hanson and Wright(1971)]{hanson1971bound}
David~Lee Hanson and Farroll~Tim Wright.
\newblock A bound on tail probabilities for quadratic forms in independent
  random variables.
\newblock \emph{The Annals of Mathematical Statistics}, 42\penalty0
  (3):\penalty0 1079--1083, 1971.

\bibitem[Hazan et~al.(2017)Hazan, Singh, and Zhang]{hazan2017learning}
Elad Hazan, Karan Singh, and Cyril Zhang.
\newblock Learning linear dynamical systems via spectral filtering.
\newblock In \emph{Advances in Neural Information Processing Systems}, pages
  6702--6712, 2017.

\bibitem[Hazan et~al.(2018)Hazan, Lee, Singh, Zhang, and
  Zhang]{hazan2018spectral}
Elad Hazan, Holden Lee, Karan Singh, Cyril Zhang, and Yi~Zhang.
\newblock Spectral filtering for general linear dynamical systems.
\newblock In \emph{Advances in Neural Information Processing Systems}, pages
  4634--4643, 2018.

\bibitem[Hsu et~al.(2012)Hsu, Kakade, Zhang, et~al.]{hsu2012tail}
Daniel Hsu, Sham Kakade, Tong Zhang, et~al.
\newblock A tail inequality for quadratic forms of subgaussian random vectors.
\newblock \emph{Electronic Communications in Probability}, 17, 2012.

\bibitem[Ibrahimi et~al.(2012)Ibrahimi, Javanmard, and
  Roy]{ibrahimi2012efficient}
Morteza Ibrahimi, Adel Javanmard, and Benjamin~V Roy.
\newblock Efficient reinforcement learning for high dimensional linear
  quadratic systems.
\newblock In \emph{Advances in Neural Information Processing Systems}, pages
  2636--2644, 2012.

\bibitem[Kumar(1985)]{kumar1985survey}
Panqanamala~Ramana Kumar.
\newblock A survey of some results in stochastic adaptive control.
\newblock \emph{SIAM Journal on Control and Optimization}, 23\penalty0
  (3):\penalty0 329--380, 1985.

\bibitem[Kumar(1983)]{kumar1983optimal}
PR~Kumar.
\newblock Optimal adaptive control of linear-quadratic-gaussian systems.
\newblock \emph{SIAM Journal on Control and Optimization}, 21\penalty0
  (2):\penalty0 163--178, 1983.

\bibitem[Lai et~al.(1982)Lai, Wei, et~al.]{lai1982least}
Tze~Leung Lai, Ching~Zong Wei, et~al.
\newblock Least squares estimates in stochastic regression models with
  applications to identification and control of dynamic systems.
\newblock \emph{The Annals of Statistics}, 10\penalty0 (1):\penalty0 154--166,
  1982.

\bibitem[Malik et~al.(2019)Malik, Pananjady, Bhatia, Khamaru, Bartlett, and
  Wainwright]{malik2019derivative}
Dhruv Malik, Ashwin Pananjady, Kush Bhatia, Koulik Khamaru, Peter Bartlett, and
  Martin Wainwright.
\newblock Derivative-free methods for policy optimization: Guarantees for
  linear quadratic systems.
\newblock In \emph{The 22nd International Conference on Artificial Intelligence
  and Statistics}, pages 2916--2925, 2019.

\bibitem[Mania et~al.(2019)Mania, Tu, and Recht]{mania2019certainty}
Horia Mania, Stephen Tu, and Benjamin Recht.
\newblock Certainty equivalent control of lqr is efficient.
\newblock \emph{arXiv preprint arXiv:1902.07826}, 2019.

\bibitem[Ouyang et~al.(2017)Ouyang, Gagrani, and Jain]{ouyang2017control}
Yi~Ouyang, Mukul Gagrani, and Rahul Jain.
\newblock Control of unknown linear systems with thompson sampling.
\newblock In \emph{2017 55th Annual Allerton Conference on Communication,
  Control, and Computing (Allerton)}, pages 1198--1205. IEEE, 2017.

\bibitem[Polderman(1986{\natexlab{a}})]{polderman1986necessity}
Jan~Willem Polderman.
\newblock On the necessity of identifying the true parameter in adaptive lq
  control.
\newblock \emph{Systems \& control letters}, 8\penalty0 (2):\penalty0 87--91,
  1986{\natexlab{a}}.

\bibitem[Polderman(1986{\natexlab{b}})]{polderman1986note}
Jan~Willem Polderman.
\newblock A note on the structure of two subsets of the parameter space in
  adaptive control problems.
\newblock \emph{Systems \& control letters}, 7\penalty0 (1):\penalty0 25--34,
  1986{\natexlab{b}}.

\bibitem[Sarkar and Rakhlin(2019)]{sarkar2019near}
Tuhin Sarkar and Alexander Rakhlin.
\newblock Near optimal finite time identification of arbitrary linear dynamical
  systems.
\newblock In \emph{International Conference on Machine Learning}, pages
  5610--5618, 2019.

\bibitem[Shamir(2013)]{shamir2013complexity}
Ohad Shamir.
\newblock On the complexity of bandit and derivative-free stochastic convex
  optimization.
\newblock In \emph{Conference on Learning Theory}, pages 3--24, 2013.

\bibitem[Simchowitz and Foster(2020)]{simchowitz2020naive}
Max Simchowitz and Dylan~J Foster.
\newblock Naive exploration is optimal for online lqr.
\newblock \emph{arXiv preprint arXiv:2001.09576}, 2020.

\bibitem[Simchowitz et~al.(2018)Simchowitz, Mania, Tu, Jordan, and
  Recht]{simchowitz2018learning}
Max Simchowitz, Horia Mania, Stephen Tu, Michael~I Jordan, and Benjamin Recht.
\newblock Learning without mixing: Towards a sharp analysis of linear system
  identification.
\newblock In \emph{Conference On Learning Theory}, pages 439--473, 2018.

\bibitem[Wright(1973)]{wright1973bound}
Farrol~Tim Wright.
\newblock A bound on tail probabilities for quadratic forms in independent
  random variables whose distributions are not necessarily symmetric.
\newblock \emph{The Annals of Probability}, pages 1068--1070, 1973.

\end{thebibliography}

\clearpage
\appendix
\onecolumn

\section{\cref{alg:A} Proofs} \label{sec:proofsA}

\subsection{The Good Event}
We begin with an explicit statement of the probabilistic events that comprise $\goodEventA$. Recall that
\begin{equation*}
A_t = \argmin_A \sum_{s=1}^{t - 1} \norm{x_{s+1} - \Bstar u_s - A x_s}^2 + \lambda \norm{A}_F^2,
\end{equation*}
and denote $\dEstA = A_t - \Astar$, $\Vx{t} = \lambda I + \sum_{s=1}^{t-1} x_t x_t^T$. Now, define the following events
\begin{align}
\label{eq:olsEventA}
\olsEventA
&=
\brc{\tr{\dEstA^T \Vx{t} \dEstA}
	\le
	4 \noiseStd^2 d \log \prn{3T^3 \frac{\det \prn{\Vx{t}}}{\det \prn{\Vx{1}}}} + 2\lambda d \systemBound^2, \text{ for all } t \ge 1}, \\
\label{eq:xNoiseExploreEventA}
\xNoiseExploreEventA
&=
\brc{\sum_{t=1}^{\ti - 1} x_t x_t^T \succeq \frac{\prn{\ti - 1} \noiseStd^2}{40}I, \text{ for all } 0 \le i \le \nT}, \\
\label{eq:xNoiseBoundEventA}
\xNoiseBoundEventA
&=
\brc{\max_{1 \le t \le T} \norm{w_t} \le \noiseStd \sqrt{15 d \log 3 T}},
\end{align}
Then we have the following lemma.
\begin{lemma} \label{lemma:goodEventA}
	Let $\goodEventA = \olsEventA \cap \xNoiseExploreEventA \cap \xNoiseBoundEventA$, and suppose that $T \ge 600 d \log 36 T$. Then we have that $\PP{\goodEventA} \ge 1 - T^{-2}$.
\end{lemma}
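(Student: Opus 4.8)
The plan is to establish each of the three events comprising $\goodEventA$ with probability at least $\tfrac13 T^{-2}$ of failure and then conclude by a union bound. For $\olsEventA$ this is immediate from \cref{lemma:parameterEst}: taking $z_t = x_t$, $y_{t+1} = x_{t+1} - \Bstar u_t$, $V_t = \Vx{t}$, $\Delta_t = \dEstA$ and confidence level $\delta = \tfrac13 T^{-2}$, the stated inequality follows once I bound $d/\delta = 3dT^2 \le 3T^3$ (using $d \le T$, which $T \ge 600 d \log 36T$ ensures) and $\norm{\Astar}_F^2 \le d \systemBound^2$. For $\xNoiseBoundEventA$, each $\norm{w_t}^2/\noiseStd^2$ is a $\chi^2_d$ variable, so a standard chi-squared tail bound gives $\PP{\norm{w_t} > \noiseStd\sqrt{15 d \log 3T}} \le \tfrac13 T^{-3}$, and a union bound over $1 \le t \le T$ finishes; the generous constant $15$ leaves ample slack.

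The crux is $\xNoiseExploreEventA$, the persistent-excitation bound $\sum_{t=1}^{\ti-1} x_t x_t^T \succeq \tfrac{(\ti-1)\noiseStd^2}{40} I$ for all $0 \le i \le \nT$. The essential difficulty is that this must be proved \emph{unconditionally} on the trajectory: in \cref{lemma:goodOperationA} the state bound $\norm{x_t}^2 \le \xBreak$ is derived \emph{downstream} of this very inequality (through the inductive strong-stability argument), so I may not assume bounded states here, and indeed no unconditional upper bound on the states is available since the system can be destabilized. I would exploit the one-step structure $x_s = \bar x_s + w_{s-1}$, where $\bar x_s = \Astar x_{s-1} + \Bstar u_{s-1}$ is $\mathcal{F}_{s-1}$-measurable and $w_{s-1} \mid \mathcal{F}_{s-1} \sim \gaussDist{0}{\noiseStd^2 I}$. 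Thus, conditionally on the past, $x_s$ is Gaussian with covariance exactly $\noiseStd^2 I$, so the fresh noise injects an eigenvalue of at least $\noiseStd^2$ into the Gram matrix at every step, no matter how large the states grow.

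To convert this into a high-probability spectral lower bound I would run a Laplace-transform (matrix-Chernoff) argument rather than an $\varepsilon$-net over the sphere; the latter is unavailable precisely because converting a per-direction bound into a bound on $\lambda_{\min}$ requires an upper bound on $\norm{\sum_t x_t x_t^T}$, which need not hold. The engine is the Gaussian quadratic-form identity $\EEBrk{\exp(-x_s^T \Theta x_s) \mid \mathcal{F}_{s-1}} \le \det(I + 2\noiseStd^2 \Theta)^{-1/2}$, valid for every $\Theta \succeq 0$, in which the non-central (mean) contribution only decreases the left-hand side and is simply discarded. Applied in the adapted trace-exponential framework (via Lieb's inequality), this per-step bound yields a supermartingale and hence a lower-tail bound on $\lambda_{\min}\prn{\sum_{t=1}^{\ti-1} x_t x_t^T}$, which I would union bound over the $\nT + 1 = O(\log T)$ endpoints $\ti$. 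The hypothesis $T \ge 600 d \log 36T$ is exactly what makes the multiplicative deviation in the Chernoff bound a lower-order fraction, so that the constant $\tfrac{1}{40}$ survives, while keeping each per-endpoint failure below $\tfrac{1}{3(\nT+1)} T^{-2}$.

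I expect this minimum-eigenvalue concentration to be the main obstacle, on two counts: the summands $x_s x_s^T$ are unbounded, which rules out both the bounded matrix-Chernoff inequality and the naive net argument, and the estimate must precede any control on the state magnitudes. The conditional Gaussian MGF computation is exactly what circumvents both issues, turning the per-step covariance floor $\noiseStd^2 I$ into a genuine semidefinite tail bound without ever invoking $\norm{x_s}$.
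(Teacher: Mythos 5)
Your handling of $\olsEventA$ and $\xNoiseBoundEventA$ matches the paper's proof: the paper applies \cref{lemma:parameterEst} with $z_t = x_t$, $y_{t+1} = x_{t+1} - \Bstar u_t$, $\delta = \tfrac13 T^{-2}$ and $\norm{\Astar}_F^2 \le d\systemBound^2$, and bounds the noise maximum via a Gaussian quadratic-form tail (\cref{lemma:noiseBound}, a Hanson--Wright corollary) with $\delta = \tfrac13 T^{-2}$; your union-bound bookkeeping over the $\nT + 1$ endpoints of $\xNoiseExploreEventA$ is also equivalent to the paper's choice of $\delta = \tfrac13 T^{-3}$ per endpoint together with $\nT + 1 \le T$. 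You are likewise right that the exploration event must be established with no control whatsoever on the state magnitudes, and that a naive $\varepsilon$-net conversion to $\lambda_{\min}$ is blocked for exactly the reason you give.

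The gap is in your engine for that crux step. The Lieb-based adapted trace-exponential (matrix-Chernoff) framework requires a deterministic psd upper bound on the conditional \emph{matrix} MGF $\EEBrk{e^{-\theta x_s x_s^T} \mid \mathcal{F}_{s-1}}$; your scalar identity $\EEBrk{\exp(-x_s^T \Theta x_s) \mid \mathcal{F}_{s-1}} \le \det(I + 2\noiseStd^2 \Theta)^{-1/2}$ (which is correct) is not of that form and does not imply one. In fact no nontrivial uniform matrix bound exists: writing $e^{-\theta x x^T} = I - (1 - e^{-\theta \norm{x}^2})\, \hat{x}\hat{x}^T$ with $\hat{x} = x/\norm{x}$, if the conditional mean is $M e_1$ with $M \to \infty$, then for any unit $u \perp e_1$ one has $(u^T \hat{x})^2 = O(\noiseStd^2/M^2)$ and hence $u^T\, \EEBrk{e^{-\theta x_s x_s^T} \mid \mathcal{F}_{s-1}}\, u \to 1$ --- the per-step multiplicative gain degenerates precisely in the unbounded-state regime you set out to handle, because the fresh isotropic noise's $\noiseStd^2$-per-direction contribution is invisible to the matrix exponential of a rank-one update aligned with a huge mean. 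What your scalar identity does yield is, for each \emph{fixed} $\Theta \succeq 0$, a supermartingale and hence a lower-tail bound on the fixed functional $\tr{\Theta \sum_s x_s x_s^T}$; passing from fixed $\Theta$ to $\lambda_{\min}$ reintroduces the union/net over directions that you had correctly ruled out. The paper sidesteps all of this by invoking \cref{lemma:VtLowerBound} (Theorem 20 of \citealp{cohen2019learning}) as a black box: conditional Gaussianity with $\EEBrk{x_t x_t^T \mid \mathcal{F}_{t-1}} \succeq \noiseStd^2 I$, together with the minimal-length condition $t \ge 200 d \log(12/\delta)$ --- which, with $\delta = \tfrac13 T^{-3}$, is where the hypothesis $T \ge 600 d \log 36T$ and the choice of $\tBase$ enter --- gives $\sum_{t=1}^{\ti - 1} x_t x_t^T \succeq \tfrac{(\ti - 1)\noiseStd^2}{40} I$ per endpoint. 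Unless you replace the Lieb step with an argument that genuinely avoids both the matrix-MGF degeneracy and the net (small-ball-type arguments in the literature typically intersect with a norm-bounded event instead), you should cite that lemma rather than re-derive it.
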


\begin{proof}
	First, we describe the parameter estimation error in terms of \cref{lemma:parameterEst}. To that end, let $z_t = x_t$, $y_{t+1} = x_{t+1} - \Bstar u_t$, $\Vx{t} = \lambda I + \sum_{s=1}^{t-1} x_t x_t^T$, and $\dEstA = A_t - \Astar$ Indeed, we have $y_{t+1} = \Astar x_t + w_t$, $w_t \sim \gaussDist{0}{\noiseStd^2 I}$, and $\norm{\Astar}_F^2 \le d \norm{\Astar}^2 \le d \systemBound^2$ and so taking \cref{lemma:parameterEst} with $\delta = \frac{1}{3}T^{-2}$, recalling that $T \ge d$, and simplifying, we get that $\PP{\olsEventA} \ge 1 - \frac{1}{3}T^{-2}$.
	
	Next, for $\xNoiseExploreEventA$, we apply \cref{lemma:VtLowerBound} to the sequence $x_t$ with the filtration $\mathcal{F}_t = \sigma\prn{x_1, u_1, \ldots, x_t, u_t}$. Notice that given $x_{t-1}, u_{t-1}$ we have $x_t \sim \gaussDist{\Astar x_{t-1} + \Bstar u_{t-1}}{\noiseStd^2 I}$ and hence we also get
	\begin{equation*}
		\EEBrk{x_t x_t^T ~\big|~ \mathcal{F}_{t-1}}
		\succeq
		\prn{\Astar x_{t-1} + \Bstar u_{t-1}} \prn{\Astar x_{t-1} + \Bstar u_{t-1}}^T
		+ \noiseStd^2 I
		\succeq
		\noiseStd^2 I.
	\end{equation*}
	Finally, our choice of $\tBase$ ensures the minimal sum size assumption. We thus apply \cref{lemma:VtLowerBound} $\nT + 1$ times with $\delta = \frac{1}{3} T^{-3}$ and apply a union bound. Since $\nT + 1 \le T$ we conclude that $\PP{\xNoiseExploreEventA} \ge 1 - \frac{1}{3} T^{-2}$.
	
	Finally, for $\xNoiseBoundEventA$ we apply \cref{lemma:noiseBound} with $\delta = \frac{1}{3} T^{-2}$ to get $\PP{\xNoiseBoundEventA} \ge 1 - \frac13 T^{-2}$.
	The final result is obtained by taking a union bound over the three events.
\end{proof}

\subsection{Proof of \cref{lemma:goodOperationA}} \label{sec:proofOfLemmaGoodOperationA}
	We first need the following two lemmas.
	\begin{lemma}[Bounded warm-up] \label{lemma:boundedWarmupA}
		On $\goodEventA$ we have that
		$
		\norm{x_t}
		\le
		\noiseStd \kappa_0^3 \sqrt{60 d \log 3T}
		\le
		\sqrt{\xBreak}
		,
		$
		for all $1 \le t \le \tBase$.
	\end{lemma}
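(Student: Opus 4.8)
The plan is to use that during the entire warm-up phase the algorithm plays the \emph{fixed} controller $u_t = K_0 x_t$, so the state follows the linear recursion $x_{t+1} = M x_t + w_t$ with closed-loop matrix $M = \Astar + \Bstar K_0$. Since $x_1 = 0$, unrolling this recursion gives the explicit representation $x_t = \sum_{s=1}^{t-1} M^{t-1-s} w_s$, which reduces the problem to controlling a geometrically-weighted sum of noise vectors.

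First I would record that $K_0$ is $(\kappa_0, \gamma_0)$-strongly stable with $\gamma_0 = 1/(2\kappa_0^2)$ and the stated $\kappa_0 = \sqrt{\startCostBound/(\costMatLower \noiseStd^2)}$; this is the standard conversion of the cost bound $\Jof{K_0} \le \startCostBound$ into strong-stability parameters (matching the $\gamma = 1/(2\kappa^2)$ convention used throughout). Writing $M = HLH^{-1}$ with $\norm{L} \le 1-\gamma_0$ and $\norm{H}\norm{H^{-1}} \le \kappa_0$ yields $\norm{M^j} \le \kappa_0 (1-\gamma_0)^j$, so the triangle inequality together with a geometric-series sum gives
\[
    \norm{x_t}
    \le
    \sum_{s=1}^{t-1} \norm{M^{t-1-s}} \norm{w_s}
    \le
    \kappa_0 \Big( \max_{1 \le s \le T} \norm{w_s} \Big) \sum_{j \ge 0} (1-\gamma_0)^j
    =
    \frac{\kappa_0}{\gamma_0} \max_{1 \le s \le T} \norm{w_s} .
\]

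Next I would substitute the noise bound available on $\goodEventA$: since $\goodEventA \subseteq \xNoiseBoundEventA$ we have $\max_{1 \le s \le T}\norm{w_s} \le \noiseStd \sqrt{15 d \log 3T}$, and plugging in $\gamma_0 = 1/(2\kappa_0^2)$ (hence $\kappa_0/\gamma_0 = 2\kappa_0^3$) produces the first claimed inequality
\[
    \norm{x_t}
    \le
    2\kappa_0^3 \noiseStd \sqrt{15 d \log 3T}
    =
    \noiseStd \kappa_0^3 \sqrt{60 d \log 3T} .
\]
The second inequality $\noiseStd \kappa_0^3 \sqrt{60 d \log 3T} \le \sqrt{\xBreak}$ is then a comparison of constants: after squaring it reduces to $60\kappa_0^6 \le 135 \kappa^2 \max\{\kappa_0^6, 4\kappa^6\}$, which holds because $\kappa \ge 1$ (so $135\kappa^2 \ge 60$) and $\max\{\kappa_0^6,4\kappa^6\} \ge \kappa_0^6$, recovering exactly the definition of $\xBreak$.

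The only non-routine ingredient is pinning down the strong-stability constant $\gamma_0 = 1/(2\kappa_0^2)$ for $K_0$: the factor $\kappa_0/\gamma_0 = 2\kappa_0^3$ must match the $\kappa_0^3$ in the target bound exactly, so the precise constant in the cost-to-stability conversion is what the whole estimate hinges on. Everything else is the routine unrolling of a stable linear system driven by bounded noise.
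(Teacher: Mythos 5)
Your proof is correct and takes essentially the same route as the paper: the paper likewise converts $\Jof{K_0} \le \startCostBound$ into $(\kappa_0,\gamma_0)$-strong stability with $\gamma_0^{-1} = 2\kappa_0^2$ via \cref{lemma:costToStability}, bounds the state during warm-up by $\frac{\kappa_0}{\gamma_0}\max_{1\le t\le T}\norm{w_t} = 2\kappa_0^3 \max_{1\le t\le T}\norm{w_t}$ (you merely inline the unrolling-plus-geometric-series argument that constitutes the proof of \cref{lemma:singleControlBound}), and then applies the noise bound from \cref{eq:xNoiseBoundEventA}. The only cosmetic difference is that you explicitly verify the constant comparison $60\kappa_0^6 \le 135\kappa^2\max\brc{\kappa_0^6, 4\kappa^6}$ for the second inequality, which the paper leaves implicit.
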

	\begin{proof}
		First, by \cref{lemma:costToStability}, $\Jof{K_0} \le \startCostBound$ implies that $K_0$ is $(\kappa_0, \gamma_0)-$strongly stable with $\gamma_0^{-1} = 2 \kappa_0^2$.
		So, applying \cref{lemma:singleControlBound} with $x_1 = 0$ we get that for all $1 \le t \le \tBase$
		\begin{equation*}
		\norm{x_t}
		\le
		2 \kappa_0^3 \max_{1 \le t \le T}\norm{w_t},
		\end{equation*} 
		and applying the noise bound in \cref{eq:xNoiseBoundEventA} we obtain the desired result.
	\end{proof}
	
	\begin{lemma}[Conditional parameter estimation] \label{lemma:conditionalParamEstA}
		On $\goodEventA$  fix some $i$ such that $0 \le i \le \nT$ and suppose that $\norm{x_t}^2 \le \xBreak$ for all $1 \le t \le \ti$. Then we have that $\norm{\dEstA[\ti]} \le \rechtEps 2^{-i}$.
	\end{lemma}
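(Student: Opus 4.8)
The plan is to sandwich the design matrix $\Vx{\ti}$ between a matching upper and lower bound and feed both into the least-squares guarantee available on $\olsEventA$. Recall that on $\goodEventA \subseteq \olsEventA$ we have, for every $t \ge 1$,
\[
    \tr{\dEstA^T \Vx{t} \dEstA}
    \le
    4 \noiseStd^2 d \log\prn{3 T^3 \frac{\det\prn{\Vx{t}}}{\det\prn{\Vx{1}}}} + 2\lambda d \systemBound^2.
\]
First I would use the standing hypothesis $\norm{x_t}^2 \le \xBreak = \lambda$ (for all $t \le \ti$) to control the log-determinant term. Since $\norm{\Vx{\ti}} \le \lambda + \sum_{s=1}^{\ti-1}\norm{x_s}^2 \le \lambda \ti$ and $\det\prn{\Vx{1}} = \lambda^d$, one gets $\det\prn{\Vx{\ti}}/\det\prn{\Vx{1}} \le \ti^d \le T^d$, so the logarithm is at most $\log\prn{3 T^{3+d}} = \log 3 + (3+d)\log T$. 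Plugging in $\lambda = 135 d \kappa^2 \noiseStd^2 \max\brc{\kappa_0^6, 4\kappa^6}\log\prn{3T}$, which comfortably dominates $2\noiseStd^2\prn{\log 3 + (3+d)\log T}$, the logarithmic term is absorbed into $2 d \lambda$, and together with the $2\lambda d\systemBound^2$ penalty this yields
\[
    \tr{\dEstA[\ti]^T \Vx{\ti} \dEstA[\ti]}
    \le
    2 d\lambda\prn{1 + \systemBound^2}
    \le
    \frac{\noiseStd^2 \rechtEps^2 \tBase}{40},
\]
where the final inequality is exactly the defining property $\tBase \ge 80 d \lambda \prn{1+\systemBound^2}/\prn{\noiseStd^2 \rechtEps^2}$.

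Next I would lower bound $\Vx{\ti}$. On $\xNoiseExploreEventA$ we have $\sum_{s=1}^{\ti-1} x_s x_s^T \succeq \tfrac{(\ti-1)\noiseStd^2}{40} I$, and since $\lambda \ge \noiseStd^2/40$ the regularization term upgrades this to $\Vx{\ti} \succeq \tfrac{\ti \noiseStd^2}{40} I$. Using $\tr{\dEstA[\ti]^T \Vx{\ti}\dEstA[\ti]} \ge \lambda_{\min}\prn{\Vx{\ti}}\norm{\dEstA[\ti]}_F^2 \ge \tfrac{\ti\noiseStd^2}{40}\norm{\dEstA[\ti]}^2$ and combining with the upper bound above gives
\[
    \norm{\dEstA[\ti]}^2
    \le
    \frac{40}{\ti \noiseStd^2}\cdot \frac{\noiseStd^2 \rechtEps^2 \tBase}{40}
    =
    \frac{\rechtEps^2 \tBase}{\ti}
    =
    \rechtEps^2 4^{-i},
\]
since $\ti = \tBase 4^i$. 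Taking square roots yields the claimed bound $\norm{\dEstA[\ti]} \le \rechtEps 2^{-i}$.

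The only delicate point is the parameter bookkeeping in the first step: I have to verify that the chosen $\lambda$ and $\tBase$ are large enough that the dimension- and horizon-dependent logarithmic term together with the regularization penalty stay below $\noiseStd^2\rechtEps^2\tBase/40$. This is purely a matter of checking the two displayed inequalities against the stated values of $\kappa,\kappa_0,\lambda,\tBase$, and the generous constants (the factor $135$ and the multiplier $\kappa^2\max\brc{\kappa_0^6,4\kappa^6}\ge 1$ inside $\lambda$) leave ample slack. I note also that $\xNoiseBoundEventA$ plays no role here; only $\olsEventA$ (for the trace inequality) and $\xNoiseExploreEventA$ (for the spectral lower bound) are used.
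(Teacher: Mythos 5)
Your proof is correct and takes essentially the same route as the paper's: both combine the least-squares guarantee on $\olsEventA$, the log-determinant bound $\det\prn{\Vx{\ti}}/\det\prn{\Vx{1}} \le T^d$ (your direct derivation is exactly the paper's \cref{lemma:logDetBound}), and the spectral lower bound $\Vx{\ti} \succeq \tfrac{\ti \noiseStd^2}{40} I$ from $\xNoiseExploreEventA$, with the same parameter bookkeeping for $\lambda$ and $\tBase$. The only difference is cosmetic ordering — you absorb the logarithmic term into $2d\lambda\prn{1+\systemBound^2}$ before dividing by $\lambda_{\min}\prn{\Vx{\ti}}$, whereas the paper divides first — and your closing observations (that $\lambda \ge \noiseStd^2/40$ upgrades the lower bound and that $\xNoiseBoundEventA$ is not needed here) match the paper's usage.
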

	\begin{proof}
		First, on $\goodEventA$ by \cref{eq:xNoiseExploreEventA} we have that
		\begin{equation*}
		\Vx{\ti}
		=
		\lambda I
		+
		\sum_{t=1}^{\ti - 1} x_t x_t^T
		\succeq
		\prn{
			\lambda
			+
			\frac{\prn{\ti - 1} \noiseStd^2}{40}
		} I
		\succeq
		\frac{\ti \noiseStd^2}{40} I,
		\end{equation*}
		and so we conclude that
		\begin{equation*}
		\tr{\dEstA[\ti]^T \Vx{\ti} \dEstA[\ti]}
		\ge
		\tr{\dEstA[\ti]^T \dEstA[\ti]} \frac{\ti \noiseStd^2}{40}
		\ge
		\norm{\dEstA[\ti]}^2 \frac{\ti \noiseStd^2}{40}.
		\end{equation*}
		Rearranging and applying \cref{eq:olsEventA} we obtain
		\begin{equation*}
		\norm{\dEstA[\ti]}^2
		\le
		\frac{1}{\ti}
		\prn{
			160 d \log \prn{3 T^3 \frac{\det \prn{\Vx{\ti}}}{\det \prn{\Vx{1}}}}
			+
			80\frac{\lambda d \systemBound^2}{\noiseStd^2}
		}
		.
		\end{equation*}
		Now, since we assumed $\norm{x_t}^2 \le \xBreak = \lambda$, we can apply \cref{lemma:logDetBound} to conclude that
		\begin{equation*}
		\log \frac{\det\prn{\Vx{\ti}}}{\det\prn{\Vx{1}}}
		\le
		d \log T,
		\end{equation*}
		and plugging this into the above we get that
		\begin{align*}
		\norm{\dEstA[\ti]}^2
		\le
		\frac{1}{\ti}
		\prn{
			640 d^2 \log \prn{3 T}
			+
			80\frac{\lambda d \systemBound^2}{\noiseStd^2}
		}
		\le
		\frac{1}{\ti}
		\frac{80\lambda d \prn{1 + \systemBound^2}}{\noiseStd^2}
		\le
		\frac{\rechtEps^2 \tBase}{\ti}
		\le
		\rechtEps^2 4^{-i},
		\end{align*}
		where all transitions are due to our choice of parameters.
	\end{proof}

	\begin{proof}[of \cref{lemma:goodOperationA}]
	
		First recall that by \cref{lemma:goodController}, if $\norm{\dEstA} \le \rechtEps$ then $K_t$ is $(\kappa,\gamma)-$strongly stable.
		We now show by induction on $n$ that for all $0 \le i \le n,$ $K_{\ti}$ is $(\kappa,\gamma)-$strongly stable. Note that $0 \le n \le \nT$.
		
		For the base case, $n=0$, \cref{lemma:boundedWarmupA} shows that $\norm{x_t}^2 \le \xBreak$ for all $1 \le t \le \tBase$, which in turn satisfies \cref{lemma:conditionalParamEstA}, i.e., $\norm{\dEstA[\tBase]} \le \rechtEps$ and so the required strong stability of $K_{\tBase}$ is obtained.
		
		Now, suppose the induction holds up to $n-1$ and we show for $n$. By the strong stability of the controllers up to time $\ti[n] - 1$, and since $\tBase \ge \frac{\log \kappa}{\gamma}$, we can apply \cref{lemma:multiControlBound} to conclude that
		\begin{equation*}
			\norm{x_t}
			\le
			3\kappa
			\max\brc{
				\frac{\norm{x_{\tBase}}}{2},
				\frac{\kappa}{\gamma} \max_{1 \le t \le T}\norm{w_t}},
			\qquad \text{for all } \tBase \le t \le \ti.
		\end{equation*}
		recalling that $\gamma^{-1} = 2 \kappa^2$, bounding the noise with \cref{eq:xNoiseBoundEventA}, and bounding $\norm{x_{\tBase}}$ by \cref{lemma:boundedWarmupA} we get that
		\begin{align*}
			\norm{x_t}
			\le
			3\kappa
			\max\brc{
				\frac{\noiseStd \kappa_0^3 \sqrt{60 d \log 3T}}{2},
				2\kappa^3 \noiseStd \sqrt{15 d \log 3 T}},
			\le
			\noiseStd \kappa
			\max\brc{\kappa_0^3, 2 \kappa^3}
			\sqrt{135 d \log 3 T}
			=
			\sqrt{\xBreak},
		\end{align*}
		and as for the base case, we can now invoke \cref{lemma:conditionalParamEstA,lemma:goodController} to conclude the strong stability of $K_{\ti[n]}$ and finish the induction. Notice that this together with the above equation also show the algorithm does not abort.
		
		The induction proves the first part of the lemma, i.e., all controller are strongly-stable. Now, we can apply \cref{lemma:multiControlBound} once more to conclude that $\norm{x_t}^2 \le \xBreak$ for all $\tBase \le t \le T$ and together with \cref{lemma:boundedWarmupA} this concludes the second claim of the lemma.
		
		Finally, the third claim is now an immediate corollary of the \cref{lemma:conditionalParamEstA}.
	\end{proof}

\subsection{Proof of \cref{lemma:JiBoundA}} \label{sec:proofOfLemmaJiA}
Recall that
$
E_i
=
\brc{
	\norm{\dEstA[\ti]}
	\le
	\rechtEps 2^{-i}
}
$, and further denote
$
S_i
=
\brc{
	\norm{x_{\ti}} ^2
	\le
	\xBreak
}
.
$
Trivially, we have that $\goodEventA \subseteq E_i \cap S_i$.

Now, define $\tilde{x}_{\ti} = x_{\ti}$ and for $\ti < t \le \ti[i+1] - 1$
\begin{equation*}
	\tilde{x}_{t} = \prn{\Astar + \Bstar K_{\ti}} \tilde{x}_{t-1} + w_t.
\end{equation*}
Since on $\goodEventA$ the algorithm does not abort, we have that
$$
\indEvent{\goodEventA} J_i
=
\indEvent{\goodEventA} \sum_{t = \ti}^{\ti[i+1] - 1} \tilde{x}_t^T \prn{Q + K_{\ti}^T R K_{\ti}} \tilde{x}_t
\le
\indEvent{E_i \cap S_i} \sum_{t = \ti}^{\ti[i+1] - 1} \tilde{x}_t^T \prn{Q + K_{\ti}^T R K_{\ti}} \tilde{x}_t
.
$$
Noticing that $E_i$, $S_i$, and $K_{\ti}$ are completely determined by $x_{\ti}, A_{\ti}$ we use total expectation to get that
\begin{align*}
	\EEBrk{\indEvent{\goodEventA} J_i}
	\le
	\EEBrk{
		\indEvent{E_i \cap S_i}
		\EEBrk{
			\sum_{t = \ti}^{\ti[i+1] - 1} \tilde{x}_t^T \prn{Q + K_{\ti}^T R K_{\ti}} \tilde{x}_t
			~\bigg|~ x_{\ti}, A_{\ti}}}.
\end{align*}
Now, by \cref{lemma:goodController}, $E_i$ implies that $K_{\ti}$ is $(\kappa,\gamma)-$strongly stable and so we can use \cref{lemma:steadyStateCostConvergence} to get that
\begin{align*}
	\EEBrk{\indEvent{\goodEventA} J_i}
	&\le
	\prn{\ti[i+1] - \ti}\EEBrk{\indEvent{E_i}\Jof{K_{\ti}}}
	+
	\frac{2 \costMatUpper \kappa^4}{\gamma} \EEBrk{\indEvent{S_i}\norm{x_{\ti}}^2} \\
	&\le
	\prn{\ti[i+1] - \ti}\EEBrk{\indEvent{E_i}\Jof{K_{\ti}}}
	+
	4 \costMatUpper \kappa^6 \xBreak
	,
\end{align*}
where the second transition also used that $\gamma^{-1} = 2\kappa^2$ and the third used our choice of $\xBreak \ge \noiseStd^2 \kappa^4$.

\subsection{Proof of \cref{lemma:R2AlgA} ($\bm{R_2}$ upper bound)} \label{sec:proofOfLemmaR2A}
	We first need the following lemma.
	\begin{lemma}[Expected abort state] \label{lemma:abortStateBoundA}
		Suppose that $\PP{\tAbort \le T} \le T^{-2}$. Then we have that
		\begin{align*}
		\EEBrk{\norm{x_{\tAbort}}^2 \indEvent{\tAbort < T}}
		\le
		\prn{1 + 8 \systemBound^2}\prn{\kappa^2 + \kappa_0^2} \xBreak T^{-2}
		.
		\end{align*}
	\end{lemma}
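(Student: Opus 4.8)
The plan is to control the state at the abort time by stepping back once to a point where, by definition of $\tAbort$ as the \emph{first} abort time, the state is still within the threshold. Writing $x_{\tAbort} = \prn{\Astar + \Bstar K} x_{\tAbort - 1} + w_{\tAbort - 1}$, where $K$ is the controller played at time $\tAbort - 1$ (either $K_0$ at the warm-up boundary, with $\norm{K_0} \le \kappa_0$, or some $K_{\ti}$ with $\norm{K_{\ti}} \le \kappa$), I would first record the deterministic bound available on $\brc{\tAbort \ge s}$: since abort did not fire at time $s-1$ we have $\norm{x_{s-1}}^2 \le \xBreak$ (at the warm-up boundary $s=\tBase$ this instead follows from \cref{lemma:boundedWarmupA} on the noise event $\xNoiseBoundEventA$), so $\norm{\prn{\Astar+\Bstar K}x_{s-1}}^2 \le \systemBound^2\prn{1+\max\brc{\kappa,\kappa_0}}^2\xBreak$. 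Using $\norm{x_s}^2 \le 2\norm{\prn{\Astar+\Bstar K}x_{s-1}}^2 + 2\norm{w_{s-1}}^2$ splits the quantity into a bounded ``drift'' part and a noise part.

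For the drift part, I would expand $\EEBrk{\norm{x_{\tAbort}}^2\indEvent{\tAbort<T}} = \sum_{s=\tBase}^{T-1}\EEBrk{\norm{x_s}^2\indEvent{\tAbort=s}}$ and bound each drift contribution by $2\systemBound^2\prn{1+\max\brc{\kappa,\kappa_0}}^2\xBreak\,\PP{\tAbort=s}$. Summing and using the hypothesis $\PP{\tAbort\le T}\le T^{-2}$ together with $\prn{1+\max\brc{\kappa,\kappa_0}}^2 \le 4\prn{\kappa^2+\kappa_0^2}$ (valid since $\kappa,\kappa_0\ge1$) produces exactly the $8\systemBound^2\prn{\kappa^2+\kappa_0^2}\xBreak T^{-2}$ summand of the claimed bound.

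The crux is the noise part $\EEBrk{\norm{w_{\tAbort-1}}^2\indEvent{\tAbort<T}}$, and here the main obstacle is that $w_{\tAbort-1}$ is \emph{not} independent of $\brc{\tAbort<T}$: the abort may be caused precisely by a large realization of $w_{\tAbort-1}$ pushing $\norm{x_{\tAbort}}^2$ past $\xBreak$, so one cannot simply replace $\EEBrk{\norm{w_{\tAbort-1}}^2}$ by $d\noiseStd^2$. (A naive Cauchy--Schwarz over the candidate abort times only yields $O\prn{d\noiseStd^2 T^{-1/2}}$, which is far too weak.) I would instead condition on $\mathcal{F}_{s-1} = \sigma\prn{x_1,u_1,\ldots,x_{s-1},u_{s-1}}$, noting that $\indEvent{\tAbort\ge s}$ is $\mathcal{F}_{s-1}$-measurable while, given $\mathcal{F}_{s-1}$, the event of aborting at step $s$ reduces to one of the form $\brc{\norm{v+w_{s-1}}^2>\xBreak}$ (plus the $\mathcal{F}_{s-1}$-measurable controller trigger) with $\norm{v}$ bounded as above. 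Splitting $\norm{w_{s-1}}^2$ at the level $\theta = \tfrac12\prn{\kappa^2+\kappa_0^2}\xBreak$, the part below $\theta$ contributes at most $\theta\,\PP{\tAbort=s}$ after taking the outer expectation; summing over $s$ telescopes to $\theta\,\PP{\tAbort<T}\le\tfrac12\prn{\kappa^2+\kappa_0^2}\xBreak T^{-2}$, which after the factor $2$ from the split becomes exactly the ``$1$'' term in $\prn{1+8\systemBound^2}$.

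It then remains to dispose of the part of the noise above $\theta$, namely $\EEBrk{\norm{w_{s-1}}^2\indEvent{\norm{w_{s-1}}^2>\theta}}$. Since $\theta \ge \xBreak = \Theta\prn{d\noiseStd^2\log T}$ dwarfs $\EEBrk{\norm{w_{s-1}}^2}=d\noiseStd^2$, a standard Gaussian (chi-squared) tail estimate makes this expectation super-polynomially small in $T$, so that even summed over the at most $T$ candidate abort times it is negligible next to $T^{-2}$ and can be absorbed once $T$ exceeds a fixed polynomial in the problem parameters. Combining the drift and noise contributions gives the stated bound $\prn{1+8\systemBound^2}\prn{\kappa^2+\kappa_0^2}\xBreak T^{-2}$. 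Apart from the noise/abort correlation, the only other point requiring care is the warm-up boundary term $s=\tBase$, where $x_{\tBase-1}$ is not regulated by the abort rule and must be controlled through the warm-up state bound of \cref{lemma:boundedWarmupA}; this is a minor technicality that does not affect the leading behavior.
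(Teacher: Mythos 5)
Your proposal is correct in substance but routes the key decoupling step differently from the paper. The paper never decomposes over individual abort times: it bounds $\norm{x_{\tAbort}}$ by a drift term plus $\max_{1\le t\le T}\norm{w_t}$ and then invokes a dedicated tool, \cref{lemma:expectedMaxNoise}, which states that for any event $E$ with $\PP{E}\le\delta$ one has $\EEBrk{\indEvent{E}\max_{1\le t\le T}\norm{w_t}^2}\le 5\noiseStd^2 d\,\delta\log\prn{3T/\delta}$. That lemma is itself proved by exactly the kind of threshold split you perform (tail-sum integration, cutting at $\Theta\prn{d\noiseStd^2\log(T/\delta)}$), so your per-step split of $\norm{w_{s-1}}^2$ at $\theta=\Theta\prn{\xBreak}$ is the same mechanism applied at a higher threshold and summed over $s$; both correctly resolve the correlation between $w_{\tAbort-1}$ and the abort event, which you rightly identify as the crux. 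What the paper's packaging buys is twofold: its noise bound is already of order $d\noiseStd^2 T^{-2}\log T$, so there is no superpolynomially small surplus to absorb and the stated constant holds verbatim with no largeness condition on $T$; and the same device handles the warm-up boundary uniformly.

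That boundary case is the one genuine soft spot in your sketch. At $\tAbort=\tBase$ you control $\norm{x_{\tBase-1}}$ via \cref{lemma:boundedWarmupA}, which holds only on the noise event $\xNoiseBoundEventA$; on its complement (probability at most $\tfrac13 T^{-2}$, but not excluded by the hypothesis $\PP{\tAbort\le T}\le T^{-2}$) your drift bound $\norm{x_{\tBase-1}}^2\le\xBreak$ fails, and your sketch says nothing about this contribution. The paper sidesteps the event entirely: since $x_1=0$ and $K_0$ is strongly stable, \cref{lemma:singleControlBound} gives the \emph{deterministic} pathwise bound $\norm{x_{\tBase-1}}\le 2\kappa_0^3\max_{1\le t\le T}\norm{w_t}$, so the whole boundary term is proportional to $\indEvent{\tAbort\le T}\max_{1\le t\le T}\norm{w_t}^2$ and \cref{lemma:expectedMaxNoise} applies directly. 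You can close this within your own framework by using that deterministic bound and then running your threshold split on $\max_{1\le t\le T}\norm{w_t}^2$ instead of on the single noise $w_{\tBase-1}$, but as written the complement of $\xNoiseBoundEventA$ is a gap. Separately, note that your exact constant $\prn{1+8\systemBound^2}\prn{\kappa^2+\kappa_0^2}\xBreak T^{-2}$ emerges only after absorbing the above-threshold Gaussian tail, so your argument establishes the lemma as stated only once $T$ exceeds a fixed polynomial in the problem parameters, whereas the paper's proof needs no such restriction; since the main theorem anyway assumes $T\ge\poly{\cdot}$, this caveat is immaterial downstream but is a deviation from the lemma as stated.
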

	\begin{proof}
		First, by the lemmas assumption, we can apply \cref{lemma:expectedMaxNoise} to get that
		\begin{equation*}
			\EEBrk{\indEvent{\tAbort \le T} \max_{1 \le t \le T}\norm{w_t}^2}
			\le
			5 d \noiseStd^2 T^{-2} \log 3 T.
		\end{equation*}
		Now, notice that $\norm{\Astar + \Bstar K} \le 2 \systemBound \norm{K}$ and split into two cases. First, if $\tAbort > \tBase$ then by definition of $\tAbort$ we have that
		\begin{equation*}
		\norm{x_{\tAbort}}
		=
		\norm{\prn{\Astar + \Bstar K_{\tAbort - 1}} x_{\tAbort - 1} + w_{\tAbort - 1}}
		\le
		2 \systemBound \kappa \sqrt{\xBreak} + \max_{1 \le s \le T}\norm{w_t},
		\end{equation*}
		and taking expectation we get that
		\begin{align*}
		\EEBrk{\indEvent{\tBase < \tAbort \le T} \norm{x_{\tAbort}}^2}
		\le
		8 \systemBound^2 \kappa^2 \xBreak T^{-2}
		+
		5 d \noiseStd^2 T^{-2} \log 3 T
		\le
		\prn{1 + 8 \systemBound^2} \kappa^2 \xBreak T^{-2}
		.
		\end{align*}
		On the other hand if $\tAbort = \tBase$ then
		\begin{equation*}
		\norm{x_{\tAbort}}
		=
		\norm{\prn{\Astar + \Bstar K_0} x_{\tBase - 1} + w_{\tBase-1}}
		\le
		2 \systemBound \kappa_0 \norm{x_{\tBase - 1}} + \max_{1 \le t \le T}\norm{w_t}
		\le
		\prn{4 \systemBound + 1} \kappa_0^4 \max_{1 \le t \le T}\norm{w_t}
		,
		\end{equation*}
		where the last transition used \cref{lemma:singleControlBound} and $\gamma_0^{-1} = 2 \kappa_0^2$. Taking expectation we get that
		\begin{align*}
		\EEBrk{\indEvent{\tAbort = \tBase} \norm{x_{\tAbort}}^2}
		\le
		20 \prn{1 + 8 \systemBound^2} \kappa_0^8 d \noiseStd^2 T^{-2} \log 3 T
		\le
		\prn{1 + 8 \systemBound^2} \kappa_0^2 \xBreak T^{-2}
		,
		\end{align*}
		and combining both cases yields the final bound.
	\end{proof}

	\begin{proof}[of \cref{lemma:R2AlgA}]
		First, recall the decomposition of $R_2$.
		\begin{equation*}
			R_2
			\le
			\EEBrk{\indEvent{\goodEventA^c}\sum_{t=\tBase}^{\tAbort - 1} c_t}
			+
			{\EEBrk{\sum_{t=\tAbort}^{T} c_t}}.
		\end{equation*}
		For $\tBase \le t < \tAbort$ we have that $\norm{x_t}^2 \le \xBreak$ and $\norm{K_t} \le \kappa$ and so we get that
		$$
			c_t
			=
			x_t^T \prn{Q + K_t^T R K_t} x_t
			\le
			\norm{x_t}^2 \prn{\norm{Q} + \norm{R}\norm{K_t}^2}
			\le
			2 \costMatUpper \kappa^2 \xBreak
			.
		$$
		By \cref{lemma:goodOperationA} we have that $\PP{\goodEventA^c} \le T^{-2}$ and so we get that
		\begin{align} \label{eq:R21}
			\EEBrk{\indEvent{\goodEventA^c}\sum_{t=\tBase}^{\tAbort - 1} c_t}
			\le
			\EEBrk{\indEvent{\goodEventA^c}2 \costMatUpper \kappa^2 \xBreak T}
			=
			2 \costMatUpper \kappa^2 \xBreak T \PP{\goodEventA^c}
			\le
			2 \costMatUpper \kappa^2 \xBreak T^{-1},
		\end{align}
		bounding the first term of $R_2$.
		Next, for $t \ge \tAbort$ we have that $K_t = K_0$ and so we can apply \cref{lemma:steadyStateCostConvergence} to relate the expected cost of this period to that of the steady state cost of $K_0$. we get that
		\begin{align*}
		\EEBrk{\sum_{t=\tAbort}^T c_t}
		&=
		\EEBrk{\EEBrk{\sum_{t=\tAbort}^{T} x_t^T \prn{Q +K_0^T R K_0} x_t ~\Big|~ \tAbort, x_{\tAbort}}} \\
		&\le
		\EEBrk{\indEvent{\tAbort \le T}\prn{T \Jof{K_0} + \frac{2 \costMatUpper \kappa_0^4}{\gamma_0} \norm{x_{\tAbort}}^2}} \\
		&=
		T \Jof{K_0} \PP{\tAbort \le T} + 4 \costMatUpper \kappa_0^6 \EEBrk{\norm{x_{\tAbort}}^2 \indEvent{\tAbort \le T}},
		\end{align*}
		where the last transition used $\gamma_0^{-1} = 2 \kappa_0^2$.
		Now, by \cref{lemma:goodOperationA} we know that on $\goodEventA$ the algorithm does not abort. We conclude that $\brc{\tAbort \le T} \subseteq \goodEventA^c$ which in turn implies $\PP{\tAbort \le T} \le \PP{\goodEventA^c} \le T^{-2}$. We get that
		\begin{align*}
		\EEBrk{\sum_{t=\tAbort}^T c_t}
		\le
		\Jof{K_0} T^{-1} + 4 \costMatUpper \kappa_0^6 \EEBrk{\norm{x_{\tAbort}}^2 \indEvent{\tAbort \le T}},
		\end{align*}
		Finally, we use \cref{lemma:abortStateBoundA} and simplify to get that
		\begin{align*}
			R_2
			&\le
			2 \costMatUpper \kappa^2 \xBreak T^{-1}
			+
			\Jof{K_0} T^{-1}
			+
			4 \costMatUpper \kappa_0^6
				\prn{1 + 8 \systemBound^2}\prn{\kappa^2 + \kappa_0^2} \xBreak T^{-2} \\
			&=
			\prn{
				\Jof{K_0}
				+
				2 \costMatUpper \kappa^2 \xBreak
			} T^{-1}
			+
			4 \costMatUpper \kappa_0^6 \prn{1 + 8 \systemBound^2}\prn{\kappa^2 + \kappa_0^2} \xBreak T^{-2},
		\end{align*}
		as desired.
	\end{proof}

\subsection{Proof of \cref{lemma:R3AlgA}} \label{sec:proofOfLemmaR3A}
	Notice that for $t < \tBase$ we have that $K_t = K_0$. Moreover, we have that $x_1 = 0$. Applying \cref{lemma:steadyStateCostConvergence} we get that
	\begin{align*}
		R_3
		=
		\EEBrk{\sum_{t=1}^{\tBase - 1} x_t^T \prn{Q + K_0^T R K_0} x_t}
		\le
		\tBase \Jof{K_0}
		.
	\end{align*}

\section{\cref{alg:B} Proofs} \label{sec:proofsB}

\subsection{The Good Event} \label{sec:goodEventB}

	We begin by stating the probabilistic events that guarantee the ``good'' operation of the algorithm. To that end, it will be convenient to specify how the randomized actions during the warm-up stage are generated. For $t = 1, \ldots, T$ let $\uNoise \sim \gaussDist{0}{\noiseStd^2 I}$ be i.i.d.~samples generated before the algorithm starts. Define $\uVirt = K_0 x_t + \uNoise$ and if at time $t$ the algorithm chooses at random, i.e., during warm-up, then choose $u_t = \uVirt$. These virtual actions are a convenient technical tool as they do not directly depend on the action chosen by the algorithm.
	
	Now, recall that
	\begin{equation*}
	B_t = \argmin_B \sum_{s=1}^{t - 1} \norm{\prn{x_{s+1} - \Astar x_s} - B u_s}^2 + \lambda \norm{B}_F^2,
	\end{equation*}
	and denote $\dEstB = B_t - \Bstar$, $\Vu{t} = \lambda I + \sum_{s=1}^{t-1} u_t u_t^T$. Further recalling that $\ti = \tBase 4^i$ for $0 \le i \le \nT$ and $\ti[\nT+1] = T+1 \le \tBase 4^{\nT+1}$, we define the following events
	\begin{align}
	\label{eq:olsEventB}
	\olsEventB
	&=
	\brc{\tr{\dEstB^T \Vu{t} \dEstB}
		\le
		4 \noiseStd^2 d \log \prn{4T^3 \frac{\det \prn{\Vu{t}}}{\det \prn{\Vu{1}}}} + 2\lambda k \systemBound^2, \text{ for all } t \ge 1}, \\
	\label{eq:xNoiseExploreEventB}
	\xNoiseExploreEventB
	&=
	\brc{\sum_{t=\ti[i-1]}^{\ti - 1} x_t x_t^T \succeq \frac{ \prn{\ti - \ti[i-1]} \noiseStd^2}{40}I, \text{ for all } 1 \le i \le \nT}, \\
	\label{eq:xNoiseBoundEventB}
	\xNoiseBoundEventB
	&=
	\brc{\max_{1 \le t \le T} \norm{w_t} \le \noiseStd \sqrt{15 d \log 4 T}} \\
	\label{eq:uNoiseExploreEventB}
	\uNoiseExploreEventB
	&=
	\brc{\sum_{t=1}^{\ti - 1} \uVirt \uVirt^T \succeq \frac{(\ti - 1) \noiseStd^2}{40}I, \text{ for all } 0 \le i \le \nT}, \\
	\label{eq:uNoiseBoundEventB}
	\uNoiseBoundEventB
	&=
	\brc{\max_{1 \le t \le T} \norm{\uNoise} \le \noiseStd \sqrt{15 d \log 4T}}.
	\end{align}
	Then we have the following lemma.
	\begin{lemma} \label{lemma:goodEventB}
		Let $\goodEventB = \olsEventB \cap \xNoiseExploreEventB \cap \xNoiseBoundEventB \cap \uNoiseExploreEventB \cap \uNoiseBoundEventB$, and suppose that $T \ge 600 d \log 48 T$. Then we have that $\PP{\goodEventB} \ge 1 - T^{-2}$.
	\end{lemma}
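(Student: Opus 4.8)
The plan is to prove $\PP{\goodEventB} \ge 1 - T^{-2}$ by bounding the failure probability of each of the five constituent events by $\tfrac15 T^{-2}$ and then applying a union bound, exactly mirroring the structure of \cref{lemma:goodEventA}. The key observation is that each event is an instance of one of the auxiliary concentration lemmas (the least-squares bound \cref{lemma:parameterEst}, the covariance lower bound, and the Gaussian noise-magnitude bound), and the only real work is identifying the correct sequence and filtration to feed into each lemma.

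First I would handle $\olsEventB$ by invoking \cref{lemma:parameterEst} with the substitution $z_t = u_t$, $y_{t+1} = x_{t+1} - \Astar x_t$, and $\Theta_\star = \Bstar$, so that $\dEstB = B_t - \Bstar$ and $\Vu{t} = \lambda I + \sum_{s=1}^{t-1} u_s u_s^T$. Here one checks that $y_{t+1} = \Bstar u_t + w_t$ with $w_t \sim \gaussDist{0}{\noiseStd^2 I}$ and that $\norm{\Bstar}_F^2 \le k \norm{\Bstar}^2 \le k \systemBound^2$ (note the dimension is now $k$, not $d$, since $u_t \in \RR[k]$). Taking $\delta = \tfrac14 T^{-2}$ and using $T \ge d$ gives $\PP{\olsEventB} \ge 1 - \tfrac14 T^{-2}$; I will actually want $\tfrac15 T^{-2}$ per event, so I would simply set $\delta = \tfrac15 T^{-2}$ in each application (the displayed constant $4T^3$ inside the log already anticipates a $\tfrac15 T^{-2}$-type choice up to absorbing constants).

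Next I would treat the two exploration events $\xNoiseExploreEventB$ and $\uNoiseExploreEventB$ together, since both follow from the covariance lower bound lemma (the analogue of \cref{lemma:VtLowerBound} used in \cref{lemma:goodEventA}). For $\uNoiseExploreEventB$, the virtual actions $\uVirt = K_0 x_t + \uNoise$ satisfy $\EEBrk{\uVirt \uVirt^T \mid \mathcal{F}_{t-1}} \succeq \noiseStd^2 I$ because $\uNoise \sim \gaussDist{0}{\noiseStd^2 I}$ is independent of $\mathcal{F}_{t-1}$; this is the clean reason for introducing the virtual actions as a technical device, as they decouple from the algorithm's adaptive choices. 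For $\xNoiseExploreEventB$ I would use the same argument as in \cref{lemma:goodEventA}, namely $\EEBrk{x_t x_t^T \mid \mathcal{F}_{t-1}} \succeq \noiseStd^2 I$ from the system noise $w_t$, except that the sum now runs over the per-phase window $[\ti[i-1], \ti - 1)$ rather than from time $1$ (this block structure is what appears in \cref{eq:xNoiseExploreEventB}). In each case I apply the lemma once per phase with $\delta = \tfrac15 T^{-3}$ and union bound over at most $\nT + 1 \le T$ phases, yielding probability at least $1 - \tfrac15 T^{-2}$ for each. Finally, $\xNoiseBoundEventB$ and $\uNoiseBoundEventB$ are both direct applications of the Gaussian maximum-norm bound (the analogue of \cref{lemma:noiseBound}) with $\delta = \tfrac15 T^{-2}$, giving $1 - \tfrac15 T^{-2}$ apiece.

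The main obstacle, such as it is, is bookkeeping rather than conceptual: I must verify that the condition $T \ge 600 d \log 48 T$ in the hypothesis is exactly what is needed to meet the minimal-sum-size assumption of the covariance lower bound lemma for the shortest phase (the warm-up block of length $\tBase$), analogously to how $\tBase$ is chosen in \cref{alg:B}, and I must be careful that the virtual-action construction genuinely makes $\uVirt$ independent of the filtration at the right step so the conditional lower bound $\EEBrk{\uVirt \uVirt^T \mid \mathcal{F}_{t-1}} \succeq \noiseStd^2 I$ holds even during the main loop where the \emph{actual} action is $K_{\ti} x_t$. With five events each failing with probability at most $\tfrac15 T^{-2}$, the union bound closes the argument, giving $\PP{\goodEventB} \ge 1 - T^{-2}$ as claimed.
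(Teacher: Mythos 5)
Your proposal follows essentially the same route as the paper's proof: each of the five events is matched to the identical concentration lemma (\cref{lemma:parameterEst} for $\olsEventB$ with $z_t = u_t$, the conditional-covariance lower bound \cref{lemma:VtLowerBound} applied per phase at confidence level $T^{-3}$ for the two exploration events, and \cref{lemma:noiseBound} for the two max-norm events), closed by a union bound, with the virtual actions $\uVirt$ resolving the main-loop decoupling exactly as you anticipate. The only deviation is cosmetic bookkeeping: the paper allocates $\tfrac14 T^{-2}$ to three events and $\tfrac14 \nT T^{-3}$ to each exploration event (finishing via $2\nT \le T$), and the hard-coded constants $4T^3$ and $\log 4T$ in the event definitions arise from $\delta = \tfrac14 T^{-2}$ together with $T \ge d$, so your uniform $\tfrac15$ split would force slightly larger constants (e.g.\ $5T^3$ in \cref{eq:olsEventB}) rather than being absorbed as stated.
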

	
	\begin{proof}
		First, we describe the parameter estimation error in terms of \cref{lemma:parameterEst}. To that end, let $z_t = u_t$, $y_{t+1} = x_{t+1} - \Astar x_t$, $\Vu{t} = \lambda I + \sum_{s=1}^{t-1} u_t u_t^T$, and $\dEstB = B_t - \Bstar$ Indeed, we have $y_{t+1} = \Bstar x_t + w_t$, $w_t \sim \gaussDist{0}{\noiseStd^2 I}$, and $\norm{\Bstar}_F^2 \le k \norm{\Bstar}^2 \le k \systemBound^2$ and so taking \cref{lemma:parameterEst} with $\delta = \frac14 T^{-2}$, recalling that $T \ge d$, and simplifying, we get that $\PP{\olsEventB} \ge 1 - \frac14 T^{-2}$.
		
		Next, for $\xNoiseExploreEventB$, we apply \cref{lemma:VtLowerBound} to the sequence $x_t$ with the filtration $\mathcal{F}_t = \sigma\prn{x_1, u_1, \ldots, x_t, u_t}$. Notice that given $x_{t-1}, u_{t-1}$ we have $x_t \sim \gaussDist{\Astar x_{t-1} + \Bstar u_{t-1}}{\noiseStd^2 I}$ and hence we also get
		\begin{equation*}
		\EEBrk{x_t x_t^T ~\big|~ \mathcal{F}_{t-1}}
		\succeq
		\prn{\Astar x_{t-1} + \Bstar u_{t-1}} \prn{\Astar x_{t-1} + \Bstar u_{t-1}}^T
		+ \noiseStd^2 I
		\succeq
		\noiseStd^2 I.
		\end{equation*}
		Notice that our choice of $\tBase$ ensures the minimal sum size assumption. We thus apply \cref{lemma:VtLowerBound} for each $1 \le i \le \nT$ with $\delta = \frac{1}{4} T^-3$ and apply a union bound to get that $\PP{\xNoiseExploreEventB} \ge 1 - \frac14 \nT T^{-3}$. Repeating the same process for $\uVirt$ we also have that $\PP{\uNoiseExploreEventB} \ge 1 - \frac14 \nT T^{-3}$.
		
		Finally, for $\xNoiseBoundEventB, \uNoiseBoundEventB$ we apply \cref{lemma:noiseBound} with $\delta = \frac{1}{4} T^{-2}$ to get that $\PP{\xNoiseBoundEventB} \ge 1 - \frac14 T^{-2}$ and $\PP{\uNoiseBoundEventB} \ge 1 - \frac14 T^{-2}$.
		
		The final result is obtained by taking a union bound over the events and noticing that $2 \nT \le T$.
	\end{proof}

\subsection{Proof of \cref{lemma:warmupParamEstB}}
The proof is implied by the last part of the following lemma.

\begin{lemma}[\cref{alg:B} good warm-up]
	\label{lemma:goodWarmupB}
	On $\goodEventB$ we have that
	\begin{enumerate}
		\item $\norm{x_t} \le \noiseStd \kappa_0^3 \prn{1 + \systemBound} \sqrt{60 d \log 4 T}$, for all $1 \le t \le \tWarmup$;
		\item $\norm{u_t}^2 \le \lambda$, for all $1 \le t < \tWarmup$;
		\item $\Vu{\ti} \succeq \frac{\ti \noiseStd^2}{40} I$, for all $0 \le i \le \nStart$;
		\item $\norm{\dEstB[\ti]} \le \rechtEps 2^{-i}$, for all $0 \le i \le \nStart$.
	\end{enumerate}
\end{lemma}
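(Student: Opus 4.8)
The plan is to prove the four claims in the order listed, using throughout the structural fact that during warm-up the executed controller is the \emph{fixed} strongly stable $K_0$ (with injected Gaussian noise). Consequently, in contrast to the analysis behind \cref{lemma:goodOperationA}, no induction over phases is required to control the states: since the closed-loop matrix never changes, the state bound holds unconditionally on $\goodEventB$, and the remaining claims then follow as direct consequences rather than as the output of a coupled induction.

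For claim (1), I would observe that during warm-up $u_t = \uVirt = K_0 x_t + \uNoise$, so the dynamics close into $x_{t+1} = (\Astar + \Bstar K_0) x_t + \xi_t$ with effective noise $\xi_t := \Bstar \uNoise + w_t$. Since $\Jof{K_0} \le \startCostBound$ implies, via \cref{lemma:costToStability}, that $K_0$ is $(\kappa_0,\gamma_0)$-strongly stable with $\gamma_0^{-1} = 2\kappa_0^2$, applying \cref{lemma:singleControlBound} with $x_1 = 0$ gives $\norm{x_t} \le 2\kappa_0^3 \max_s \norm{\xi_s}$ for all $t \le \tWarmup$ (the bound reaches $\tWarmup$ because $x_{\tWarmup}$ is still generated by a warm-up action $u_{\tWarmup-1} = \uVirt[\tWarmup-1]$). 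Bounding $\norm{\xi_s} \le \systemBound\norm{\uNoise[s]} + \norm{w_s}$ and invoking the noise bounds in $\xNoiseBoundEventB$ and $\uNoiseBoundEventB$ then yields exactly the claimed $\norm{x_t} \le \noiseStd \kappa_0^3 (1 + \systemBound)\sqrt{60 d \log 4T}$ (using $\sqrt{60} = 2\sqrt{15}$, with the $\Bstar$ term contributing the extra factor $(1+\systemBound)$ relative to \cref{alg:A}).

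Claim (2) then follows from claim (1) by writing $\norm{u_t}^2 \le 2\norm{K_0}^2\norm{x_t}^2 + 2\norm{\uNoise}^2$, bounding $\norm{K_0} \le \kappa_0$, and applying the state bound together with $\uNoiseBoundEventB$; the choices $\xBreak = 135 d\kappa^2\noiseStd^2\max\{(1+\systemBound)^2\kappa_0^6, 4\kappa^6\}\log 4T$ and $\lambda = \kappa^2\xBreak$ are calibrated precisely so the resulting quantity is at most $\lambda$. For claim (3), since for every $0 \le i \le \nStart$ all actions before time $\ti$ are warm-up actions, $u_s = \uVirt[s]$ for $s < \ti$, so $\sum_{s=1}^{\ti - 1} u_s u_s^T = \sum_{s=1}^{\ti - 1}\uVirt[s]\uVirt[s]^T \succeq \tfrac{(\ti-1)\noiseStd^2}{40}I$ by $\uNoiseExploreEventB$; adding $\lambda I$ and using $\lambda \ge \noiseStd^2/40$ gives $\Vu{\ti} \succeq \tfrac{\ti\noiseStd^2}{40}I$.

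Finally, claim (4) — which is exactly \cref{lemma:warmupParamEstB} — I would obtain by combining the least-squares bound in $\olsEventB$ with the two-sided control of $\Vu{\ti}$, mirroring \cref{lemma:conditionalParamEstA}. The lower bound from claim (3) gives $\tr{\dEstB[\ti]^T\Vu{\ti}\dEstB[\ti]} \ge \tfrac{\ti\noiseStd^2}{40}\norm{\dEstB[\ti]}^2$, while claim (2) yields $\Vu{\ti} \preceq \lambda\ti I$, so \cref{lemma:logDetBound} bounds $\log(\det\Vu{\ti}/\det\Vu{1}) \le k\log T$. Substituting these into $\olsEventB$ and invoking $\tBase = \ceil{80 k\lambda(1+\systemBound^2)/(\noiseStd^2\rechtEps^2)}$ to absorb both the logarithmic and the regularization terms collapses the right-hand side to $\norm{\dEstB[\ti]}^2 \le \rechtEps^2\tBase/\ti = \rechtEps^2 4^{-i}$. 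The main thing to get right is the constant bookkeeping in claim (2): the injected-action term $\Bstar\uNoise$ inflates the warm-up state bound by the factor $(1+\systemBound)$ compared with \cref{alg:A}, and one must verify that this propagates consistently through the action bound, the upper bound on $\Vu{\ti}$, and the final estimation bound, so that the chosen $\xBreak, \lambda, \tBase$ indeed close the argument; everything else is a routine adaptation of the corresponding steps in the analysis of \cref{alg:A}.
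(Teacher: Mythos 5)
Your proposal is correct and follows essentially the same route as the paper's proof: closing the warm-up dynamics around the fixed $(\kappa_0,\gamma_0)$-strongly stable $K_0$ with effective noise $w_t + \Bstar\uNoise$ and applying \cref{lemma:singleControlBound} for claim (1), deducing the action bound for claim (2), invoking $\uNoiseExploreEventB$ (noting $u_s=\uVirt[s]$ before $\ti[\nStart]$) for claim (3), and combining $\olsEventB$ with the two-sided control of $\Vu{\ti}$ via \cref{lemma:logDetBound} and the choice of $\tBase$ for claim (4). Your only deviations are cosmetic (e.g., bounding $\norm{u_t}^2 \le 2\kappa_0^2\norm{x_t}^2 + 2\norm{\uNoise}^2$ rather than the paper's triangle-inequality form), and your observation that no phase induction is needed matches the paper's direct, non-inductive argument.
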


\begin{proof}
	Recall the definition of $\uNoise$ from \cref{sec:goodEventB} and define $\tilde{w}_t = w_t + B \uNoise$. then for $t \le \tWarmup$ we have that
	\begin{equation*}
		x_t
		=
		\Astar x_{t-1} + \Bstar \uVirt[t-1] + w_{t-1}
		=
		\Astar x_{t-1} + \Bstar K_0 x_{t-1} + \underbracetxt{w_{t-1} + \Bstar \uNoise[t-1]}{\tilde{w}_{t-1}},
	\end{equation*}
	i.e., we can consider $x_t$ as a sequence generated from running the controller $K_0$ on a linear system with noise sequence $\tilde{w}_t$. We can then apply \cref{lemma:singleControlBound} to get that
	\begin{equation*}
		\norm{x_t} \le \frac{\kappa_0}{\gamma_0} \max_{1 \le s \le T} \norm{\tilde{w}_s}
		\qquad, \text{ for all } 1 \le t \le \tWarmup.
	\end{equation*}
	Now, on $\goodEventB$ we have the noise bounds in \cref{eq:uNoiseBoundEventB} and \cref{eq:xNoiseBoundEventB} and so we have that
	\begin{align*}
		\max_{1 \le s \le T} \norm{\tilde{w}_s}
		\le
		\max_{1 \le s \le T} \norm{w_s} + \norm{\Bstar} \max_{1 \le s \le T} \norm{\uNoise[s]}
		\le
		\noiseStd \prn{1 + \systemBound} \sqrt{15 d \log 4 T}.
	\end{align*}
	Combining the above and recalling that $\gamma_0^{-1} = 2 \kappa_0^2$ we conclude that
	\begin{equation*}
		\norm{x_t}
		\le
		\noiseStd \kappa_0^3 \prn{1 + \systemBound} \sqrt{60 d \log 4 T}
		\qquad, \text{ for all } 1 \le t \le \tWarmup,
	\end{equation*}
	proving the first claim of the lemma.
	Next, for $1 \le t < \tWarmup$ we have that $u_t = \uVirt = K_0 x_t + \uNoise$ and so
	\begin{align*}
		\norm{u_t}
		\le
		\kappa_0 \norm{x_t} + \norm{\uNoise}
		&\le
		\noiseStd \kappa_0^4 \prn{2 + \systemBound} \sqrt{60 d \log 4 T}
		\le
		\sqrt{\lambda},
	\end{align*}
	proving the second claim.
	Next, notice that for $0 \le i \le \nStart$ we have that $\Vu{\ti} = \lambda I + \sum_{s=1}^{\ti - 1} \uVirt[s] \uVirt[s]^T$. Since $\goodEventB$ holds, we can use the warm-up actions lower bound in \cref{eq:uNoiseExploreEventB} to get that
	\begin{equation*}
		\Vu{\ti}
		\succeq
		\prn{\lambda + \frac{(\ti - 1) \noiseStd^2}{40}} I
		\succeq
		\frac{\ti \noiseStd^2}{40} I
		\qquad,
		\text{ for all }
		0 \le i \le \nStart,
	\end{equation*}
	proving the third claim. For the final claim, we first use the lower bound on $\Vu{\ti}$ to get that
	\begin{align*}
		\norm{\dEstB[\ti]}^2
		\le
		\tr{\dEstB[\ti]^T \dEstB[\ti]}
		\le
		\frac{40}{\ti \noiseStd^2} \tr{\dEstB[\ti]^T \Vu{\ti} \dEstB[\ti]}
		.
	\end{align*}
	Next, we apply \cref{eq:olsEventB} to get that
	\begin{align*}
	\norm{\dEstB[\ti]}^2
	\le
	\frac{1}{\ti}\prn{
		160 d \log \prn{4 T^3 \frac{\det \prn{\Vu{\ti}}}{\det \prn{\Vu{1}}}}
		+
		\frac{80\lambda k \systemBound^2}{\noiseStd^2}}.
	\end{align*}
	Now, using the second claim of the lemma, we can use \cref{lemma:logDetBound} to get that
	$
		\log \frac{\det \prn{\Vu{\ti}}}{\det \prn{\Vu{1}}}
		\le
		k \log T
		,
	$
	and applying it to the above and simplifying we get that
	\begin{align*}
		\norm{\dEstB[\ti]}^2
		&\le
		\frac{1}{\ti}\prn{
			160 d k \log \prn{4 T^4}
			+
			\frac{80\lambda k \systemBound^2}{\noiseStd^2}} \\
		&\le
		\frac{1}{\ti}\prn{
			640 d k \log \prn{4 T}
			+
			\frac{80\lambda k \systemBound^2}{\noiseStd^2}} \\
		&\le
		\frac{1}{\ti}\frac{80\lambda k \prn{1 + \systemBound^2}}{\noiseStd^2}
		\le
		\frac{\rechtEps^2 \tBase}{\ti}
		=
		\rechtEps^2 4^{-i},
	\end{align*}
	thus concluding the proof.
\end{proof}

\subsection{Proof of \cref{lemma:goodOperationB}}
	The proof is broken into the following lemmas. The first two claims are concluded by putting together \cref{lemma:goodWarmupB,lemma:boundedOpB} and the third is given by \cref{lemma:parmEstB}.
	
	Before proceeding, we need the two following lemmas.
	\begin{lemma}[\cref{alg:B} warm-up length]
	\label{lemma:warmupLengthB}
	On $\goodEventB$ we have that
	$
	\max\brc{0, \log_2\frac{\KstarLowerBoundGuess}{\KstarLowerBound}}
	\le
	\nStart
	\le
	2 + \max\brc{0, \log_2\frac{\KstarLowerBoundGuess}{\KstarLowerBound}}
	$.
	\end{lemma}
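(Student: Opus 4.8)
The claim to prove is \cref{lemma:warmupLengthB}: on the good event $\goodEventB$, the (random) number of warm-up phases $\nStart$ satisfies
\[
    \max\brc{0, \log_2\tfrac{\KstarLowerBoundGuess}{\KstarLowerBound}}
    \le
    \nStart
    \le
    2 + \max\brc{0, \log_2\tfrac{\KstarLowerBoundGuess}{\KstarLowerBound}}.
\]
Let me think about what the warm-up loop does. It runs phase $i=0,1,\dots$. At phase $i$ it computes $K_{\ti}=\KoptOf{\Astar}{B_{\ti}}$ and breaks (setting $\nStart=i$) the first time the condition $K_{\ti}^T K_{\ti}\succeq \tfrac32\AlgKstarLowerBound$ holds, where $\AlgKstarLowerBound=\KstarLowerBoundGuess 2^{-i}$ and $\KstarLowerBoundGuess = 4\kappa\rechtConst\rechtEps$. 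So $\nStart$ is a stopping time: the smallest $i$ at which the estimated controller's smallest squared singular value exceeds a threshold that is shrinking geometrically in $i$. The lemma says this stopping time brackets $\log_2(\KstarLowerBoundGuess/\KstarLowerBound)$ to within an additive constant $2$.

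**Plan — the two directions.** The proof is a two-sided comparison between the decaying threshold $\tfrac32\KstarLowerBoundGuess 2^{-i}$ and the true degeneracy level $\KstarLowerBound$, using \cref{lemma:warmupParamEstB} to control how close $K_{\ti}$ is to $\Kstar$ at each phase. The key input is that on $\goodEventB$, \cref{lemma:warmupParamEstB} gives $\norm{\dEstB[\ti]}\le\rechtEps 2^{-i}$, and then \cref{lemma:recht} converts this into $\norm{K_{\ti}-\Kstar}\le\rechtConst\rechtEps 2^{-i}$. The plan is:

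First I would prove the \emph{upper} bound on $\nStart$ (the break happens soon enough). I want to show that once $i$ is large enough that $\rechtConst\rechtEps 2^{-i}$ is small relative to $\sqrt{\KstarLowerBound}$ — concretely once $\KstarLowerBoundGuess 2^{-i}\lesssim\KstarLowerBound$ — the estimate $K_{\ti}$ inherits the non-degeneracy $K_{\ti}^T K_{\ti}\succeq\tfrac32\AlgKstarLowerBound I$ and the loop must break. Quantitatively: using $\norm{K_{\ti}-\Kstar}\le\rechtConst\rechtEps 2^{-i}$ together with $\Kstar\Kstar^T\succeq\KstarLowerBound I$, a Weyl/perturbation argument on singular values gives a lower bound on the smallest singular value of $K_{\ti}$, hence a lower bound on $K_{\ti}^T K_{\ti}$, which I then compare to the threshold $\tfrac32\KstarLowerBoundGuess 2^{-i}$. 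Choosing $i = 2 + \max\{0,\log_2(\KstarLowerBoundGuess/\KstarLowerBound)\}$ should make the perturbation term small enough (the factor $4=2^2$ from the ``$+2$'' buys the slack, since $\KstarLowerBoundGuess=4\kappa\rechtConst\rechtEps$ relates the threshold scale to the perturbation scale) that the break condition is forced, giving the upper bound.

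**The lower bound and the main obstacle.** For the \emph{lower} bound on $\nStart$ (the break does not happen too early), I would argue contrapositively: if $i<\log_2(\KstarLowerBoundGuess/\KstarLowerBound)$, then the threshold $\tfrac32\KstarLowerBoundGuess 2^{-i} > \tfrac32\KstarLowerBound$ is strictly larger than the true degeneracy level, and I must show $K_{\ti}^T K_{\ti}\not\succeq\tfrac32\AlgKstarLowerBound I$ so the loop does \emph{not} break at phase $i$. The natural route is again singular-value perturbation: $\Kstar$ has smallest singular value $\sqrt{\KstarLowerBound}$ achieved in some direction, and since $K_{\ti}$ is close to $\Kstar$, its smallest squared singular value cannot exceed roughly $\KstarLowerBound$ plus a perturbation of order $(\rechtConst\rechtEps 2^{-i})^2$, which stays below $\tfrac32\KstarLowerBoundGuess 2^{-i}$ in this regime. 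I expect the \textbf{main obstacle} to be the lower bound rather than the upper bound: proving the loop does \emph{not} break requires an \emph{upper} bound on $\lambda_{\min}(K_{\ti}^T K_{\ti})$, and $\Kstar\Kstar^T\succeq\KstarLowerBound I$ only gives a lower bound on $\Kstar$'s singular values, not an upper bound matching $\KstarLowerBound$ — so I need the definition of $\KstarLowerBound$ to be (or to be treated as) essentially the \emph{smallest} squared singular value of $\Kstar$, and then carefully track that the perturbation $\norm{K_{\ti}-\Kstar}\le\rechtConst\rechtEps 2^{-i}$ does not inflate $\lambda_{\min}$ past the threshold. Balancing the two perturbation inequalities against the single geometric threshold, with the constants $\KstarLowerBoundGuess=4\kappa\rechtConst\rechtEps$ and the factor $\tfrac32$ chosen exactly to make both sides close, is the delicate bookkeeping; the additive ``$+2$'' slack in the statement is precisely the room needed to absorb these constant factors in both directions.
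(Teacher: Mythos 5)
Your plan follows the same route as the paper's proof: on $\goodEventB$, combine the warm-up estimation bound $\norm{\dEstB[\ti]}\le\rechtEps 2^{-i}$ with \cref{lemma:recht} to get $\norm{K_{\ti}-\Kstar}\le \AlgKstarLowerBound/(4\kappa)$ (using $\KstarLowerBoundGuess=4\kappa\rechtConst\rechtEps$), then play a perturbation bound against the shrinking threshold $\tfrac32\AlgKstarLowerBound$ in both directions; and you correctly identified the crux of the lower bound, namely that $\KstarLowerBound$ must be treated as the \emph{tight} lower bound on the eigenvalues of $\Kstar\Kstar^T$ --- the paper's proof invokes exactly this fact to derive its contradiction.

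The one concrete place your version would not close is the lower-bound direction, and for a sharper reason than generic bookkeeping: that direction carries \emph{zero} additive slack in the statement, and your route --- Weyl on $\sigma_{\min}$ followed by squaring --- yields $\lambda_{\min}\prn{K_{\ti}K_{\ti}^T}\le\prn[1]{\sqrt{\KstarLowerBound}+\norm{K_{\ti}-\Kstar}}^2 = \KstarLowerBound + 2\sqrt{\KstarLowerBound}\,\norm{K_{\ti}-\Kstar} + \norm{K_{\ti}-\Kstar}^2$, whose quadratic remainder is not absorbable when $\AlgKstarLowerBound$ is only barely above $\KstarLowerBound$ (which happens whenever $\log_2\prn{\KstarLowerBoundGuess/\KstarLowerBound}$ sits just below an integer): one needs $\KstarLowerBound+\norm{K_{\ti}-\Kstar}^2<\AlgKstarLowerBound$, which does not follow from $\AlgKstarLowerBound>\KstarLowerBound$ alone. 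The paper sidesteps this via the exact identity $K K^T - \Kstar\Kstar^T = -\tfrac12\prn{(\Kstar+K)(\Kstar-K)^T + (\Kstar-K)(\Kstar+K)^T}$, packaged in \cref{lemma:goodController}, which together with $\norm{K},\norm{\Kstar}\le\kappa$ gives the linear two-sided bound $\Kstar\Kstar^T \succeq K_{\ti}K_{\ti}^T - \tfrac{\AlgKstarLowerBound}{2}I$ and $K_{\ti}K_{\ti}^T \succeq \Kstar\Kstar^T - \tfrac{\AlgKstarLowerBound}{2}I$ with no second-order term; then the break condition $K_{\ti[\nStart]}K_{\ti[\nStart]}^T\succeq\tfrac32\AlgKstarLowerBound[\nStart] I$ forces $\Kstar\Kstar^T\succeq\AlgKstarLowerBound[\nStart] I$, hence $\AlgKstarLowerBound[\nStart]\le\KstarLowerBound$ by tightness, which is exactly the lower bound. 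Your upper-bound direction is fine either way (the paper argues it contrapositively at phase $\nStart-1$, showing $\AlgKstarLowerBound[\nStart-1]<\KstarLowerBound/2$ would have triggered the break a phase earlier; the ``$+2$'' slack absorbs your cross terms there). So: same approach overall --- replace squared-Weyl with the linearized two-sided PSD perturbation and your proof goes through.
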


	\begin{proof}
		First recall that by \cref{lemma:goodWarmupB}, we have that $\norm{\dEstB[\ti]} \le \rechtEps 2^{-i}$ for all $0 \le i \le \nStart$. Now, our choice of $\KstarLowerBoundGuess$ implies that
		$
		\rechtEps
		= 
		\frac{\KstarLowerBoundGuess}{4 \kappa \rechtConst}
		$
		and further recalling that $\AlgKstarLowerBound = \KstarLowerBoundGuess 2^{-i}$, we apply \cref{lemma:goodController} to get that
		\begin{align}
		\label{eq:KtiLowerBound}
		K_{\ti} K_{\ti}^T
		\succeq
		\Kstar \Kstar^T - \frac{\AlgKstarLowerBound}{2}I
		\qquad,
		\text{ for all } 0 \le i \le \nStart \\
		\label{eq:KstarLowerBound}
		\Kstar \Kstar^T
		\succeq
		K_{\ti} K_{\ti}^T - \frac{\AlgKstarLowerBound}{2}I
		\qquad,
		\text{ for all } 0 \le i \le \nStart
		. 
		\end{align}
		Now, suppose in contradiction that $\nStart > 0$ and $\AlgKstarLowerBound[\nStart] < \frac{\KstarLowerBound}{4}$. This means that $\AlgKstarLowerBound[\nStart-1] < \frac{\KstarLowerBound}{2}$ and so we can apply \cref{eq:KtiLowerBound} to get that
		\begin{align*}
			K_{\tWarmup - 1} K_{\tWarmup - 1}^T
			\succeq
			\prn{\KstarLowerBound - \frac{\AlgKstarLowerBound[\nStart-1]}{2}} I
			\succeq
			\prn{2 \AlgKstarLowerBound[\nStart-1] - \frac{\AlgKstarLowerBound[\nStart-1]}{2}} I
			=
			\frac{3}{2} \AlgKstarLowerBound[\nStart - 1] I,
		\end{align*}
		which contradicts the fact that $\nStart$ is the first time the warm-up break condition is satisfied. We conclude that either $\nStart = 0$ or $\AlgKstarLowerBound[\nStart] \ge \frac{\KstarLowerBound}{4}$. Plugging $\AlgKstarLowerBound[\nStart] = \KstarLowerBoundGuess 2^{-\nStart}$ the latter condition implies $\nStart \le 2 + \log_2 \frac{\KstarLowerBoundGuess}{\KstarLowerBound}$ thus giving the lemma's upper bound.
		
		Now for the lower bound, suppose in contradiction that $\AlgKstarLowerBound[\nStart] > \KstarLowerBound$ then by \cref{eq:KstarLowerBound} we get that
		\begin{equation*}
			\Kstar \Kstar^T
			\succeq
			\prn{\frac{3}{2} \AlgKstarLowerBound[\nStart] - \frac{\AlgKstarLowerBound[\nStart]}{2}} I
			\succ
			\KstarLowerBound I,
		\end{equation*}
		which contradicts the fact that $\KstarLowerBound$ is the tight lower bound on the eigenvalues of $\Kstar \Kstar^T$. We conclude that $\AlgKstarLowerBound[\nStart] \le \KstarLowerBound$ which in turn implies the desired lower bound.
	\end{proof}

	\begin{lemma}[\cref{alg:B} conditional control] \label{lemma:conditionalControlB}
		Suppose $\goodEventB$ holds and fix some $i$ such that $\nStart \le i \le \nT$. If $\norm{u_t}^2 \le \lambda$ for all $1 \le t \le \ti-1$, then $K_{\ti}$ is $(\kappa,\gamma)-$strongly stable and $K_{\ti} K_{\ti}^T \succeq \frac{\KstarLowerBound}{2} I$.
	\end{lemma}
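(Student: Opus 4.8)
The plan is to establish both conclusions from a single lower bound on the action Gram matrix $\Vu{\ti}$ that is \emph{inherited from the warm-up phase}, which is precisely what lets us sidestep the apparent circularity: proving non-degeneracy of $K_{\ti}$ via the main-loop exploration would itself require non-degeneracy of the controllers generating that exploration. The key observation is that warm-up exploration never disappears. Since $\Vu{\ti} = \lambda I + \sum_{s=1}^{\ti-1} u_s u_s^T$ only accumulates positive semidefinite terms, we have $\Vu{\ti} \succeq \Vu{\tWarmup}$ for every $i \ge \nStart$, and on $\goodEventB$ the third claim of \cref{lemma:goodWarmupB} already gives $\Vu{\tWarmup} \succeq \tfrac{1}{40}\tWarmup\noiseStd^2 I$. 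This bound rests only on the artificial-noise event $\uNoiseExploreEventB$ contained in $\goodEventB$ and is completely independent of the main-loop controllers.

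Given this lower bound, I would bound the estimation error exactly as in the proof of \cref{lemma:goodWarmupB}. The hypothesis $\norm{u_t}^2 \le \lambda$ for all $1 \le t \le \ti - 1$ is used in precisely one place: it lets me invoke \cref{lemma:logDetBound} to get $\log\prn{\det\Vu{\ti}/\det\Vu{1}} \le k\log T$. Feeding this into the least-squares event $\olsEventB$ and dividing by the smallest eigenvalue of $\Vu{\ti}$ yields
\[
    \norm{\dEstB[\ti]}^2
    \le
    \frac{40}{\tWarmup\noiseStd^2}\,\tr{\dEstB[\ti]^T \Vu{\ti} \dEstB[\ti]}
    \le
    \frac{1}{\tWarmup}\cdot\frac{80\lambda k\prn{1 + \systemBound^2}}{\noiseStd^2}
    \le
    \frac{\rechtEps^2 \tBase}{\tWarmup}
    =
    \rechtEps^2 4^{-\nStart}
    ,
\]
using $\tWarmup = \tBase 4^{\nStart}$ and the choice of $\tBase$ in the last two steps. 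Hence $\norm{\dEstB[\ti]} \le \rechtEps 2^{-\nStart} \le \rechtEps$. Note that here we deliberately lower bound by $\tWarmup$ rather than by $\ti$, trading a sharper rate for a bound that is insensitive to the main-loop behavior.

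Both conclusions now follow from \cref{lemma:goodController}. Strong stability of $K_{\ti}$ is immediate from $\norm{\dEstB[\ti]} \le \rechtEps$. For the non-degeneracy, I would combine the lower bound $\nStart \ge \max\brc{0, \log_2(\KstarLowerBoundGuess/\KstarLowerBound)}$ from \cref{lemma:warmupLengthB} with the fact that $\KstarLowerBound/\KstarLowerBoundGuess > 1$ whenever $\KstarLowerBoundGuess < \KstarLowerBound$, to conclude that $2^{-\nStart} \le \KstarLowerBound/\KstarLowerBoundGuess$ in all cases. Since $\rechtEps = \KstarLowerBoundGuess/(4\kappa\rechtConst)$, this gives $\norm{\dEstB[\ti]} \le \rechtEps 2^{-\nStart} \le \KstarLowerBound/(4\kappa\rechtConst)$, which is exactly the threshold at which \cref{lemma:goodController} (applied with margin $\KstarLowerBound$) yields $K_{\ti}K_{\ti}^T \succeq \Kstar\Kstar^T - \tfrac{\KstarLowerBound}{2} I \succeq \tfrac{\KstarLowerBound}{2} I$, where the last inequality is the assumption $\Kstar\Kstar^T \succeq \KstarLowerBound I$. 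The main obstacle is conceptual rather than computational: one must resist using the sharper per-phase exploration to bound $\Vu{\ti}$ and instead route the non-degeneracy through the warm-up bound, so that the argument is not self-referential. The finer rate $2\KstarLowerBound^{-1/2}2^{-i}$ appearing in \cref{lemma:goodOperationB} is a separate consequence of the main-loop exploration and is established only once non-degeneracy is in hand.
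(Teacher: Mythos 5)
Your proposal is correct and follows essentially the same route as the paper's proof: both exploit the monotonicity $\Vu{\ti} \succeq \Vu{\tWarmup}$ to inherit the warm-up exploration bound from \cref{lemma:goodWarmupB}, use the hypothesis $\norm{u_t}^2 \le \lambda$ only to invoke \cref{lemma:logDetBound} inside the event $\olsEventB$, arrive at $\norm{\dEstB[\ti]} \le \rechtEps 2^{-\nStart}$, and then convert this via the lower bound on $\nStart$ from \cref{lemma:warmupLengthB} into $\norm{\dEstB[\ti]} \le \min\brc{\rechtEps, \KstarLowerBound/(4\kappa\rechtConst)}$ so that \cref{lemma:goodController} yields both conclusions. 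Your explicit case analysis for $2^{-\nStart} \le \KstarLowerBound/\KstarLowerBoundGuess$ and your remark about avoiding circularity merely spell out what the paper leaves implicit.
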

	\begin{proof}
		If $\norm{\dEstB[\ti]} \le \min\brc{\rechtEps, \frac{\KstarLowerBound}{4 \kappa \rechtConst}}$ then \cref{lemma:goodController} immediately implies the desired result. We prove this estimation error bound thus concluding the proof.
		
		To that end, notice that for $t \ge s$ we have $\Vu{t} \succeq \Vu{s}$. Using the lower bound on $\Vu{\tWarmup}$ in \cref{lemma:goodWarmupB} we get that
		\begin{align*}
		\tr{\dEstB[\ti]^T \Vu{\ti} \dEstB[\ti]}
		\ge
		\tr{\dEstB[\ti]^T \Vu{\tWarmup} \dEstB[\ti]}
		\ge
		\norm{\dEstB[\ti]}^2 \frac{\tWarmup \noiseStd^2}{40},
		\end{align*}
		and by changing sides and applying the parameter estimation bound in \cref{eq:olsEventB} we get that
		\begin{equation} \label{eq:baseEstB}
		\norm{\dEstB[\ti]}^2
		\le
		\frac{1}{\tWarmup} \prn{
			160 d \log \prn{4 T^3 \frac{\det \prn{\Vu{\ti}}}{\det \prn{\Vu{1}}}}
			+
			\frac{80 \lambda k \systemBound^2}{\noiseStd^2}}
		\end{equation}
		Now, using the assumption on $u_t$, we can apply \cref{lemma:logDetBound} to get that $\log \frac{\det \Vu{\ti}}{\det \Vu{1}} \le k \log T$. Plugging this back into \cref{eq:baseEstB} and simplifying we get that
		\begin{equation*}
		\norm{\dEstB[\ti]}
		\le
		\rechtEps 2^{-\nStart},
		\end{equation*}
		and plugging in the lower bound on $\nStart$ from \cref{lemma:warmupLengthB} gives the desired bound on the estimation error thus concluding the proof.
	\end{proof}

	\begin{lemma}[\cref{alg:B} bounded operation] \label{lemma:boundedOpB}
		On $\goodEventB$ we have that
		\begin{enumerate}
			\item $\norm{x_t}^2 \le \xBreak$, for all $\tWarmup \le t \le T$;
			\item $K_{\ti}$ is $(\kappa,\gamma)-$strongly stable, for all $\nStart \le i \le \nT$;
			\item $K_{\ti} K_{\ti}^T \succeq \tfrac12 \KstarLowerBound I$, for all $\nStart \le i \le \nT$.
		\end{enumerate}
	\end{lemma}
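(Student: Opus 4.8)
The plan is to prove claims (ii) and (iii) simultaneously by induction on the phase index $n$ ranging over $\nStart \le n \le \nT$, and then to obtain claim (i) as a byproduct; this mirrors the inductive argument behind \cref{lemma:goodOperationA}. The engine of the induction is \cref{lemma:conditionalControlB}, which already packages exactly what a single phase needs: whenever every action played strictly before phase $i$ satisfies $\norm{u_t}^2 \le \lambda$, the controller $K_{\ti}$ is simultaneously $(\kappa,\gamma)$-strongly stable and non-degenerate, $K_{\ti} K_{\ti}^T \succeq \tfrac12 \KstarLowerBound I$. Hence the real content is to maintain, as a loop invariant, the action bound $\norm{u_t}^2 \le \lambda$ for all times up to the current phase; strong stability and non-degeneracy then come for free.

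For the base case $n = \nStart$, I would invoke the second claim of \cref{lemma:goodWarmupB}, which gives $\norm{u_t}^2 \le \lambda$ for every warm-up step $1 \le t \le \tWarmup - 1$. Feeding this into \cref{lemma:conditionalControlB} with $i = \nStart$ immediately yields that $K_{\tWarmup}$ is $(\kappa,\gamma)$-strongly stable and satisfies $K_{\tWarmup} K_{\tWarmup}^T \succeq \tfrac12 \KstarLowerBound I$.

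For the inductive step, suppose $K_{\tWarmup}, \ldots, K_{\ti[n-1]}$ are all $(\kappa,\gamma)$-strongly stable. In the main loop the action is the deterministic feedback $u_t = K_{\ti[j]} x_t$, so the closed-loop dynamics on $[\tWarmup, \ti[n]]$ read $x_{t+1} = (\Astar + \Bstar K_{\ti[j]}) x_t + w_t$, driven by the system noise alone. I would then apply \cref{lemma:multiControlBound} on this window, using the warm-up state bound from the first claim of \cref{lemma:goodWarmupB} as the initial condition $\norm{x_{\tWarmup}}$ and the noise bound from \cref{eq:xNoiseBoundEventB}; with $\gamma^{-1} = 2\kappa^2$ the resulting bound is exactly $\norm{x_t}^2 \le \xBreak$ for all $\tWarmup \le t \le \ti[n]$ (this is precisely what the $\max\{(1+\systemBound)^2\kappa_0^6, 4\kappa^6\}$ factor in the definition of $\xBreak$ is calibrated to achieve). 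Since this deterministic bound never reaches the abort threshold, the algorithm does not abort and the feedback is indeed applied, validating the use of the lemma. Consequently each main-loop action obeys $\norm{u_t}^2 = \norm{K_{\ti[j]} x_t}^2 \le \kappa^2 \xBreak = \lambda$, and combined with the warm-up actions we get $\norm{u_t}^2 \le \lambda$ for all $1 \le t \le \ti[n] - 1$. Applying \cref{lemma:conditionalControlB} with $i = n$ then closes the induction. Once all controllers $\nStart \le i \le \nT$ are known to be strongly stable, one final application of \cref{lemma:multiControlBound} on the whole window $[\tWarmup, T]$ yields claim (i).

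The step I expect to demand the most care is the warm-up/main-loop seam inside \cref{lemma:multiControlBound}: during warm-up the states are bounded in terms of the inflated noise $\tilde w_t = w_t + \Bstar \uNoise$ (hence the $(1+\systemBound)$ factor in \cref{lemma:goodWarmupB}), whereas in the main loop only $w_t$ drives the system and the actions are deterministic, so one must correctly hand off $\norm{x_{\tWarmup}}$ as the initial state and verify that the arithmetic lines up with the definitions of $\xBreak$ and $\lambda = \kappa^2 \xBreak$. A secondary point to check is that every main-loop phase is long enough (its length is at least $\tBase \ge (\log\kappa)/\gamma$) to license the multi-controller bound. Finally, $\nStart$ is random, but since \cref{lemma:warmupLengthB} bounds it on $\goodEventB$ and the entire induction is carried out pathwise on this event, its randomness introduces no genuine difficulty.
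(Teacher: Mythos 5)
Your proposal is correct and follows essentially the same route as the paper's proof: an induction on the phase index in which \cref{lemma:goodWarmupB} supplies the base case, \cref{lemma:multiControlBound} propagates the state bound $\norm{x_t}^2 \le \xBreak$ (hence the action bound $\norm{u_t}^2 \le \kappa^2 \xBreak = \lambda$ and non-abortion), and \cref{lemma:conditionalControlB} closes the step, with a final application of \cref{lemma:multiControlBound} giving claim (i). The seam and phase-length points you flag are handled in the paper exactly as you anticipate, so there is no gap.
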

	\begin{proof}
		First, recall the bounds on $x_t, u_t$ from \cref{lemma:goodWarmupB}, i.e.,
		\begin{align*}
			\norm{x_{\tWarmup}}
			&\le
			\noiseStd \kappa_0^3 \prn{1 + \systemBound} \sqrt{60 d \log 4 T}
			\le
			\sqrt{\xBreak}, \\
			\norm{u_t}^2
			&\le
			\lambda, \text{ for all } 1 \le t < \tWarmup.
		\end{align*}
		We prove by induction on $n$ where $\nStart \le n \le \nT$ that the claims of the lemma hold up to time $\ti[n]$ and phase $n$ respectively.
		
		For the base case, $n = \nStart$ the bounds above satisfy \cref{lemma:conditionalControlB} and so we conclude that $K_{\tBase}$ is strongly stable and that $K_{\tBase} K_{\tBase}^T \succeq \frac12 \KstarLowerBound I$ thus satisfying the induction base.
		
		Next, assume the induction hypothesis holds for $n-1$ and we show for $n$. By the induction hypothesis, the algorithm does not abort up to (including) time $\ti[n-1]-1$. Moreover, it means that for all $\nStart \le i \le n - 1$ the controllers $K_{\ti}$ are $(\kappa,\gamma)-$strongly stable and so we can use \cref{lemma:multiControlBound} to get that
		\begin{equation*}
		\norm{x_t}
		\le
		3\kappa
		\max\brc{
			\frac{\norm{x_{\tWarmup}}}{2},
			\frac{\kappa}{\gamma} \max_{1 \le s \le T} \norm{w_t}},
		\qquad \text{for all } \tWarmup \le t \le \ti[n],
		\end{equation*}
		and plugging in that $\gamma^{-1} = 2\kappa^2$, the bound for $\norm{x_{\tWarmup}}$ and the bound for the noise in \cref{eq:xNoiseBoundEventB} we get that
		\begin{equation*}
		\norm{x_t}
		\le
		\kappa \noiseStd
		\max\brc{
			\kappa_0^3 (1 + \systemBound),
			2\kappa^3
		}
		\sqrt{135 d \log 4 T}
		\le
		\sqrt{\xBreak},
		\qquad \text{for all } \tWarmup \le t \le \ti[n],
		\end{equation*}
		as desired for $x_t$. Notice that this ensures that the algorithm does not abort up to time $\ti[n] - 1$. So, for $\tWarmup \le t \le \ti[n] - 1$ we have that $\norm{u_t} = \norm{K_t x_t} \le \norm{K_t} \norm{x_t} \le \kappa \sqrt{\xBreak} = \sqrt{\lambda}$, and thus \cref{lemma:conditionalControlB} establishes the desired strong-stability and non-degeneracy of $K_{\ti[n]}$, finishing the induction.
		
		Finally, using the strong stability of all controllers we apply \cref{lemma:multiControlBound} a final time to obtain the bound on $x_t$ for all $\tWarmup \le t \le T$.
	\end{proof}

	\begin{lemma}[\cref{alg:B} parameter estimation] \label{lemma:parmEstB}
		On $\goodEventB$ we have that
		$\norm{\dEstB[\ti]} \le \rechtEps \min\brc{2^{-\nStart}, 2 \KstarLowerBound^{-1 / 2} 2^{-i}}$, $~\forall~ \nStart < i \le \nT$.
	\end{lemma}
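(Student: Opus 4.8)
The plan is to prove the two terms in the minimum separately and then combine them. The first bound, $\norm{\dEstB[\ti]} \le \rechtEps 2^{-\nStart}$, requires essentially no new work: since $\Vu{t}$ is monotone nondecreasing in $t$, we have $\Vu{\ti} \succeq \Vu{\tWarmup}$ for every $i > \nStart$, and so the computation already carried out in \cref{lemma:conditionalControlB}---lower bounding $\tr{\dEstB[\ti]^T \Vu{\ti} \dEstB[\ti]}$ by $\norm{\dEstB[\ti]}^2 \tWarmup \noiseStd^2 / 40$, applying the least-squares bound \cref{eq:olsEventB} together with the log-determinant bound (valid since $\norm{u_t}^2 \le \lambda$ throughout), and invoking the lower bound on $\nStart$ from \cref{lemma:warmupLengthB}---carries over verbatim to give $\norm{\dEstB[\ti]} \le \rechtEps 2^{-\nStart}$.

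The real content is the second bound, $\norm{\dEstB[\ti]} \le 2\KstarLowerBound^{-1/2} \rechtEps 2^{-i}$, which captures that exploration persists at the warm-up rate throughout the main loop even though artificial noise is no longer injected. The key step is a sharper lower bound on $\Vu{\ti}$ that grows proportionally to $\ti$ rather than to $\tWarmup$. I would obtain this from the single phase $i-1$, which lies in the main loop since $i-1 \ge \nStart$ and on which the algorithm plays $u_t = K_{\ti[i-1]} x_t$, so that
\begin{equation*}
    \sum_{t=\ti[i-1]}^{\ti-1} u_t u_t^T
    =
    K_{\ti[i-1]} \prn{\sum_{t=\ti[i-1]}^{\ti-1} x_t x_t^T} K_{\ti[i-1]}^T
    .
\end{equation*}
Bounding the inner sum from below by $\tfrac{1}{40}(\ti - \ti[i-1])\noiseStd^2 I$ via the state-exploration event \cref{eq:xNoiseExploreEventB}, then using the non-degeneracy $K_{\ti[i-1]} K_{\ti[i-1]}^T \succeq \tfrac12 \KstarLowerBound I$ from \cref{lemma:boundedOpB} and the conjugation inequality $M \succeq cI \Rightarrow K M K^T \succeq c K K^T$, yields
\begin{equation*}
    \Vu{\ti}
    \succeq
    \sum_{t=\ti[i-1]}^{\ti-1} u_t u_t^T
    \succeq
    \frac{(\ti - \ti[i-1])\noiseStd^2 \KstarLowerBound}{80} I
    \succeq
    \frac{\ti \noiseStd^2 \KstarLowerBound}{160} I
    ,
\end{equation*}
where the last step uses $\ti - \ti[i-1] = \tfrac34 \ti$.

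From here I would follow the end of \cref{lemma:goodWarmupB} line for line: lower bound $\tr{\dEstB[\ti]^T \Vu{\ti} \dEstB[\ti]}$ by $\norm{\dEstB[\ti]}^2 \ti \noiseStd^2 \KstarLowerBound / 160$, apply \cref{eq:olsEventB} with the log-determinant bound (again legitimate because $\norm{u_t} \le \kappa \sqrt{\xBreak} = \sqrt{\lambda}$ in the main loop), and substitute the parameter choices. Relative to the warm-up estimate this replaces the lower-bound constant $\ti\noiseStd^2/40$ by $\ti\noiseStd^2\KstarLowerBound/160$, hence multiplies the resulting bound by $4/\KstarLowerBound$, so $\norm{\dEstB[\ti]}^2 \le \tfrac{4}{\KstarLowerBound}\rechtEps^2 4^{-i}$, i.e. $\norm{\dEstB[\ti]} \le 2\KstarLowerBound^{-1/2}\rechtEps 2^{-i}$; taking the minimum with the first bound completes the proof.

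The main obstacle is conceptual rather than computational: turning the state exploration guaranteed by the intrinsic noise $w_t$ into action exploration requires that the played controllers be uniformly bounded away from degeneracy, yet this non-degeneracy is itself downstream of an accurate estimate of $\Bstar$. This apparent circularity is precisely what \cref{lemma:boundedOpB} resolves, via an induction in which the estimation bound and the bound $K_{\ti}K_{\ti}^T \succeq \tfrac12 \KstarLowerBound I$ are advanced together; by the time it is invoked here the non-degeneracy is available unconditionally, and the only remaining care is to track constants so that the exploration is degraded by exactly the factor $\KstarLowerBound/2$ and the final estimate lands precisely on $2\KstarLowerBound^{-1/2}$ rather than a larger multiple.
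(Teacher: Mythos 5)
Your proposal is correct, and it reaches exactly the paper's constants, but it organizes the key lower bound on $\Vu{\ti}$ differently. The paper proves both terms of the minimum in one pass: it decomposes $\Vu{\ti} = \Vu{\tWarmup} + \sum_{j=\nStart}^{i-1} K_{\ti[j]} \prn[0]{\sum_{t=\ti[j]}^{\ti[j+1]-1} x_t x_t^T} K_{\ti[j]}^T$ over the warm-up block \emph{and all} main-loop phases, lower bounds each summand via \cref{eq:xNoiseExploreEventB} and the non-degeneracy $K_{\ti[j]} K_{\ti[j]}^T \succeq \tfrac12 \KstarLowerBound I$ from \cref{lemma:boundedOpB}, and arrives at the unified bound $\Vu{\ti} \succeq \tfrac{\noiseStd^2}{40} \max\brc{\tWarmup, \tfrac{\KstarLowerBound}{4}\ti} I$, from which both entries of the $\min$ drop out of a single least-squares computation. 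You instead split the claim in two: the $\rechtEps 2^{-\nStart}$ term via monotonicity $\Vu{\ti} \succeq \Vu{\tWarmup}$ (recycling the computation in \cref{lemma:conditionalControlB}; note the appeal to \cref{lemma:warmupLengthB} there is not actually needed for this term, though it is harmless), and the $2\KstarLowerBound^{-1/2}\rechtEps 2^{-i}$ term by discarding everything except the single phase $i-1$, exploiting the geometric phase growth $\ti - \ti[i-1] = \tfrac34 \ti$ so that the last phase alone already carries a constant fraction of the exploration --- indeed your $\tfrac{3}{320}\KstarLowerBound\noiseStd^2\ti$ slightly exceeds the paper's $\tfrac{1}{160}\KstarLowerBound\noiseStd^2\ti$, and after the identical \cref{eq:olsEventB} plus \cref{lemma:logDetBound} steps (the log-det bound being legitimate for the same reason in both proofs, $\norm{u_t}^2 \le \lambda$ throughout) the final constants coincide. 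Your route is marginally leaner bookkeeping; the paper's all-phases sum is more robust in that it would survive slower (e.g., non-geometric) phase schedules where no single phase dominates. Both arguments lean on the same resolution of the circularity you flag, namely that \cref{lemma:boundedOpB} has already established non-degeneracy and no-abort inductively before this lemma is invoked.
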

	\begin{proof}
		Recall that by \cref{lemma:boundedOpB}, the algorithm does not abort on $\goodEventB$ and so for $\ti \le t \le \ti[i+1] - 1$ we have that $K_t = K_{\ti}$. This means we can decompose $\Vu{\ti}$ as
		\begin{align*}
			\Vu{\ti}
			=
			\Vu{\tWarmup}
			+
			\sum_{t=\tWarmup}^{\ti - 1} u_t u_t^T
			=
			\Vu{\tWarmup}
			+
			\sum_{j=\nStart}^{i-1} \sum_{t=\ti[j]}^{\ti[j+1] - 1} u_t u_t^T 
			=
			\Vu{\tWarmup}
			+
			\sum_{j=\nStart}^{i-1} K_{\ti[j]} \prn{\sum_{t=\ti[j]}^{\ti[j+1] - 1} x_t x_t^T} K_{\ti[j]}^T.
		\end{align*}
		Next, we lower bound $\Vu{\tWarmup}$ using \cref{lemma:goodWarmupB} and the states using \cref{eq:xNoiseExploreEventB} and get that
		\begin{align*}
			\Vu{\ti}
			\succeq
			\frac{\tWarmup \noiseStd^2}{40} I
			+
			\sum_{j=\nStart}^{i-1} \prn{\frac{\prn{\ti[j+1] - \ti[j]} \noiseStd^2}{40}} K_{\tWarmup 2^{j-1}} K_{\tWarmup 2^{j-1}}^T,
		\end{align*}
		and recalling that $K_{\ti[j]} K_{\ti[j]}^T \succeq \frac{\KstarLowerBound}{2} I$ (see \cref{lemma:boundedOpB}) we get that, assuming $i > \nStart$,
		\begin{align*}
			\Vu{\ti}
			\succeq
			\frac{\noiseStd^2}{40} \prn{
				\tWarmup
				+
				\frac{\KstarLowerBound}{2} \sum_{j=\nStart}^{i-1} (\ti[j+1] - \ti[j])
				} I
			=
			\frac{\noiseStd^2}{40} \prn{
				\tWarmup
				+
				\frac{\KstarLowerBound}{2} \prn{\ti - \tWarmup}
			} I
			\succeq
			\frac{\noiseStd^2}{40} \max\brc{\tWarmup, \frac{\KstarLowerBound}{4} \ti} I.
		\end{align*}
		Now, apply this together with the parameter estimation bound in \cref{eq:olsEventB} to get that
		\begin{align*}
			\norm{\dEstB[\ti]}^2
			&\le
			\tr{\dEstB[\ti]^T \dEstB[\ti]} \\
			&\le
			\frac{40}{\noiseStd^2 \max\brc{\tWarmup, \frac{\KstarLowerBound}{4} \ti}} \tr{\dEstB[\ti]^T \Vu{\ti} \dEstB[\ti]} \\
			&\le
			\frac{1}{\max\brc{\tWarmup, \frac{\KstarLowerBound}{4} \ti}}
			\prn{
				160 d \log \prn{4 T^3 \frac{\det \prn{\Vu{\ti}}}{\det \prn{\Vu{1}}}}
				+
				\frac{80 \lambda k \systemBound^2}{\noiseStd^2}
			}.
		\end{align*}
		Finally, from \cref{lemma:goodWarmupB} we have that $\norm{u_t}^2 \le \lambda$ for $1 \le t < \tWarmup$ and from \cref{lemma:boundedOpB} we have that $\norm{x_t}^2 \le \xBreak$ for $\tWarmup \le t \le T$ and so $\norm{u_t}^2 = \norm{K_t x_t}^2 \le \kappa^2 \xBreak = \lambda$. Combining both claims, we apply \cref{lemma:logDetBound} to get that $\log \frac{\det \Vu{\ti}}{\det \Vu{1}} \le k \log T$ and plugging this into the above equation we get
		\begin{align*}
		\norm{\dEstB[\ti]}^2
		\le
		\frac{1}{\max\brc{\tWarmup, \frac{\KstarLowerBound}{4} \ti}}
		\prn{
			640 d k \log \prn{4 T}
			+
			\frac{80 \lambda\systemBound^2}{\noiseStd^2}
		}
		\le
		\frac{\tBase \rechtEps^2}{\max\brc{\tWarmup, \frac{\KstarLowerBound}{4} \ti}}
		=
		\rechtEps^2 \min\brc{4^{-\nStart}, \frac{4}{\KstarLowerBound} 4^{-i}},
		\end{align*}
		where the second transition follows from our choice of $\tBase$.
	\end{proof}

\subsection{Proof of \cref{thm:unknownBregret}}

	As in \cref{alg:A}, denote $J_i = \sum_{t = \ti}^{\ti[i+1] - 1} x_t^T \prn{Q + K_{\ti}^T R K_{\ti}} x_t$. Recalling that warm-up lasts until phase $\nStart$,
	we have the following decomposition of the regret:
	\begin{align*}
	\EEBrk{\regret}
	=
	R_1 + R_2 + R_3 - T \Jstar,
	\end{align*}
	where 
	\begin{align*}
	R_1 = \EEBrk{\sum_{i=\nStart}^{\nT} \indEvent{\goodEventB} J_i}, \qquad
	R_2 = \EEBrk{\indEvent{\goodEventB^c}\sum_{t=\tWarmup}^{T} c_t}, \qquad
	R_3 = \EEBrk{\sum_{t=1}^{\tWarmup - 1} c_t},
	\end{align*}
	are the costs due to success, failure, and warm-up respectively. 
	The following lemmas bound each of $R_1,R_2,R_3$ thus concluding the proof. The proofs for $R_1, R_2$ remain nearly the same but are provided for completeness. The proof of $R_3$ contains a few technical challenges, introduced by the randomness of the warm-up period duration.
    
    \begin{lemma} \label{lemma:R1AlgB}
		$R_1 - T \Jstar
		\le
		\nT \prn{
			6 \rechtConst \rechtEps^2 \max\brc{1, 4 \KstarLowerBound^{-1}} \tBase
			+
			8 \costMatUpper \kappa^6 \xBreak
		}
		$.
	\end{lemma}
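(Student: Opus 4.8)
The plan is to follow the proof of the $R_1$ bound for \cref{alg:A} almost verbatim, substituting the estimation guarantees of \cref{lemma:goodWarmupB,lemma:goodOperationB} for those of \cref{lemma:goodOperationA} and handling the random warm-up length $\nStart$. To work with a fixed index range, I would rewrite $R_1 = \sum_{i=0}^{\nT}\EEBrk{\indEvent{\goodEventB}\indEvent{i \ge \nStart}J_i}$, observing that $\{i \ge \nStart\}$, together with $x_{\ti}$ and $K_{\ti}$, is measurable with respect to the history up to the start of phase $i$.

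First I would establish the per-phase analog of \cref{lemma:JiBoundA}: on $\goodEventB \cap \{i \ge \nStart\}$, \cref{lemma:boundedOpB} guarantees that $K_{\ti}$ is $(\kappa,\gamma)$-strongly stable and that the algorithm does not abort, so introducing the virtual trajectory driven by $K_{\ti}$ and invoking \cref{lemma:steadyStateCostConvergence} exactly as in \cref{lemma:JiBoundA} yields
\begin{equation*}
    \EEBrk{\indEvent{\goodEventB}\indEvent{i \ge \nStart}J_i}
    \le
    \prn{\ti[i+1]-\ti}\EEBrk{\indEvent{E_i}\Jof{K_{\ti}}}
    +
    4\costMatUpper\kappa^6\xBreak,
\end{equation*}
where $E_i$ is the estimation event whose guarantee is supplied by \cref{lemma:goodWarmupB} on $\{i = \nStart\}$ and by \cref{lemma:goodOperationB} on $\{i > \nStart\}$. \cref{lemma:recht} then gives $\EEBrk{\indEvent{E_i}\Jof{K_{\ti}}} \le \Jstar + \rechtConst\,\EEBrk{\indEvent{E_i}\norm{\dEstB[\ti]}^2}$.

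The key step is to bound the error term $\prn{\ti[i+1]-\ti}\rechtConst\norm{\dEstB[\ti]}^2$ by a per-phase constant using $\ti[i+1]-\ti \le 3\tBase 4^i$. On $\{i = \nStart\}$, \cref{lemma:goodWarmupB} gives $\norm{\dEstB[\ti]}^2 \le \rechtEps^2 4^{-i}$, so the term is at most $3\tBase\rechtConst\rechtEps^2$; on $\{i > \nStart\}$, I take the second argument of the minimum in \cref{lemma:goodOperationB}, namely $\norm{\dEstB[\ti]}^2 \le 4\KstarLowerBound^{-1}\rechtEps^2 4^{-i}$, so the term is at most $12\tBase\rechtConst\rechtEps^2\KstarLowerBound^{-1}$. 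In both cases the $4^i$ cancels, and since $1 \le \max\brc{1,4\KstarLowerBound^{-1}}$ and $4\KstarLowerBound^{-1} \le \max\brc{1,4\KstarLowerBound^{-1}}$, every phase contributes at most $3\rechtConst\rechtEps^2\tBase\max\brc{1,4\KstarLowerBound^{-1}} + 4\costMatUpper\kappa^6\xBreak$.

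Summing over $i$, the $\Jstar$ contributions telescope to at most $T\Jstar$ pointwise (since $\ti[\nStart] \ge 1$), while there are at most $\nT - \nStart + 1 \le \nT + 1 \le 2\nT$ main-loop phases; multiplying the per-phase bound by $2\nT$ and subtracting $T\Jstar$ gives exactly the claimed inequality. I expect the main obstacle to be the bookkeeping around the random $\nStart$: one must verify that the conditional application of \cref{lemma:steadyStateCostConvergence} remains valid when the warm-up/main-loop boundary is itself random, i.e., that conditioning on the history up to phase $i$ (which fixes whether $i \ge \nStart$, along with $x_{\ti}$ and $B_{\ti}$) is exactly the conditioning used in \cref{lemma:JiBoundA}, and that the two estimation regimes are correctly split via $\indEvent{i=\nStart}+\indEvent{i>\nStart}$.
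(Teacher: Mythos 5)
Your proposal is correct and follows essentially the same route as the paper's proof: the same virtual-trajectory argument via \cref{lemma:steadyStateCostConvergence}, the same application of \cref{lemma:recht} with the two estimation regimes ($i=\nStart$ vs.\ $i>\nStart$), the same cancellation of $4^i$ against $\ti[i+1]-\ti \le 3\tBase 4^i$, and the same final summation with $\nT+1\le 2\nT$. The only difference is cosmetic bookkeeping: you handle the random warm-up boundary by summing $\indEvent{i\ge\nStart}$ over the fixed range $0\le i\le \nT$, whereas the paper conditions on $\nStart$ and sums from $\nStart$ to $\nT$ -- these are equivalent.
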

    
    \begin{lemma} \label{lemma:R2AlgB}
    	$R_2
    	\le
    	\prn{
    		\Jof{K_0}
    		+
    		2 \costMatUpper \kappa^2 \xBreak
    	} T^{-1}
    	+
    	4 \costMatUpper \kappa_0^6 \prn{1 + 8 \systemBound^2}\prn{\kappa^2 + \kappa_0^2} \xBreak T^{-2}
    	.
    	$
    \end{lemma}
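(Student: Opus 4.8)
The plan is to transcribe the proof of \cref{lemma:R2AlgA} essentially verbatim, replacing the fixed warm-up end $\tBase$ by the random end $\tWarmup=\ti[\nStart]$ and the event $\goodEventA$ by $\goodEventB$. First I would define the abort time $\tAbort=\min\{t\ge\tWarmup : \norm{x_t}^2>\xBreak \text{ or } \norm{K_t}>\kappa\}$, with the convention $\min\emptyset=T+1$, and split the failure cost as
\[
    R_2
    \le
    \EEBrk{\indEvent{\goodEventB^c}\sum_{t=\tWarmup}^{\tAbort-1} c_t}
    +
    \EEBrk{\sum_{t=\tAbort}^{T} c_t},
\]
using nonnegativity of the costs to drop the indicator on the second summand. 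Every step in $[\tWarmup,\tAbort)$ lies in the main loop and precedes the abort, so $\norm{x_t}^2\le\xBreak$ and $\norm{K_t}\le\kappa$, giving $c_t\le 2\costMatUpper\kappa^2\xBreak$; together with $\PP{\goodEventB^c}\le T^{-2}$ (\cref{lemma:goodEventB}) the first summand is at most $2\costMatUpper\kappa^2\xBreak\,T^{-1}$.

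For the second summand, after $\tAbort$ the controller $K_0$ is played forever, so conditioning on $(\tAbort,x_{\tAbort})$ and applying \cref{lemma:steadyStateCostConvergence} with $\gamma_0^{-1}=2\kappa_0^2$ bounds it by $T\Jof{K_0}\PP{\tAbort\le T}+4\costMatUpper\kappa_0^6\,\EEBrk{\norm{x_{\tAbort}}^2\indEvent{\tAbort\le T}}$. Since \cref{lemma:boundedOpB} shows the algorithm never aborts on $\goodEventB$, we have $\{\tAbort\le T\}\subseteq\goodEventB^c$, hence $\PP{\tAbort\le T}\le T^{-2}$ and the first piece contributes $\Jof{K_0}\,T^{-1}$. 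It therefore remains to establish the \cref{alg:B} analogue of \cref{lemma:abortStateBoundA}, namely $\EEBrk{\norm{x_{\tAbort}}^2\indEvent{\tAbort\le T}}\le(1+8\systemBound^2)(\kappa^2+\kappa_0^2)\xBreak\,T^{-2}$.

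For that bound I would split on the location of the abort. When $\tAbort>\tWarmup$, the preceding main-loop state and controller satisfy $\norm{x_{\tAbort-1}}^2\le\xBreak$, $\norm{K_{\tAbort-1}}\le\kappa$, so $\norm{x_{\tAbort}}\le 2\systemBound\kappa\sqrt{\xBreak}+\max_t\norm{w_t}$ exactly as in \cref{alg:A}, and \cref{lemma:expectedMaxNoise} yields the $(1+8\systemBound^2)\kappa^2\xBreak\,T^{-2}$ term. When $\tAbort=\tWarmup$, I would instead use the \emph{pathwise} warm-up bound from the proof of \cref{lemma:goodWarmupB}, $\norm{x_{\tWarmup}}\le 2\kappa_0^3\max_s\norm{\tilde w_s}$ with $\tilde w_s=w_s+\Bstar\uNoise[s]$, which rests only on the strong stability of $K_0$, $x_1=0$, and \cref{lemma:singleControlBound}, and hence holds off $\goodEventB$ as well. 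Bounding $\max_s\norm{\tilde w_s}^2\le 2\max_s\norm{w_s}^2+2\systemBound^2\max_s\norm{\uNoise[s]}^2$ and applying \cref{lemma:expectedMaxNoise} to each Gaussian stream produces the $(1+8\systemBound^2)\kappa_0^2\xBreak\,T^{-2}$ term, where the extra $(1+\systemBound)^2$ factor built into \cref{alg:B}'s choice of $\xBreak$ is exactly what absorbs the $\Bstar$ contribution. Summing the two cases and recombining everything gives the claimed estimate.

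The main obstacle is the $\tAbort=\tWarmup$ case, which has no exact counterpart in \cref{alg:A}: the warm-up length is random and its actions carry the injected noise $\uNoise$, so I must verify (i) that the warm-up state bound is genuinely a pathwise consequence of strong stability, and thus valid on $\goodEventB^c$ despite $\tWarmup$ depending on the trajectory, and (ii) that the expectation of $\max_s\norm{\tilde w_s}^2$ on the rare abort event is controlled for the combined noise $\tilde w$ rather than $w$ alone. Once these two points are settled, the remainder is a routine transcription of the \cref{alg:A} argument with the stated constants.
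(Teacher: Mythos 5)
Your proposal matches the paper's proof essentially step for step: the same decomposition via $\tAbort$, the same reduction to the \cref{alg:B} analogue of \cref{lemma:abortStateBoundA}, and the same two-case split on $\tAbort>\tWarmup$ versus $\tAbort=\tWarmup$ handled pathwise through \cref{lemma:singleControlBound} and \cref{lemma:expectedMaxNoise} (the paper's \cref{lemma:abortStateBoundB}). The only (harmless, arguably slightly more careful) deviation is that you split the combined warm-up noise $\tilde w_t = w_t + \Bstar\uNoise$ into two isotropic Gaussian streams before invoking \cref{lemma:expectedMaxNoise}, whereas the paper applies the expected-maximum bound to $\tilde w_t$ directly.
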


    \begin{lemma}\label{lemma:R3AlgB}
    	$
    	R_3
    	\le
    	(1 + \systemBound^2)\prn{
    		65 \Jof{K_0} \max\brc{1, \frac{\KstarLowerBoundGuess^2}{\KstarLowerBound^2}} \tBase
    		+
    		80  \costMatUpper d \noiseStd^2 \kappa_0^{14} \log^2 3 T
    	}
   		.
    	$
    \end{lemma}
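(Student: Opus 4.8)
The plan is to follow the structure of the $R_3$ bound for \cref{alg:A} (\cref{lemma:R3AlgA}), while accommodating the two features unique to \cref{alg:B}: the exploration noise injected during warm-up and the \emph{random} warm-up length $\tWarmup = \tBase 4^{\nStart}$. The key starting observation is that during warm-up the algorithm plays $u_t = \uVirt = K_0 x_t + \uNoise$, so the closed loop obeys $x_{t+1} = \prn{\Astar + \Bstar K_0} x_t + \tilde{w}_t$ with $\tilde{w}_t = w_t + \Bstar \uNoise$ an i.i.d.\ sequence of covariance $\preceq \prn{1 + \systemBound^2}\noiseStd^2 I$. This is exactly the source of the global $\prn{1 + \systemBound^2}$ factor in the bound: running the strongly stable $K_0$ against this inflated noise has stationary state-cost at most $\prn{1 + \systemBound^2}\Jof{K_0}$, since the stationary covariance is linear in the driving covariance.

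First I would split $R_3 = \EEBrk{\indEvent{\goodEventB}\sum_{t=1}^{\tWarmup - 1} c_t} + \EEBrk{\indEvent{\goodEventB^c}\sum_{t=1}^{\tWarmup - 1} c_t}$ and treat the two parts separately. On $\goodEventB$, \cref{lemma:warmupLengthB} gives $\nStart \le 2 + \max\brc{0, \log_2\prn{\KstarLowerBoundGuess/\KstarLowerBound}}$, hence the deterministic length bound $\tWarmup \le 16\,\tBase\max\brc{1, \KstarLowerBoundGuess^2/\KstarLowerBound^2}$; this is what converts a per-step cost bound into the first, dominant term of the lemma. To bound the per-step cost I would expand $c_t = x_t^T\prn{Q + K_0^T R K_0} x_t + 2 x_t^T K_0^T R \uNoise + \uNoise^T R \uNoise$. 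The cleanest way to handle the last two terms is to write $R_3 = \sum_{t}\EEBrk{c_t \indEvent{t < \tWarmup}}$ and observe that $\brc{t < \tWarmup}$ is determined by the history up to time $t$ whereas $\uNoise$ is drawn independently of it; the cross term then integrates to zero and $\uNoise^T R \uNoise$ contributes only a lower-order $\noiseStd^2\tr{R}$ per step. For the quadratic state-cost term I would invoke \cref{lemma:steadyStateCostConvergence} for $K_0$ driven by $\tilde{w}_t$, started from $x_1 = 0$ so that the transient term vanishes, yielding a per-step bound of order $\prn{1 + \systemBound^2}\Jof{K_0}$; summing over the $O\prn{\tBase\max\brc{1,\KstarLowerBoundGuess^2/\KstarLowerBound^2}}$ warm-up steps produces the first term.

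It then remains to collect the lower-order and tail corrections comprising the second term. On $\goodEventB^c$, whose probability is at most $T^{-2}$, I can no longer use the length bound and only know $\tWarmup \le T$; nonetheless $K_0$ is strongly stable and $x_1 = 0$, so \cref{lemma:singleControlBound} still gives $\norm{x_t} \le 2\kappa_0^3 \max_s\norm{\tilde{w}_s}$ throughout warm-up, whence $c_t \lesssim \costMatUpper \kappa_0^8 \max_s\norm{\tilde{w}_s}^2$. Summing over at most $T$ steps and applying the rare-event maximum-noise bound (\cref{lemma:expectedMaxNoise}) to $\EEBrk{\indEvent{\goodEventB^c}\max_s\norm{\tilde{w}_s}^2}$ renders this contribution negligible. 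The remaining $O\prn{\log^2 T}$ term with the stated $\kappa_0$ power arises from bounding the residual exploration-noise and transient contributions on $\goodEventB$ by the crude deterministic estimates of \cref{lemma:goodWarmupB} — a state bound of order $\noiseStd^2\kappa_0^6 d \log T$ together with the $O(\log T)$ base length $\tBase$.

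The hard part will be the random warm-up length itself. The steady-state cost tool is stated for a deterministic horizon, so I must either restrict to $\goodEventB$ — where \cref{lemma:warmupLengthB} makes $\nStart$, and therefore $\tWarmup$, deterministically bounded — or pass through the per-step decomposition and exploit that $\brc{t < \tWarmup}$ is history-measurable so that the fresh noise $\uNoise$ can be integrated out cleanly. Reconciling these two viewpoints, while simultaneously guaranteeing that a warm-up that may run for up to $T$ steps on the failure event $\goodEventB^c$ still contributes only lower-order terms, is the delicate step; the rest is a direct adaptation of the $R_3$ analysis for \cref{alg:A}.
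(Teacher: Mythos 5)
Your proposal is correct, but it takes a genuinely different route from the paper's proof. The paper never splits on $\goodEventB$ versus $\goodEventB^c$; instead it decomposes the warm-up cost phase-by-phase, writing $R_3 = \EEBrk{\sum_{t=1}^{\tBase-1}c_t} + \sum_{i=0}^{\nT}\EEBrk{\indEvent{\nStart>i}\sum_{t=\ti}^{\ti[i+1]-1}c_t}$, exploits that $\indEvent{\nStart>i}$ is measurable with respect to the history before phase $i$, and applies \cref{lemma:steadyStateCostConvergence} conditionally within each phase. This costs a transient term $4\costMatUpper\kappa_0^6\norm{x_{\ti}}^2$ per phase, bounded in expectation via \cref{lemma:singleControlBound,lemma:expectedMaxNoise} by $O\prn{\costMatUpper d\noiseStd^2(1+\systemBound^2)\kappa_0^{14}\log 3T}$, and the sum $\sum_i\prn{\ti[i+1]-\ti}\PP{\nStart>i}$ is then controlled by \cref{lemma:warmupLengthB} together with $\PP{\goodEventB^c}\le T^{-2}$; this is exactly where the constant $65$ and the $80\costMatUpper d\noiseStd^2\kappa_0^{14}\log^2 3T$ term originate (the $\log^2$ comes from $\nT+1\le\log 3T$ phases, each paying one $\log 3T$ transient). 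Your event-split route buys a cleaner and in fact slightly stronger bound: on $\goodEventB$ you get the deterministic cap $\tWarmup\le 16\,\tBase\max\brc{1,\KstarLowerBoundGuess^2/\KstarLowerBound^2}$, and a \emph{single} application of \cref{lemma:steadyStateCostConvergence} from $x_1=0$ has no transient at all, so the $\kappa_0^{14}\log^2 3T$ term never actually arises in your argument (your closing attribution of it to ``residual transients on $\goodEventB$'' is unnecessary --- the stated bound follows a fortiori since that term is nonnegative), while your bad-event contribution is $o(1)$ by exactly the rare-event computation you describe. Two technical points to make explicit when writing this up: first, to extend the sum from the random horizon $\tWarmup$ to the deterministic cap you must pass to the virtual trajectory that continues playing $u_t=\uVirt$ forever (the real costs after $\tWarmup$ are incurred under a different controller); the device $\uVirt = K_0 x_t + \uNoise$ of \cref{sec:goodEventB} makes this legitimate since the real and virtual trajectories coincide before $\tWarmup$ and costs are nonnegative. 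Second, your explicit accounting of $u_t^T R u_t$, with the cross term vanishing and $\uNoise^T R\uNoise$ contributing $\noiseStd^2\tr{R}$ per step, is actually \emph{more} careful than the paper, whose proof silently identifies $c_t$ with $x_t^T\prn{Q+K_0^T R K_0}x_t$ when invoking \cref{lemma:steadyStateCostConvergence}; note however that this contribution is linear in the warm-up length, hence of the same order as the leading term rather than ``lower-order'' as you claim, and matching the lemma's literal constant $65\Jof{K_0}$ requires either absorbing it (via $\noiseStd^2\tr{R}\le k\costMatUpper\noiseStd^2$ against $\Jof{K_0}\ge d\costMatLower\noiseStd^2$, costing a $k\costMatUpper/d\costMatLower$ factor) or dropping it as the paper implicitly does --- a constant-factor bookkeeping matter that does not affect the $O(\log^2 T)$ regret conclusion.
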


    \subsubsection{Proof of \cref{lemma:R1AlgB}}
    \begin{proof}
    	We begin by bounding $\EEBrk{\indEvent{\goodEventB} J_i ~\big|~ \nStart}$ for $\nStart \le i \le \nT$. This follows exactly as in \cref{lemma:JiBoundA} but with some changes to the events $E_i$, and thus is repeated here.
    	For $\nStart \le i \le \nT$ define the events
    	$
	    	S_i
	    	=
	    	\brc{
	    		\norm{x_{\ti}} ^2
	    		\le
	    		\xBreak
	    	}
    	$
    	and
    	\begin{align*}
	    	E_{\nStart} = \brc{\norm{\dEstB[\tWarmup]} \le \rechtEps 2^{-\nStart}}
	    	,\qquad
	    	E_i = \brc{\norm{\dEstB[\ti]} \le \rechtEps \min\brc{2^{-\nStart}, 2 \KstarLowerBound^{-1} 2^{-i}}}, ~\forall~ \nStart < i \le \nT.
    	\end{align*}    	
    	By \cref{lemma:goodOperationB}, we have that $\goodEventB \subseteq E_i \cap S_i$.
    	Now, define $\tilde{x}_{\ti} = x_{\ti}$ and for $\ti < t \le \ti[i+1] - 1$ define
    	\begin{equation*}
    	\tilde{x}_{t} = \prn{\Astar + \Bstar K_{\ti}} \tilde{x}_{t-1} + w_t.
    	\end{equation*}
    	Since on $\goodEventB$ the algorithm does not abort, we have that
    	$$
    	\indEvent{\goodEventB} J_i
    	=
    	\indEvent{\goodEventB} \sum_{t = \ti}^{\ti[i+1] - 1} \tilde{x}_t^T \prn{Q + K_{\ti}^T R K_{\ti}} \tilde{x}_t
    	\le
    	\indEvent{E_i \cap S_i} \sum_{t = \ti}^{\ti[i+1] - 1} \tilde{x}_t^T \prn{Q + K_{\ti}^T R K_{\ti}} \tilde{x}_t
    	.
    	$$
    	Noticing that $E_i$, $S_i$, and $K_{\ti}$ are completely determined by $x_{\ti}, B_{\ti}$ we use total expectation to get that
    	\begin{align*}
    	\EEBrk{\indEvent{\goodEventB} J_i ~\big|~ \nStart}
    	\le
    	\EEBrk{
    		\indEvent{E_i \cap S_i}
    		\EEBrk{
    			\sum_{t = \ti}^{\ti[i+1] - 1} \tilde{x}_t^T \prn{Q + K_{\ti}^T R K_{\ti}} \tilde{x}_t
    			~\bigg|~ x_{\ti}, B_{\ti}} ~\bigg|~ \nStart},
    	\end{align*}
    	where in the inner expectation we removed the conditioning on $\nStart$ since the $\tilde{x}_t$ are conditionally independent of $\nStart$ given $x_{\ti}$.
    	Now, by \cref{lemma:goodController}, $E_i$ implies that $K_{\ti}$ is $(\kappa,\gamma)-$strongly stable and so we can use \cref{lemma:steadyStateCostConvergence} to get that
    	\begin{align}
    	\nonumber
    	\EEBrk{\indEvent{\goodEventB} J_i}
    	&\le
    	\prn{\ti[i+1] - \ti}\EEBrk{\indEvent{E_i}\Jof{K_{\ti}} ~\big|~ \nStart}
    	+
    	\frac{2 \costMatUpper \kappa^4}{\gamma}
    		\EEBrk{\indEvent{S_i}\norm{x_{\ti}}^2 ~\big|~ \nStart} \\
    	\label{eq:JiBbound}
    	&\le
    	\prn{\ti[i+1] - \ti}\EEBrk{\indEvent{E_i}\Jof{K_{\ti}} ~\big|~ \nStart}
    	+
    	4 \costMatUpper \kappa^6 \xBreak 
    	,
    	\end{align}
    	where the second transition also used that $\gamma^{-1} = 2\kappa^2$.
    	
    	Now, by \cref{lemma:recht}, on $E_{\nStart}$ we have that
    	$
    		\Jof{K_{\tWarmup}}
    		\le
    		\Jstar
    		+
    		\rechtConst \rechtEps^2 4^{-\nStart}
    	$
    	and on $E_i$ where $\nStart < i \le \nT$, we have that
    	$
    		\Jof{K_{\ti}}
    		\le
    		\Jstar + \rechtConst \rechtEps^2 \min\brc{4^{-\nStart}, 4 \KstarLowerBound^{-1} 4^{-i}}
    		.
    	$
    	Combining both cases we conclude that
    	$$
    		\indEvent{E_i}\Jof{K_{\ti}}
    		\le
    		\Jstar + \rechtConst \rechtEps^2 \max\brc{1, 4 \KstarLowerBound^{-1}} 4^{-i}
    		\qquad,\forall~ \nStart \le i \le \nT,
    	$$
    	and plugging this back into \cref{eq:JiBbound} and recalling that $\ti[i+1] - \ti \le 3 \ti = 3\tBase 4^i$ we have that
    	\begin{align*}
    		\EEBrk{\indEvent{\goodEventB} J_i ~\big|~ \nStart}
    		\le
    		\prn{\ti[i+1] - \ti} \Jstar
    		+
    		3 \rechtConst \rechtEps^2 \max\brc{1, 4 \KstarLowerBound^{-1}} \tBase
    		+
    		4 \costMatUpper \kappa^6 \xBreak.
    	\end{align*}
    	Finally, we sum over $i$ to conclude that
    	\begin{align*}
    		R_1
    		=
    		\EEBrk{\sum_{i=\nStart}^{\nT} \EEBrk{\indEvent{\goodEventB} J_i ~\big|~ \nStart}}
    		&\le
    		\EEBrk{\sum_{i=\nStart}^{\nT}
	    		\prn{\ti[i+1] - \ti} \Jstar
	    		+
	    		3 \rechtConst \rechtEps^2 \max\brc{1, 4 \KstarLowerBound^{-1}} \tBase
	    		+
	    		4 \costMatUpper \kappa^6 \xBreak
	    	} \\
    		&\le
    		\EEBrk{\prn{\ti[\nT+1] - \tWarmup} \Jstar
    			+
    			\prn{\nT + 1 - \nStart} \prn{
	    			3 \rechtConst \rechtEps^2 \max\brc{1, 4 \KstarLowerBound^{-1}} \tBase
    				+
    				4 \costMatUpper \kappa^6 \xBreak
    				}
    		} \\
    		&\le
    		T \Jstar
   			+
   			\nT \prn{
   				6 \rechtConst \rechtEps^2 \max\brc{1, 4 \KstarLowerBound^{-1}} \tBase
   				+
   				8 \costMatUpper \kappa^6 \xBreak
   			},
    	\end{align*}
    	thus concluding the proof.
    \end{proof}

	\subsubsection{Proof of \cref{lemma:R2AlgB}}
	The proof is identical to that of \cref{lemma:R2AlgA} where the initial warm-up duration $\tBase$ is replaced with $\tWarmup$ and the uses of \cref{lemma:goodEventA,lemma:abortStateBoundA} are replaced with \cref{lemma:goodEventB,lemma:abortStateBoundB} respectively. We thus conclude by proving \cref{lemma:abortStateBoundB}.
	To that end, recall that $\tAbort$ is the time when the algorithm decides to abort, formally,
	$$
	\tAbort = \min\brc[1]{t \ge \tWarmup \bigm| \norm{x_t}^2 > \xBreak \text{ or } \norm{K_t} > \kappa},
	$$
	where we treat $\min{\emptyset} = T+1$.
	\begin{lemma}[Expected abort state] \label{lemma:abortStateBoundB}
		Suppose that $\PP{\tAbort \le T} \le T^{-2}$. Then we have that
		\begin{align*}
		\EEBrk{\norm{x_{\tAbort}}^2 \indEvent{\tAbort < T}}
		\le
		\prn{1 + 8 \systemBound^2}\prn{\kappa^2 + \kappa_0^2} \xBreak T^{-2}
		.
		\end{align*}
	\end{lemma}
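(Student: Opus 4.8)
The plan is to follow the proof of \cref{lemma:abortStateBoundA} almost verbatim, with two adjustments: the deterministic warm-up length $\tBase$ is replaced by the (random) warm-up length $\tWarmup=\ti[\nStart]$, and one must account for the artificial action noise $\uNoise$ injected during warm-up. First, since $\PP{\tAbort \le T}\le T^{-2}$ by assumption, I would invoke \cref{lemma:expectedMaxNoise} — now applied both to the system noise $w_t$ and to the warm-up noise $\uNoise$ — to obtain $\EEBrk{\indEvent{\tAbort \le T}\max_{1\le t\le T}\norm{w_t}^2}\le 5d\noiseStd^2 T^{-2}\log 4T$ together with an analogous bound for $\max_t\norm{\uNoise}^2$ (scaling with $k$ rather than $d$). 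These control the only genuinely random quantities entering the estimate of $\norm{x_{\tAbort}}$.

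Next I would split according to whether the abort occurs strictly inside the main loop or exactly at its first step, i.e.\ the cases $\tAbort>\tWarmup$ and $\tAbort=\tWarmup$. In the first case the action at time $\tAbort-1$ is the deterministic greedy action $u_{\tAbort-1}=K_{\tAbort-1}x_{\tAbort-1}$, and by the very definition of $\tAbort$ every state before it satisfies $\norm{x_{\tAbort-1}}^2\le\xBreak$ and $\norm{K_{\tAbort-1}}\le\kappa$; hence, using $\norm{\Astar+\Bstar K}\le 2\systemBound\norm{K}$,
\[
    \norm{x_{\tAbort}}
    \le
    2\systemBound\kappa\sqrt{\xBreak}+\max_{1\le s\le T}\norm{w_s},
\]
and taking expectation exactly as in \cref{lemma:abortStateBoundA} yields the $\prn{1+8\systemBound^2}\kappa^2\xBreak T^{-2}$ contribution.

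The only real difference arises in the second case, $\tAbort=\tWarmup$, where $x_{\tWarmup}$ is the last state produced during warm-up and is therefore driven by a noisy action $u_{\tWarmup-1}=K_0 x_{\tWarmup-1}+\uNoise[\tWarmup-1]$, so that $x_{\tWarmup}=\prn{\Astar+\Bstar K_0}x_{\tWarmup-1}+w_{\tWarmup-1}+\Bstar\uNoise[\tWarmup-1]$. Here I would mirror the bookkeeping of \cref{lemma:goodWarmupB} and treat $\tilde{w}_t=w_t+\Bstar\uNoise[t]$ as the effective warm-up noise: since $K_0$ is $(\kappa_0,\gamma_0)$-strongly stable with $\gamma_0^{-1}=2\kappa_0^2$, \cref{lemma:singleControlBound} gives $\norm{x_{\tWarmup-1}}\le 2\kappa_0^3\max_{s}\norm{\tilde{w}_s}$ and hence $\norm{x_{\tWarmup}}\le\prn{4\systemBound+1}\kappa_0^4\max_{s}\norm{\tilde{w}_s}$, where $\max_{s}\norm{\tilde{w}_s}\le\max_{s}\norm{w_s}+\systemBound\max_{s}\norm{\uNoise[s]}$. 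Taking expectations against the two noise bounds from the first step, and using the parameter choices to absorb the resulting $d\noiseStd^2\kappa_0^{8}\log 4T$ factor into $\xBreak$, produces the $\prn{1+8\systemBound^2}\kappa_0^2\xBreak T^{-2}$ contribution; summing the two cases gives the claimed bound.

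I expect the main obstacle to be the technical care surrounding the $\tAbort=\tWarmup$ case: because $\tWarmup$ is itself random and the warm-up state is driven by the compound noise $\tilde{w}_t$, one must check that the per-step deterministic bounds of \cref{lemma:singleControlBound} hold uniformly over the random warm-up horizon, and that the independence structure permits \cref{lemma:expectedMaxNoise} to be applied simultaneously to $w_t$ and $\uNoise$ on the event $\brc{\tAbort\le T}$. Once these are in place the remaining arithmetic is identical to that of \cref{lemma:abortStateBoundA}.
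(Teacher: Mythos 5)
Your proposal matches the paper's proof essentially step for step: the same decomposition into $\tAbort>\tWarmup$ and $\tAbort=\tWarmup$, the same use of \cref{lemma:expectedMaxNoise} on the event $\brc{\tAbort\le T}$, and the same application of \cref{lemma:singleControlBound} with the effective warm-up noise $\tilde{w}_t=w_t+\Bstar\uNoise$ to get $\norm{x_{\tAbort}}\le\prn{4\systemBound+1}\kappa_0^4\max_t\norm{\tilde{w}_t}$ in the second case. The only cosmetic difference is that you bound $\max_t\norm{\tilde{w}_t}$ via the triangle inequality with separate noise bounds for $w_t$ and $\uNoise$, whereas the paper applies \cref{lemma:expectedMaxNoise} once to the combined Gaussian $\Bstar\uNoise+w_t$ (and note your worry about independence is moot, since that lemma holds for an arbitrary event of probability at most $\delta$).
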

	\begin{proof}
		First, by the lemmas assumption, we can apply \cref{lemma:expectedMaxNoise} to get that
		\begin{align}
		\label{eq:expectedNoiseBound}
		\EEBrk{\indEvent{\tAbort \le T} \max_{1 \le t \le T}\norm{w_t}^2}
		&\le
		5 d \noiseStd^2 T^{-2} \log 3 T, \\
		\label{eq:expectedWarmupNoiseBound}
		\EEBrk{\indEvent{\tAbort \le T} \max_{1 \le t \le T}\norm{\Bstar \uNoise + w_t}^2}
		&\le
		5 d \noiseStd^2 \prn{1 + \systemBound^2} T^{-2} \log 3 T.
		\end{align}
		Now, notice that $\norm{\Astar + \Bstar K} \le 2 \systemBound \norm{K}$ and split into two cases. First, if $\tAbort > \tWarmup$ then by definition of $\tAbort$ we have that
		\begin{equation*}
		\norm{x_{\tAbort}}
		=
		\norm{\prn{\Astar + \Bstar K_{\tAbort-1}} x_{\tAbort-1} + w_{\tAbort-1}}
		\le
		2 \systemBound \kappa \sqrt{\xBreak} + \max_{1 \le s \le T}\norm{w_t},
		\end{equation*}
		and taking expectation and applying \cref{eq:expectedNoiseBound} we get that
		\begin{align*}
		\EEBrk{\indEvent{\tWarmup < \tAbort \le T} \norm{x_{\tAbort}}^2}
		\le
		8 \systemBound^2 \kappa^2 \xBreak T^{-2}
		+
		5 d \noiseStd^2 T^{-2} \log 3 T
		\le
		\prn{1 + 8 \systemBound^2} \kappa^2 \xBreak T^{-2}
		.
		\end{align*}
		On the other hand if $\tAbort = \tWarmup$ then $u_{\tAbort - 1} = K_0 x_{\tWarmup - 1} + \uNoise[\tWarmup-1]$ and so we have that
		\begin{align*}
		\norm{x_{\tAbort}}
		&=
		\norm{\prn{\Astar + \Bstar K_0} x_{\tWarmup - 1} + \prn{\Bstar \uNoise[\tWarmup - 1] + w_{\tWarmup-1}}} \\
		&\le
		2 \systemBound \kappa_0 \norm{x_{\tWarmup - 1}} + \max_{1 \le t \le T}\norm{\Bstar \uNoise + w_t} \\
		&\le
		\prn{4 \systemBound + 1} \kappa_0^4 \max_{1 \le t \le T}\norm{\Bstar \uNoise + w_t}
		,
		\end{align*}
		where the last transition used \cref{lemma:singleControlBound} and $\gamma_0^{-1} = 2 \kappa_0^2$. Taking expectation and applying \cref{eq:expectedWarmupNoiseBound} we get that
		\begin{align*}
		\EEBrk{\indEvent{\tAbort = \tWarmup} \norm{x_{\tAbort}}^2}
		\le
		80\prn{1 + \systemBound}^2\prn[0]{1 + \systemBound^2} \kappa_0^8 d \noiseStd^2 T^{-2} \log 3 T
		\le
		\prn{1 + \systemBound^2} \kappa_0^2 \xBreak T^{-2}
		,
		\end{align*}
		and combining both cases yields the final bound.
	\end{proof}

	\subsubsection{Proof of \cref{lemma:R3AlgB}}
	
	\begin{proof}
		We begin by decomposing $R_3$. Notice that $\nStart \le \nT + 1$ and so we have that
		\begin{align*}
			R_3
			=
			\EEBrk{\sum_{t=1}^{\tBase - 1} c_t}
			+
			\EEBrk{\sum_{i=0}^{\nStart - 1}
			\sum_{t=\ti}^{\ti[i+1] - 1} c_t}
			=
			\EEBrk{\sum_{t=1}^{\tBase - 1} c_t}
			+
			\sum_{i=0}^{\nT}
			\EEBrk{\indEvent{\nStart > i}
			\sum_{t=\ti}^{\ti[i+1] - 1} c_t}.
		\end{align*}
		Now, define $\Jof{K, W}$ to be the infinite horizon cost of playing controller $K$ on the LQ system $(\Astar,\Bstar)$ whose system noise has covariance $W \in \RR[d \times d]$. In terms of our notation so far, this means that $\Jof{K} = \Jof{K, \noiseStd^2 I}$. It is well known that $\Jof{K, W} = \tr{P W}$ where $P$ is a positive definite solution to
		\begin{equation*}
			P = Q + K^T R K + \prn{\Astar + \Bstar K}^T P \prn{\Astar + \Bstar K},
		\end{equation*}
		and thus does not depend on $W$.		
		
		Now, for $1 \le t < \tWarmup$ we have that $x_{t+1} = \prn{\Astar + \Bstar K_0} x_t + \prn{\Bstar \uNoise + w_t}$, i.e., this is equivalent to an LQ system $(\Astar,\Bstar)$ with noise covariance $\noiseStd^2\prn{I + \Bstar \Bstar^T} \preceq \prn{1 + \systemBound^2} \noiseStd^2 I$ and controller $K_0$ and so we have that
		\begin{align*}
			\Jof{K_0, \noiseStd^2 \prn{I + \Bstar \Bstar^T}}
			=
			\tr{\noiseStd^2 \prn{I + \Bstar \Bstar^T} P}
			\le
			\prn{1 + \systemBound^2} \tr{\noiseStd^2 P}
			=
			\prn{1 + \systemBound^2} \Jof{K_0, \noiseStd^2}
			=
			\prn{1 + \systemBound^2} \Jof{K_0}.
		\end{align*}
		With the above in mind, we bound the first term in the decomposition of $R_3$ using \cref{lemma:steadyStateCostConvergence}. We get that
		\begin{equation} \label{eq:R31B}
		\begin{aligned}
			\EEBrk{\sum_{t=1}^{\tBase - 1} c_t}
			&\le
			\tBase \Jof{K_0, \noiseStd^2 \prn{I + \Bstar \Bstar^T}}
			+ \frac{2 \costMatUpper \kappa_0^4}{\gamma_0} \norm{x_1}^2 \\
			&\le
			\prn{1 + \systemBound^2} \Jof{K_0} \tBase
			.
		\end{aligned}
		\end{equation}
		
		Next, recall that $\gamma_0^{-1} = 2\kappa_0^2$, denote the filtration of the history, $\mathcal{F}_t = \sigma\prn{x_1, u_1, w_1, \ldots, x_t, u_t, w_t}$ and similarly apply \cref{lemma:steadyStateCostConvergence} to get that
		\begin{align*}
			\EEBrk{\sum_{t=\ti}^{\ti[i+1] - 1} c_t ~\bigg|~ \mathcal{F}_{\ti - 1}}
			\le
			(1 + \systemBound^2) \Jof{K_0} \prn{\ti[i+1] - \ti}
			+
			4 \costMatUpper \kappa_0^6 \norm{x_{\ti}}^2.
		\end{align*}
		Now, using \cref{lemma:singleControlBound,lemma:expectedMaxNoise} we get that
		\begin{align*}
			\EEBrk{\indEvent{\nStart > i} \norm{x_{\ti}}^2}
			\le
			\EEBrk{\frac{\kappa_0^2}{\gamma_0^2} \max_{1 \le t \le T} \norm{w_t + \Bstar \uNoise}^2}
			\le
			20 d (1 + \systemBound^2) \noiseStd^2 \kappa_0^8 \log 3 T.
		\end{align*}
		Combining the last two inequalities and noticing that $\indEvent{\nStart > i}$ is $\mathcal{F}_{\ti-1}$ measurable we further have that
		\begin{align}
			\nonumber
			\EEBrk{\indEvent{\nStart > i} \sum_{t=\ti}^{\ti[i+1]-1} c_t}
			&=
			\EEBrk{\indEvent{\nStart > i} \EEBrk{\sum_{t=\ti}^{\ti[i+1]-1} c_t ~\big|~ \mathcal{F}_{\ti - 1}}} \\
			\label{eq:R32B}
			&\le
			\EEBrk{\indEvent{\nStart > i} \prn{
					(1 + \systemBound^2) \Jof{K_0} \prn{\ti[i+1] - \ti} 
					+
					4 \costMatUpper \kappa_0^6 \norm{x_{\ti}}^2
				}} \\
			\nonumber
			&\le
			(1 + \systemBound^2)\prn{
				\PP{\nStart > i} \Jof{K_0} \prn{\ti[i+1] - \ti}
				+
				80  \costMatUpper d \noiseStd^2 \kappa_0^{14} \log 3 T
			}.
		\end{align}		
		
		Now, from \cref{lemma:warmupLengthB} we know that
		$
			\PP{\nStart > 2 + \max\brc{0, \log_2 \frac{\KstarLowerBoundGuess}{\KstarLowerBound}}}
			\le
			\PP{\goodEventB^c}
			\le
			T^{-2}
			,
		$
		and recalling that $\ti = \tBase 4^i$ we get that
		\begin{equation} \label{eq:R33B}
		\begin{aligned}
			\tBase + \sum_{i=0}^{\nT} \prn{\ti[i+1] - \ti} \PP{\nStart > i}
			&\le
			\tBase
			+
			\sum_{i=0}^{\floor{2 + \max\brc{0, \log_2 \frac{\KstarLowerBoundGuess}{\KstarLowerBound}}}} \prn{\ti[i+1] - \ti}
			+
			\sum_{i=0}^{\nT} \prn{\ti[i+1] - \ti} T^{-2} \\
			&=
			\tBase 4^{\floor{3 + \max\brc{0, \log_2 \frac{\KstarLowerBoundGuess}{\KstarLowerBound}}}}
			+
			\prn{\ti[\nT+1] - \tBase} T^{-2} \\
			&\le
			64 \tBase \max\brc{1, \frac{\KstarLowerBoundGuess^2}{\KstarLowerBound^2}}
			+
			4 T^{-1}.
		\end{aligned}
		\end{equation}
		Finally, combining \cref{eq:R31B,eq:R32B,eq:R33B} we get that
		\begin{align*}
			R_3
			&\le
			(1 + \systemBound^2)\prn{
				\Jof{K_0} \prn{
					\tBase
					+
					\sum_{i=0}^{\nT} \prn{\ti[i+1] - \ti} \PP{\nStart > i}
				}
				+
				80  \costMatUpper d \noiseStd^2 \kappa_0^{14} \prn{\nT + 1} \log 3 T
			} \\
			&\le
			(1 + \systemBound^2)\prn{
				64 \Jof{K_0} \max\brc{1, \frac{\KstarLowerBoundGuess^2}{\KstarLowerBound^2}} \tBase
				+
				4 \Jof{K_0} T^{-1}
				+
				80  \costMatUpper d \noiseStd^2 \kappa_0^{14} \log^2 3 T
			} \\
			&\le
			(1 + \systemBound^2)\prn{
				65 \Jof{K_0} \max\brc{1, \frac{\KstarLowerBoundGuess^2}{\KstarLowerBound^2}} \tBase
				+
				80  \costMatUpper d \noiseStd^2 \kappa_0^{14} \log^2 3 T
			},
		\end{align*}
		where the second transition also used $\nT + 1 \le \log 3T$.
	\end{proof}

\section{Lower Bound Proofs} 
\label{sec:lb-proofs}

The next lemma requires the following well known results in LQRs (see, e.g., \citealp{bertsekas1995dynamic}). Consider the Q-function of the system with respect to $k_\star$, that in the one-dimensional case takes the form 
$F(x,u) = x^2 + u^2 + (ax+bu)^2 p_\star.$ Using the form of $k_\star$ given in \cref{eq:Kopt}, and by simple algebra we obtain
\begin{align} \label{eq:1d-bellman}
F(x_t,u_t) - F(x_t,k_\star x_t) 
=
(1 + b^2 p_\star) (u_t - k_\star x_t)^2
.
\end{align}
Further, we have $F(x_t,k_\star x_t) = x_t^2 p_\star$ as both sides are equal to the value of the optimal policy $k_\star$ starting from state $x_t$. Finally, also recall that $J(k_\star) = \sigma^2 p_\star$. The following explains \cref{eq:1d-bellman}:
\begin{align*}
	F(x_t, u_t)
	&=
	x_t^2 + ((u_t - k_\star x_t) + k_\star x_t)^2 + ((a + b k_\star) x_t + b (u_t - k_\star x_t))^2 p_\star \\
	&=
	F(x_t, k_\star x_t)
	+ (u_t - k_\star x_t)^2 + 2(u_t - k_\star x_t)k_\star x_t
	+ b^2 p_\star (u_t - k_\star x_t)^2 + 2bp_\star (u_t - k_\star x_t)(a + bk_\star)x_t \\
	&=
	F(x_t, k_\star x_t)
	+ (1 + b^2 p_\star)(u_t - k_\star x_t)^2
	+ 2 x_t(u_t - k_\star x_t) (k_\star + b p_\star(a + bk_\star)) \\
	&=
	F(x_t, k_\star x_t)
	+ (1 + b^2 p_\star)(u_t - k_\star x_t)^2
	+ 2 x_t(u_t - k_\star x_t) (k_\star(1 + b^2 p_\star) + b p_\star a) \\
	&=
	F(x_t, k_\star x_t)
	+ (1 + b^2 p_\star)(u_t - k_\star x_t)^2,
\end{align*}
where the last transition used $k_\star(1 + b^2 p_\star) = - b p_\star a$ (see \cref{eq:Kopt}).

\begin{lemma} \label{lem:regret-representation}
The expected regret can be written as
\begin{align*}
    \EEBrk[0]{R_T}
    =
    \EEBrk[*]{ \sum_{t=1}^T (1 + b^2 p_\star) (u_t - k_\star x_t)^2 }
    - \EEBrk[!]{ x_{T+1}^2 p_\star }
    .    
\end{align*}
\end{lemma}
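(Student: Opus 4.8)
The plan is to relate the per-step cost $c_t = x_t^2 + u_t^2$ to the $Q$-function $F$ through the dynamics, and then to telescope. The starting observation is that, since $a x_t + b u_t = x_{t+1} - w_t$, the definition of $F$ gives
\[
    F(x_t, u_t)
    =
    x_t^2 + u_t^2 + (a x_t + b u_t)^2 p_\star
    =
    c_t + (a x_t + b u_t)^2 p_\star,
\]
so that $c_t = F(x_t,u_t) - (a x_t + b u_t)^2 p_\star$. I would then substitute the Bellman identity \cref{eq:1d-bellman} together with the fact $F(x_t, k_\star x_t) = x_t^2 p_\star$ to rewrite $F(x_t,u_t) = x_t^2 p_\star + (1 + b^2 p_\star)(u_t - k_\star x_t)^2$. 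This already isolates the ``excess action'' term $(u_t - k_\star x_t)^2$ that appears in the statement, and leaves the term $(a x_t + b u_t)^2$ to be dealt with.

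The key probabilistic step is to pass from $(a x_t + b u_t)^2$ to $x_{t+1}^2$ in expectation. Because the algorithm is non-anticipating, the action $u_t$ (and the state $x_t$) is measurable with respect to $w_1,\ldots,w_{t-1}$ and is therefore independent of $w_t$. Expanding $x_{t+1}^2 = (a x_t + b u_t)^2 + 2(a x_t + b u_t) w_t + w_t^2$ and taking expectations, the cross term vanishes (mean-zero, independent $w_t$) and $\mathbb{E}[w_t^2] = \sigma^2$, so $\mathbb{E}[(a x_t + b u_t)^2] = \mathbb{E}[x_{t+1}^2] - \sigma^2$. This is the only place where the stochastic structure of the noise enters, and it is the step I expect to require the most care, namely making the adaptedness/independence argument precise.

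Combining the two displays and taking expectations, each per-step term becomes
\[
    \mathbb{E}[c_t] - \sigma^2 p_\star
    =
    (1 + b^2 p_\star)\,\mathbb{E}\!\left[(u_t - k_\star x_t)^2\right]
    + p_\star \mathbb{E}[x_t^2] - p_\star \mathbb{E}[x_{t+1}^2],
\]
where the constant $\sigma^2 p_\star$ cancels exactly against the $+\,p_\star \sigma^2$ produced when substituting $\mathbb{E}[(a x_t + b u_t)^2] = \mathbb{E}[x_{t+1}^2] - \sigma^2$. Since $J(k_\star) = \sigma^2 p_\star$, the left-hand side is precisely $\mathbb{E}[c_t - J(k_\star)]$. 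Finally I would sum over $t = 1,\ldots,T$: the terms $p_\star(\mathbb{E}[x_t^2] - \mathbb{E}[x_{t+1}^2])$ telescope to $p_\star(\mathbb{E}[x_1^2] - \mathbb{E}[x_{T+1}^2]) = -p_\star \mathbb{E}[x_{T+1}^2]$, using the assumption $x_1 = 0$. This yields exactly the claimed identity.
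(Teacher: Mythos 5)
Your proof is correct and takes essentially the same route as the paper's: both express $c_t = F(x_t,u_t) - (a x_t + b u_t)^2 p_\star$, replace $(a x_t + b u_t)^2$ by $x_{t+1}^2$ in expectation using that $w_t$ is zero-mean and independent of the adapted pair $(x_t, u_t)$, apply \cref{eq:1d-bellman} together with $F(x_t, k_\star x_t) = x_t^2 p_\star$, and telescope using $x_1 = 0$. The only cosmetic difference is bookkeeping: you cancel $\sigma^2 p_\star$ against $J(k_\star)$ explicitly, whereas the paper writes $J(k_\star) = \mathbb{E}[w_t^2 p_\star]$ and absorbs $w_t^2$ into $x_{t+1}^2$ in one step.
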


\begin{proof}
Using the expressions for the Q-function of the system with respect to $k_\star$, we have that
\begin{align*}
    R_T
    &=
    \sum_{t=1}^T \EEBrk{ x_t^2 + u_t^2 - J(k_\star) }
    \\
    &=
    \sum_{t=1}^T \EEBrk{ F(x_t,u_t) - \big((ax_t+bu_t)^2+w_t^2\big) p_\star }
    \tag{since $J(k_\star) = \EE{[w_t^2 p_\star]}$}
    \\
    &=
    \sum_{t=1}^T \EEBrk{ F(x_t,u_t) - x_{t+1}^2 p_\star }
    \\
    &=
    \sum_{t=1}^T \EEBrk{ F(x_t,u_t)-F(x_t,k_\star x_t) }
        + \sum_{t=1}^T \EEBrk{ x_t^2 p_\star - x_{t+1}^2 p_\star }
    \tag{since $F(x_t,k_\star x_t) = x_t^2 p_\star$} \\
    &=
    \EEBrk{\sum_{t=1}^{T}(1 + b^2 p_\star) (u_t - k_\star x_t)^2}
    +
    \EEBrk{x_1^2 p_\star} - \EEBrk{x_{T+1}^2 p_{\star}}
    \tag{using \cref{eq:1d-bellman}}
    .
\end{align*}
The lemma now follows from our assumption that $x_1 = 0$.
\end{proof}

\begin{lemma} \label{lem:x-norm-bound}
We have
$
    \EEBrk[0]{x_{T+1}^2}
    \le 
    \frac{5}{2}\prn{b^2 \sum_{t=1}^T \EE{[(u_t - k_\star x_t)^2]} + \sigma^2}
    .
$
\end{lemma}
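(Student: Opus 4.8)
My plan is to rewrite the one‑dimensional dynamics in closed‑loop form around the optimal gain and then run a one‑step second‑moment recursion. Introducing the action deviation $v_t := u_t - k_\star x_t$ and substituting $u_t = k_\star x_t + v_t$ into \cref{eq:lb-system} gives
\[
    x_{t+1} = (a + b k_\star)\, x_t + b\, v_t + w_t .
\]
The first step is to record that the closed‑loop gain $\bar a := a + b k_\star = a/(1 + b^2 p_\star)$ is in fact a \emph{deterministic} constant: it depends on $\chi$ only through $b^2 = \epsilon$, and by \cref{eq:lb-kp-bounds} (using $p_\star \ge 1$) it satisfies $0 < \bar a \le a$, hence $\bar a^2 \le a^2 = 1/5$. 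I would carry out the entire argument conditionally on $\chi$ — so that $a,b,\bar a, k_\star, p_\star$ are all fixed — and only take the expectation over $\chi$ at the very end; all resulting bounds will be expressed through $b^2 = \epsilon$, which is the same for both signs of $\chi$, so this last step is harmless. Throughout, expectations are understood to be conditional on $\chi$ (suppressed in the notation).

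The second and key step is to expand $x_{t+1}^2$ and take expectations. The crucial observation is that $w_t$ is independent of both $x_t$ and $v_t$: conditionally on $\chi$, the state $x_t$ is a function of $w_1,\dots,w_{t-1}$, while $v_t = u_t - k_\star x_t$ is a deterministic function of the trajectory $x^{(t)}$ since the algorithm is deterministic. Consequently the cross terms $\EEBrk{x_t w_t}$ and $\EEBrk{v_t w_t}$ both vanish, and $\EEBrk{w_t^2} = \sigma^2$, leaving
\[
    \EEBrk{x_{t+1}^2} = \bar a^2\, \EEBrk{x_t^2} + b^2\, \EEBrk{v_t^2} + \sigma^2 + 2\bar a b\, \EEBrk{x_t v_t} .
\]
I would then control the single remaining cross term by Young's inequality applied pointwise, $2\bar a b\, x_t v_t \le \bar a^2 x_t^2 + b^2 v_t^2$ (valid for all reals, so the sign of $b$ is irrelevant), which yields
\[
    \EEBrk{x_{t+1}^2} \le 2\bar a^2\, \EEBrk{x_t^2} + 2 b^2\, \EEBrk{v_t^2} + \sigma^2 \le \tfrac{2}{5}\,\EEBrk{x_t^2} + 2 b^2\, \EEBrk{v_t^2} + \sigma^2 .
\]

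The final step is to unroll this linear recursion from $x_1 = 0$. Since the contraction factor $2/5$ is strictly below one, writing $m_t := \EEBrk{x_t^2}$ gives $m_{T+1} \le \sum_{t=1}^T (2/5)^{T-t}\prn{2 b^2 \EEBrk{v_t^2} + \sigma^2}$; bounding $(2/5)^{T-t} \le 1$ on the $v_t^2$ terms and summing the geometric series $\sum_{t=1}^T (2/5)^{T-t} \le 1/(1-2/5) = 5/3$ on the $\sigma^2$ terms produces
\[
    \EEBrk{x_{T+1}^2} \le 2 b^2 \sum_{t=1}^T \EEBrk{v_t^2} + \tfrac{5}{3}\sigma^2 \le \tfrac{5}{2}\prn[!]{ b^2 \sum_{t=1}^T \EEBrk{(u_t - k_\star x_t)^2} + \sigma^2 },
\]
where the last inequality only weakens the constants $2 \le 5/2$ and $5/3 \le 5/2$; taking the expectation over $\chi$ (recalling $b^2 = \epsilon$ is deterministic) gives the claim. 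I do not anticipate any serious obstacle: the only delicate points are the independence argument that makes the $w_t$ cross terms vanish (which relies on the algorithm being deterministic and on conditioning on $\chi$), and ensuring the Young's‑inequality split leaves the recursion's contraction factor strictly below one — which is exactly what the bound $\bar a^2 \le 1/5$ affords, with room to spare.
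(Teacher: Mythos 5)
Your proof is correct, and it is a genuinely different execution of the bound than the paper's, though both rest on the same key fact: the closed-loop gain $m = a + b k_\star = a/(1+b^2 p_\star)$ is deterministic (it depends on $\chi$ only through $b^2 = \epsilon$) and satisfies $m^2 \le a^2 = 1/5$. The paper unrolls the recursion globally, writing $x_{T+1} = \sum_{t=1}^{T} m^{T-t} b v_t + \sum_{t=1}^{T} m^{T-t} w_t$, splits the square via $(x+y)^2 \le 2x^2 + 2y^2$, applies Cauchy--Schwarz with the geometric weights to the deviation sum (giving the factor $\frac{1}{1-m^2} \le \frac54$), and computes the variance of the noise sum directly from the i.i.d.\ structure. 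You instead run a one-step second-moment (Lyapunov) recursion: you kill the $x_t w_t$ and $v_t w_t$ cross terms by independence --- which you justify correctly via the determinism of the algorithm and conditioning on $\chi$, a step the paper never needs because its $2x^2+2y^2$ split decouples the noise from the deviations at the outset --- and you absorb the remaining $x_t v_t$ cross term by Young's inequality, at the cost of doubling the contraction factor to $2\bar a^2 \le \frac25$. Unrolling from $x_1 = 0$ then gives $2b^2 \sum_{t=1}^T \mathbb{E}[v_t^2] + \frac53 \sigma^2$, which you correctly relax to the stated constant $\frac52$ (and the final averaging over $\chi$ is harmless since $b^2$ is deterministic). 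The trade-off: the paper's route is probabilistically leaner, using only pairwise orthogonality of the zero-mean $w_t$; your recursion is the more modular argument --- it makes explicit where the contraction margin is spent and would extend verbatim to time-varying closed-loop gains $m_t$ as long as $\sup_t m_t^2 < \frac12$, where the paper's explicit unrolling with a single $m$ would need rewriting.
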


\begin{proof}
Denote $m = a+b k_\star$ and $v_t = u_t - k_\star x_t$ for all $t \geq 1$. 
Then,
$
    x_{t+1} 
    =
    a x_t + b (u_t - k_\star x_t + k_\star x_t) + w_t
    =
    m x_t + b v_t + w_t
    ,
$
and by unfolding the recursion and using $x_1 = 0$ we obtain
\[
    x_{T+1} 
    = 
    \sum_{t=1}^{T} m^{T-t} b v_t + \sum_{t=1}^{T} m^{T-t} w_t
    ,
\]
hence
\begin{align*}
    \EE{[x_{T+1}^2]}
    \le
    2b^2 \EE{\prn{\sum_{t=1}^{T} m^{T-t} v_t}^2}
    +
    2 \EE{\prn{\sum_{t=1}^{T} m^{T-t} w_t}^2}
    ,
\end{align*}
Now, observe that
$$
    |m| 
    = 
    |a + b k_\star|
    = 
    \Big| a - b \cdot \frac{a b p_\star}{1 + b^2 p_\star} \Big|
    =
    \Big| \frac{a}{1 + b^2 p_\star} \Big|
    \leq
    |a|
    \leq
    \frac{1}{\sqrt{5}}
    .
$$
Using this bound and the Cauchy-Schwartz inequality, we have
\begin{align*}
    \EE{\prn{\sum_{t=1}^{T} m^{T-t} v_t}^2}
    \le
    \sum_{t=1}^T m^{2(T-t)} \cdot \EEBrk{\sum_{t=1}^T v_t^2}
    &\le
    \frac{1}{1-m^2} \EEBrk{\sum_{t=1}^T v_t^2}
    \le
    \frac{5}{4} \EEBrk{\sum_{t=1}^T v_t^2}
    .
\end{align*}
Further, as the noise terms $w_1,\ldots,w_T$ are i.i.d. and have variance $\sigma^2$,
\begin{align*}
    \EE{\prn{\sum_{t=1}^{T} m^{T-t} w_t}^2}
    =
    \sum_{t=1}^{T} m^{2(T-t)} \EE{[w_t^2]}
    \leq
    \frac{1}{1-m^2} \sigma^2
    \leq
    \frac{5}{4} \sigma^2
    .
\end{align*}
Combining inequalities, the lemma follows.
\end{proof}

\begin{proof}[of \cref{lem:lb1}]
Since $1 + b^2 p_\star \geq 1$ and $p_\star \leq 5 / 4$ (see \cref{eq:lb-kp-bounds}), \cref{lem:regret-representation} lower bounds the regret as
\begin{align*}
    \EEBrk[0]{R_T}
    \geq
    \EEBrk[3]{ \sum_{t=1}^T (u_t - k_\star x_t)^2 } - \frac{5}{4}\EEBrk[0]{x_{T+1}^2}
    .
\end{align*}
Plugging in the bound of \cref{lem:x-norm-bound} and the assumption that $b^2 = \epsilon \leq 1/400$, we obtain
\begin{align} \label{eq:lb1}
    \EEBrk[0]{R_T}
    \geq
    \frac{99}{100} \EEBrk[3]{ \sum_{t=1}^T (u_t - k_\star x_t)^2 } - 4 \sigma^2
    .
\end{align}
On the other hand, note that $u_t^2 \le 2 (u_t - k_\star x_t)^2 + 2 k_\star^2 x_t^2$, and so 
\begin{align*}
	\EEBrk[3]{ \sum_{t=1}^T u_t^2 }
	\le 
	2 \EEBrk[3]{ \sum_{t=1}^{T} (u_t - k_{\star} x_{t})^2 } + 2k_\star^2 \EEBrk[3]{ \sum_{t=1}^{T} x_{t}^2 }
	.
\end{align*}
Further, since $J(k_\star) = \sigma^2 p_\star \leq \frac{5}{4}\sigma^2$ we have
\begin{align*}
    \EE{\Bigg[\sum_{t=1}^T x_t^2\Bigg]}
    \leq
    \EE{\Bigg[\sum_{t=1}^T (x_t^2 + u_t^2)\Bigg]}
    =
    \EE{[R_T]} + T \EE{[J(k_\star)]}
    \leq
    \EE{[R_T]} + \frac{5}{4}\sigma^2 T
    .
\end{align*}
Therefore,
\begin{align} \label{eq:lb2}
    \EE{\Bigg[ \sum_{t=1}^T u_t^2 \Bigg]}
    \leq
    2 \EE{\Bigg[ \sum_{t=1}^{T} (u_t - k_{\star} x_{t})^{2} \Bigg]}
    + 2k_\star^2 \EE{[ R_T ]} + \frac{5}{2}\sigma^2 k_\star^2 T
    .
\end{align}
Combining \cref{eq:lb1,eq:lb2} and recalling that $2 k_\star^2 \leq \epsilon \leq 1$ (see \cref{eq:lb-kp-bounds}), results with
\begin{align*}
    \EE{\Bigg[ \sum_{t=1}^T u_t^2 \Bigg]}
    \leq
    2\prn[!]{\frac{100}{99}\EE{[ R_T ]} + 5\sigma^2} + 2k_\star^2 \EE{[ R_T ]} + \frac{5}{2}\sigma^2 k_\star^2 T
    \leq
    3 \EE{[ R_T ]} + \frac{5}{2}\sigma^2 k_\star^2 T + 12\sigma^2
    ,
\end{align*}
and changing sides yields the second part of the lemma, thus concluding the proof.
\end{proof}

\begin{proof}[of \cref{lem:many-large-x}]
Let $Z$ be a standard Gaussian random variable. 
Then, using a standard Gaussian tail lower bound,
\begin{align*}
    \Pr \bigg[\abs{w_{t-1}} \ge \frac{2\sigma}{5} \bigg]
    =
    \Pr \bigg[\abs{Z} \ge \frac{2}{5} \bigg]
    \ge
    \frac{17}{25}.
\end{align*}
Now, recall that $x_t = a x_{t-1} + b u_{t-1} + w_{t-1}$ and notice that, as the learning algorithm is deterministic, both $x_{t-1}$ and $u_{t-1}$ are determined conditioned on $x_1,\ldots,x_{t-1}$. 
We next aim to lower bound $\Pr[|x_t| > 2 \sigma / 5 \mid x_1,\ldots,x_{t-1}]$ which we claim that, as $w_{t-1}$ is a zero-mean Gaussian random variable, is minimized when $a x_{t-1} + b u_{t-1} = 0$.
Therefore,
\begin{align*}
    \Pr \bigg[|x_t| > \frac{2 \sigma}{5} \Bigm| x_1,\ldots,x_{t-1} \bigg] 
    \ge 
    \Pr \bigg[|w_{t-1}| > \frac{2 \sigma}{5} \bigg]
    \ge 
    \frac{17}{25}.
\end{align*}
Denote by $I_t = \indEvent{|x_t| > 2 \sigma/5}$. Then, by Azuma's concentration inequality we have that with probability at least $7/8$,
\begin{align*}
    \sum_{t=1}^T I_t 
    \ge
    \sum_{t=1}^T \EEBrk{I_t \mid x_1,\ldots,x_{t-1}} - \sqrt{\frac{T}{2} \log 8}
    \ge
    \frac{17}{25} T - \sqrt{2T}
    \ge
    \frac{2}{3} T,
\end{align*}
where for the last inequality we used the assumption that $T \ge 12000$.
\end{proof}

\begin{proof}[of \cref{lem:tv-distance1}]
First, using Pinsker's inequality yields
\begin{align} \label{eq:pinsker}
    \TV{\Pr_{+}[x^{(T)}]}{\Pr_{-}[x^{(T)}]}
    \le
    \sqrt{\frac{1}{2} \KL{\Pr_{+}[x^{(T)}]}{\Pr_{-}[x^{(T)}]}}
    ~,
\end{align}
and by the chain rule of the KL divergence
\begin{equation} \label{eq:klchainrule}
    \KL{\Pr_{+}[x^{(T)}]}{\Pr_{-}[x^{(T)}]}
    =
    \sum_{t=1}^T \EEBrk{\KL{\Pr_{+}[x_t \mid x^{(t-1)}]}{\Pr_{-}[x_t \mid x^{(t-1)}]}}
    .
\end{equation}
Next, let $\EE[+]{}$ and $\EE[-]{}$ denote the expectations conditioned on whether $\chi = 1$ or $\chi = -1$ respectively.
Observe that as the learning algorithm is deterministic, the sequence of actions $u_1,\ldots,u_{t-1}$ is determined given~$x^{(t-1)}$. 
As such, given $x^{(t-1)}$, the random variable $x_t$ is Gaussian with variance $\sigma^2$ and expectation $a x_{t-1} + \sqrt{\epsilon} \chi u_{t-1}$. 
Therefore, by a standard formula for the KL divergence between Gaussian random variables, we have
\begin{align*}
    \KL{\Pr_{+}[x_t \mid x^{(t-1)}]}{\Pr_{-}[x_t \mid x^{(t-1)}]}
    &=
    \frac{1}{2\sigma^2} \EE[+]{\big( (a x_{t-1} + \sqrt{\epsilon} u_{t-1}) - (a x_{t-1} - \sqrt{\epsilon} u_{t-1}) \big)^2} \\
    &=
    \frac{1}{2\sigma^2} \EE[+]{\big( 2 \sqrt{\epsilon} u_{t-1} \big)^2} \\
    &=
    \frac{2 \epsilon}{\sigma^2} \EE[+]{[u_{t-1}^2]}
    ,
\end{align*}
unless $t=1$ in which case 
$
    \KL{\Pr_{+}[x_1]}{\Pr_{-}[x_1]} = 0
$
since $x_{1}$ is fixed.
Using this bound in \cref{eq:klchainrule} and substituting into \cref{eq:pinsker} yields
\begin{align*}
    \TV{\Pr_{+}[x^{(T)}]}{\Pr_{-}[x^{(T)}]}
    \leq
    \sqrt{ \frac{\epsilon}{\sigma^2} \EE[+]{\Bigg[ \sum_{t=1}^T u_t^2 \Bigg]} }
    ~.
\end{align*}
Similarly, switching the roles of $\Pr_{+}$ and $\Pr_{-}$, we get the bound
\begin{align*}
    \TV{\Pr_{+}[x^{(T)}]}{\Pr_{-}[x^{(T)}]}
    \leq
    \sqrt{ \frac{\epsilon}{\sigma^2} \EE[-]{\Bigg[ \sum_{t=1}^T u_t^2 \Bigg]} }
    ~.
\end{align*}
Averaging the two inequalities, using the concavity of the square root, and since $\EE{[\cdot]} = \tfrac12 \EE[+]{[\cdot]} + \tfrac12 \EE[-]{[\cdot]}$, we obtain our claim.
\end{proof}

\section{Technical Lemmas}

\subsection{Noise Bounds}

The following theorem is a variant of the Hanson-Wright inequality \citep{hanson1971bound,wright1973bound} which can be found in \citet{hsu2012tail}.
\begin{theorem} \label{thm:hansonWright}
	Let $x \sim \gaussDist{0}{I}$ be a Gaussian random vector,, let $A \in \RR[m \times n]$ and define $\Sigma = A^T A$. Then we have that
	\begin{equation*}
		\PP{\norm{Ax}^2 > \tr{\Sigma} + 2\sqrt{\tr{\Sigma^2}z} + 2 \norm{\Sigma} z}
		\le
		\exp\prn{-z}
		,\qquad
		\text{ for all } z \ge 0.
	\end{equation*}
\end{theorem}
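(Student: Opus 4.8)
The plan is to diagonalize the quadratic form and reduce it to a weighted sum of independent $\chi^2_1$ variables. Since $\Sigma = A^T A$ is symmetric positive semidefinite, I would write its spectral decomposition $\Sigma = U \Lambda U^T$ with $U$ orthogonal and $\Lambda = \mathrm{diag}(\lambda_1, \dots, \lambda_n)$, $\lambda_i \ge 0$. Then $\norm{Ax}^2 = x^T \Sigma x = (U^T x)^T \Lambda (U^T x)$, and because $x \sim \gaussDist{0}{I}$ is rotationally invariant the vector $U^T x$ again has law $\gaussDist{0}{I}$; hence $\norm{Ax}^2$ is distributed as $\sum_{i=1}^n \lambda_i g_i^2$ for i.i.d.\ standard Gaussians $g_1, \dots, g_n$. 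The point is that the three quantities appearing in the bound are exactly the relevant spectral invariants: $\tr{\Sigma} = \sum_i \lambda_i$, $\tr{\Sigma^2} = \sum_i \lambda_i^2$, and $\norm{\Sigma} = \max_i \lambda_i$, so it suffices to establish a tail bound for $\sum_i \lambda_i g_i^2$ expressed through these.

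Next I would control the moment generating function of the centered variable $Z = \norm{Ax}^2 - \tr{\Sigma} = \sum_{i=1}^n \lambda_i (g_i^2 - 1)$. From the standard identity $\EEBrk{e^{s g_i^2}} = (1 - 2s)^{-1/2}$ (valid for $s < 1/2$) and independence, for every $0 < s < 1/(2\norm{\Sigma})$ one obtains $\log \EEBrk{e^{sZ}} = \sum_{i=1}^n (-s\lambda_i - \tfrac12 \log(1 - 2s\lambda_i))$. The analytic core of the argument is the elementary inequality $-u - \log(1-u) \le \frac{u^2}{2(1-u)}$ for $u \in [0,1)$, which follows by comparing the power series $\sum_{k \ge 2} u^k/k \le \sum_{k \ge 2} u^k/2$. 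Applying it with $u = 2s\lambda_i$ and lower-bounding each denominator by $1 - 2s\norm{\Sigma}$ gives the sub-gamma estimate
\[
    \log \EEBrk{e^{sZ}}
    \le
    \sum_{i=1}^n \frac{s^2 \lambda_i^2}{1 - 2s\lambda_i}
    \le
    \frac{s^2 \sum_{i=1}^n \lambda_i^2}{1 - 2s\norm{\Sigma}}
    =
    \frac{s^2 \tr{\Sigma^2}}{1 - 2s\norm{\Sigma}}
    .
\]

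Finally I would combine this with the Chernoff bound $\PP{Z \ge t} \le \exp(-st + \log \EEBrk{e^{sZ}})$ and optimize over $s$. Writing the MGF estimate in the canonical sub-gamma form $s^2 \nu / (2(1 - cs))$ with variance factor $\nu = 2\tr{\Sigma^2}$ and scale $c = 2\norm{\Sigma}$, the standard sub-gamma deviation inequality yields $\PP{Z \ge \sqrt{2\nu z} + c z} \le e^{-z}$ for all $z \ge 0$. Substituting the values of $\nu$ and $c$ turns the threshold $\sqrt{2\nu z} + cz$ into precisely $2\sqrt{\tr{\Sigma^2} z} + 2\norm{\Sigma} z$, and since $Z = \norm{Ax}^2 - \tr{\Sigma}$ this is exactly the claimed inequality. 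The step requiring the most care is this last one: obtaining the clean closed form with the specific constants $2$ and $2$ hinges on the right choice of the free parameter $s$ (as in the Laurent--Massart argument), so I expect verifying that the optimizing $s$ stays in the admissible range $(0, 1/(2\norm{\Sigma}))$ and produces exactly the stated threshold to be the main obstacle.
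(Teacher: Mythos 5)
Your proposal is correct, but note that the paper itself contains no proof to compare against: Theorem~\ref{thm:hansonWright} is imported verbatim, with a citation to Hsu, Kakade and Zhang (2012), as a known variant of the Hanson--Wright inequality. Your blind derivation is the classical Laurent--Massart argument, and every step checks out: the rotation-invariance reduction to $\sum_i \lambda_i g_i^2$ is valid since $\norm{Ax}^2 = x^T \Sigma x$ and $U^T x \sim \gaussDist{0}{I}$; the identity $-s\lambda_i - \tfrac12\log(1-2s\lambda_i) = \tfrac12\prn{-u - \log(1-u)}$ with $u = 2s\lambda_i$, combined with $\sum_{k\ge 2} u^k/k \le \tfrac12 \sum_{k\ge 2} u^k$, does give $\log \EEBrk{e^{sZ}} \le s^2\tr{\Sigma^2}/(1-2s\norm{\Sigma})$ on $0 < s < 1/(2\norm{\Sigma})$; and the final sub-gamma step, which you flagged as the main obstacle, is in fact routine: with $\nu = 2\tr{\Sigma^2}$ and $c = 2\norm{\Sigma}$ one has
\begin{equation*}
    \sup_{0<s<1/c}\,\prn*{st - \frac{\nu s^2}{2(1-cs)}}
    =
    \frac{\nu}{c^2}\, h_1\!\prn*{\frac{ct}{\nu}},
    \qquad
    h_1(u) = 1 + u - \sqrt{1+2u},
\end{equation*}
attained at $s^\star = \tfrac{1}{c}\prn[1]{1 - (1+2ct/\nu)^{-1/2}} \in (0, 1/c)$, and since $h_1^{-1}(z) = z + \sqrt{2z}$, the threshold $t = \sqrt{2\nu z} + cz = 2\sqrt{\tr{\Sigma^2}\,z} + 2\norm{\Sigma} z$ yields exactly $e^{-z}$ (degenerate cases $\Sigma = 0$ or $z = 0$ are trivial). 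The comparison worth making is with the cited source rather than the paper: Hsu et al.\ prove a strictly more general statement---quadratic forms $\norm{Ax + b}^2$ of sub-Gaussian, not necessarily Gaussian, vectors---via a different MGF comparison technique, and the paper buys that generality for free by citation; your route is more elementary and fully self-contained, but works only in the Gaussian case, which is all the paper needs (the noise $w_t$ and exploration noise $\uNoise$ are Gaussian).
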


The following lemma is a direct corollary of \cref{thm:hansonWright}.
\begin{lemma} \label{lemma:noiseBound}
	Let $w_t \in \RR[d]$ for $t = 1, \ldots, T$ be i.i.d.~random variables with distribution $\gaussDist{0}{\noiseStd^2 I}$. Suppose that $T > 2$, then with probability at least $1 - \delta$ we have that
	\begin{equation*}
	\max_{1 \le t \le T} \norm{w_t} \le \noiseStd \sqrt{5 d \log \frac{T}{\delta}}
	.
	\end{equation*}
\end{lemma}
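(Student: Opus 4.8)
The plan is to apply the Hanson--Wright bound of \cref{thm:hansonWright} to each noise vector individually and then to take a union bound over the $T$ time steps.

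First I would fix an index $t$ and write $w_t = \noiseStd g$ with $g \sim \gaussDist{0}{I}$, so that \cref{thm:hansonWright} applies directly with $A = \noiseStd I \in \RR[d \times d]$ and $\Sigma = A^T A = \noiseStd^2 I$. In this case $\tr{\Sigma} = d \noiseStd^2$, $\tr{\Sigma^2} = d \noiseStd^4$ and $\norm{\Sigma} = \noiseStd^2$, so the theorem yields, for every $z \ge 0$,
\[
    \PP{\norm{w_t}^2 > \noiseStd^2\prn{d + 2\sqrt{dz} + 2z}} \le e^{-z}.
\]

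Next I would choose $z = \log(T/\delta)$, which makes the right-hand side equal to $\delta/T$, and simplify the threshold appearing inside the probability. The AM--GM inequality $2\sqrt{dz} \le d + z$ gives $d + 2\sqrt{dz} + 2z \le 2d + 3z$. The key observation is that $T > 2$ forces $T \ge 3$ (as $T$ is integral), and since $\delta \le 1$ this yields $z = \log(T/\delta) \ge \log 3 > 1$; combined with $d \ge 1$ one gets $2d + 3z \le 2dz + 3dz = 5dz$, so the threshold is at most $5 d \noiseStd^2 \log(T/\delta)$. Hence $\PP{\norm{w_t}^2 > 5 d \noiseStd^2 \log(T/\delta)} \le \delta/T$ for each fixed $t$.

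Finally, a union bound over $t = 1, \dots, T$ gives $\PP{\max_{1 \le t \le T}\norm{w_t}^2 > 5 d \noiseStd^2 \log(T/\delta)} \le \delta$, and taking square roots delivers the claim. The only step that requires any care is the constant chase: one must check that the hypothesis $T > 2$ is exactly what guarantees $z \ge 1$, which is what lets the lower-order contributions $d$ and $2\sqrt{dz}$ be absorbed into the $5dz$ bound. Without this the constant $5$ would not suffice when $\delta$ is close to $1$, so I would flag this as the single substantive point in an otherwise routine computation.
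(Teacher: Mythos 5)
Your proof is correct and follows essentially the same route as the paper's: both apply \cref{thm:hansonWright} with $A = \noiseStd I$, set $z = \log(T/\delta)$, use $T > 2$ (hence $z \ge 1$, where your explicit appeal to integrality of $T$ is in fact slightly more careful than the paper) to absorb the lower-order terms into $5dz$, and conclude with a union bound over $t$. The only cosmetic difference is your intermediate AM--GM step $2\sqrt{dz} \le d + z$, where the paper bounds $d + 2\sqrt{dz} + 2z \le 5dz$ directly for $z \ge 1$.
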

\begin{proof}
	Consider \cref{thm:hansonWright} with $A = \noiseStd I$ and thus $\Sigma = \noiseStd^2 I$. We then have that $\tr{\Sigma} = d \noiseStd^2$, $\norm{\Sigma} \le \noiseStd^2$ and $\tr{\Sigma^2} \le \norm{\Sigma} \tr{\Sigma} \le d \noiseStd^4$. We conclude that for $z \ge 1$ we have that
	\begin{equation*}
		\tr{\Sigma} + 2\sqrt{\tr{\Sigma^2}z} + 2 \norm{\Sigma} z
		\le
		\noiseStd^2 d + 2 \noiseStd^2 \sqrt{d z} + 2\noiseStd^2 z
		\le
		5 \noiseStd^2 d z.
	\end{equation*}
	Now, for $x \sim \gaussDist{0}{I}$ we have that $w_t \overset{d}{=} A x$ (equals in distribution). We thus have that for $z \ge 1$
	\begin{align*}
		\PP{\norm{w_t} > \noiseStd \sqrt{5 d z}}
		\le
		\PP{\norm{Ax} > \sqrt{\tr{\Sigma} + 2\sqrt{\tr{\Sigma^2}z} + 2 \norm{\Sigma} z}}
		\le
		\exp \prn{-z}.
	\end{align*}
	Denoting $z = \log \frac{T}{\delta}$, the assumption $T > 2$ ensures that $z \ge 1$ and thus
	$
		\PP{\norm{w_t} > \noiseStd \sqrt{5 d \log \frac{T}{\delta}}}
		\le
		\frac{\delta}{T}
		.
	$
	Performing a union bound over $1 \le t \le T$ we conclude that
	$$
		\PP{\max_{1 \le t \le T}\norm{w_t} > \noiseStd \sqrt{5 d \log \frac{T}{\delta}}}
		\le
		\delta,
	$$
	and taking the complement we obtain the desired.
\end{proof}

\begin{lemma}[Expected maximum noise] \label{lemma:expectedMaxNoise}
	Let $E$ be an event such that $\PP{E} \le \delta$ for some $\delta \in \brk[s]{0, 1}$ and let $w_t \in \RR[d]$ for $t = 1, \ldots, T$ be i.i.d.~ random variables with distribution $\gaussDist{0}{\noiseStd^2 I}$.
	Suppose $T > 2$, then we have that
	\begin{enumerate}
		\item $\EEBrk{\max_{1 \le t \le T} \norm{w_t}^2} \le 5 \noiseStd^2 d \log 3 T$;
		\item $\EEBrk{\indEvent{E} \max_{1 \le t \le T} \norm{w_t}^2}
				\le
				5 \noiseStd^2 d \delta \log \frac{3 T}{\delta}.$
	\end{enumerate}
\end{lemma}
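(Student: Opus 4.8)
The plan is to obtain both estimates from the layer-cake representation $\EEBrk{Z} = \int_0^\infty \PP{Z > s}\,ds$ applied to $Z = M := \max_{1\le t\le T}\norm{w_t}^2$, driven by the exponential tail on $M$ that is already contained in \cref{lemma:noiseBound}. Squaring the conclusion of that lemma gives $\PP{M > 5\noiseStd^2 d\log(T/\delta')}\le\delta'$ for any $\delta'\in(0,1]$; reparametrising by $s = 5\noiseStd^2 d\log(T/\delta')$ (equivalently $\delta' = T\exp(-s/(5\noiseStd^2 d))$, which lies in $(0,1]$ exactly when $s\ge 5\noiseStd^2 d\log T$) this reads
\[
    \PP{M > s} \le T\exp\prn{-\frac{s}{5\noiseStd^2 d}}, \qquad s \ge 5\noiseStd^2 d\log T.
\]
This is the only probabilistic input; everything after is a deterministic integral.

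For the second (stronger) claim I would write $\EEBrk{\indEvent{E} M} = \int_0^\infty \PP{\brc{M>s}\cap E}\,ds$ and bound the integrand by $\min\brc{\delta, \PP{M>s}}$, using $\PP{\brc{M>s}\cap E}\le\PP{E}\le\delta$ for the first term of the minimum and the displayed tail for the second. I then split the integral at the crossover point $s_0 := 5\noiseStd^2 d\log(T/\delta)$; since $\delta\le1$ we have $s_0\ge 5\noiseStd^2 d\log T$, so the exponential bound is valid throughout $[s_0,\infty)$. The flat part contributes $\int_0^{s_0}\delta\,ds = 5\noiseStd^2 d\,\delta\log(T/\delta)$, and the tail contributes $\int_{s_0}^\infty T\exp(-s/(5\noiseStd^2 d))\,ds = 5\noiseStd^2 d\,\delta$. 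Adding the two and using $\log(T/\delta)+1 = \log(eT/\delta)\le\log(3T/\delta)$ (as $e<3$) yields the second bound.

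The first claim is then just the specialisation in which $E$ is the sure event and $\delta = 1$: the split point becomes $s_0 = 5\noiseStd^2 d\log T$, the flat part gives $5\noiseStd^2 d\log T$, the tail gives $5\noiseStd^2 d$, and $\log T + 1\le\log 3T$ finishes it (alternatively it can be proved identically and independently). I do not anticipate a genuine obstacle: the computation is elementary once the tail is in hand, and the single point that needs attention is keeping the split point $s_0$ inside the region $s\ge 5\noiseStd^2 d\log T$ where the exponential tail is licensed --- which is automatic from $\delta\le1$. The mild hypothesis $T>2$ enters only through \cref{lemma:noiseBound} itself.
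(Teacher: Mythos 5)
Your proposal is correct and follows essentially the same route as the paper's proof: the same exponential tail $\PP{M > s} \le T\exp\prn{-s/(5\noiseStd^2 d)}$ for $s \ge 5\noiseStd^2 d \log T$ extracted from \cref{lemma:noiseBound}, the same tail-sum formula with integrand bounded by $\min\brc{\delta, \PP{M > s}}$, the same split point $5\noiseStd^2 d \log(T/\delta)$, and the same final simplification $1 + \log(T/\delta) \le \log(3T/\delta)$. The only cosmetic difference is that you obtain the first claim as the $\delta = 1$ specialisation of the second (legitimate, since the sure event with $\delta=1$ satisfies the hypothesis), whereas the paper carries out the same computation separately for each claim.
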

\begin{proof}
	Recall that from \cref{lemma:noiseBound} we have that for all $x \ge 5 \noiseStd^2 d \log T$
	\begin{equation*}
	\PP{\max_{1 \le t \le T}\norm{w_t}^2 > x}
	\le
	T \exp \prn{- \frac{x}{5 \noiseStd^2 d}}.
	\end{equation*}
	Applying the tail sum formula we get that
	\begin{align*}
	\EEBrk{\max_{1 \le t \le T}\norm{w_t}^2}
	&=
	\int_{0}^{\infty} \PP{\max_{1 \le t \le T}\norm{w_t}^2 > x} dx \\
	&\le
	5 \noiseStd^2 d \log T + \int_{5 \noiseStd^2 d \log T}^{\infty} T \exp \prn{- \frac{x}{5 \noiseStd^2 d}} dx \\
	&\le
	5 \noiseStd^2 d \log 3 T,
	\end{align*}
	proving the first part of the lemma. For the second part notice that
	$
		\PP{\indEvent{E} \max_{1 \le t \le T} \norm{w_t}^2 > x}
		\le \min\brc{\PP{E}, \PP{\max_{1 \le t \le T} \norm{w_t}^2 > x}}
		.
	$
	So, applying the tail sum formula we get that
	\begin{align*}
		\EEBrk{\indEvent{E}\max_{1 \le t \le T}\norm{w_t}^2}
		&=
		\int_{0}^{\infty} \PP{\indEvent{E}\max_{1 \le t \le T}\norm{w_t}^2 > x} dx \\
		&\le
		\int_{0}^{5\noiseStd^2 d \log \frac{T}{\delta}} \PP{E} dx + \int_{5\noiseStd^2 d \log \frac{T}{\delta}}^{\infty} \PP{\max_{1 \le t \le T}\norm{w_t}^2 > x} dx \\
		&\le
		5 \noiseStd^2 d \delta \log \frac{T}{\delta} + \int_{5 \noiseStd^2 d \log \frac{T}{\delta}}^{\infty} T \exp \prn{- \frac{x}{5 \noiseStd^2 d}} dx \\
		&=
		5 \noiseStd^2 d \delta \prn{1 + \log \frac{T}{\delta}} \\
		&\le
		5 \noiseStd^2 d \delta \log \frac{3 T}{\delta},
	\end{align*}
	proving the second part and concluding the proof.
\end{proof}

\subsection{Estimation auxiliary lemmas}
The following is due to \citet{cohen2019learning}. Here we state the result for a general sequence of conditionally Gaussian vectors but the proof follows without change.
\begin{lemma}[Theorem 20 of \citealp{cohen2019learning}] \label{lemma:VtLowerBound}
	Let $z_t$ for $t=1, 2, \ldots$ be a sequence random variables that is adapted to a filtration $\seqDef{\mathcal{F}_t}{t=1}{\infty}$. Suppose that $z_t$ are conditionally Gaussian on $\mathcal{F}_{t-1}$ and that $\EEBrk{z_t z_t^T ~\big| \mathcal{F}_{t-1}} \succeq \sigma_z^2 I$ for some fixed $\sigma_z^2 > 0$. Then for $t \ge 200 d \log \frac{12}{\delta}$ we have that with probability at least $1 - \delta$
	\begin{equation*}
	\sum_{s=1}^{t} z_s z_s^T \succeq \frac{t \sigma_z^2}{40}I.
	\end{equation*}
\end{lemma}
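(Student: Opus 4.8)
The plan is to reduce the matrix statement to a one--dimensional sub--exponential lower--tail bound and then lift it to all directions. Since $\sum_{s=1}^{t} z_s z_s^T \succeq \frac{t\sigma_z^2}{40} I$ is equivalent to $\min_{\lVert v\rVert = 1}\sum_{s=1}^{t}(v^Tz_s)^2 \ge \frac{t\sigma_z^2}{40}$, I would fix a unit vector $v$ and study $S_t(v) = \sum_{s=1}^{t}(v^Tz_s)^2$. Conditionally on $\mathcal F_{s-1}$ the scalar $v^Tz_s$ is Gaussian with some mean $m_s$ and variance $\tau_s^2$, and the hypothesis $\EEBrk{z_sz_s^T\mid\mathcal F_{s-1}}\succeq\sigma_z^2 I$ gives exactly $\tau_s^2 + m_s^2 = v^T\EEBrk{z_sz_s^T\mid\mathcal F_{s-1}}v \ge \sigma_z^2$, while $\tau_s^2\ge0$ always.

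The core step is a conditional moment--generating--function bound for the negative exponential (a lower--tail Chernoff). Using the standard Gaussian identity, for $\theta>0$,
\[
\EEBrk{e^{-\theta (v^Tz_s)^2}\mid\mathcal F_{s-1}}
=
\frac{1}{\sqrt{1+2\theta\tau_s^2}}\exp\prn{-\frac{\theta m_s^2}{1+2\theta\tau_s^2}}.
\]
I would then show that, subject only to $\tau_s^2+m_s^2\ge\sigma_z^2$, the right--hand side is maximised at $\tau_s^2=\sigma_z^2,\,m_s^2=0$, so that it is at most $(1+2\theta\sigma_z^2)^{-1/2}$; this is a one--variable calculation (substituting $m_s^2=\sigma_z^2-\tau_s^2$, the logarithm of the bound has derivative $2\theta^2(\sigma_z^2-\tau_s^2)/(1+2\theta\tau_s^2)^2\ge0$). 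Consequently $L_t = e^{-\theta S_t(v)}(1+2\theta\sigma_z^2)^{t/2}$ is a supermartingale with $\EEBrk{L_t}\le1$, and Markov's inequality yields, for every $a>0$,
\[
\PP{S_t(v)\le a}\le e^{\theta a}(1+2\theta\sigma_z^2)^{-t/2}.
\]
Taking $\theta=1/(2\sigma_z^2)$ and $a=t\sigma_z^2/40$ gives a per--direction failure probability at most $e^{-ct}$ with $c=\tfrac12\ln 2-\tfrac1{80}>\tfrac13$. This core bound uses only the stated second--moment hypothesis, and nothing else.

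To pass from a single $v$ to the uniform bound I would take a $\tfrac14$--net $\mathcal N$ of $S^{d-1}$, of cardinality $\lvert\mathcal N\rvert\le 9^d$, and union bound the scalar estimate over $\mathcal N$; this is precisely where the factor $d$ in the threshold $t\ge 200 d\log(12/\delta)$ originates (through $\log\lvert\mathcal N\rvert\le d\log 9$). Writing $A=\sum_{s\le t}z_sz_s^T\succeq0$ and $v=v'+\delta$ with $v'\in\mathcal N$, $\lVert\delta\rVert\le\tfrac14$, one has $v^TAv\ge v'^TAv'-2\lVert\delta\rVert\,\lVert A^{1/2}v'\rVert\,\lVert A^{1/2}\rVert$, which brings in $\lambda_{\max}(A)$.

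The main obstacle is exactly this last point: converting the net estimate into a genuine minimum--eigenvalue bound requires taming $\lambda_{\max}(A)$, which the lower bound on the conditional second moment alone cannot provide. I would resolve it by the complementary upper--tail Chernoff --- the same supermartingale argument with $+\theta$ in place of $-\theta$ --- applied on the net, which yields $\lambda_{\max}(A)=O(t)$ with high probability whenever the conditional covariances are also bounded above (as they are in every application here, where $\EEBrk{z_sz_s^T\mid\mathcal F_{s-1}}=\mu_s\mu_s^T+\sigma^2 I$ has conditional covariance exactly $\sigma^2 I$); choosing the net fineness of order $\sigma_z^2/\lambda_{\max}(A)$ then closes the argument. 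An alternative that avoids the net and the $\lambda_{\max}$ issue entirely is the trace--exponential method of Lieb and Tropp: it reduces the problem to the operator inequality $\EEBrk{e^{-\theta z_sz_s^T}\mid\mathcal F_{s-1}}\preceq\rho(\theta) I$, which I have verified in the isotropic model gives $\rho(\theta)=\frac{1+\theta\sigma_z^2}{1+2\theta\sigma_z^2}<1$ and then delivers $\PP{\lambda_{\min}(A)\le\beta}\le d\,e^{\theta\beta}\rho(\theta)^t$; establishing this operator inequality in the general mean--shifted, anisotropic case is the crux of that route.
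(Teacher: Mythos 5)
You should first note that the paper itself contains no proof of this lemma: it is imported verbatim as Theorem~20 of \citet{cohen2019learning}, with the remark that the proof carries over unchanged, so your attempt has to be judged on its own terms. Your scalar core is correct and clean: the conditional MGF identity for $e^{-\theta(v^Tz_s)^2}$, the constrained maximization (your derivative $2\theta^2(\sigma_z^2-\tau_s^2)/(1+2\theta\tau_s^2)^2\ge 0$ checks out, and the remaining case $\tau_s^2>\sigma_z^2$, $m_s^2=0$ follows from monotonicity in $\tau_s^2$), the supermartingale $L_t$, and the numerics ($\tfrac12\ln 2-\tfrac1{80}>\tfrac13$, net cardinality $9^d$) are all right. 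This yields the per-direction bound $\PP{S_t(v)\le t\sigma_z^2/40}\le e^{-t/3}$, which is a perfectly good (arguably nicer) substitute for the small-ball-plus-Azuma estimate one would otherwise use.

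The genuine gap is exactly the one you flagged, and neither of your proposed repairs survives scrutiny, so the uniformity step --- the actual content of the matrix statement --- remains unproved. (i) The $\lambda_{\max}$ route: under the lemma's hypotheses there is no upper bound of any kind on $\lambda_{\max}(A)$, $A=\sum_{s\le t}z_sz_s^T$. Your parenthetical conflates bounded conditional \emph{covariance} with bounded conditional \emph{second moment}: with $z_s=\mu_s+\xi_s$, bounded covariance controls only $\sum_s\xi_s\xi_s^T$, while the adapted means $\mu_s$ enter $A$ directly through $\sum_s\mu_s\mu_s^T$ and indefinite cross terms, and they are unconstrained. Nor can you import boundedness from the applications: in this paper the lemma is applied to $z_t=x_t$ with conditional mean $\Astar x_{t-1}+\Bstar u_{t-1}$ precisely in order to bound $\PP{\goodEventA}$ and $\PP{\goodEventB}$ (\cref{lemma:goodEventA,lemma:goodEventB}), i.e., \emph{before} any state bound is available --- the means are unbounded exactly on the bad trajectories the lemma must cover, so assuming $\lambda_{\max}(A)=O(t)$ there is circular. (ii) The Lieb--Tropp route is not merely ``unverified in the mean-shifted case''; the required operator inequality is false. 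Since $e^{-\theta zz^T}=I-\frac{1-e^{-\theta\lVert z\rVert^2}}{\lVert z\rVert^2}\,zz^T$ discounts only the one-dimensional direction of $z$, taking $z=Me_1+\sigma_z g$ gives $e_2^T\,\EEBrk{e^{-\theta zz^T}}\,e_2 = 1-O(\sigma_z^2/M^2)\to 1$ as $M\to\infty$, so no uniform $\rho(\theta)<1$ exists; moreover even in the zero-mean isotropic model one computes $\rho = 1-\bigl(1-(1+2\theta\sigma_z^2)^{-d/2}\bigr)/d \ge 1-1/d$, not your dimension-free $\frac{1+\theta\sigma_z^2}{1+2\theta\sigma_z^2}$, which is the $d=1$ computation only. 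A correct completion must convert the per-direction estimate into a minimum-eigenvalue bound without any appeal to $\lambda_{\max}$ (note a lower bound on all net points does \emph{not} by itself lower-bound $\lambda_{\min}$ of a PSD matrix); that is precisely the burden carried by the cited Theorem~20, and it is the part missing from your proof.
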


\begin{lemma} \label{lemma:logDetBound}
	Let $z_s \in \RR[m]$ for $s = 1, \ldots, t-1$ be such that $\norm{z_s}^2 \le \lambda$. Define $V_t = \lambda I + \sum_{s=1}^{t-1} z_s z_s^T$ then we have that
	\begin{equation*}
	    \log \frac{\det\prn{V_t}}{\det\prn{V_1}} 
	    \leq 
	    m \log t
	.
	\end{equation*}
\end{lemma}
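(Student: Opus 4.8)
The plan is to bound $\det(V_t)$ from above using the trace together with the AM--GM inequality, and then to do a short algebraic simplification. First I would observe that the sum defining $V_1$ is empty (it runs from $s=1$ to $0$), so $V_1 = \lambda I$ and therefore $\det(V_1) = \lambda^m$. This fixes the denominator exactly and reduces the task to upper bounding $\det(V_t)$.

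Next I would estimate the trace of $V_t$. Since $\tr{z_s z_s^T} = \norm{z_s}^2 \le \lambda$ for each $s$, and there are $t-1$ such terms in addition to $\tr{\lambda I} = m\lambda$, I get
\begin{equation*}
    \tr{V_t}
    =
    m\lambda + \sum_{s=1}^{t-1}\norm{z_s}^2
    \le
    m\lambda + (t-1)\lambda
    =
    \lambda\,(m + t - 1).
\end{equation*}
The matrix $V_t$ is symmetric positive definite, so if $\mu_1,\dots,\mu_m > 0$ denote its eigenvalues, the AM--GM inequality gives $\det{V_t} = \prod_{i=1}^m \mu_i \le \big(\tfrac1m \sum_{i=1}^m \mu_i\big)^m = \big(\tr{V_t}/m\big)^m$. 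Combining this with the trace bound yields $\det{V_t} \le \big(\lambda(m+t-1)/m\big)^m$.

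Dividing by $\det(V_1) = \lambda^m$ cancels the $\lambda$ dependence and leaves $\det(V_t)/\det(V_1) \le \big((m+t-1)/m\big)^m$, so after taking logarithms it remains to check that $(m+t-1)/m \le t$. This last inequality is equivalent to $m + t - 1 \le mt$, i.e.\ to $(m-1)(t-1) \ge 0$, which holds for all integers $m \ge 1$ and $t \ge 1$; hence $\log\!\big(\det(V_t)/\det(V_1)\big) \le m\log\big((m+t-1)/m\big) \le m\log t$, as claimed. There is no real obstacle here: the argument is entirely routine, and the only point requiring a moment's care is the final simplification from $(m+t-1)/m$ to $t$, which I would justify via the nonnegativity of $(m-1)(t-1)$.
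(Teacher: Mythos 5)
Your proof is correct, and it takes a mildly but genuinely different route from the paper's. The paper bounds the determinant through the operator norm: since every eigenvalue of $V_t$ is at most $\norm{V_t} \le \lambda + \sum_{s=1}^{t-1}\norm{z_s}^2 \le \lambda t$, it gets $\det\prn{V_t} \le \norm{V_t}^m \le \prn{\lambda t}^m$ and divides by $\det\prn{V_1}=\lambda^m$ immediately, with no further algebra. You instead bound the trace and apply AM--GM to the eigenvalues, $\det\prn{V_t} \le \prn{\tr{V_t}/m}^m \le \prn{\lambda(m+t-1)/m}^m$, which costs you the extra (correctly verified) simplification $(m+t-1)/m \le t$, equivalent to $(m-1)(t-1)\ge 0$. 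What your route buys is a sharper intermediate estimate: the average eigenvalue is at most the largest, so $\prn{\tr{V_t}/m}^m \le \norm{V_t}^m$ always, and your bound is precisely the standard determinant--trace lemma used in the linear-bandit literature; for the lemma as stated, however, both estimates collapse to the same $m\log t$, so nothing is gained or lost. Both arguments rest on the same two facts --- $V_1 = \lambda I$ because the sum defining it is empty, and $\norm{z_s}^2\le\lambda$ --- and both implicitly require $\lambda>0$ (as does the lemma statement itself, for $\det\prn{V_1}$ to be nonzero and for $V_t$ to be positive definite, which your AM--GM step uses). Your proposal is complete and rigorous as written.
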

\begin{proof}
	First we have that
	\begin{equation*}
		\norm{V_t}
		\le
		\lambda + \sum_{s=1}^{t-1} \norm{z_s z_s^T}
		=
		\lambda + \sum_{s=1}^{t-1} \norm{z_s}^2
		\le
		\lambda t.
	\end{equation*}
	Now, recall that $\det\prn{V_t} \le \det \prn{\norm{V_t}^m}$ and so we have that
	\begin{align*}
		\log \frac{\det\prn{V_t}}{\det\prn{V_1}}
		\le
		\log \frac{\det\prn{\norm{V_t}^m}}{\lambda^m}
		\le
		\log \frac{\lambda^m t^m}{\lambda^m}
		=
		m \log t,
	\end{align*}
	as desired.
\end{proof}

\subsection{Strong Stability Lemmas}

The following lemma bounds the norm of the state when playing a strongly stable controller. Its proof adapts techniques from \cite{cohen2019learning}.
\begin{lemma} \label{lemma:singleControlBound}
	Suppose $K$ is a $(\kappa, \gamma)-$strongly stable controller and $s_0, s_1$ are integers such that $1 \le s_0 < s_1 \le T$. Let $x_s$ for $s = s_0, \ldots s_1$ be the sequence of states generated under the control $K$ starting from $x_{s_0}$, i.e., $x_{s+1} = \prn{\Astar + \Bstar K} x_s + w_s$ for all $s_0 \le s < s_1$. Then we have that
	\begin{equation*}
	\norm{x_t}
	\le
	\kappa(1 - \gamma)^{t - s_0}\norm{x_{s_0}} + \frac{\kappa}{\gamma} \max_{1 \le t \le T}\norm{w_t},
	\qquad \text{for all } s_0 \le t \le s_1.
	\end{equation*} 
\end{lemma}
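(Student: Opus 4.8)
The plan is to unfold the closed-loop recursion and control each resulting term using the strong-stability decomposition of the closed-loop matrix $M := \Astar + \Bstar K$. First I would invoke the definition of strong stability to write $M = H L H^{-1}$ with $\norm{L} \le 1-\gamma$ and $\norm{H}\norm{H^{-1}} \le \kappa$. Since $M^n = H L^n H^{-1}$ and the operator norm is submultiplicative, this immediately yields the key power bound $\norm{M^n} \le \norm{H}\,\norm{L}^n\,\norm{H^{-1}} \le \kappa(1-\gamma)^n$ for every integer $n \ge 0$.

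Next I would iterate the recurrence $x_{s+1} = M x_s + w_s$ from $s_0$ up to any $t$ with $s_0 \le t \le s_1$, obtaining the explicit representation
\[
    x_t = M^{t-s_0} x_{s_0} + \sum_{j=s_0}^{t-1} M^{t-1-j} w_j .
\]
Applying the triangle inequality together with the power bound from the first step gives
\[
    \norm{x_t} \le \kappa(1-\gamma)^{t-s_0}\norm{x_{s_0}} + \kappa \sum_{j=s_0}^{t-1}(1-\gamma)^{t-1-j}\norm{w_j} .
\]
The first summand is already exactly the leading term in the claimed bound, so it remains only to control the convolution sum.

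Finally I would bound the noise sum by replacing each $\norm{w_j}$ with $\max_{1 \le t \le T}\norm{w_t}$ and reindexing the exponents via $k = t-1-j$, which turns the summation into a truncated geometric series $\sum_{k=0}^{t-1-s_0}(1-\gamma)^k \le \sum_{k=0}^{\infty}(1-\gamma)^k = 1/\gamma$, where the last equality uses $0 < \gamma \le 1$. This produces the second term $\tfrac{\kappa}{\gamma}\max_{1 \le t \le T}\norm{w_t}$ and hence the stated inequality. I do not expect a genuine obstacle here: the argument is a standard geometric-decay estimate, and the only points requiring mild care are the indexing of the convolution sum and the observation that the maximum in the statement ranges over all of $1 \le t \le T$ rather than only over the summation window $s_0 \le j \le t-1$, which merely loosens the bound and therefore causes no difficulty.
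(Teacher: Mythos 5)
Your proposal is correct and follows essentially the same route as the paper: both unfold the recursion to $x_t = M^{t-s_0}x_{s_0} + \sum_{s=s_0}^{t-1} M^{t-(s+1)}w_s$, bound $\norm{M^n} \le \kappa(1-\gamma)^n$ via the strong-stability factorization $M = HLH^{-1}$, and sum the resulting geometric series to get the $\kappa/\gamma$ factor. No gaps.
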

\begin{proof}
	Denote $M = \Astar + \Bstar K$ then for $s_0 < t \le s_1$ we have that $x_t = M x_{t-1} + w_{t-1}$ and by expanding this equation we have
	\begin{equation*}
	x_t = M^{t - s_0} x_{s_0} + \sum_{s=s_0}^{t-1} M^{t - (s + 1)}w_s.
	\end{equation*}
	Recall that by strong stability we have that
	\begin{equation*}
	\norm{M^s}
	=
	\norm{H L^s H^{-1}}
	\le
	\kappa (1 - \gamma)^s.
	\end{equation*}
	To ease notation denote $W = \max_{1 \le t \le T} \norm{w_t}$. Then for $s_0 < t \le s_1$ we have that
	\begin{align*}
	\norm{x_t}
	&\le
	\norm{M^{t - s_0}} \norm{x_{s_0}} + \sum_{s=s_0}^{t-1} \norm{M^{t - (s + 1)}} \norm{w_s} \\
	&\le
	\kappa(1-\gamma)^{t - s_0} \norm{x_{s_0}} + \sum_{s=s_0}^{t-1} \kappa(1 - \gamma)^{t - (s + 1)} W \\
	&\le
	\kappa (1-\gamma)^{t - s_0} \norm{x_{s_0}} +\frac{\kappa}{\gamma} W
	.
	\qedhere
	\end{align*}
\end{proof}

The following lemma bounds the norm of the state when playing a sequence of strongly stable controllers.
\begin{lemma}
\label{lemma:multiControlBound}
	Suppose $K_1, \ldots, K_l$ are $(\kappa,\gamma)$-strongly stable controllers and $\seqDef{t_i}{i=1}{l+1}$ are integers such that $1 \le t_1 < \ldots < t_{l+1} \le T$.
	Let $x_t$ for $t = t_1, \ldots t_{l+1}$ be the sequence of states generated by starting from $x_{t_1}$ and playing controller $K_i$ at times $t_i \le t < t_{i+1}$, i.e., $x_{t+1} = \prn{\Astar + \Bstar K_i} x_t + w_t$ for all $t_i \le t < t_{i+1}$. Denote $\tau = \min_{i} \brc{t_{i+1} - t_i}$ and suppose that $\tau \ge \gamma^{-1}\log(2\kappa)$, then we have that
	\begin{equation*}
	\norm{x_t}
	\le
	3\kappa
	\max\brc{
		\frac12 \norm{x_{t_1}},
		\frac{\kappa}{\gamma} \max_{1 \le t \le T}\norm{w_t}},
	\quad \forall ~ t_1 \le t \le t_{l+1}.
	\end{equation*}
\end{lemma}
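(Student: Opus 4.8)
The plan is to reduce the multi-controller bound to repeated application of the single-controller bound of \cref{lemma:singleControlBound}, exploiting that each phase is long enough to induce a genuine contraction. Write $W = \max_{1 \le t \le T}\norm{w_t}$ and let $a_i = \norm{x_{t_i}}$ denote the norm of the state at the start of phase $i$. The first and central observation is that, since every phase has length at least $\tau \ge \gamma^{-1}\log(2\kappa)$ and $1-\gamma \le e^{-\gamma}$, the per-phase contraction factor satisfies
\begin{equation*}
\kappa(1-\gamma)^{t_{i+1}-t_i} \le \kappa e^{-\gamma\tau} \le \kappa e^{-\log(2\kappa)} = \tfrac12 .
\end{equation*}

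First I would apply \cref{lemma:singleControlBound} on each individual phase $[t_i, t_{i+1}]$, where the controller $K_i$ is fixed and $(\kappa,\gamma)$-strongly stable (note that the transition into $x_{t_{i+1}}$ is still governed by $K_i$, so the lemma applies verbatim on this interval). Evaluating its conclusion at the phase endpoint $t=t_{i+1}$ and using the contraction bound above yields the recursion
\begin{equation*}
a_{i+1} \le \kappa(1-\gamma)^{t_{i+1}-t_i} a_i + \frac{\kappa}{\gamma}W \le \tfrac12 a_i + \frac{\kappa}{\gamma}W .
\end{equation*}
Next I would unroll this recursion by a straightforward induction: if $a_i \le \max\{a_1, \tfrac{2\kappa}{\gamma}W\}$, then $a_{i+1} \le \tfrac12\max\{a_1, \tfrac{2\kappa}{\gamma}W\} + \tfrac{\kappa}{\gamma}W \le \max\{a_1, \tfrac{2\kappa}{\gamma}W\}$, so every phase-start state obeys $\norm{x_{t_i}} \le \max\{\norm{x_{t_1}}, \tfrac{2\kappa}{\gamma}W\}$.

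Finally, for an arbitrary time $t$ inside phase $i$ I would invoke \cref{lemma:singleControlBound} once more with starting point $t_i$, using the crude bound $\kappa(1-\gamma)^{t-t_i}\le \kappa$, to get $\norm{x_t}\le \kappa\norm{x_{t_i}} + \tfrac{\kappa}{\gamma}W$. Substituting the bound on $\norm{x_{t_i}}$ and writing $M = \max\{\tfrac12\norm{x_{t_1}}, \tfrac{\kappa}{\gamma}W\}$, so that $\norm{x_{t_1}}\le 2M$ and $\tfrac{2\kappa}{\gamma}W \le 2M$, this gives $\norm{x_t} \le 2\kappa M + M \le 3\kappa M$, where the last step uses $\kappa\ge 1$ to absorb the additive term; this is precisely the claimed bound. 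I do not anticipate a serious obstacle, as the argument is a contraction-plus-geometric-sum estimate; the one step requiring care is the verification that $\tau \ge \gamma^{-1}\log(2\kappa)$ forces the per-phase factor below $\tfrac12$, since the entire telescoping collapse of the noise contributions into the single term $\tfrac{2\kappa}{\gamma}W$ hinges on this constant, as does landing on the factor $3\kappa$ rather than something larger.
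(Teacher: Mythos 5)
Your proposal is correct and follows essentially the same route as the paper's proof: a per-phase application of \cref{lemma:singleControlBound}, the observation that $\tau \ge \gamma^{-1}\log(2\kappa)$ forces the phase contraction factor $\kappa(1-\gamma)^\tau$ below $\tfrac12$, the resulting recursion $a_{i+1} \le \tfrac12 a_i + \tfrac{\kappa}{\gamma}W$, and a final within-phase application with the crude bound $\kappa(1-\gamma)^{t-t_i} \le \kappa$ and $\kappa \ge 1$. The only cosmetic difference is that you close the recursion by an invariance induction whereas the paper solves the difference equation explicitly; both yield the same bound $\norm{x_{t_i}} \le \max\brc{\norm{x_{t_1}}, \tfrac{2\kappa}{\gamma}W}$.
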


\begin{proof}
	For $0 < \gamma \le 1$ it is a well known fact that $\gamma \le - \log 1 - \gamma$. Plugging this into the lower bound on $\tau$ and rearranging we get that $\kappa (1 - \gamma)^\tau \le \frac{1}{2}$.
	Now, applying \cref{lemma:singleControlBound} with $s_0 = t_i$ and $s_1 = t_{i+1}$, and taking $t = t_{i+1}$ we have that
	\begin{align*}
	\norm{x_{t_{i+1}}}
	&\le
	\kappa (1 - \gamma)^{t_{i+1} -  t_i} \norm{x_{t_i}} + \frac{\kappa}{\gamma} W \\
	&\le
	\kappa (1 - \gamma)^{\tau} \norm{x_{t_i}} + \frac{\kappa}{\gamma} W \\
	&\le
	\frac{1}{2} \norm{x_{t_i}} + \frac{\kappa}{\gamma} W,
	\end{align*}
	and solving this difference equation we get that
	\begin{equation*}
	\norm{x_{t_i}}
	\le
	\frac{2\kappa}{\gamma}W + \prn{\norm{x_{t_1}} - \frac{2\kappa}{\gamma}W} 2^{1-i}
	\le
	\max\brc{\norm{x_{t_1}}, \frac{2\kappa}{\gamma}W}.
	\end{equation*}
	Plugging this result back into \cref{lemma:singleControlBound} we have that for $t_i < t \le t_{i+1}$
	\begin{align*}
	\norm{x_t}
	&\le
	\kappa (1-\gamma)^{t - t_i}\max\brc{\norm{x_{t_1}}, \frac{2\kappa}{\gamma}W} + \frac{\kappa}{\gamma} W \\
	&\le
	\kappa \max\brc{\norm{x_{t_1}}, \frac{2\kappa}{\gamma}W} + \frac{\kappa}{\gamma} W \\
	&\le
	\kappa \max\brc{\frac{3\norm{x_{t_1}}}{2}, \frac{3\kappa}{\gamma}W},
	\end{align*}
	where the last inequality used the fact that $\kappa \ge 1$. This is true for all $i$ and thus for all $t_1 \le t \le t_{l+1}$.
\end{proof}

The next two lemmas require the following well known result in linear control theory (see, e.g., \citealp{bertsekas1995dynamic}). We have that $\Jof{K} = \noiseStd^2 \tr{P}$ where $P$ is a positive definite solution of
\begin{equation} \label{eq:Pbellman}
P = Q + K^T R K + \prn{\Astar +\Bstar K}^T P \prn{\Astar + \Bstar K}.
\end{equation}

The following lemma relates the expected cost of playing controller $K$ for $t$ rounds to the infinite horizon cost of $K$.
\begin{lemma}
\label{lemma:steadyStateCostConvergence}
	Suppose $K$ is a $(\kappa,\gamma)-$strongly stable controller and let $x_s$ for $s = 1, \ldots t$ be the sequence of states generated under the control $K$ starting from $x_1$, i.e., $x_{s+1} = \prn{\Astar + \Bstar K} x_s + w_s$ for all $1 \le s < t$. Then we have that
	\begin{equation*}
	\EEBrk{\sum_{s=1}^t x_s^T \prn{Q + K^T R K} x_s ~\bigg|~ x_1}
	\leq
	t \Jof{K}
	+
	\frac{2 \costMatUpper \kappa^4}{\gamma} \norm{x_1}^2
	.
	\end{equation*}
\end{lemma}

\begin{proof}
	To ease notation, assume, without loss of generality, that $x_1$ is deterministic. We thus omit the conditioning on $x_1$ in all expectation arguments.
	
	First, recall that $x_{s+1} = \prn{\Astar + \Bstar K} x_s + w_s$ and $\Jof{K} = \noiseStd^2 \tr{P}$ where $P$ satisfies \cref{eq:Pbellman}. Then we have that
	\begin{align*}
		\EEBrk{x_{s+1}^T P x_{s+1}}
		&=
		\EEBrk{\prn{\prn{\Astar + \Bstar K} x_s + w_s}^T P \prn{\prn{\Astar + \Bstar K} x_s + w_s}} \\
		&=
		\EEBrk{\prn{\prn{\Astar + \Bstar K} x_s}^T P \prn{\prn{\Astar + \Bstar K} x_s}}
		+
		\EEBrk{w_s^T P w_s} \\
		&=
		\EEBrk{x_s^T \prn{\Astar + \Bstar K}^T P \prn{\Astar + \Bstar K} x_s}
		+
		\Jof{K}.
	\end{align*}
	Now, multiplying \cref{eq:Pbellman} by $x_s$ from both sides and taking expectation we get that
	\begin{align*}
		\EEBrk{x_s^T P x_s}
		&=
		\EEBrk{x_s^T \prn{Q + K^T R K} x_s}
		+
		\EEBrk{x_s^T \prn{\Astar + \Bstar K}^T P \prn{\Astar + \Bstar K} x_s} \\
		&=
		\EEBrk{x_s^T \prn{Q + K^T R K} x_s}
		+
		\EEBrk{x_{s+1}^T P x_{s+1}}
		-
		\Jof{K},
	\end{align*}
	and changing sides and summing over $s$ we get that
	\begin{align*}
		\EEBrk{x_{1}^T P x_{1}
			-
			x_{t+1}^T P x_{t+1}}
		=
		\sum_{s=1}^{t} \EEBrk{x_s^T P x_s - x_{s+1}^T P x_{s+1}}
		=
		\EEBrk{\sum_{s=1}^{t} x_s^T \prn{Q + K^T R K} x_s} - t \Jof{K},
	\end{align*}
	and changing sides again we conclude that
	\begin{equation*}
		\EEBrk{\sum_{s=1}^{t} x_s^T \prn{Q + K^T R K} x_s}
		\le
		t \Jof{K} + \EEBrk{x_{1}^T P x_{1}}
		\le
		t \Jof{K} + \norm{x_1}^2 \norm{P}.
	\end{equation*}
	We conclude the proof by bounding $\norm{P}$. To that end,
	recall that the strong stability of $K$ implies that $\Astar + \Bstar K = H L H^{-1}$ where $\norm{L} \le 1- \gamma$ and $\norm{H} \norm{H^{-1}} \le \kappa$. Applying \cref{eq:Pbellman} recursively we then have that
	\begin{align*}
		\norm{P}
		&=
		\norm*{\sum_{s=0}^{\infty} \prn{\prn{\Astar + \Bstar K}^s}^T \prn{Q + K^T R K} \prn{\Astar + \Bstar K}^s} \\
		&=
		\norm*{\sum_{s=0}^{\infty} \prn{H L^s H^{-1}}^T \prn{Q + K^T R K} H L^s H^{-1}} \\
		&\le
		\norm!{H}^2 \norm!{H^{-1}}^2 \norm!{Q + K^T R K} \sum_{s=0}^{\infty} \norm!{L}^{2s} \\
		&\le
		2 \costMatUpper \kappa^4 \sum_{s=0}^{\infty} \prn{1 - \gamma}^s
		=
		\frac{2 \costMatUpper \kappa^4}{\gamma}
		,
	\end{align*}
	thus concluding the proof.	
\end{proof}

The following lemma relates the infinite horizon cost of a controller to its strong stability parameters. Its proof is an adaptation of Lemma 18 in \cite{cohen2019learning} that fits our assumptions.
\begin{lemma} \label{lemma:costToStability}
	Suppose $\Jof{K} < J$ then $K$ is $(\kappa, \gamma)-$strongly stable with $\kappa = \sqrt{\frac{J}{\costMatLower \noiseStd^2}}$ and $\gamma = \frac{\costMatLower \noiseStd^2}{2 J}$.
\end{lemma}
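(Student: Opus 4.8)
The plan is to build the diagonalizing pair $(H,L)$ directly from the solution $P$ of the Lyapunov equation and to read off both stability parameters from two spectral bounds on $P$. Recall (from the discussion preceding \cref{eq:Pbellman}) that $\Jof{K} = \noiseStd^2 \tr{P}$, where $P \succ 0$ solves $P = Q + K^T R K + M^T P M$ with $M = \Astar + \Bstar K$. The first thing I would do is extract the two facts about $P$ that drive everything. Since $\norm{Q^{-1}} \le \costMatLower^{-1}$ gives $Q \succeq \costMatLower I$, the Lyapunov equation yields $P \succeq Q \succeq \costMatLower I$, so $\lambda_{\min}(P) \ge \costMatLower$. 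In the other direction, $\norm{P} = \lambda_{\max}(P) \le \tr{P} = \Jof{K}/\noiseStd^2 < J/\noiseStd^2$, where the first inequality is because $P \succ 0$ has nonnegative eigenvalues summing to its trace.

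Next I would take $H = P^{-1/2}$ (which is symmetric positive definite, hence a valid choice of $H \succ 0$) and set $L = P^{1/2} M P^{-1/2}$, so that $M = H L H^{-1}$ holds by construction. The key algebraic step is to rewrite \cref{eq:Pbellman} as $M^T P M = P - Q - K^T R K \preceq P - \costMatLower I$, using $Q \succeq \costMatLower I$ and $K^T R K \succeq 0$. Conjugating by the symmetric $P^{-1/2}$ then gives $L^T L = P^{-1/2} M^T P M P^{-1/2} \preceq I - \costMatLower P^{-1}$, whence $\norm{L}^2 = \lambda_{\max}(L^T L) \le 1 - \costMatLower/\lambda_{\max}(P) \le 1 - \costMatLower \noiseStd^2/J = 1 - 2\gamma$ for $\gamma = \costMatLower \noiseStd^2/(2J)$.

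It then remains to recover $\norm{L} \le 1-\gamma$ and to bound $\kappa$. For the former I would use $(1-\gamma)^2 = 1 - 2\gamma + \gamma^2 \ge 1 - 2\gamma \ge \norm{L}^2$, so $\norm{L} \le 1-\gamma$. For the conditioning of $H$, note $\norm{H}\norm{H^{-1}} = \sqrt{\lambda_{\max}(P)/\lambda_{\min}(P)} \le \sqrt{(J/\noiseStd^2)/\costMatLower} = \kappa$. Finally, for the controller-norm requirement, the Lyapunov equation also gives $P \succeq K^T R K \succeq \costMatLower K^T K$ (using $R \succeq \costMatLower I$), so $\norm{K}^2 \le \costMatLower^{-1}\norm{P} \le J/(\costMatLower \noiseStd^2) = \kappa^2$. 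Together these establish that $K$ is $(\kappa,\gamma)$-strongly stable with the stated parameters.

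The only genuinely delicate point is the passage from the quadratic estimate $\norm{L}^2 \le 1-2\gamma$ to the linear bound $\norm{L} \le 1-\gamma$; the rest is routine spectral manipulation. I would also verify the admissibility conditions $\kappa \ge 1$ and $0 < \gamma \le 1$: since $J > \noiseStd^2 \tr{P} \ge \noiseStd^2 \lambda_{\min}(P) \ge \noiseStd^2 \costMatLower$, we have $\kappa = \sqrt{J/(\costMatLower\noiseStd^2)} > 1$ and $\gamma = \costMatLower\noiseStd^2/(2J) < 1/2$, so both definitional constraints hold automatically.
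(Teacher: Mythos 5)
Your proof is correct and follows essentially the same route as the paper's: both take $H = P^{-1/2}$, $L = P^{1/2}(\Astar + \Bstar K)P^{-1/2}$, derive $\norm{L}^2 \le 1-2\gamma$ from the Riccati/Lyapunov identity together with $Q \succeq \costMatLower I$ and $\norm{P} \le \Jof{K}/\noiseStd^2$, and bound $\norm{H}\norm{H^{-1}}$ and $\norm{K}$ via the same spectral estimates $P \succeq \costMatLower I$, $P \succeq \costMatLower K^T K$. The only difference is cosmetic (you conjugate $M^T P M \preceq P - \costMatLower I$ and then bound $\costMatLower P^{-1} \succeq 2\gamma I$, whereas the paper first substitutes $Q \succeq 2\gamma P$), and your added verification that $\kappa > 1$ and $\gamma < 1/2$ is a nice touch.
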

\begin{proof}
	Recall that $\Jof{K} = \noiseStd^2 \tr{P}$ where $P$ satisfies \cref{eq:Pbellman}.
	Using the bound $\Jof{K} \le J$ we have that $\tr{P} \le J / \noiseStd^2$ and thus also that $P \preceq (J / \noiseStd^2) I$. Recalling that $Q \succeq \costMatLower I$ we get that $Q \succeq \frac{\costMatLower \noiseStd^2}{J} P = 2\gamma P$. Recalling that $R$ is positive definite and plugging back into \cref{eq:Pbellman} we get that
	\begin{align*}
	P
	\succeq
	2\gamma P + \prn{\Astar + \Bstar K}^T P \prn{\Astar + \Bstar K},
	\end{align*}
	rearranging the equation we get that
	\begin{equation*}
	P^{-1/2}\prn{\Astar + \Bstar K}^T P \prn{\Astar + \Bstar K} P^{-1/2}
	\preceq
	\prn{1 - 2 \gamma}I.
	\end{equation*}
	Now, denote $H = P^{-1/2}$ and $L = P^{1/2}\prn{\Astar + \Bstar K} P^{-1/2}$ and notice that indeed $H L H^{-1} = \Astar + \Bstar K$. Plugging into the above we get that
	\begin{align*}
	P^{-1/2}\prn{\Astar + \Bstar K}^T P \prn{\Astar + \Bstar K} P^{-1/2}
	=
	H \prn{H L H^{-1}}^T H^{-1} H^{-1} \prn{H L H^{-1}} H
	=
	L^T L
	\preceq
	\prn{1 - 2 \gamma}I,
	\end{align*}
	and thus $\norm{L} \le \sqrt{1 - 2 \gamma} \le 1 - \gamma$. Now recall that $P \preceq (J / \noiseStd^2) I$ and thus $\norm{H^{-1}} = \norm{P^{1/2}} \le \sqrt{J / \noiseStd^2}$. Going back to \cref{eq:Pbellman} we also have that $P \succeq Q \succeq \costMatLower I$ and thus $\norm{H} = \norm{P^{-1/2}} \le \sqrt{1 / \costMatLower}$. All together, we get that $\norm{H}\norm{H^{-1}} \le \sqrt{J / \costMatLower \noiseStd^2} = \kappa$. Finally, recall that $R \succeq \costMatLower I$ and thus going back to \cref{eq:Pbellman} we have that $P \succeq K^T R K \succeq \costMatLower K^T K$ and thus $\norm{K} \le \sqrt{\norm{P} / \costMatLower} \le \sqrt{J / \costMatLower \noiseStd^2} = \kappa$, as desired.
\end{proof}

The following lemma relates system parameter estimation bounds to properties of the resulting greedy controller.
\begin{lemma} \label{lemma:goodController}
	Let $A \in \RR[d \times d] ,B \in \RR[d \times K]$ and denote $\Delta = \max\brc{\norm{A - \Astar}, \norm{B - \Bstar}}$. Taking $K = \KoptOf{A}{B}$ and denoting $\kappa = \sqrt{\frac{\optCostBound + \rechtConst \rechtEps^2}{\costMatLower \noiseStd^2}}$ and $\gamma = \frac{1}{2 \kappa^2}$ we have that
	\begin{enumerate}
		\item If $\Delta \le \rechtEps$ then $K$ is $(\kappa,\gamma)-$strongly stable;
		\item If $\Delta \le \min\brc{\rechtEps, \frac{\mu}{4 \kappa \rechtConst}}$ then $K K^T \succeq \Kstar \Kstar^T - \frac{\mu}{2}I$ and $\Kstar \Kstar^T \succeq K K^T - \frac{\mu}{2}I$;
		\item If $\Delta \le \min\brc{\rechtEps, \frac{\KstarLowerBound}{4 \kappa \rechtConst}}$ then $K K^T \succeq \frac{\KstarLowerBound}{2} I$.
	\end{enumerate}
\end{lemma}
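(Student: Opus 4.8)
The plan is to establish the three claims in order, since each builds on the previous one, leaning throughout on the two cost-based lemmas already at hand, \cref{lemma:recht} and \cref{lemma:costToStability}. For the first claim I would apply \cref{lemma:recht} with $\varepsilon = \Delta \le \rechtEps$ (legitimate since $\Delta = \max\{\norm{A - \Astar}, \norm{B - \Bstar}\}$ dominates both estimation errors), obtaining $\Jof{K} \le \Jstar + \rechtConst \Delta^2 \le \optCostBound + \rechtConst \rechtEps^2$. Feeding $J = \optCostBound + \rechtConst\rechtEps^2$ into \cref{lemma:costToStability} then yields exactly $\kappa = \sqrt{J/(\costMatLower\noiseStd^2)}$ and $\gamma = \costMatLower\noiseStd^2/(2J) = 1/(2\kappa^2)$, matching the stated constants. (The only wrinkle is that \cref{lemma:costToStability} wants a strict inequality $\Jof{K} < J$; this is absorbed by an arbitrarily small slack, using the monotonicity of strong stability in $\kappa,\gamma$.)

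For the second claim I would first note that instantiating the first claim with $\Delta = 0$ (i.e.\ $A=\Astar$, $B=\Bstar$) shows $\Kstar$ is itself $(\kappa,\gamma)$-strongly stable, so $\norm{\Kstar} \le \kappa$, and likewise $\norm{K} \le \kappa$. The sharper hypothesis $\Delta \le \frac{\mu}{4\kappa\rechtConst}$ combined with \cref{lemma:recht} gives $\norm{K - \Kstar} \le \rechtConst\Delta \le \frac{\mu}{4\kappa}$. Setting $E = K - \Kstar$ and expanding, $KK^T - \Kstar\Kstar^T = E\Kstar^T + \Kstar E^T + EE^T$. The key point is that the quadratic term $EE^T \succeq 0$ should be discarded on the favourable side of each one-sided inequality: for $KK^T \succeq \Kstar\Kstar^T - \frac{\mu}{2}I$ I would bound $\Kstar\Kstar^T - KK^T = -E\Kstar^T - \Kstar E^T - EE^T \preceq \norm{E\Kstar^T + \Kstar E^T}\,I \le 2\kappa\norm{E}\,I \le \frac{\mu}{2}I$, dropping $-EE^T \preceq 0$; while for the reverse inequality $\Kstar\Kstar^T \succeq KK^T - \frac{\mu}{2}I$ I would re-expand around $K$ (set $\tilde E = \Kstar - K$) so that the analogous nonnegative term $\tilde E\tilde E^T$ again lands harmlessly, once more using $\norm{K} \le \kappa$.

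The third and final claim then follows immediately: instantiating the second claim with $\mu = \KstarLowerBound$ gives $KK^T \succeq \Kstar\Kstar^T - \frac{\KstarLowerBound}{2}I$, and combining this with the standing non-degeneracy assumption $\Kstar\Kstar^T \succeq \KstarLowerBound I$ yields $KK^T \succeq \KstarLowerBound I - \frac{\KstarLowerBound}{2}I = \frac{\KstarLowerBound}{2}I$. I expect the only genuinely delicate step to be the positive-semidefinite bookkeeping in the second claim, namely choosing the expansion point ($\Kstar$ versus $K$) correctly for each direction so that the $EE^T \succeq 0$ term is dropped on the side where it helps rather than hurts; parts one and three are essentially direct compositions of the prior lemmas and the degeneracy assumption.
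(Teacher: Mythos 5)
Your proposal is correct and takes essentially the same route as the paper's proof: \cref{lemma:recht} composed with \cref{lemma:costToStability} for the first claim (applied also with $A=\Astar$, $B=\Bstar$ to get $\norm{\Kstar}\le\kappa$), the bound $\norm{K-\Kstar}\le\mu/4\kappa$ driving a positive-semidefinite perturbation argument for the second, and the non-degeneracy assumption $\Kstar\Kstar^T\succeq\KstarLowerBound I$ closing the third. The only cosmetic difference is in the second claim, where the paper uses the symmetric identity $KK^T = \Kstar\Kstar^T - \tfrac12\prn{\prn{\Kstar+K}\prn{\Kstar-K}^T + \prn{\Kstar-K}\prn{\Kstar+K}^T}$ to obtain both one-sided inequalities simultaneously, whereas you expand asymmetrically and discard the sign-favourable $EE^T$ term separately in each direction---both versions are valid, and your observation about the strict inequality in \cref{lemma:costToStability} is harmless since its proof only uses the non-strict bound.
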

\begin{proof}
	First, if $\Delta \le \rechtEps$ we can invoke \cref{lemma:recht} to get that $\Jof{K} \le \Jstar + \rechtConst \rechtEps^2 \le \optCostBound + \rechtConst \rechtEps^2$ and so by \cref{lemma:costToStability}, $K$ is $(\kappa,\gamma)-$strongly stable, proving the first part of the lemma.
	
	Second, if $\Delta \le \min\brc{\rechtEps, \frac{\mu}{4 \kappa \rechtConst}}$ then we can invoke \cref{lemma:recht} to get that $\norm{K - \Kstar} \le \frac{\mu}{4 \kappa}$. Moreover, by the first claim of the lemma, $K, \Kstar$ are $(\kappa,\gamma)-$strongly stable and thus upper bounded by $\kappa$. Combining the above we get that
	\begin{align*}
	K K^T
	&=
	\Kstar \Kstar^T - \frac12 \prn{
		\prn{\Kstar + K} \prn{\Kstar - K}^T
		+
		\prn{\Kstar - K} \prn{\Kstar + K}^T
	} \\
	&\succeq
	\Kstar \Kstar^T - \prn{\norm{\Kstar} + \norm{K}} \norm{\Kstar - K} I \\
	&\succeq
	\Kstar \Kstar^T - \frac{2 \kappa\mu}{4 \kappa} I
	=
	\Kstar \Kstar^T - \frac{\mu}{2} I,
	\end{align*}
	and reversing the roles of $K$ and $\Kstar$ in the above yields $\Kstar \Kstar^T \succeq K K^T - \frac{\mu}{2} I$, thus proving the second part of the lemma.
	
	Finally, if $\Delta \le \min\brc{\rechtEps, \frac{\KstarLowerBound}{4 \kappa \rechtConst}}$, then recalling that $\Kstar \Kstar \succeq \KstarLowerBound I$ and continuing from the second part we get that
	\begin{equation*}
	K K^T
	\succeq
	\Kstar \Kstar^T - \frac{\KstarLowerBound}{2} I
	\succeq
	\KstarLowerBound I - \frac{\KstarLowerBound}{2} I
	=
	\frac{\KstarLowerBound}{2} I,
	\end{equation*}
	thus concluding the third and final part of the lemma.
\end{proof}

\end{document}